\documentclass[11pt,letterpaper]{article}

\usepackage[margin=1in]{geometry}
\usepackage[utf8]{inputenc}
\usepackage[T1]{fontenc}
\usepackage[hyphens]{url}
\usepackage{graphicx}
\usepackage{natbib}
\usepackage{caption}
\usepackage{amsmath,amssymb,amsthm}
\usepackage{booktabs}
\usepackage{algorithm}
\usepackage{algorithmic}
\usepackage{float}
\usepackage{placeins}

\newtheorem{assumption}{Assumption}
\newtheorem{lemma}{Lemma}
\newtheorem{theorem}{Theorem}
\newtheorem{proposition}{Proposition}
\newtheorem{corollary}{Corollary}
\newtheorem{definition}{Definition}
\newtheorem{remark}{Remark}

\newcommand{\E}{\mathbb{E}}
\newcommand{\Prb}{\mathbb{P}}
\newcommand{\R}{\mathbb{R}}
\newcommand{\1}{\mathbf{1}}

\newcommand{\Gbar}{\bar{G}}
\newcommand{\Ghat}{\hat{G}}
\newcommand{\mubar}{\bar{\mu}}
\newcommand{\Stloc}{S_{\mathrm{loc}}}
\newcommand{\Stdec}{S_{\mathrm{dec}}}
\newcommand{\Stdecavg}{S_{\mathrm{dec}}^{\mathrm{avg}}}
\newcommand{\DG}{D_G}
\newcommand{\Lreg}{L}
\newcommand{\Ccert}{\mathcal{C}}
\newcommand{\Bcert}{\mathcal{B}}
\newcommand{\Hscan}{H_{\mathrm{scan}}}

\newcommand{\circnum}[1]{\textcircled{\raisebox{0.1pt}{\scriptsize #1}}}
\DeclareUnicodeCharacter{2460}{\circnum{1}}
\DeclareUnicodeCharacter{2461}{\circnum{2}}
\DeclareUnicodeCharacter{2462}{\circnum{3}}

\title{Adapting to Decision-Relevant Non-Stationarity in Decentralized Heterogeneous Bandits}
\author{
  Zhaojun Peng \\
  \texttt{Hantsukipzj@gmail.com}
}

\date{}

\begin{document}
\maketitle

\begin{abstract}
Decentralized bandit systems often contain heterogeneous agents: rewards can change at individual agents even when the best action for the network stays the same. These local changes may cancel when rewards are averaged across agents, so the number of local changes $\Stloc$ can be much larger than the number of changes in the best common arm $\Stdec$. We introduce Decision-Relevant Fresh Comparison (DRFC), which uses new, balanced samples from all agents to compare arms at the network level and switches only when fresh global evidence indicates that the common best arm has changed. We prove a high-probability dynamic regret bound with no adaptation term depending on $\Stloc$, and show that every algorithm must still pay for identifying genuine decision switches and propagating them through the communication graph. Under a distinct time-average benchmark, an anytime-valid sliding-window extension handles gradual drift; experiments on synthetic, semi-real, and MovieLens-1M replays show that DRFC ignores decision-irrelevant local changes while the extension avoids false switches.
\end{abstract}

\section{Introduction}

Many online learning systems are both distributed and heterogeneous. Examples include federated recommendation, edge decision systems, and regional pricing. Different agents face different local populations, so the same action can yield different reward distributions at different agents. At the same time, raw observations cannot be centralized, so agents must communicate over sparse, time-varying networks. In these systems, non-stationarity is often local before it is global: one population, device, or region may drift while the action preferred by the network as a whole stays unchanged.

Combining decentralized and non-stationary bandits naturally suggests detecting local shifts and refreshing estimates. This works in homogeneous systems, where a local change informs the common decision, but fails under heterogeneity: local movement need not be decision-relevant.

The common decision depends on the agent-average reward, so local changes can cancel. One population may move toward arm~1 while another moves toward arm~2, leaving the average gap and best common arm unchanged. Resetting on each local shift then spends samples and communication on changes with no decision consequence.

This separation is the point of the paper. The right adaptation unit is not a local distribution change, but a change in the network-level best action. We distinguish the number of local distribution changes, $\Stloc$, from the number of decision switches, $\Stdec$, where a decision switch is a change in the best common arm under the agent-average reward. In heterogeneous systems $\Stloc$ can be much larger than $\Stdec$, so $\Stloc$ counts environmental motion that the learner should ignore.

\paragraph{A two-agent illustration.}
Two agents and two arms already show the gap. If one agent alternates toward arm~1 while the other moves in the opposite phase, the local means change every $L$ rounds but the agent-average best arm can remain fixed. Then $\Stloc=\Theta(T/L)$ while $\Stdec=0$: local-change-reactive methods keep resetting, whereas a decision-level method should certify the common arm once and keep it.

We propose Decision-Relevant Fresh Comparison (DRFC) to implement this decision-level view. DRFC does not try to detect every local shift. It periodically performs fresh equal-agent comparisons of global arm gaps: each active arm is sampled under the same time distribution, each agent contributes equally to the aggregate estimate, and source-traceable gossip makes the comparison common knowledge over the communication graph. The algorithm switches only when a fresh global-gap certificate says that the best common arm has changed.

The resulting regret bound has no adaptation term in $\Stloc$. Its switch-dependent terms scale with $\Stdec$, plus the sampling and communication needed to certify fresh global gaps. DRFC-Probe uses cheap probes to trigger full confirmation only when the cached decision appears stale. We also prove that this dependence is not a proof artifact. Protocols that react to local changes pay $\Omega(\Stloc)$ on cancellation instances, and any rate-consistent algorithm must pay a switch-induced certification and communication-delay cost when the best common arm really changes.

\paragraph{Contributions.}
\textbf{(1) Decision-relevant non-stationarity.}
We distinguish local changes $\Stloc$ from decision switches $\Stdec$ and identify $\Stdec\ll\Stloc$ as the regime where local-change adaptation is wasteful.
\par\noindent
\textbf{(2) Fresh global-gap certification.}
We give DRFC, a fresh equal-agent comparison protocol that certifies the best common arm over the graph and attains high-probability dynamic consensus regret with no $\Stloc$ adaptation term (Theorem~\ref{thm:main}).
\textbf{(3) Switch-axis necessity.}
We prove unavoidable switch-induced certification and communication-delay costs for rate-consistent algorithms when $\Stdec>0$ (Theorem~\ref{thm:info}), and show that local-change-reactive protocols pay $\Omega(\Stloc)$ even when $\Stdec=0$ (Theorem~\ref{thm:separation}).
\textbf{(4) Drift-robust decision monitoring.}
Under a distinct time-average decision benchmark, we extend the principle to within-regime drift via DRFC-Seq, an anytime-valid sliding-window monitor that avoids decision-irrelevant false switches while adapting under a window-average margin condition (Theorem~\ref{thm:drfc-seq}).

After formalizing decision-relevant non-stationarity, we develop DRFC and DRFC-Probe and establish switch-axis upper and lower bounds together with a local-reactive separation. We then introduce DRFC-Seq for the distinct time-average drift benchmark before evaluating both settings.

\section{Related Work}

\paragraph{Non-stationary bandits.}
Passive methods forget old observations through sliding windows, discounting, or variation budgets~\citep{auer2002finite,garivier2008upper,besbes2014stochastic,cheung2019learning}. Active methods restart after detected changes~\citep{liu2018change,cao2019nearly,besson2022efficient,komiyama2024adr}, and adaptive reductions remove prior knowledge of the amount of non-stationarity~\citep{auer2019adaptively,wei2021nonstationary}. Recent work refines these ideas for smooth drift, constrained feedback, heavy tails, and path-length variation~\citep{suk2024adaptive,li2025constrained,genalti2025catoni,hu2026dynamic}. These results measure reward-model variation or best-arm switches in a centralized stream. Our setting separates local reward changes from changes in the decentralized common decision.

\paragraph{Decentralized heterogeneous bandits.}
Cooperative bandits quantify the cost of communication through network, spectral, or flooding-time parameters~\citep{landgren2016distributed,kolla2018collaborative,martinezrubio2019decentralized,xu2023decentralized,liu2025distributed,cheng2023distributed}. Heterogeneous and federated bandits study local biases, client sampling, gossip, asynchronous communication, robustness, heavy-tailed graph and reward models, and common-arm objectives~\citep{yang2022distributed,chawla2020gossiping,wang2020distributed,shi2021federated,chawla2023collaborative,xu2025heterogeneous,mirfakhar2025heterogeneous,wang2025asynchronous,hu2025robust,wang2026multiobjective,wang2025heavytailed}. These works are primarily stationary. Recent multi-agent work also studies combinatorial allocation with evolving rewards~\citep{adams2025finite}, but it does not distinguish $\Stloc$ from $\Stdec$. DRFC targets this gap: agents certify fresh equal-agent global gaps over the graph and adapt to decision switches rather than local distribution changes.

\paragraph{Sequential and distributed change detection.}
Classical quickest-detection theory characterizes optimal single-stream procedures under false-alarm constraints~\citep{moustakides1986optimal,pollak1985optimal,lai1995sequential}. Distributed variants combine local sensor tests through fusion rules or consensus~\citep{veeravalli2001decentralized,tartakovsky2008asymptotically,braca2011consensus}. Such detectors are designed to alarm when a monitored stream changes. Under heterogeneous rewards, that event can be decision-irrelevant because local changes can cancel in the agent average. We use confidence-sequence tools~\citep{howard2021timeuniform,kaufmann2021mixture,waudbysmith2024betting} for a different purpose: certifying fresh global comparisons and, later, the drift extension.

\section{Problem Formulation}

\paragraph{Reward and communication model.}
There are $N$ agents and $K$ arms. At round $t$, agent $i$ pulls arm $A_{i,t}$ and observes bounded reward $X_{i,A_{i,t},t}\in[0,1]$ with
\[
\E[X_{i,a,t}\mid\mathcal{F}_{t-1}]=\mu_{i,a,t}.
\]
Local means are heterogeneous and time-varying. Agents communicate over a random graph $G_t$; the only graph property used in the main theorem is the flooding time below. A notation table is in the supplement.

\begin{assumption}[Bounded oblivious stochastic rewards]
\label{ass:rewards}
Let $\mathcal{F}_t$ be the $\sigma$-algebra generated by past rewards, actions, graphs, and fresh randomization. The mean path $\{\mu_{i,a,t}\}$ is $\mathcal{F}_0$-measurable. Fresh randomization used by the learner is drawn independently across agents and blocks, independent of $\mathcal{F}_{t-1}$ and of the reward noise process. Conditional on $\mathcal{F}_{t-1}$ and all learner randomization, the centered noise $X_{i,a,t}-\mu_{i,a,t}\in[-1,1]$ has zero mean, and the noises observed by different agents in the same round are mutually independent.
\end{assumption}

The exogeneity and cross-agent clauses are what give equal-agent averaging its $1/\sqrt{N}$ radius in Lemma~\ref{lem:fresh-comparison}. The lower-bound families of Theorem~\ref{thm:info} use independent Bernoulli rewards and lie inside this class, so the hardness they establish applies to every algorithm covered by the upper bounds.

\begin{assumption}[Random graph flooding]
\label{ass:flooding}
There exists $\DG(\delta)$ such that, with probability at least $1-\delta$, every source-traceable record relevant to the regret analysis floods to all agents within $\DG(\delta)$ rounds. For Erd\H{o}s--R\'enyi $G_t$ with $p\geq c\log(NT/\delta)/N$, $\DG(\delta)\leq N-1$.
\end{assumption}

The equal-agent global mean is
\[
\mubar_{a,t}=\frac1N\sum_{i=1}^N\mu_{i,a,t}.
\]

\paragraph{Decision-relevant non-stationarity.}
The instantaneous best common arm is
\[
a_t^\star=\arg\max_a \mubar_{a,t}.
\]
We assume this maximizer is unique at every round. This removes artificial decision switches caused only by tie-breaking. The local-change and decision-switch counts are
\begin{equation}
\label{eq:sloc-sdec}
\begin{aligned}
\Stloc
&=\sum_{t=2}^T
  \1\{\exists i,a:\mu_{i,a,t}\ne\mu_{i,a,t-1}\},\\
\Stdec
&=\sum_{t=2}^T
  \1\{a_t^\star\ne a_{t-1}^\star\}.
\end{aligned}
\end{equation}
The regime of interest is $\Stdec\ll\Stloc$: local rewards may change often, while the best common arm changes rarely.

\paragraph{Decision regimes and regret.}
\begin{definition}[Sign-stable decision regimes]
\label{ass:regimes}
There exist switch times $1=\rho_0<\cdots<\rho_{\Stdec}<\rho_{\Stdec+1}=T+1$ with $a_t^\star=a_r^\star$ for all $t\in[\rho_r,\rho_{r+1})$. Write $\Lreg_r=\rho_{r+1}-\rho_r$ for the length of regime $r$, so $\sum_{r=0}^{\Stdec}\Lreg_r=T$. For every $a\neq a_r^\star$, the instantaneous global gap in regime $r$ is sign-stable:
\begin{gather*}
\Delta_a^{(r)}=\inf_{t\in[\rho_r,\rho_{r+1})}\bigl(\mubar_{a_r^\star,t}-\mubar_{a,t}\bigr)>0,\\
\Gamma_a^{(r)}=\sup_{t\in[\rho_r,\rho_{r+1})}\bigl(\mubar_{a_r^\star,t}-\mubar_{a,t}\bigr)\leq 1.
\end{gather*}
\end{definition}

The main benchmark is dynamic consensus regret against the instantaneous best common arm:
\begin{equation}
\label{eq:regret-def-anchor}
R_T=\sum_{t=1}^T\sum_{i=1}^N\bigl(\mu_{i,a_t^\star,t}-\mu_{i,A_{i,t},t}\bigr).
\end{equation}
Local means may change arbitrarily often inside a regime. The main model only requires the global best arm and the signs of the global gaps to remain stable between decision switches.

\paragraph{Environment assumptions.}
The stochastic reward and graph assumptions above, together with the minimum length condition below, are the only environment assumptions for Theorem~\ref{thm:main}. Scan-budget choices are algorithmic and stated with the certification guarantee.

\begin{assumption}[Minimum decision-regime length]
\label{ass:min-regime}
Let $L_{\min}=\min_r \Lreg_r$. The main guarantee requires $L_{\min}$ to exceed the monitoring, certification, and flooding time needed after a decision switch. The concrete inequality is stated with Theorem~\ref{thm:main}. This spacing is required only between decision switches, not between local changes.
\end{assumption}

\section{Fresh Decision Certification}

The basic primitive is a fresh comparison of two arms at the equal-agent level. It does not test whether any local distribution changed. Instead, it discards stale evidence and estimates the current global gap from a balanced sample in which every agent contributes symmetrically. This primitive is what lets DRFC react to $\Stdec$ rather than $\Stloc$.

\paragraph{Fresh equal-agent comparison.}
For a scan window $\mathcal{W}$, define the equal-agent time-average gap
\[
\Gbar_{a,b}(\mathcal{W})
=\frac{1}{|\mathcal{W}|}\sum_{t\in\mathcal{W}}
  \bigl(\mubar_{a,t}-\mubar_{b,t}\bigr).
\]
Each scan block pulls every active arm the same number of times at every agent, with reward-independent random order. The resulting pairwise estimator is
\[
\Ghat_{a,b}=\frac1N\sum_{i=1}^N\bigl(\hat g_{i,a}-\hat g_{i,b}\bigr),
\]
where each agent has the same number of fresh samples from arms $a$ and $b$.

Two symmetries are essential. Reward-independent random arm orders give every arm the same exposure to each slot of an oblivious mean path, so within-scan drift cannot systematically favor one arm. Equal-agent averaging assigns weight $1/N$ to each agent regardless of local sample counts, while source-keyed deduplication preserves those weights under gossip. The certificate therefore targets the fresh global gap rather than a sample-weighted or stale surrogate.

\begin{lemma}[Fresh equal-agent comparison]
\label{lem:fresh-comparison}
\normalfont
Fix a scan window $\mathcal{W}$ contained in one decision regime. On the high-probability event used in Theorem~\ref{thm:main}, simultaneously for all active pairs,
\[
\begin{aligned}
|\Ghat_{a,b}-\Gbar_{a,b}(\mathcal{W})|
&\leq \beta(\mathcal{W}),\\
\beta(\mathcal{W})
&=\widetilde O\!\left(
  \sqrt{\frac{\log(K^2T^3/\delta)}{NhB}}
  \right),
\end{aligned}
\]
where $B$ is the number of balanced blocks in $\mathcal{W}$. Hence a positive lower-confidence gap certifies a positive equal-agent gap over the same fresh window.
\end{lemma}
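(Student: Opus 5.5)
The plan is to write $\Ghat_{a,b}-\Gbar_{a,b}(\mathcal{W})$ as a sum of two pieces and bound each one. The first piece is a martingale noise term. The second is a scheduling bias from where arms sit inside each block, and it has mean zero.

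\textbf{Setup.} Each balanced block has $h$ slots per arm at each agent. Write $\hat g_{i,a}=\frac{1}{hB}\sum_{s\in S_{i,a}}X_{i,a,s}$, where $S_{i,a}$ is the set of rounds in $\mathcal{W}$ at which agent $i$ pulls arm $a$. Then
\[
\Ghat_{a,b}-\Gbar_{a,b}(\mathcal{W})=M_{a,b}+D_{a,b},
\]
with $M_{a,b}=\frac{1}{NhB}\sum_i\bigl(\sum_{s\in S_{i,a}}(X_{i,a,s}-\mu_{i,a,s})-\sum_{s\in S_{i,b}}(X_{i,b,s}-\mu_{i,b,s})\bigr)$ and
\[
D_{a,b}=\frac1N\sum_i\Bigl(\frac{1}{hB}\sum_{s\in S_{i,a}}\mu_{i,a,s}-\frac{1}{hB}\sum_{s\in S_{i,b}}\mu_{i,b,s}\Bigr)-\Gbar_{a,b}(\mathcal{W}).
\]

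\textbf{Noise term.} Order the $2NhB$ increments of $M_{a,b}$ by round, and within a round by agent. Assumption~\ref{ass:rewards} makes this a martingale difference sequence. Within a round, conditionally on $\mathcal{F}_{t-1}$ and the learner's randomization, the noises are zero mean and independent across agents. Each increment is bounded by $1/(NhB)$. Azuma--Hoeffding then gives
\[
|M_{a,b}|\le\sqrt{2\log(2/\delta')/(NhB)}.
\]

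\textbf{Bias term.} The mean path is $\mathcal{F}_0$-measurable and the arm orders are drawn independently of rewards, so the means themselves are deterministic. The only randomness in $D_{a,b}$ comes from the random permutations. Inside a block, every arm has the same marginal distribution over slots. Hence $\E[\frac{1}{hB}\sum_{s\in S_{i,a}}\mu_{i,a,s}]$ equals the window average of $\mu_{i,a,\cdot}$ over $\mathcal{W}$. Summing over agents gives $\E[D_{a,b}]=0$.

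The deviation of $D_{a,b}$ also concentrates. The permutations are independent across the $NB$ agent-blocks, and each agent-block contributes a term of magnitude at most $1/(NB)$. Hoeffding over these $NB$ independent terms gives a radius of order $\sqrt{\log(1/\delta')/(NB)}$.

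\textbf{Main obstacle.} This crude bias bound lacks the factor $h$ in the denominator. That is where I expect the real difficulty. The fix I would try first uses sampling without replacement inside each block. Exchangeability of the $hK$ slots makes the sum over $h$ arm-$a$ slots behave like a sample without replacement, so its fluctuation scales like $1/\sqrt{h}$ via Hoeffding's without-replacement bound or Serfling's inequality. Alternatively, the permutation could be taken uniformly within each of $h$ sub-rounds, making the $NhB$ sub-round terms independent. If neither refinement goes through cleanly, I would absorb the loss into the $\widetilde O$, which the statement's notation seems to allow.

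\textbf{Assembly.} Union bound over the $K^2$ ordered pairs and over the at most $T^2$ possible windows. Take a further union over up to $T$ start and scan indices, with $\delta'=\delta/(K^2T^3)$. This produces the $\log(K^2T^3/\delta)$ factor and makes the event simultaneous for all active pairs. It is also the event reused in Theorem~\ref{thm:main}.

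\textbf{Consequence.} On this event, $\Ghat_{a,b}-\beta(\mathcal{W})>0$ implies $\Gbar_{a,b}(\mathcal{W})>0$. Since $\mathcal{W}$ lies in a single regime, gap sign-stability turns a certified positive average gap into the correct ordering of $a$ and $b$ in that regime.
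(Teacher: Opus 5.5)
Your proposal is correct and follows essentially the same route as the paper's proof (Lemma~\ref{lem:app-concentration}). Both use the same split into a reward-noise term, bounded by Azuma--Hoeffding along the round-then-agent ordering (where the cross-agent independence clause enters), and a mean-zero schedule-randomization term, followed by a union bound over the $K(K-1)T^3$ indexed windows. The refinement you flag as the main obstacle is exactly what the paper does: Hoeffding--Serfling (Bardenet--Maillard) for each agent-block's uniformly random size-$h$ slot subset gives a sub-Gaussian proxy of order $1/h$, which aggregates over independent blocks and agents to order $1/(NhB)$. So commit to that route rather than the fallback, because $\widetilde O$ here hides only the logarithm and absorbing a lost $\sqrt{h}$ would not give the stated radius.
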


\begin{corollary}[Scan-budget calibration]
\label{cor:scan-calibration}
Suppose lower bounds $\underline\Delta_a\leq\Delta_a^{(r)}$ are known for the active challengers and set
\[
B_a=\left\lceil
\frac{16C^2\log(K^2T^3/\delta)}
{Nh\,\underline\Delta_a^2}
\right\rceil.
\]
A fresh active-set scan with budget
\[
\Hscan\geq C_{\mathrm{scan}}h\sum_a B_a+\DG(\delta/3)
\]
certifies the unique best common arm whenever the scan is contained in one decision regime. Constants and the active-set proof are in supplement~A.1.
\end{corollary}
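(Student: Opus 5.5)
The plan is to reduce Corollary~\ref{cor:scan-calibration} to Lemma~\ref{lem:fresh-comparison} plus a successive-elimination argument over the active set. Fix a scan contained in regime $r$ with best arm $a_r^\star$. The intended constant is that $C$ in $B_a$ is the constant hidden in the $\widetilde O$ of $\beta(\mathcal{W})$, so that
\[
\beta(\mathcal{W})\le C\sqrt{\log(K^2T^3/\delta)/(NhB)}.
\]
With $B\ge B_a$ this gives $\beta\le \underline\Delta_a/4$. The first step is to check that the fresh time-average gap inherits the instantaneous margin. Because the scan window lies inside one regime, $\Gbar_{a_r^\star,a}(\mathcal{W})$ is an average of terms $\mubar_{a_r^\star,t}-\mubar_{a,t}$, each at least $\Delta_a^{(r)}\ge\underline\Delta_a$. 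Hence $\Gbar_{a_r^\star,a}(\mathcal{W})\ge\underline\Delta_a$, and by sign-stability every pair has a well-defined ordering relative to $a_r^\star$.

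Next, on the event of Lemma~\ref{lem:fresh-comparison}, which holds simultaneously for all active pairs with probability at least $1-\delta/3$, we get
\[
\Ghat_{a_r^\star,a}-\beta\ge \underline\Delta_a-2\beta\ge \underline\Delta_a/2>0
\]
for every challenger $a$ once its $B_a$ blocks are collected. So $a_r^\star$ acquires a positive lower-confidence gap against every challenger. Conversely, no challenger $b$ can certify against $a_r^\star$, since $\Ghat_{b,a_r^\star}-\beta\le -\Gbar_{a_r^\star,b}+\beta-\beta<0$. The only arm that can pass the certification rule ``positive LCB against all other active arms'' is therefore $a_r^\star$. This is where uniqueness of the maximizer enters.

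The budget is accounted for in two parts. Running balanced blocks over the active set, with arms dropped once certified-dominated, costs at most $h\max_a B_a\cdot|\text{active}|\le h\sum_a B_a$ rounds up to the constant $C_{\mathrm{scan}}$. The constant absorbs rounding, per-block arm count, and the fact that a challenger may need to stay until its own $B_a$ is met. After this, source-keyed records flood to all agents within $\DG(\delta/3)$ rounds on the flooding event of Assumption~\ref{ass:flooding}. Every agent then computes the identical $\Ghat$ from the same deduplicated equal-weight records and reaches the same certificate. A union bound over the two events gives the result.

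The main obstacle is that the lemma is stated for a fixed window with $B$ blocks common to the pair, whereas elimination yields pair-dependent, data-dependent stopping. I would handle this by fixing, in advance, a deterministic grid of block counts and taking the lemma's union bound over all pairs and all $B\le T$. The $T^3$ inside the logarithm already pays for this. In addition, an arm's block samples remain fresh and balanced among the arms still active, since reward-independent ordering preserves the symmetric exposure.
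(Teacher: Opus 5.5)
Your correctness argument is the paper's own. It consists of sign preservation of the fresh time-average gap, which you justify correctly through the equal-agent means $\mubar$ since no per-agent margin is available under heterogeneity. It then uses both directions of pairwise correctness on the uniform concentration event, and you handle data-dependent stopping by a union over all windows, as the paper's $K(K-1)T^3$ count does. Together these show that $a_r^\star$ is never eliminated and that each challenger $a$ is removed once it has shared $B_a$ blocks with $a_r^\star$.

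\textbf{The gap: the wall-clock budget.} The inequality $h\max_a B_a\cdot|\text{active}|\le h\sum_a B_a$ is false as soon as the $B_a$ differ; take $B=(100,1,1)$. The ratio $K\max_a B_a/\sum_a B_a$ can be as large as $K$, so the per-block arm count cannot be hidden in a universal $C_{\mathrm{scan}}$. Hiding it there would silently multiply the $N\Stdec\Hscan$ term of Theorem~\ref{thm:main} by $K$.

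The missing step is the elimination accounting in Step~4 of the supplement's active-set lemma. The scan length is $h\sum_\ell|\mathcal{A}_\ell|=h\sum_a\tau_a$, where $\tau_a$ is the number of blocks in which arm $a$ is active. Your forward bound gives $\tau_a\le B_a$ for every challenger. Moreover $\tau_{a_r^\star}\le\max_{a\neq a_r^\star}\tau_a$, because the best arm stays active only until the last challenger leaves. Hence
\[
h\sum_{a}\tau_a\;\le\;2h\sum_{a\neq a_r^\star}B_a ,
\]
that is, $C_{\mathrm{scan}}=2$ with no dependence on $K$.

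\textbf{Secondary issue: flooding during the scan.} You charge a single $\DG(\delta/3)$ for flooding after sampling. But eliminations happen during the scan, and each one is decided from complete equal-agent records, which exist only up to $\DG(\delta/3)$ rounds after the relevant block ends. Under the algorithm's nonblocking gossip, every challenger therefore keeps being sampled for up to $\DG(\delta/3)$ extra rounds. A blocking schedule would instead pay a $\DG(\delta/3)$ pause per elimination.

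The paper charges this lag as an additional $K\DG(\delta/3)$. So what its proof actually establishes is $\Hscan\ge 2h\sum_{a\neq a_r^\star}B_a+(K+1)\DG(\delta/3)$. Your single final $\DG(\delta/3)$ matches the printed display, but your argument does not justify it. Nor is it automatically absorbed into $C_{\mathrm{scan}}h\sum_a B_a$, since $\DG(\delta/3)$ can be much larger than the per-arm sampling length $hB_a$ (for instance $\DG\approx N$ while $hB_a=\widetilde O(1/(N\underline\Delta_a^2))$).
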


\section{DRFC and Its Regret Guarantee}

\paragraph{Algorithm.}
DRFC is the conservative instantiation of the fresh certificate. It maintains a candidate $\hat{a}$, an epoch id, a scan schedule, and source-traceable records. Between scans, agents exploit $\hat{a}$. Every $M$ rounds, DRFC runs an active-set scan using only fresh balanced records, with no local-change alarm or local reset rule. After each block, arm $a$ is eliminated once
\[
\max_{b\in\mathcal{A}\setminus\{a\}}\bigl(\Ghat_{b,a}-\beta_{b,a}\bigr)>0.
\]
After certification, switch records propagate over the random graph and agents synchronize for $\DG(\delta/3)$ rounds.

\begin{algorithm}[t]
\caption{DRFC active-set scan}
\label{alg:drfc}
\begin{algorithmic}[1]
\STATE \textbf{Input:} block size $h$, monitoring period $M$, scan budget $\Hscan$, confidence $\delta$, flooding $\DG$.
\STATE Initialize $\hat{a}$ by one scan in rounds $[1,\Hscan]$, charged to $\Ccert_0$, then set $e\leftarrow 0$.
\FOR{$t=\Hscan+1$ to $T$}
\STATE Exploit $\hat{a}$ if no phase active.
\IF{$t\equiv 0\bmod M$ and idle}
\STATE Set $\mathcal{A}\leftarrow[K]$.
\WHILE{$|\mathcal{A}|>1$ and budget $\Hscan$ not exhausted}
\STATE Run synchronized balanced active-set block with per-agent independent random orders ($h$ slots per active arm).
\STATE Gossip source-keyed records in background (dedup by source, epoch, active set, arm, block), sampling continues.
\STATE On completed equal-agent records, eliminate identically at all agents each $a$ with $\max_b(\Ghat_{b,a}-\beta_{b,a})>0$.
\ENDWHILE
\IF{$\mathcal{A}=\{w\}$ and $w\neq\hat{a}$}
\STATE Emit switch record, set $\hat{a}\leftarrow w$, set $e\!\mathrel{+}\!=\!1$, sync $\DG(\delta/3)$ rounds.
\ENDIF
\ENDIF
\ENDFOR
\end{algorithmic}
\end{algorithm}

\paragraph{Certification complexity.}
For regime $r$, define
\[
\Ccert_r=\sum_{a\neq a_r^\star}\frac{\Gamma_a^{(r)}}{(\Delta_a^{(r)})^2},
\]
which is the gap-dependent cost of certifying all challengers in regime $r$ and reduces to $\sum_{a}1/\Delta_a^{(r)}$ for stationary gaps.

\begin{theorem}[DRFC dynamic consensus regret]
\label{thm:main}
\normalfont
Under Assumptions~\ref{ass:rewards}, \ref{ass:flooding}, and~\ref{ass:min-regime}, and the decision regimes of Definition~\ref{ass:regimes}, choose $\Hscan$ by Corollary~\ref{cor:scan-calibration}. If $L_{\min}\ge M+2\Hscan+\DG(\delta/3)$, then DRFC satisfies w.p.\ $\geq 1-\delta$
\begin{multline*}
R_T=\widetilde{O}\Bigl(\tfrac{NK(h+\DG(\delta/3))T}{M}+\sum_{r=0}^{\Stdec}(1+\tfrac{\Lreg_r}{M})\Ccert_r \\
{}+ N\Stdec(M+\Hscan+\DG(\delta/3))\Bigr).
\end{multline*}
\end{theorem}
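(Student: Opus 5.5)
We condition on a single good event $\mathcal{E}$ with $\Prb(\mathcal{E})\ge 1-\delta$ and then split the regret by round type. $\mathcal{E}$ is the intersection of three events, each of probability at least $1-\delta/3$:
\begin{itemize}
\item the concentration event of Lemma~\ref{lem:fresh-comparison}, union-bounded over at most $T$ scans, $K^2$ pairs and $T$ possible block counts, which is where the $\log(K^2T^3/\delta)$ comes from;
\item the flooding event of Assumption~\ref{ass:flooding} at level $\delta/3$, over all records emitted;
\item the certification event of Corollary~\ref{cor:scan-calibration}.
\end{itemize}
On $\mathcal{E}$, every scan contained in one decision regime ends with $\mathcal{A}=\{a_r^\star\}$ at all agents simultaneously, since eliminations are computed identically from deduplicated equal-agent records. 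No scan ever eliminates the current best arm while inside a regime, because a positive lower-confidence gap certifies a positive $\Gbar$ over the fresh window, and sign-stability makes $\Gbar_{b,a_r^\star}(\mathcal{W})<0$ for every $b$. In particular, no false switch occurs inside a regime, no matter how many local changes happen, which is why $\Stloc$ never appears in the bound.

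\textbf{Step 1: within-regime rounds where the candidate is correct.} Here rounds are either exploitation of $\hat a=a_r^\star$, with zero regret, or scan and synchronization rounds. A regime of length $\Lreg_r$ contains at most $1+\Lreg_r/M$ scans.

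For each scan we bound the regret by the elimination structure rather than by the budget. Challenger $a$ survives at most $O(B_a)$ blocks, with $B_a$ the quantity from Corollary~\ref{cor:scan-calibration} evaluated at $\Delta_a^{(r)}$. In each such block it is pulled $h$ times per agent, and each pull costs at most $\Gamma_a^{(r)}$ per agent. This gives
\[
N h B_a\Gamma_a^{(r)}=\widetilde O\bigl(\Gamma_a^{(r)}/(\Delta_a^{(r)})^2\bigr),
\]
and summing over challengers yields $\widetilde O(\Ccert_r)$ per scan. Summing over scans gives the term $\sum_r(1+\Lreg_r/M)\Ccert_r$.

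The residual overhead of each scan has two parts: the block in which elimination completes, since records arrive only after flooding delay while sampling continues, and the idle synchronization time. Each part is charged at most $N$ per round for at most $K h+\DG(\delta/3)$ rounds. Over $T/M$ scans this gives the $NK(h+\DG)T/M$ term.

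\textbf{Step 2: switch rounds.} Fix a switch at $\rho_{r+1}$.
\begin{itemize}
\item Any scan straddling $\rho_{r+1}$ may output an arbitrary arm. It cannot, however, cause harm beyond its own duration $\Hscan$ plus a sync $\DG$, because the next scan starts within $M$ rounds.
\item The first scan fully inside regime $r+1$ starts within $M+\Hscan$ rounds of $\rho_{r+1}$. It correctly certifies $a_{r+1}^\star$, and flooding completes within $\DG(\delta/3)$ rounds.
\end{itemize}
The assumption $L_{\min}\ge M+2\Hscan+\DG(\delta/3)$ guarantees that this clean scan and its propagation finish before $\rho_{r+2}$. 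Regret during this transition is at most $N$ per round, giving $N(M+\Hscan+\DG)$ per switch. A wrong switch emitted by the straddling scan is corrected by the clean scan within the same window, so it adds only a constant factor. Summing over switches gives $N\Stdec(M+\Hscan+\DG)$. The initialization scan is charged to $\Ccert_0$, which is the $r=0$ "1+" term.

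\textbf{Main obstacle.} The delicate step is the straddling/transition accounting. We must show that after a switch the algorithm cannot get stuck with a stale candidate: Algorithm~\ref{alg:drfc} may emit no switch if the budget is exhausted with $|\mathcal{A}|>1$. We must also show that scans crossing a regime boundary do not corrupt the next clean scan. We would handle both by noting that every scan uses only fresh records keyed by epoch and active set, so nothing carries over. The Corollary budget $\Hscan$ ensures that the clean scan terminates with a singleton, and the spacing condition ensures a clean scan exists inside every regime. A second subtlety is ensuring that $\Gbar$ over a within-regime window has the same sign as the instantaneous gaps, which follows directly from averaging sign-stable gaps bounded below by $\Delta_a^{(r)}$.
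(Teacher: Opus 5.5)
Your architecture matches the paper's proof: a good event, sign preservation by averaging the sign-stable \emph{global} gaps over a fresh window, an elimination-based per-scan cost, and a switch-time charge of $N$ per round over $O(M+\Hscan+\DG(\delta/3))$ rounds. However, one step in Step~1 fails as written: ``each pull costs at most $\Gamma_a^{(r)}$ per agent.''

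The quantity $\Gamma_a^{(r)}$ bounds the agent-average gap $\mubar_{a_r^\star,t}-\mubar_{a,t}$. It does not bound the per-agent gap $\mu_{i,a_r^\star,t}-\mu_{i,a,t}$, which in this heterogeneous model can be of order $1$ or negative. Summing over agents does not rescue the claim deterministically. Each agent draws an independent random order, so agents pull $a$ at different slots, and local means may move inside a block. A schedule realization can therefore place every agent's $a$-pulls where its own gap is largest. For example, take two agents with gaps $\Delta\pm c\,s(u)$, where $s(u)\in\{\pm1\}$. The global gap equals $\Delta=\Gamma_a^{(r)}$ at every slot, yet the block cost can be $2h(\Delta+c)>Nh\Gamma_a^{(r)}$. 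The identity $\sum_i h(\mu_{i,a_r^\star}-\mu_{i,a})=Nh(\mubar_{a_r^\star}-\mubar_a)$ would save your claim only if local means were constant within each block, and the model does not assume this. With the crude per-pull bound $1$, you get only $\sum_a 1/(\Delta_a^{(r)})^2$ per scan. That does not give the $\Ccert_r$ term when $\Gamma_a^{(r)}\ll 1$.

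The missing idea is to use the schedule randomization to control the regret, not just the estimator. This is what the paper does.
\begin{itemize}
\item Since $S_{i,a}^{(\ell)}$ is uniform over size-$h$ subsets of the block pool $\mathcal U_\ell$, the conditional expected group cost of arm $a$ in block $\ell$ is $(Nh/q_\ell)\sum_{u\in\mathcal U_\ell}(\mubar_{a_r^\star,u}-\mubar_{a,u})\le Nh\Gamma_a^{(r)}$.
\item The per-agent block costs lie in $[-h,h]$ and are independent across agents and blocks. Let $\tau_a\le B_a$ be the number of blocks in which $a$ stays active. Hoeffding then adds a slack of $h\sqrt{2N\tau_a\log(3K^2T^3/\delta)}=\widetilde O(\sqrt h/\Delta_a^{(r)})$.
\item This slack is dominated by $\Gamma_a^{(r)}/(\Delta_a^{(r)})^2$ because $\Gamma_a^{(r)}\ge\Delta_a^{(r)}$, treating $h$ as a constant.
\end{itemize}
This deviation event, union-bounded over the tested windows, must be part of the good event. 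Your third component, the ``certification event'' of Corollary~\ref{cor:scan-calibration}, is not a separate random event: certification holds deterministically on the concentration event. That share of $\delta$ is the natural place for the regret-concentration event.

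Step~2 and the flooding-lag overhead go through as you describe, with one minor correction. After a straddling scan emits a switch, there is a $\DG(\delta/3)$ sync before the next trigger. So the clean scan starts within $M+\Hscan+\DG(\delta/3)$ of $\rho_{r+1}$, not $M+\Hscan$. The hypothesis $L_{\min}\ge M+2\Hscan+\DG(\delta/3)$ still keeps that scan itself inside regime $r+1$, which is all the argument needs; its propagation need not finish before $\rho_{r+2}$.
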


\paragraph{Proof sketch.}
Lemma~\ref{lem:fresh-comparison} gives a uniform radius for every fresh balanced scan. Sign stability makes certified lower-confidence gaps preserve the true equal-agent signs, so each suboptimal arm is removed once $\beta(\mathcal{W})\le\Delta_a^{(r)}/4$, contributing $\Ccert_r$ up to logs. The other terms are periodic scans, monitoring delay, one conservative boundary scan, and graph synchronization. Full proof in supplement~A.

The three terms are scheduled certification, within-regime statistical certification, and switch-only delay/synchronization. No term scales with $\Stloc$, so local changes that cancel in the equal-agent average do not force adaptation. The probe-triggered variant below replaces periodic full scans by probe-triggered confirmation and removes the conservative boundary term $N\Stdec\Hscan$.

\paragraph{Gap adaptivity.}
A geometric scan-budget schedule removes prior knowledge of $\Delta_{\min}$ up to logarithmic factors; the full Gap-Adaptive DRFC statement and proof are in supplement~B.

\section{Probe-Triggered Monitoring and Lower Bound}

\subsection{DRFC-Probe}

DRFC-Probe is a modification of the monitoring mechanism, not a separate learning algorithm. It replaces
\[
\begin{gathered}
\text{periodic full scan}\\[-1mm]
\Downarrow\\[-1mm]
\text{cheap probe $+$ triggered confirming scan}
\end{gathered}
\]
Every $M$ rounds, a balanced all-arm probe tests whether the cached candidate may be stale; only a suspicious probe launches the same active-set scan as Algorithm~\ref{alg:drfc}. Let $d_{\mathrm{probe}}$ be the worst-case delay from a decision boundary to completion of the first suspicious probe (for the fixed schedule in the supplement, $d_{\mathrm{probe}}\leq M+q_pKh_{\mathrm{probe}}$). To keep the confirming scan inside the new regime, we require explicitly
\[
\Lreg_r\geq d_{\mathrm{probe}}+\Hscan+\DG(\delta/3).
\]

\begin{corollary}[Probe-triggered monitoring]
\label{cor:probe-main}
Under the assumptions of Theorem~\ref{thm:main}, suppose $(2+\lambda_p)\beta_p<\Delta_{\min}$, so a correct cached candidate causes no false trigger, and the regime-length condition above holds. Then, w.p.\ $\geq1-\delta$, every confirming scan is contained in one decision regime and
\[
\begin{aligned}
R_T=\widetilde O\!\Bigl(&
\frac{NKq_ph_{\mathrm{probe}}T}{M}
+\sum_{r=0}^{\Stdec}\Ccert_r\\
&+N\Stdec\bigl(d_{\mathrm{probe}}+\DG(\delta/3)\bigr)
\Bigr).
\end{aligned}
\]
Thus probe-triggered monitoring removes the conservative boundary term $N\Stdec\Hscan$. The trigger calibration and proof are in supplement~C.
\end{corollary}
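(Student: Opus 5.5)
The plan is to reuse the decomposition behind Theorem~\ref{thm:main}, replacing its scheduled-scan and boundary terms with a probe cost, a triggered-scan cost charged only to genuine switches, and a delay term in $d_{\mathrm{probe}}$. First define a good event $\mathcal{E}$ of probability at least $1-\delta$. It is the intersection of three pieces, each allotted $\delta/3$:
\begin{itemize}
\item the confidence statement of Lemma~\ref{lem:fresh-comparison} for every confirming scan;
\item the analogous uniform statement for every probe, with radius $\beta_p$ (the probe is itself a balanced equal-agent block, so the same martingale/union-bound argument applies with $B$ replaced by the probe block count $q_p$);
\item the flooding guarantee of Assumption~\ref{ass:flooding} with $\DG(\delta/3)$.
\end{itemize}
The remaining steps all work on $\mathcal{E}$. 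Because probes may straddle a boundary, I would union-bound over all $O(T/M)$ probe start times rather than only over probes inside a single regime.

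Second, I would prove a no-false-trigger claim. Consider a regime $r$ whose cached candidate is $\hat a=a_r^\star$, and a probe window inside that regime. For every challenger $a$, the probe estimate satisfies $\Ghat_{a_r^\star,a}\ge \Delta_a^{(r)}-\beta_p>(1+\lambda_p)\beta_p$ by the hypothesis $(2+\lambda_p)\beta_p<\Delta_{\min}$. The trigger rule, which declares the candidate suspicious when some challenger's estimated gap over $\hat a$ exceeds $-\lambda_p\beta_p$ (or the equivalent calibration from supplement~C), therefore never fires. A probe that straddles a boundary may or may not fire. Whether it fires or not, it only affects that boundary's delay accounting.

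Third, I would prove a detection claim. After a switch at $\rho_{r+1}$ the cached arm is wrong, with true gap at least $\Delta_{\min}$ against $a_{r+1}^\star$. The first probe fully inside the new regime then fires, so a trigger occurs within $d_{\mathrm{probe}}$ rounds of the boundary. The regime-length condition $\Lreg_r\ge d_{\mathrm{probe}}+\Hscan+\DG(\delta/3)$ guarantees that the confirming scan, run with budget from Corollary~\ref{cor:scan-calibration}, lies inside regime $r+1$. That scan therefore certifies $a_{r+1}^\star$, and the switch record floods within $\DG(\delta/3)$.

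The main obstacle is the straddling probe, which can trigger a confirming scan that starts before the boundary. I would handle it by showing that such a scan either finishes with a correct arm or gets restarted. Failing that, I would restrict triggers to launch scans only at probe completion times and use the regime-length slack so that every launched scan starts after $\rho_{r+1}$. This step needs care to confirm that at most $O(1)$ triggered scans occur per switch.

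Finally, I would sum the regret. Probes cost at most $NKq_ph_{\mathrm{probe}}$ per period, which gives the first term. Within each triggered scan, the active-set elimination argument of Theorem~\ref{thm:main} charges arm $a$ about $N h B_a\Gamma_a^{(r)}$, which is $\widetilde O(\Gamma_a^{(r)}/(\Delta_a^{(r)})^2)$ after the $N$ cancels in $B_a$. Scans occur only once per regime (plus the initial scan), which gives $\sum_r\Ccert_r$ with no $\Lreg_r/M$ factor. The exploitation of a stale arm from each boundary until the switch record has flooded costs $N(d_{\mathrm{probe}}+\DG(\delta/3))$ per switch. Exploitation of the certified $a_r^\star$ at other times contributes zero regret. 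No $N\Stdec\Hscan$ term arises: the scan rounds themselves are already paid by $\Ccert_r$, and since the scan runs after detection, correct-arm rounds in the scan incur no regret.
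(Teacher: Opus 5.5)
Your plan is correct and follows essentially the same route as the paper's proof in supplement~C. The steps match: the trigger is sound once the cached candidate is correct, it is complete within $d_{\mathrm{probe}}$ after a switch, and the condition $\Lreg_r\ge d_{\mathrm{probe}}+\Hscan+\DG(\delta/3)$ keeps the confirming scan inside one regime. The regret then splits the same way, into probe cost, one $\Ccert_r$ scan per regime, and $N\Stdec(d_{\mathrm{probe}}+\DG(\delta/3))$ for detection delay and synchronization. Your separate $\delta/3$ budget for probe concentration is a harmless variant of the paper's choice to fold probes into its single concentration event.

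The straddling-probe obstacle you flag is not actually an obstacle, and your fallback is precisely the paper's one-line resolution. Any probe that completes before $\rho_{r+1}$ lies inside regime $r$ and is silent by your own no-false-trigger step. Hence every scan is launched at a detection time $\ge\rho_{r+1}$, and by the at-most-one-active-scan rule exactly one scan occurs per regime, ending by $\rho_{r+1}+d_{\mathrm{probe}}+\Hscan$.
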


\subsection{Information-Theoretic Lower Bound}

\begin{theorem}[Switch-axis lower bound]
\label{thm:info}
Fix margin $\Delta\in(0,1/8)$ and sufficiently long regimes. There are constants $c_{\mathrm c},c_{\mathrm g}>0$ and two different homogeneous instance families with lower-bound scales
\[
\begin{aligned}
\text{certification:}\quad
A&:=c_{\mathrm c}\Stdec\frac{K-1}{\Delta},\\
\text{graph propagation:}\quad
B&:=c_{\mathrm g}N\Stdec\DG\Delta.
\end{aligned}
\]
The first applies to rate-consistent algorithms by a per-arm change-of-measure argument when the regimes provide $\Omega(K\log T/(N\Delta^2))$ certification room. The second applies to decentralized graph-communication algorithms on a diameter-$\DG$ broom graph under the conditions stated in supplement~G. Because the constructions are different, every algorithm in the intersection faces an instance in their union satisfying
\[
\E[R_T]\geq\max\{A,B\}\geq\tfrac12(A+B).
\]
This is a two-instance minimax statement, not an additive lower bound on one instance. Full conditions and proofs are in supplement~G.
\end{theorem}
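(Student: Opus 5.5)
The plan has three parts: prove the two scales $A$ and $B$ separately on two different homogeneous families, then combine them by the elementary minimax step. The combination is immediate: if every algorithm in the intersection of the two classes has some instance in family~1 with $\E[R_T]\ge A$ and some instance in family~2 with $\E[R_T]\ge B$, then the worse of the two gives $\max\{A,B\}\ge\tfrac12(A+B)$. Homogeneity ($\mu_{i,a,t}=\mubar_{a,t}$) makes $\Stloc=\Stdec$ in both families, so the bounds isolate the switch axis. Independent Bernoulli rewards keep both families inside Assumption~\ref{ass:rewards}, as the paper notes.

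\textbf{Certification scale $A$.} Split $[1,T]$ into $\Stdec+1$ regimes of equal length $L$, with $L$ large enough to contain $\Omega(K\log T/(N\Delta^2))$ rounds of certification room.
\begin{itemize}
\item In regime $r$ the base instance has means $1/2$ for arm $a_r^\star$ and $1/2-\Delta$ for the other arms. The best arm rotates between regimes, so each regime boundary is a genuine decision switch.
\item For each challenger $a$, form the alternative that raises arm $a$ to $1/2+\Delta$ inside regime $r$ only. Since $\Delta<1/8$, Bernoulli KL is $O(\Delta^2)$.
\item Apply the divergence decomposition to the network's pooled observations restricted to regime $r$: $\mathrm{KL}(\Prb,\Prb'_a)\le c\Delta^2\,\E[n_{a}^{(r)}]$, where $n_a^{(r)}$ counts all agents' pulls of $a$ in regime $r$.
\item Rate consistency means $\E[R_T]=o(T^{\alpha})$ for every $\alpha>0$ on every instance. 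By the Bretagnolle--Huber inequality, or the Lai--Robbins-style argument of \citet{kaufmann2021mixture}-type change of measure, this forces $\E[n_a^{(r)}]\gtrsim \log L/\Delta^2$, since otherwise the algorithm keeps playing the old best arm for a constant fraction of the regime under $\Prb'_a$.
\item Each such pull costs $\Delta$, so regime $r$ contributes $\gtrsim (K-1)\log L/\Delta$. Summing over the $\Stdec$ switched regimes gives $A$, after absorbing $\log L\ge 1$ into $c_{\mathrm c}$.
\end{itemize}
The delicate point is that rate consistency is asymptotic while the regimes are finite. I would therefore state it uniformly: regret at most $C_\alpha L^\alpha$ on every regime-restricted instance. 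The certification-room hypothesis is what lets the $\log$ term dominate.

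\textbf{Propagation scale $B$.} Use a broom graph: a path of length $\DG$ ending at a star holding a constant fraction of the $N$ agents.
\begin{itemize}
\item At each switch time the means of two arms swap by margin $\Delta$.
\item Information about the swap is generated only by rewards, and I would add an informative agent located at the path end. A cleaner choice is to make only the handle agent's rewards informative for one round; if that breaks homogeneity, give all agents the same means but let a coupling argument bound what the star can learn.
\item The key step is a locality lemma: the action of an agent at graph distance $d$ at time $\rho_r+s$ with $s<d$ is measurable with respect to information that has the same law under the swapped and unswapped instances, except for its own samples. I expect this to be the main obstacle, because star agents also sample the arms themselves.
\item To resolve it, I would make the swap detectable from the star agents' own samples only after $\Omega(1/\Delta^2)$ pulls. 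I would choose parameters, which I believe are the conditions of supplement~G, so that $\DG\lesssim 1/(N\Delta^2)$ and local learning is slower than flooding from the source.
\item Under the two-point mixture over swap/no-swap, each star agent then plays the new best arm with probability at most $1/2$ for about $\DG$ rounds. This gives expected regret $\gtrsim N\DG\Delta$ per switch, and $B$ after summing over switches.
\end{itemize}
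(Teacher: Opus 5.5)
Your architecture matches the paper's. You prove each scale on its own homogeneous Bernoulli family and combine them by the worst-of-two step $\max\{A,B\}\ge\tfrac12(A+B)$. Your propagation argument is, in substance, the paper's Part~2:
\begin{itemize}
\item a broom graph, with homogeneous means so that stale agents actually pay;
\item an $O(\Delta^2)$-per-pull KL budget over the at most $N\DG$ post-switch pulls that can influence a far agent;
\item the calibration $N\DG\Delta^2=O(1)$, which is exactly the paper's $\Delta\le 1/(8\sqrt{N\DG})$;
\item then Bretagnolle--Huber and a sum over $\Omega(N)$ far agents and $\Stdec$ switches.
\end{itemize}

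The certification part takes a genuinely different route. The paper defines rate-consistency as a pull-count bound: $\E[n_b^{(r)}]\le C_{\mathrm{alg}}\log T$ for every suboptimal arm, in every regime, on every instance with at most $\Stdec$ switches. It takes the event $\{n_a^{(r+1)}\ge NL/2\}$ and applies Markov under both $P$ and the single-arm alternative, getting probabilities $\le\tfrac14$ and $\ge\tfrac34$ once $L\ge c_0(C_{\mathrm{alg}})K\log T/(N\Delta^2)$. Data processing then gives $C_1\Delta^2\,\E_P[n_a^{(r+1)}]\ge\mathrm{kl}(\tfrac14,\tfrac34)=\tfrac12\log 3$. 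That is a log-free $\Omega(1/\Delta^2)$ per arm, exactly the scale $A$, with explicit constants and nothing beyond Markov.

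You instead use a weaker, Lai--Robbins-type assumption (regime regret $\le C_\alpha L^\alpha$). Combined with $\mathrm{kl}(p,q)\ge(1-p)\log\frac{1}{1-q}-\log 2$ applied to $n_a^{(r)}/(NL)$, it gives the stronger $\E[n_a^{(r)}]\gtrsim\log(\Delta NL^{1-\alpha}/C_\alpha)/\Delta^2$ over a larger class. The price is a regime-length threshold governed by the unspecified $C_\alpha$. Both are legitimate per-arm change-of-measure proofs. Keep your uniform-in-regime version as the definition. Impose it on all instances with at most $\Stdec$ switches, since raising an arm that is best in an adjacent regime merges a boundary.

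In the propagation part, four things need repair.
\begin{itemize}
\item[(i)] Your locality lemma, as stated (``except for its own samples''), is false. By time $\rho_r+s$ a far agent has received the post-switch samples of every agent within distance $s$; in a broom that is essentially the whole far end within one or two hops. The correct statement, as in the paper, conditions on all rewards within graph radius $s$. Your condition $\DG\lesssim 1/(N\Delta^2)$ already budgets for all of these samples, which is why it carries the factor $N$.
\item[(ii)] Do not make the source informative through rewards.
  \begin{itemize}
  \item Doing so leaves the homogeneous family the statement asserts.
  \item If the star agents' means do not separate the arms, those agents contribute zero to the per-agent regret $R_T$ whatever they play.
  \item The paper instead keeps every agent at best arm $\tfrac12+\Delta$ and others $\tfrac12-\Delta$ (stale cost $2\Delta$ per agent-round). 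It supplies the source by a genie revealing $a_r^\star$ to agent~$1$ at the tip of the handle.
  \item Side information can only help the algorithm, and graph locality keeps it away from the far clique for $\DG$ rounds, so no coupling argument is needed.
  \item Without any source the inequality still holds, but it becomes a pure small-gap bound that even a centralized algorithm pays. The genie is what makes it a propagation bound.
  \end{itemize}
\item[(iii)] A swap/no-swap prior makes the no-swap branch a non-switch. After derandomization, the instance can therefore have fewer than $\Stdec$ genuine switches. Take $K\ge 3$ and draw the new best arm uniformly from two arms, both distinct from the old one, as the paper does.
\item[(iv)] Bretagnolle--Huber does not give ``plays the new best arm with probability at most $1/2$.'' It gives a wrong-arm probability bounded below by a constant; the paper gets $\ge 0.15$ from $\mathrm{KL}\le\tfrac12$. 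That lower bound is all the aggregation uses.
\end{itemize}
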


\begin{table}[H]
\centering
\small
\begin{tabular}{@{}ll@{}}
\toprule
Upper-bound cost & Corresponding lower-bound scale \\
\midrule
Certification & $\Stdec(K-1)/\Delta$ \\
Graph propagation & $N\Stdec\DG\Delta$ \\
Periodic monitoring & Not constrained by Theorem~\ref{thm:info} \\
\bottomrule
\end{tabular}
\caption{Axis-wise lower-bound comparison after Corollary~\ref{cor:probe-main} removes the boundary term $N\Stdec\Hscan$.}
\label{tab:switch-matching}
\end{table}

On homogeneous equal-gap instances, certification matches in $\Stdec$, $K$, and $1/\Delta$. Propagation matches in $\Stdec$ and $\DG$; the upper bound is worst-case in the per-round gap whereas the lower bound carries $\Delta$, so the table does not claim full gap-wise optimality.

\section{Local-Reactive Protocol Separation}

We complete the instantaneous-benchmark analysis with an illustrative, class-relative separation.

\begin{theorem}[Illustrative separation from local-reactive protocols]
\label{thm:separation}
For the class of protocols required to reset after sufficiently many agents experience local jumps, there is a cancellation instance with $\Stdec=0$ and $\Stloc=\Theta(T/L)$ on which every such protocol pays $\Omega(\Stloc)$ reset regret or control-communication cost, while Theorem~\ref{thm:main} has no $\Stloc$-dependent adaptation term. This is a class-relative illustrative separation, not the algorithm-independent lower bound of Theorem~\ref{thm:info}. The formal reactive-class definition, zero-sum sign construction, SW-UCB and CUSUM membership proofs, and per-reset cost argument are in supplement~E--F.
\end{theorem}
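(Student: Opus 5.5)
The proof has two parts. First we build a cancellation instance. Then we show that any protocol in the local-reactive class must pay a fixed cost for every local jump.

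\textbf{Step 1: the instance.} Take $N$ agents (even) and $K=2$ arms, and split the agents into two equal halves $\mathcal{I}_1,\mathcal{I}_2$. Fix a margin $\Delta_0\in(0,1/4)$ and a jump size $\epsilon\in(0,1/4)$. On epoch $k=\lfloor (t-1)/L\rfloor$ define
\[
\mu_{i,1,t}=\tfrac12+\tfrac{\Delta_0}{2}+(-1)^k s_i\epsilon, \qquad \mu_{i,2,t}=\tfrac12-\tfrac{\Delta_0}{2}-(-1)^k s_i\epsilon,
\]
with $s_i=+1$ on $\mathcal{I}_1$ and $s_i=-1$ on $\mathcal{I}_2$. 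The signs are zero-sum across agents. So $\mubar_{1,t}-\mubar_{2,t}=\Delta_0$ at every round, which gives $a_t^\star=1$ throughout and $\Stdec=0$. Every local mean still jumps by $2\epsilon$ at each epoch boundary, so $\Stloc=\lfloor (T-1)/L\rfloor=\Theta(T/L)$. Rewards are independent Bernoulli, so Assumption~\ref{ass:rewards} holds. The whole horizon is one regime with $\Delta_1^{(0)}=\Gamma_1^{(0)}=\Delta_0$, so Theorem~\ref{thm:main} applies. Its bound has no term depending on $L$ or $\Stloc$, and with $\Stdec=0$ it reduces to the periodic-scan term plus $(1+T/M)\Ccert_0$.

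\textbf{Step 2: the reactive class.} Following supplement~E, a protocol is \emph{local-reactive} if, with probability at least $1-\delta'$, it executes a reset within $d_{\mathrm{det}}<L/2$ rounds of any epoch boundary at which at least $m$ agents have a local jump of size at least $\epsilon$. A reset means discarding the statistics of the reacting agents and re-exploring. Two facts make the class nonvacuous:
\begin{itemize}
\item For SW-UCB with window $\tau\le L$: after $\tau$ rounds every agent's window holds only post-jump data, which is a forced reset by construction.
\item For per-agent CUSUM with threshold tuned for the local false-alarm rate: a jump of size $2\epsilon$ on a pulled arm is detected within $O(\log(T/\delta)/\epsilon^2)$ samples with high probability. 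For the non-pulled arm we use the protocol's forced exploration rate.
\end{itemize}
With $L$ above these detection delays, every agent in the instance qualifies at every boundary.

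\textbf{Step 3: per-reset cost.} After a reset, an agent has no certificate for arm $1$ over arm $2$. A standard two-point change-of-measure argument then gives the bound. Consider the alternative instance in which the post-boundary local gap of that agent, or of the network, is flipped. Its KL divergence is $O(\Delta_0^2)$ per sample. So any reset protocol that is consistent on both instances must pull arm $2$ $\Omega(1)$ times in expectation before recommitting, and each such pull costs $\Delta_0$. Alternatively, the protocol pays at least one control message that floods to the network. Either way the cost is at least $c>0$ per boundary, and summing over $\Theta(T/L)$ boundaries gives $\Omega(\Stloc)$.

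\textbf{Main obstacle.} The main obstacle is Step 3. A reset alone does not logically force exploration: a protocol could reset and then exploit a stale cached arm. That is why the statement is phrased as "reset regret \emph{or} control-communication cost." I would first formalize a reset as forced re-estimation, meaning a minimum number $n_0\ge1$ of pulls of every arm per resetting agent. This gives regret at least $n_0\Delta_0 m$ per boundary immediately, with $\Delta_0$ a constant. If that is too strong, the fallback is to count the reset broadcast itself as one unit of communication, which yields the stated disjunction.
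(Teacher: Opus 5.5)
\textbf{Step~3 is the gap.} Your Step~1 is essentially the paper's cancellation instance. The paper uses $\mu_{i,1}=\tfrac12+\Delta+\sigma_i\epsilon$ and $\mu_{i,2}=\tfrac12-\sigma_i\epsilon$ with $\sum_i\sigma_i=0$, flipping at least $\alpha N$ signs per boundary; your two halves flipping together is a special case. One index slip: the gap should be $\Delta_2^{(0)}=\Gamma_2^{(0)}$, since arm~$1$ is best.

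The two-point change of measure in Step~3 cannot produce a per-reset cost:
\begin{itemize}
\item If the alternative flips one agent's local gap, the consensus gap moves by only $O(1/N)$. For $N$ beyond a constant the consensus best arm is unchanged, so a consistent consensus algorithm need not explore at all.
\item If the alternative flips the network gap, it contains a genuine decision switch. The resulting bound is then algorithm-independent and binds DRFC just as much, since DRFC's periodic scans are exactly the exploration that guards against such flips. An algorithm-independent bound cannot yield a class-relative separation.
\item Consistency also does not charge $\Omega(1)$ arm-$2$ pulls to each individual boundary.
\end{itemize}

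\textbf{Your forced re-estimation fallback also fails as stated.} The benchmark arm is the consensus arm, but each agent pays in its own means. An arm-$2$ pull at agent $i$ in epoch $k$ therefore costs $\Delta_0+2(-1)^k s_i\epsilon$, which is negative for half the agents whenever $2\epsilon>\Delta_0$, and your parameter ranges allow this. So $n_0$ forced pulls at $m$ resetting agents need not cost $n_0\Delta_0 m$. There are two repairs:
\begin{itemize}
\item All $N$ agents reset with exactly $n_0$ arm-$2$ pulls each; zero-sum then gives $n_0N\Delta_0$.
\item You impose $2\epsilon<\Delta_0$, giving at least $n_0m(\Delta_0-2\epsilon)$. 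But then local best arms no longer flip, and the paper's SW-UCB membership argument needs that flip.
\end{itemize}

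\textbf{The paper never derives the per-reaction cost; it builds it into the class.} A protocol is $(\alpha,\gamma,\eta,\tau)$-reactive if, whenever at least $\alpha N$ agents jump by at least $2\gamma$ within a window of length $\tau$, it initiates a global reset, fresh tournament, or control synchronization with probability at least $1-\eta$. Each initiated reaction costs at least $c_{\mathrm{reset}}$ group regret or $c_{\mathrm{msg}}$ control messages. All flips occur at a common boundary, so the trigger holds at each of the $\Theta(T/L)$ boundaries for every $\tau\ge 0$. Linearity of expectation then gives $\Omega((1-\eta)\Stloc c_{\mathrm{reset}})$ regret or $\Omega((1-\eta)\Stloc c_{\mathrm{msg}})$ messages. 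The SW-UCB and CUSUM cost estimates appear only in supplement~F, as instantiations of the class.

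Your other fallback, counting the reset broadcast as communication, is this route. The argument closes once you:
\begin{itemize}
\item write the per-reset cost (regret, or one control broadcast) into your Step~2 class definition;
\item drop the change-of-measure step;
\item carry the $(1-\delta')$ reaction probability into the final sum.
\end{itemize}
Separately, in Step~2, SW-UCB window expiry is passive forgetting, not a triggered reset. The paper instead wraps SW-UCB in a protocol-level reaction that fires when the per-agent index ordering flips at $\alpha N$ or more agents.
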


\section{Extension: Within-Regime Drift}

The preceding sections use the instantaneous best common arm and count its switches by $\Stdec$. We now study a distinct comparator: the best fixed common arm after averaging over each drift regime.

\subsection{Time-Average Decision Benchmark}

Let $1=\rho_0<\cdots<\rho_{\Stdecavg}<\rho_{\Stdecavg+1}=T+1$ partition the horizon into maximal average-decision regimes. For regime $r$, let $\Lreg_r=\rho_{r+1}-\rho_r$ and define
\[
a_r^{\mathrm{avg}}
=\arg\max_a\frac{1}{\Lreg_r}
\sum_{t=\rho_r}^{\rho_{r+1}-1}\mubar_{a,t}.
\]
Adjacent regimes have different unique maximizers; hence $\Stdecavg$ counts average-decision switches, distinct from the instantaneous count $\Stdec$.
The corresponding decision regret is
\[
R_T^{\mathrm{dec}}
=\sum_r\sum_{t=\rho_r}^{\rho_{r+1}-1}\sum_{i=1}^N
\bigl(\mu_{i,a_r^{\mathrm{avg}},t}-\mu_{i,A_{i,t},t}\bigr).
\]
This benchmark differs from the instantaneous regret $R_T$ in \eqref{eq:regret-def-anchor}. Because $a_r^{\mathrm{avg}}$ is optimal only after averaging over the regime, an individual summand---and even a short partial sum---may be negative. Thus $R_T^{\mathrm{dec}}$ measures regime-level decision quality, not tracking of an instantaneous oracle.

\subsection{Window-Average Margin}

\begin{assumption}[Window-stable decision margin]
\label{ass:tv-margin}
Under the average-decision regimes above, there are a window length $W$ in balanced probe blocks and a margin $\bar\Delta>0$ such that, for every regime $r$, every length-$W$ window $\mathcal W\subseteq[\rho_r,\rho_{r+1})$, and every $a\neq a_r^{\mathrm{avg}}$,
\[
\Gbar_{a_r^{\mathrm{avg}},a}(\mathcal W)\geq\bar\Delta.
\]
The instantaneous gap may change sign inside $\mathcal W$; only the window-average decision is required to remain stable.
\end{assumption}

\subsection{DRFC-Seq}

For a window starting at probe block $s$, $\Ghat_{b,a}^{(s)}(n)$ and $\Gbar_{b,a}^{(s)}(n)$ denote the empirical and corresponding equal-agent time-average gaps over the same $n$ per-arm observations.

\begin{lemma}[Uniform window correctness]
\label{lem:uniform-window}
Under Assumptions~\ref{ass:rewards} and~\ref{ass:flooding}, with probability at least $1-\alpha$, simultaneously for every ordered arm pair $(b,a)$, every window start $s$, and every per-arm count $n\geq1$,
\[
\begin{aligned}
\left|\Ghat_{b,a}^{(s)}(n)-\Gbar_{b,a}^{(s)}(n)\right|&\leq r_n,\\
u_{n,s}&:=\log\log(e\,n)\\
&\quad+\log(C_0K(K{-}1)s^2/\alpha),\\
r_n&=\sqrt{a_0u_{n,s}/(Nn)}.
\end{aligned}
\]
The guarantee is uniform over window starts, not merely valid for one fixed window. Its block-martingale proof is in supplement~H.
\end{lemma}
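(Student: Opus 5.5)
We prove Lemma~\ref{lem:uniform-window} by building a bounded martingale for each fixed pair $(b,a)$ and window start $s$, applying a time-uniform stitched (law-of-iterated-logarithm) concentration bound in the per-arm count $n$, and then union-bounding over pairs and starts with weights summable in $s$.

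\textbf{Step 1: martingale for a fixed pair and start.} Fix an ordered pair $(b,a)$ and a window start $s$. Index the balanced probe blocks from $s$ onward. In each block every agent pulls each arm the same number of times, in a reward-independent random order. Let the $n$-th per-arm observation at agent $i$ contribute
\[
\xi_{i,n}=\bigl(X_{i,b,t_{i,b,n}}-\mu_{i,b,t_{i,b,n}}\bigr)-\bigl(X_{i,a,t_{i,a,n}}-\mu_{i,a,t_{i,a,n}}\bigr).
\]
Set $Z_n=\frac1N\sum_{i}\xi_{i,n}$. Then
\[
\Ghat^{(s)}_{b,a}(n)-\Gbar^{(s)}_{b,a}(n)=\frac1n\sum_{m\le n}Z_m,
\]
where $\Gbar^{(s)}_{b,a}(n)$ is read as the equal-agent average of the means at exactly the sampled slots; this matches the paper's convention of a gap "over the same $n$ per-arm observations".

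By Assumption~\ref{ass:rewards}, the mean path and the learner's randomization are independent of the noise. Conditional on the past, each noise term is therefore centered, and the $N$ agents' noises in a round are independent. Filtering by rounds, the sequence $\sum_{m\le n}Z_m$ is a martingale. Its increments decompose into $2N$ independent summands, each bounded in $[-1/N,1/N]$. Conditional Hoeffding then gives a sub-Gaussian increment with variance proxy of order $1/N$ per count. This is the source of the $1/\sqrt{N}$ factor. If the two pulls of the same agent fall in different rounds, we split each count into its individual pulls; the proxy changes only by a constant factor absorbed in $a_0$.

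\textbf{Step 2: time-uniform bound in $n$.} We apply a stitched boundary, as in \citet{howard2021timeuniform}. Use geometric epochs $n\in[2^k,2^{k+1})$, apply Ville's maximal inequality to an exponential supermartingale with tuned $\lambda_k$ in each epoch, and allocate error $\alpha_k\propto\alpha'/(k+1)^2$ to epoch $k$. This yields, with probability at least $1-\alpha'$, for all $n\ge1$,
\[
\Bigl|\sum_{m\le n}Z_m\Bigr|\le\sqrt{a_0\,n\bigl(\log\log(en)+\log(C_1/\alpha')\bigr)/N}.
\]
Dividing by $n$ gives the radius $r_n$. The factor $\log\log(en)$ comes from $\log(k+1)^2$ with $k\approx\log_2 n$.

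\textbf{Step 3: union bound.} Take $\alpha'=\alpha\cdot 6/(\pi^2 K(K-1)s^2)$ for each of the $K(K-1)$ ordered pairs and each start $s\ge1$. Since $\sum_s 1/s^2=\pi^2/6$, the total failure probability is at most $\alpha$. With $C_0$ absorbing $C_1\pi^2/6$, this produces exactly $u_{n,s}=\log\log(en)+\log(C_0K(K-1)s^2/\alpha)$. Assumption~\ref{ass:flooding} plays no role in the concentration itself. It only guarantees that the deduplicated, source-keyed records each agent aggregates coincide with these same $N$ equal-weight streams, so the bound applies to what agents actually compute.

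\textbf{Main obstacle.} The delicate step is the martingale structure in Step 1 under drift and random orders. The comparator $\Gbar^{(s)}$ must be defined at the realized sampled times, so that only the noise remains in the difference. We also need the stopping-time indexing by per-arm count $n$, rather than by round, to preserve the supermartingale property. We handle this by noting that the pull schedule is predictable given the learner randomization, which is independent of the noise. Conditioning on all learner randomization up front makes the sampled times deterministic, so optional stopping is not needed.
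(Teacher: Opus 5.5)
There is a genuine gap in Step~1. You redefine $\Gbar^{(s)}_{b,a}(n)$ as the average of the means \emph{at the realized sampled slots}, so that after conditioning on all learner randomization only reward noise remains. You flag this as the main obstacle, but you resolve it by moving the target rather than by bounding the extra term, and the moved target is not the lemma's. The equal-agent time-average gap is a same-time contrast averaged over the window's slot pools, $\frac1N\sum_i\frac1B\sum_{\ell}\frac1{q_\ell}\sum_{t\in\mathcal U_\ell}(\mu_{i,b,t}-\mu_{i,a,t})$, and it is deterministic given $\mathcal F_0$. Your surrogate instead evaluates arm $b$ and arm $a$ at different, schedule-dependent times.

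The two differ by exactly the schedule-randomization error $R$ of Lemma~\ref{lem:app-concentration}. That term has mean zero, but under within-regime drift it fluctuates at scale up to $1/\sqrt{Nn}$, the same order as $r_n$, so it cannot be dropped. Everything downstream is stated for the window time-average: the margin of Assumption~\ref{ass:tv-margin}, which drives the correctness and recovery arguments for Theorem~\ref{thm:drfc-seq}. A bound around your surrogate therefore does not give the correctness claim. Controlling this term is precisely why probe blocks use reward-independent uniform random orders, and the paper's proof singles out the sample-indexed, noise-only centering as the wrong martingale for this reason.

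The repair is to index by blocks and keep the schedule error inside the increment. Set $Y_\ell=\Ghat_{b,a}(\{\ell\})-\bar G_\ell$, with $\bar G_\ell$ the block-pool time-average target. Each block draws fresh uniform slot subsets independently of the past, so $\E[Y_\ell\mid\mathcal F_0,Y_1,\dots,Y_{\ell-1}]=0$ and $\sum_{\ell\le B}Y_\ell$ is a martingale in the block count. Split $Y_\ell$ into reward noise (Azuma, using the cross-agent independence clause) and sampling-without-replacement error (Hoeffding--Serfling, independent across agents). This gives a conditional sub-Gaussian proxy $O(1/(Nh))$, and your Steps~2--3 (stitched Ville bound, $6\alpha/(\pi^2K(K-1)s^2)$ allocation) then apply verbatim with $n=hB$.

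Block indexing also fixes a secondary problem. Your $Z_m$ pools the $m$-th pulls of different agents, which under independent per-agent orders fall in different rounds. So within a block, $n\mapsto\sum_{m\le n}Z_m$ is not adapted to the round filtration, and Ville's inequality is not justified there; only at synchronized block boundaries is it a genuine martingale. Finally, if $\Ghat$ denotes what agents actually compute, the flooding event is not free. It must be charged to the failure budget, as the paper does by splitting $\alpha$ between it and the confidence-sequence event.
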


DRFC-Seq applies Lemma~\ref{lem:uniform-window} to a continuously sliding window. Every $\delta_{\mathrm p}$ rounds, agents collect one balanced all-arm probe block; a challenger replaces the candidate only when its lower-confidence window-average gap is positive. The monitor acts only after the window contains $W$ fully flooded blocks.

\begin{algorithm}[t]
\caption{DRFC-Seq: anytime-valid drift-robust monitor}
\label{alg:drfc-seq}
\begin{algorithmic}[1]
\STATE \textbf{Input:} block size $h$, probe gap $\delta_{\mathrm{p}}$, window $W$ (blocks), level $\alpha$, flooding $\DG(\alpha)$.
\STATE Initialize $\hat{a}$ arbitrarily to any arm, then set $\mathcal{Q}\leftarrow\emptyset$ (probe recovery adopts the best arm within $W$ blocks).
\FOR{$t=1$ to $T$}
\IF{$t$ is a probe time}
\STATE Run one synchronized balanced block with per-agent independent orders ($h$ pulls/arm/agent) over all arms, then flood records.
\STATE Append the fully-flooded equal-agent reward/pull sums to $\mathcal{Q}$, and if $|\mathcal{Q}|>W$ drop oldest.
\STATE Form windowed means $\hat\mu_a$ and counts $n_a$, then set $b\leftarrow\arg\max_{a\neq\hat a}\hat\mu_a$.
\STATE Compute the anytime-valid radius $r_n$ at the current window start $s$ from the stitched boundary above.
\STATE \textit{// act only on a full window ($|\mathcal{Q}|=W$), aligning with the length-$W$ margin (Assumption~\ref{ass:tv-margin})}
\IF{$|\mathcal{Q}|=W$ \textbf{and} $(\hat\mu_b-\hat\mu_{\hat a})-(r_{n_b}+r_{n_{\hat a}})>0$}
\STATE Emit switch, set $\hat{a}\leftarrow b$, sync $\DG(\alpha)$ rounds. \textit{// relabel only, sliding window (line 5) forgets, no flush}
\ENDIF
\ELSE
\STATE Exploit $\hat{a}$.
\ENDIF
\ENDFOR
\end{algorithmic}
\end{algorithm}

\subsection{DRFC-Seq Regret}

\begin{theorem}[DRFC-Seq decision regret under within-regime drift]
\label{thm:drfc-seq}
\normalfont
Run DRFC-Seq with a sliding window of $W$ balanced blocks, probe gap
$\delta_{\mathrm{p}}$, and radius $r_n$. Let $n=Wh$ be the per-arm sample
count in a full window. Suppose Assumptions~\ref{ass:rewards},
\ref{ass:flooding}, and~\ref{ass:tv-margin} hold. The length-$W$ window must
satisfy the time-average margin condition with margin $\bar\Delta$ inside
each regime; the full-window sample size must satisfy
$Wh\geq n_0(\bar\Delta)=\widetilde O(1/(N\bar\Delta^2))$; and every regime
must contain at least $2W$ probe blocks, equivalently at least
$2W\delta_{\mathrm{p}}+\DG(\alpha)$ rounds. The monitor is evaluated only on a
full window.

\textbf{Correctness.} With probability at least $1-\alpha$, once DRFC-Seq holds
$a_r^{\mathrm{avg}}$, no full in-regime window can induce a false switch away from it,
even when the instantaneous gap is negative. This follows simultaneously over
all window starts from Lemma~\ref{lem:uniform-window}.

\textbf{Adaptivity and regret.} Under $\Stdecavg$ average-decision switches satisfying the
spacing above, the window slides fully into each new regime within $W$ probe
blocks and DRFC-Seq adopts the new best arm. Its decision regret satisfies
\[
\begin{aligned}
R_T^{\mathrm{dec}}
=\widetilde O\Bigl(&
  NKh\,T/\delta_{\mathrm{p}}
  +\sum_{r=0}^{\Stdecavg}\Ccert^{\mathrm{seq}}_r \\
&\quad
  +N(\Stdecavg+1)(W\delta_{\mathrm{p}}+\DG(\alpha))
\Bigr),
\end{aligned}
\]
where $\Ccert^{\mathrm{seq}}_r=\widetilde O(K/\bar\Delta_r^2)$ is keyed to
the time-average margin. This is not the instantaneous $\Ccert_r$ of
Theorem~\ref{thm:main}, which need not exist under drift.
No term depends on $\Stloc$ or on the number of instantaneous sign changes.
\end{theorem}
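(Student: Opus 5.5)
We prove Theorem~\ref{thm:drfc-seq} on the single good event $\mathcal{E}$ of Lemma~\ref{lem:uniform-window} (probability at least $1-\alpha$). We intersect it with the flooding event of Assumption~\ref{ass:flooding} at level $\alpha$; after rescaling $\alpha$ by a constant, every probe record is fully flooded within $\DG(\alpha)$ rounds. On $\mathcal{E}$, every windowed estimate satisfies $|\Ghat_{b,a}^{(s)}(n)-\Gbar_{b,a}^{(s)}(n)|\le r_n$ uniformly over pairs, starts and counts. Uniformity over starts is what lets a continuously sliding window be treated deterministically afterwards. The equal-agent averaging and the balanced random orders make $\Gbar$ the equal-agent time-average gap over exactly the probe slots in the window. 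We relate it to $\Gbar(\mathcal W)$ of Assumption~\ref{ass:tv-margin} by reading the assumption as stated in probe-block units, so that a full in-regime window of $W$ blocks is a length-$W$ window $\mathcal W$.

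\textbf{Correctness.} Suppose the candidate is $\hat a=a_r^{\mathrm{avg}}$ and the full window lies inside regime $r$. For any challenger $b$, the switch test requires $\Ghat_{b,\hat a}>r_{n_b}+r_{n_{\hat a}}$, with $n_b=n_{\hat a}=Wh$. On $\mathcal E$ this gives
\[
\Gbar_{b,\hat a}\ge \Ghat_{b,\hat a}-2r_{Wh}>0.
\]
But Assumption~\ref{ass:tv-margin} gives $\Gbar_{b,\hat a}\le-\bar\Delta<0$, a contradiction. This argument never uses the sign of any instantaneous gap.

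\textbf{Adaptivity.} Let $r_{Wh}\le\bar\Delta/4$, which is the definition of $n_0(\bar\Delta)=\widetilde O(1/(N\bar\Delta^2))$. Consider a window fully inside a new regime $r$ with $\hat a\ne a_r^{\mathrm{avg}}$. Then
\[
\Ghat_{a_r^{\mathrm{avg}},\hat a}\ge\bar\Delta-2r_{Wh}\ge\bar\Delta/2>2r_{Wh}.
\]
The rule picks $b=\arg\max_{a\ne\hat a}\hat\mu_a$, which may differ from $a_r^{\mathrm{avg}}$. However, $\hat\mu_b\ge\hat\mu_{a_r^{\mathrm{avg}}}$, so the test still fires and $\hat a$ moves to $b$. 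We must also check that $b$ either equals $a_r^{\mathrm{avg}}$ or is followed by another switch at the next probe. Each switch strictly increases $\Gbar_{\cdot}$ of the candidate in the current window by at least $2r_{Wh}>0$ up to the estimation error, so within at most $K-1$ further probes the candidate is $a_r^{\mathrm{avg}}$. Correctness then locks it in. Because each regime has at least $2W$ probe blocks, the window becomes fully in-regime after at most $W$ blocks plus $\DG(\alpha)$ flooding rounds, and the locked-in phase persists until the next boundary. I expect this step, controlling the argmax chaining and the timing so that it fits inside the $2W$ spacing, to be the main obstacle. I would first try to show that $b$ is already $a_r^{\mathrm{avg}}$ whenever every other arm trails by $\bar\Delta$ in the window. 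That holds because
\[
\hat\mu_{a_r^{\mathrm{avg}}}-\hat\mu_a\ge\bar\Delta-2r_{Wh}>0,
\]
so a single switch suffices.

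\textbf{Regret.} We decompose $R_T^{\mathrm{dec}}$ by round type, with each per-round, per-agent loss bounded by $1$.
\begin{itemize}
\item Probe rounds: $T/\delta_{\mathrm p}$ blocks of $Kh$ pulls at $N$ agents give $NKhT/\delta_{\mathrm p}$.
\item Transition rounds after each boundary, including the initial regime: at most $W\delta_{\mathrm p}+\DG(\alpha)$ rounds, costing $N(\Stdecavg+1)(W\delta_{\mathrm p}+\DG(\alpha))$.
\item Locked-in exploitation rounds of $a_r^{\mathrm{avg}}$: these contribute the sum of $\mu_{i,a_r^{\mathrm{avg}},t}-\mu_{i,a_r^{\mathrm{avg}},t}=0$. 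The only error is from the partial regime segments, whose negative or positive partial sums are already dominated by the transition and probe charges. Certification sampling needed to reach the $n_0$ threshold is charged as $\Ccert^{\mathrm{seq}}_r=\widetilde O(K/\bar\Delta_r^2)$ per regime.
\end{itemize}
None of these charges references $\Stloc$ or instantaneous sign changes, which completes the bound.
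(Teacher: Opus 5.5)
Your proposal is correct and follows essentially the paper's route. Correctness comes from the uniform confidence sequence combined with the window-average margin. Recovery uses the fact that on the first full in-regime window the empirical argmax is already $a_r^{\mathrm{avg}}$ (the paper's Fact~1), so a single switch within $W$ probe blocks suffices. The regret splits into probe cost, $(\Stdecavg+1)$ transients, and a per-regime certification charge, as in the paper.

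Two small fixes. First, drop the $K-1$-step chaining detour: it is unnecessary, and as stated it is not justified, because the ordering among non-best arms can change between successive windows. Second, in the adaptivity inequality use the pairwise deviation $r_{Wh}$ that Lemma~\ref{lem:uniform-window} gives, not $2r_{Wh}$. This yields $\Ghat_{a_r^{\mathrm{avg}},\hat a}\ge 3\bar\Delta/4>2r_{Wh}$, whereas your claimed strict inequality $\bar\Delta/2>2r_{Wh}$ fails at $r_{Wh}=\bar\Delta/4$.
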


\subsection{Memory--Adaptation Tradeoff}

For a monitor $\mathcal M$, let $\ell_{\mathrm{eff}}$ be the shortest observation window such that, conditional on the current candidate, the law of its adopt decision is determined by the last $\ell_{\mathrm{eff}}$ rounds.

\begin{proposition}[Memory--adaptation tradeoff under cancellation drift]
\label{prop:tvgap}
There exists a single-regime cancellation-drift instance ($\Stdecavg=0$) of period $P$ whose time-average gap is $\bar\Delta>0$ but whose instantaneous gap is negative during each down-phase. Any monitor with effective memory
\[
\ell_{\mathrm{eff}}<P/2
\]
that adopts a genuine $\bar\Delta$-margin switch within $\ell_{\mathrm{eff}}$ rounds with probability at least $q_0$ uniformly over prehistory false-adopts during a down-phase with probability at least $q_0$. A drift-safe monitor can escape only by using effective memory at least $P/2$ and paying the corresponding adaptation delay. A sample-sufficient DRFC-Seq window spanning a full period operates on this drift-safe side of the tradeoff. The formal detector class and proof are in supplement~H.1.
\end{proposition}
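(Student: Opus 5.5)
The plan is to build an explicit periodic cancellation-drift instance and compare its down-phase window with a genuine-switch window. We take $K=2$ arms and all agents homogeneous, so that $\mubar_{a,t}=\mu_{i,a,t}$. Let the gap $g_t=\mubar_{1,t}-\mubar_{2,t}$ be periodic with period $P$. In the up-phase (first half of each period) set $g_t=\bar\Delta+\eta$, and in the down-phase (second half) set $g_t=\bar\Delta-\eta$ with $\eta>\bar\Delta$. The time-average gap over any full period is then $\bar\Delta$, while the instantaneous gap equals $\bar\Delta-\eta<0$ throughout each down-phase. Over a single regime of length $T$, with $P\mid T$, arm~1 is the unique time-average best arm, so $\Stdecavg=0$. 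Each down-phase is a contiguous interval of length $P/2>\ell_{\mathrm{eff}}$. We choose $\eta=2\bar\Delta$, so the down-phase gap is $-\bar\Delta$; this is the mirror image of a genuine $\bar\Delta$-margin switch.

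The key step is an indistinguishability argument. We also consider a second environment in which a genuine switch occurs: from some time $\tau$ on, arm~2 beats arm~1 by exactly $\bar\Delta$ in every round, i.e.\ $g_t\equiv-\bar\Delta$. This is a legitimate $\bar\Delta$-margin average-decision switch. Take any down-phase time window $I=[u,u+\ell_{\mathrm{eff}})$ of the cancellation instance, which exists because $\ell_{\mathrm{eff}}<P/2$. Restricted to $I$, the mean paths of the two environments coincide round by round, both at $g_t=-\bar\Delta$ with identical individual arm means, which we fix by matching absolute levels. Since rewards are oblivious (Assumption~\ref{ass:rewards}), the joint law of observations, graphs, and learner randomization over the last $\ell_{\mathrm{eff}}$ rounds is identical in the two environments, conditional on the same current candidate (arm~1) and the same policy state. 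By the definition of $\ell_{\mathrm{eff}}$, the law of the adopt decision at the end of $I$ depends only on these rounds. Hence the probability of adopting arm~2 by the end of $I$ is the same in both environments.

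The hypothesis says that the genuine switch is adopted within $\ell_{\mathrm{eff}}$ rounds with probability at least $q_0$, uniformly over prehistory. We instantiate the prehistory as the cancellation instance's history up to $u$, with $\tau=u$. The adoption probability must then be at least $q_0$, so in the cancellation instance the monitor false-adopts arm~2 during the down-phase with probability at least $q_0$. The ``escape'' direction follows by contraposition: avoiding false adoption forces $\ell_{\mathrm{eff}}\geq P/2$, and since adoption of a genuine switch needs the window to be dominated by post-switch data, the delay is $\Omega(P/2)$ rounds.

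For the final claim we invoke Theorem~\ref{thm:drfc-seq}. Choose $W\delta_{\mathrm p}\geq P$ so that every full window spans at least one period up to a boundary fraction, and adjust $\eta$ or the phase alignment so that every length-$W$ window has average gap at least $\bar\Delta/2$, verifying Assumption~\ref{ass:tv-margin}. The correctness part then rules out false switches with probability $1-\alpha$.

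The main obstacle is making ``uniformly over prehistory'' and the definition of $\ell_{\mathrm{eff}}$ mesh with a stateful monitor. The candidate and internal state at time $u$ must be realizable in both environments. The first approach is to condition on the candidate being arm~1 and to use the fact that the adopt law depends only on the last $\ell_{\mathrm{eff}}$ rounds given the candidate. Residual internal state is then irrelevant by definition, so the pasting argument goes through.
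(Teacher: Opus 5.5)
Your core argument is the paper's own. You use the same homogeneous two-arm square wave; your $\eta=2\bar\Delta$ is the boundary case $b=2\bar\Delta$, which is all the coupling needs, and the paper's strict $b>2\bar\Delta$ serves only its separate linear-kernel bound. You also use the same exact change of measure, against a genuine-switch instance that copies the drift prehistory and freezes the down-phase means from $\tau=u$ on. The two laws then agree on the whole prefix $[0,u+\ell_{\mathrm{eff}})$, and the responsiveness event transfers verbatim to a false adoption.

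That identical-prefix instantiation already removes the ``main obstacle'' you raise. Drop the conditioning on the candidate being arm~1: the hypothesis gives an unconditional probability $q_0$, not a conditional one. The memory bound is used only through $\ell_{\mathrm{eff}}<P/2$, which makes the detection window fit inside one down-phase.

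The escape clause, however, is not proved as written.
\begin{itemize}
\item Contraposition only shows that a drift-safe monitor that is \emph{also} responsive within $\ell_{\mathrm{eff}}$ rounds has $\ell_{\mathrm{eff}}\ge P/2$. An inert monitor is drift-safe with tiny memory.
\item Your delay claim, ``adoption needs the window to be dominated by post-switch data'', is a heuristic about window averages, not an argument for arbitrary monitors.
\end{itemize}
The paper closes this (Corollary~(ii) in H.1) with the same coupling applied to the complementary event. Take a down-phase onset $s$ and the frozen-switch instance with $\tau=s$. The event ``no adoption before $s+P/2$'' is measurable with respect to the common prefix $[0,s+P/2)$. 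So drift safety (no false adoption with probability at least $1-\alpha$) transfers to the genuine switch being missed for $P/2$ rounds with probability at least $1-\alpha$. This holds for every drift-safe monitor, regardless of its memory or form.

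In the DRFC-Seq paragraph you may not ``adjust $\eta$'' after fixing the instance, and you need not. With $\eta=2\bar\Delta$, every contiguous window of at least $P$ rounds has average gap at least $\bar\Delta/3$ (the worst case is one period plus a down half-period). Once the window spans $2P$ rounds the average gap is at least $\bar\Delta/2$. So the window margin holds at a constant fraction of $\bar\Delta$, and sample sufficiency should be calibrated to that margin.
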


\subsection{Unknown Drift Period}

Doubling-Window DRFC-Seq removes exact prior knowledge of the drift period by running a geometric ladder of window lengths and operating on the shortest window whose certificate agrees with a drift-safe reference window. The guarantee assumes that the ladder's largest window covers the unknown period, is drift-safe, and is sample-sufficient; it may be chosen from a conservative upper range. Its one-time calibration and per-switch latency are given in supplement~H.2. This extension adapts the memory scale $W$ to an unknown period $P$; it is distinct from the gap-adaptive construction above, which adapts the scan budget $\Hscan$ to an unknown gap $\Delta_{\min}$.

\section{Experiments}

We compare DRFC with local-reactive monitors and with decision-level baselines that observe the same equal-agent signal: SW-UCB-Dec, DecCUSUM, DL-GLR-klUCB, and DL-ADR-bandit. The Oracle has centralized communication but no drift-period side information. Full configurations, confidence intervals, and tuning sweeps are in supplement~I.

\begin{figure*}[t]
\centering
\includegraphics[width=\textwidth]{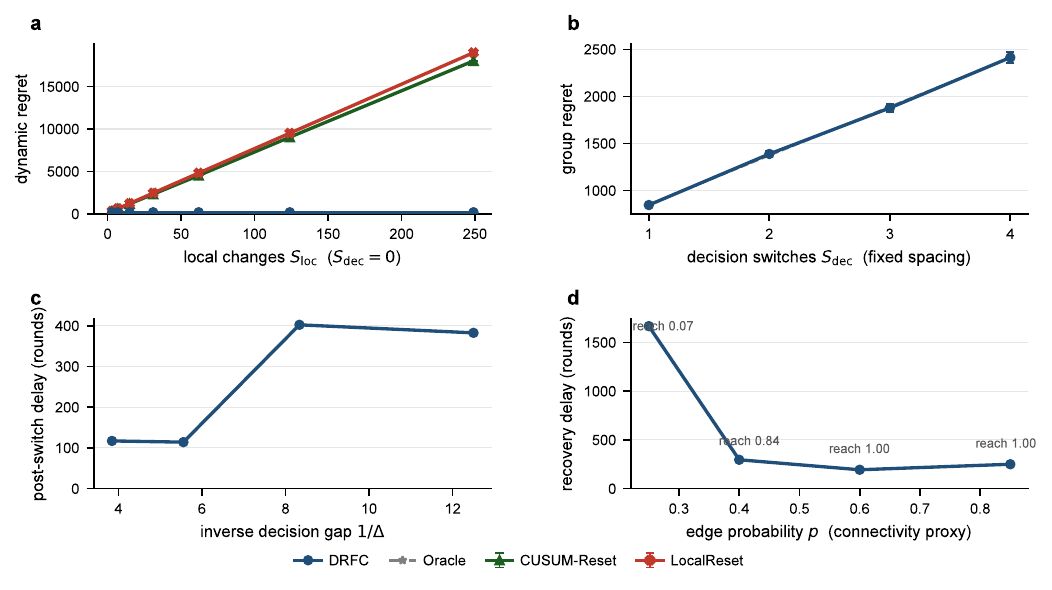}
\caption{Decision-relevant scaling. (a) At $\Stdec=0$, DRFC is insensitive to $\Stloc$, while local-reactive baselines grow. (b) At fixed spacing, regret is approximately linear in $\Stdec$. (c) Narrow gaps increase post-switch delay through harder certification. (d) Denser communication shortens recovery; labels give source-record reach, a flooding proxy rather than direct $\DG$. Curves show means and 95\% confidence intervals where available (100 runs in panel~a; 20 seeds in panels~b--d).}
\label{fig:theory-aligned}
\end{figure*}

\subsection{$\Stloc$--$\Stdec$ Separation}

Figure~\ref{fig:theory-aligned}a fixes $\Stdec=0$ while increasing $\Stloc$. At the largest sweep point ($\Stloc=249$), mean regret is $180$ for DRFC versus $18{,}016$ for CUSUM-Reset and $18{,}992$ for LocalReset, jointly testing Theorem~\ref{thm:main} and the class-relative separation of Theorem~\ref{thm:separation}.

\subsection{Switch-Axis Scaling}

Figures~\ref{fig:theory-aligned}b--d isolate switch count, certification difficulty, and communication delay. Regret is approximately linear in $\Stdec$, narrower gaps increase recovery delay consistently with harder certification, and denser graphs shorten recovery. The connectivity sweep is a proxy for $\DG$, not a direct measurement. These tests correspond to the two lower-bound components in Theorem~\ref{thm:info}; periodic monitoring remains a separate, unconstrained axis. Supplement~I.2 directly compares periodic and probe-triggered monitoring through their regret--source-record trade-off; full axis sweeps are in supplement~I.5.

\subsection{Within-Regime Drift}

At $\Stdecavg=0$ we sweep drift amplitude $b$, so every emitted average-decision switch is false, and report both $R_T^{\mathrm{dec}}$ and the false-switch rate. Once $b>\Delta$, the instantaneous gap changes sign while the window-average gap remains positive. DRFC-Seq alone stays at zero false switches, as predicted by Theorem~\ref{thm:drfc-seq}. Supplement~I reports the full drift-amplitude sweep and additionally varies the window length and drift period, testing Proposition~\ref{prop:tvgap} under both underspecified and conservative memory.

\subsection{Real-Data Replay}

On the raw, unanchored MovieLens-1M~\citep{harper2015movielens} genre replay ($\bar\Delta\approx0.0115$), DRFC-Seq attains the lowest decision regret and zero false switches among adopt-capable monitors (Table~\ref{tab:movielens-main}). Across a gain sweep, its false-switch separation from fast detectors is Holm-significant over 30 seeds for $g\ge2.0$. This validates cancellation and window memory; supplement~I.4--I.8 reports full results and topology, asynchrony, period-change, and $K=32$ checks.

\begin{table}[H]
\centering
\small
\setlength{\tabcolsep}{5pt}
\begin{tabular}{lcc}
\toprule
Method & $R_T^{\mathrm{dec}}$ & False-switch rate \\
\midrule
\textbf{DRFC-Seq} & $\mathbf{2710\pm6}$ & $\mathbf{0.00}$ \\
DecCUSUM & $3053\pm475$ & $1.00$ \\
DL-GLR-klUCB & $3459\pm736$ & $0.90$ \\
DL-ADR-bandit & $3868\pm500$ & $0.90$ \\
\bottomrule
\end{tabular}
\caption{Raw MovieLens drift ($\Stdecavg=0$, 30 seeds; mean $\pm$ s.d.); lower is better. Full results: supplement~I.}
\label{tab:movielens-main}
\end{table}

\FloatBarrier

\section{Discussion}
In heterogeneous networks, adaptation complexity follows common-decision switches $\Stdec$, not local changes $\Stloc$: fresh equal-agent certification ignores changes that cancel globally. DRFC-Probe replaces scheduled scans with cheap probes and triggered confirmation. The lower bounds make certification and graph propagation unavoidable; periodic monitoring remains a design choice. The drift extension changes the comparator: under the time-average benchmark, DRFC-Seq trades adoption speed for robustness to instantaneous sign changes. Thus both models certify only network-level decision changes. Extending them through push-sum averaging to directed or unreliable graphs~\citep{kempe2003gossip,nedic2015distributed} is a natural next step.

\paragraph{Limitations.}
(i)~Obliviousness excludes adaptive drift or participation. (ii)~Rewards are bounded and sub-Gaussian; bounded-variance extensions can use block median-of-means. (iii)~The guarantees require a unique common best arm and a positive regime/window margin; latency becomes vacuous near ties. (iv)~Communication is $\widetilde O(NK(h{+}\DG)T/M)$ source records, within $1.3\times$ of decision-level detectors; sharper edge bounds need finer flooding (supplement~I.2). (v)~Synchronized balanced blocks are demanding asynchronously; margin-dependent degradation is in supplement~I.4--I.7.

\label{contentend}

\clearpage
\section*{Supplementary Material}
This supplement contains:
(A) the full proof of the dynamic regret theorem, including the time-varying concentration lemma, sign preservation, active-set scan certification, dynamic regret decomposition, and the random-graph flooding instantiation;
(B) the Gap-Adaptive DRFC proposition, which removes prior knowledge of $\Delta_{\min}$ through a persistent geometric scan-budget schedule;
(C) the probe-triggered monitoring guarantee with explicit false-trigger budget;
(D) the distinct source-record complexity bound;
(E) the construction and proof of the class-relative separation theorem;
(F) verification that sliding-window UCB and CUSUM-style detectors lie inside the local-change-reactive comparator class on the construction;
(G) the full proof of the information-theoretic lower bound;
(H) the full anytime-valid correctness and adaptivity proof of DRFC-Seq under within-regime drift (the main DRFC-Seq theorem), via Ville's inequality and a law-of-the-iterated-logarithm confidence sequence, including (H.1) the drift-separation proof of the dichotomy proposition (main Proposition~1);
(I) additional experimental figures (component ablations, communication ablations, graph-probability sweep, user-cluster semi-real, monitor-period sensitivity);
(J) the proof of the personalized (per-cluster) regret theorem for DRFC-Pers;
(K) a reproducibility checklist.

\paragraph{Notation.} Table~\ref{tab:notation} collects the symbols used in the main paper and this supplement.

\begin{table}[h]
\centering
\footnotesize
\begin{tabular}{@{}ll@{}}
\toprule
Symbol & Meaning \\
\midrule
$N$, $K$ & number of agents, number of arms \\
$\Delta$ & per-regime decision margin, the equal-agent gap of the best common arm \\
$\Stloc$, $\Stdec$ & local-mean changes, instantaneous decision switches \\
$\Stdecavg$ & switches of the regime-average decision in the drift model \\
$M$ & monitor period, rounds between certification triggers \\
$h$ & block size, per-arm per-agent pulls in one balanced block \\
$\Hscan$ & scan budget, rounds spent certifying the best arm at a trigger \\
$\DG(\delta)$ & flooding time, rounds for a source record to reach all agents w.p.\ $1-\delta$ \\
$W$ & DRFC-Seq sliding-window length, in probe blocks \\
$\delta_{\mathrm{p}}$ & probe gap, rounds between consecutive probe blocks \\
$\alpha$ & DRFC-Seq anytime false-switch level \\
$r_n$ & anytime-valid confidence radius on the windowed equal-agent gap \\
$\Ccert_r$ & certification complexity $\sum_{a\ne a_r^\star}\Gamma_a^{(r)}/(\Delta_a^{(r)})^2$ \\
$\Lreg_r$ & length of decision regime $r$ \\
\bottomrule
\end{tabular}
\caption{Notation used throughout the main paper and supplement.}
\label{tab:notation}
\end{table}

\section{A. Proof Details}
\label{app:proofs}

This supplement expands the proof sketches in the main paper.
The purpose is to make explicit the sample-count convention, the finite good event, and the reason local distribution changes inside a decision regime do not enter the adaptation term.
All constants below are universal and may change from line to line. The controlling constants of the three quantitative results are listed explicitly in the following subsection.
Theorem, lemma, and equation references inside this supplement use independent numbering; cross-references to the main paper use result names whenever possible.

\subsection{Controlling Constants and Hidden Dependencies}
\label{app:constants}

For a theory reader we collect the controlling constants of the three quantitative results, so that no constant named in a theorem statement is left implicit. Every value below is the one used in the corresponding proof.

\paragraph{DRFC radius constant.}
The good-event radius of the equal-agent time-average gap over a window of $|\mathcal{W}|$ balanced blocks is
\[
\beta(\mathcal{W})=C\sqrt{\frac{\log(K^2T^3/\delta)}{Nh\,|\mathcal{W}|}},
\]
where $C$ is the universal constant of the two-term concentration bound of Lemma~\ref{lem:app-concentration}, combining reward noise by Hoeffding--Azuma with schedule-selection noise by sampling-without-replacement (Bardenet--Maillard). It satisfies $C\le 8$, and the scan-budget calibration uses the per-arm block count $B_a^{(r)}=\lceil 16C^2\log(K^2T^3/\delta)/(Nh(\Delta_a^{(r)})^2)\rceil$. The only factor the $\widetilde O$ of the main DRFC theorem suppresses is the single logarithm $\log(K^2T^3/\delta)$.

\paragraph{DRFC-Seq radius constants.}
The anytime-valid radius is
\[
r_n=\sqrt{\frac{a_0\bigl(\log\log(e\,n)+\log(C_0K(K-1)s^2/\alpha)\bigr)}{Nn}},
\]
with $a_0=\Theta(c_1)$, where $c_1$ is the sub-Gaussian variance proxy of one balanced block ($\sigma_\ell^2\le c_1/(Nh)$), and $C_0$ the stitched-boundary constant of \cite{howard2021timeuniform}, Theorem~1. Both are absolute. Here $s$ indexes the window start, and the term $\log(C_0K(K-1)s^2/\alpha)$ carries a summable $6/(\pi^2s^2)$ schedule over starts together with the $K(K-1)$ ordered arm pairs, so the guarantee holds with no predeclared horizon (Section~H), while $\log\log(e\,n)$ is the iterated-logarithm price of the anytime guarantee. These are the only factors the $\widetilde O$ of the main DRFC-Seq theorem suppresses.

\paragraph{Certification length.}
The per-arm sample count at which the windowed estimator separates the time-average-best arm from every challenger is
\[
n_0(\bar\Delta)=\inf\{n\ge 1:r_n\le\bar\Delta/4\}=\widetilde O\bigl(1/(N\bar\Delta^2)\bigr),
\]
so a window of $W$ blocks certifies once $Wh\ge n_0(\bar\Delta)$, giving $\Ccert^{\mathrm{seq}}_r=\widetilde O(K/\bar\Delta_r^2)$ keyed to the regime margin $\bar\Delta_r$. The block budget $h$ is a fixed design constant and enters $r_n$ only through $n=Wh$. Because $r_n$ carries $\log(C_0K(K-1)s^2/\alpha)$, the certification length $n_0$ is start-indexed; the recovery and regret statements use the worst start in the horizon, $n_0(\bar\Delta)=\widetilde O(\log(T)/(N\bar\Delta^2))$ through $s\le T$, the $\log T$ absorbed into $\widetilde O$. The no-false-switch guarantee of part~(a) needs no such cap and holds at every start by the summable schedule, so only the adaptivity rate, not the validity, sees the horizon.

\subsection{Comparison Windows}
\label{app:windows}

Fix an ordered arm pair $(a,b)$ and a tested comparison window $\mathcal{W}$.
The window consists of active-set blocks in which both $a$ and $b$ are active.
Let $B=|\mathcal{W}|$ be the number of such fresh blocks and let
\[
m=hB
\]
be the number of fresh pulls per arm, per agent.
For each block $\ell\in\mathcal{W}$, write
\[
\mathcal{U}_{\ell}=\{t_{\ell,1},\ldots,t_{\ell,q_\ell}\}
\]
for the common global-round macro-slot pool in that active-set block.
In a pure pairwise block, $q_\ell=2h$.
In an active-set block, $q_\ell$ also includes slots assigned to other active arms.
The active-set randomization assigns exactly $h$ slots to $a$ and exactly $h$ slots to $b$ from this block-level common pool, independently of rewards.
Equivalently, for each agent and block it draws disjoint subsets
\[
S_{i,a}^{(\ell)},S_{i,b}^{(\ell)}\subset \mathcal{U}_{\ell},
\qquad
|S_{i,a}^{(\ell)}|=|S_{i,b}^{(\ell)}|=h,
\qquad
S_{i,a}^{(\ell)}\cap S_{i,b}^{(\ell)}=\emptyset,
\]
with uniform marginals over size-$h$ subsets of $\mathcal{U}_{\ell}$.
The local and global estimators are
\[
\hat{g}_{i,a,b}(\mathcal{W})
=
\frac{1}{B}\sum_{\ell\in\mathcal{W}}
\frac{1}{h}\sum_{t\in S_{i,a}^{(\ell)}}X_{i,a,t}
-
\frac{1}{B}\sum_{\ell\in\mathcal{W}}
\frac{1}{h}\sum_{t\in S_{i,b}^{(\ell)}}X_{i,b,t},
\qquad
\Ghat_{a,b}(\mathcal{W})
=
\frac{1}{N}\sum_{i=1}^{N}\hat{g}_{i,a,b}(\mathcal{W}).
\]
The concentration target is the equal-agent time-average pairwise gap
\[
\Gbar_{a,b}(\mathcal{W})
=
\frac{1}{N}\sum_{i=1}^{N}
\frac{1}{B}\sum_{\ell\in\mathcal{W}}
\frac{1}{q_\ell}
\sum_{t\in \mathcal{U}_{\ell}}
\left(\mu_{i,a,t}-\mu_{i,b,t}\right).
\]
This is the key object: it is well defined even when the local means vary within the window.
The pure pairwise comparison in the main text is the special case $q_\ell=2h$ for every block; the active-set scan uses the same concentration statement for pairwise projections of larger block-level macro-slot pools.

Let $\mathfrak{W}$ be the family of windows on which DRFC may evaluate a decision rule.
A window is indexed by an epoch id ($\leq \Stdec+1\leq T$ since each accepted switch increments the epoch), an ordered arm pair ($K(K-1)$ choices), and a contiguous block interval (at most $T^2$ start--end pairs of global rounds).
This gives the conservative count
\[
|\mathfrak{W}|\leq K(K-1)\cdot T\cdot T^2 = K(K-1)T^3.
\]
The confidence radius used below is
\[
\beta(\mathcal{W})
=
C
\sqrt{
\frac{\log(K^2T^3/\delta)}
{Nh|\mathcal{W}|}
}.
\]

\subsection{Time-Varying Concentration}
\label{app:concentration}

\begin{lemma}[Uniform time-varying comparison concentration]
\label{lem:app-concentration}
For any fixed synchronized pairwise or active-set-projection comparison window $\mathcal{W}$ and ordered pair $(a,b)$,
\[
\Prb\left[
\left|\Ghat_{a,b}(\mathcal{W})-\Gbar_{a,b}(\mathcal{W})\right|
>
C\sqrt{\frac{\log(1/\delta)}{Nm}}
\right]\leq \delta .
\]
Consequently, after increasing $C$, the bound
\[
\left|\Ghat_{a,b}(\mathcal{W})-\Gbar_{a,b}(\mathcal{W})\right|
\leq
\beta(\mathcal{W})
\]
holds simultaneously for all tested windows with probability at least $1-\delta/3$.
\end{lemma}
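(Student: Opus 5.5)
The plan is to split the estimation error into a reward-noise term and a schedule-selection term, bound each by an exponential inequality with variance proxy of order $1/(Nm)$, and then take a union bound over the family $\mathfrak{W}$ to get the uniform statement.

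\textbf{Decomposition.} Write
\[
\Ghat_{a,b}(\mathcal{W})-\Gbar_{a,b}(\mathcal{W})=\underbrace{\bigl(\Ghat_{a,b}(\mathcal{W})-\tilde G_{a,b}(\mathcal{W})\bigr)}_{\text{(I) reward noise}}+\underbrace{\bigl(\tilde G_{a,b}(\mathcal{W})-\Gbar_{a,b}(\mathcal{W})\bigr)}_{\text{(II) schedule selection}}.
\]
Here $\tilde G_{a,b}$ is the same estimator with each $X_{i,a,t}$ replaced by its mean $\mu_{i,a,t}$. That is, $\tilde G_{a,b}=\frac1N\sum_i\frac1B\sum_\ell\frac1h\bigl(\sum_{t\in S^{(\ell)}_{i,a}}\mu_{i,a,t}-\sum_{t\in S^{(\ell)}_{i,b}}\mu_{i,b,t}\bigr)$.

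\textbf{Term (I).} Condition on all learner randomization, namely the slot sets $S^{(\ell)}_{i,a}$. By Assumption~\ref{ass:rewards} these are independent of the noise, and the mean path is $\mathcal F_0$-measurable. Order the observations by global round, and within a round by agent. Each summand $\frac{1}{NBh}(X_{i,a,t}-\mu_{i,a,t})$ is then a bounded martingale difference with range at most $2/(NBh)$. There are $2NBh=2Nm$ such terms, using cross-agent independence within a round and zero conditional mean. Azuma--Hoeffding then gives $|\text{(I)}|\le C_1\sqrt{\log(4/\delta)/(Nm)}$ with probability at least $1-\delta/2$.

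\textbf{Term (II).} This term concerns only the randomization. For fixed agent $i$ and block $\ell$, the set $S^{(\ell)}_{i,a}$ is a uniform size-$h$ subset of $\mathcal U_\ell$. Hence $\E\bigl[\frac1h\sum_{t\in S_{i,a}}\mu_{i,a,t}\bigr]=\frac1{q_\ell}\sum_{t\in\mathcal U_\ell}\mu_{i,a,t}$, and likewise for $b$. So (II) has mean zero, because the target $\Gbar$ uses the $1/q_\ell$ pool average. Each per-(agent, block) deviation is a difference of two sampling-without-replacement means of $[0,1]$-valued lists of size $h$. By Hoeffding's bound for sampling without replacement (or the Bardenet--Maillard refinement), it is $\sqrt{1/h}$-sub-Gaussian up to constants. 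The coupling between $S_{i,a}$ and $S_{i,b}$ through disjointness can be handled by bounding the two pieces separately and using $(u+v)^2\le 2u^2+2v^2$ for the sub-Gaussian proxy. The sets are independent across agents and blocks by the fresh-randomization clause. Averaging $NB$ independent centred sub-Gaussian terms with proxy $O(1/h)$ gives proxy $O(1/(NBh))=O(1/(Nm))$, so $|\text{(II)}|\le C_2\sqrt{\log(4/\delta)/(Nm)}$ with probability at least $1-\delta/2$.

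\textbf{Uniform bound.} Combining (I) and (II) gives the fixed-window statement with $C=C_1+C_2$, after absorbing $\log 4$ into the constant. For the uniform version, apply the fixed-window bound at level $\delta'=\delta/(3K(K-1)T^3)$ and take a union bound over the at most $K(K-1)T^3$ windows in $\mathfrak{W}$. Since $m=h|\mathcal W|$, the radius becomes $C\sqrt{\log(3K(K-1)T^3/\delta)/(Nh|\mathcal W|)}\le\beta(\mathcal W)$ after enlarging $C$.

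\textbf{Main obstacle.} The delicate point is that the tested windows are chosen adaptively, since eliminations depend on past data. The union bound is only legitimate if each indexed window's estimator is a fixed function of pre-drawn randomness and rewards. I would handle this by a ghost-sample argument: for every epoch, pair, and interval index, fix in advance the block schedules and the reward table $X_{i,a,t}$. Obliviousness and exogenous randomization make every indexed window a fixed, data-independent object, so each one satisfies the fixed-window bound. The algorithm's adaptive choice then merely selects among windows that are already controlled on the good event. The second subtlety is the within-block dependence between the $a$ and $b$ slot sets, which the separate-bounding step above resolves at the cost of a constant factor.
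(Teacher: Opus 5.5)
Your proposal is correct and follows the paper's proof essentially step for step. It uses the same split into a reward-noise term (conditioned on the schedules, summands ordered by round then agent, Azuma--Hoeffding via the cross-agent independence clause) and a schedule-selection term (Hoeffding--Serfling per agent and block, the dependent $a$/$b$ slot sets handled by doubling the sub-Gaussian proxy, then averaging over independent agents and blocks), followed by a union bound over $|\mathfrak{W}|\le K(K-1)T^3$ windows.

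Your adaptivity remark goes beyond the paper, which simply union-bounds and treats blocks as independent. Note, though, that pre-fixing schedules does not by itself make an active-set window data-independent, because its block boundaries and pools $\mathcal{U}_\ell$ depend on past eliminations. The cleaner repair is a block-level martingale with a predictable ``both arms active'' indicator, as in the supplement's DRFC-Seq proof (Section~H).
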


\begin{proof}
Throughout, let $m=hB$ be the per-arm per-agent sample count in the window.
By main Assumption~1, the local mean path is $\mathcal{F}_0$-measurable and each block's fresh randomization $\{S_{i,a}^{(\ell)},S_{i,b}^{(\ell)}\}_{i,\ell}$ is independent of $\mathcal{F}_{t-1}$.
Let $\mathcal{S}$ denote the entire collection of schedules used in the window.
Decompose the estimation error as
\[
\Ghat_{a,b}(\mathcal{W})-\Gbar_{a,b}(\mathcal{W})=A+R,
\]
where
\[
A
=
\Ghat_{a,b}(\mathcal{W})
-
\E\!\left[\Ghat_{a,b}(\mathcal{W})\mid \mathcal{S},\,\mathcal{F}_{0}\right]
\]
is the reward noise conditional on schedules and the mean path, and
\[
R
=
\E\!\left[\Ghat_{a,b}(\mathcal{W})\mid \mathcal{S},\,\mathcal{F}_{0}\right]
-
\Gbar_{a,b}(\mathcal{W})
\]
is the schedule-randomization error in the conditional mean.

\textit{Reward noise term $A$.}
Conditional on $\mathcal{S}$ and $\mathcal{F}_0$,
\[
A
=
\frac{1}{N}\sum_{i=1}^N
\frac{1}{B}\sum_{\ell\in\mathcal{W}}
\frac{1}{h}
\left(
\sum_{t\in S_{i,a}^{(\ell)}}(X_{i,a,t}-\mu_{i,a,t})
-
\sum_{t\in S_{i,b}^{(\ell)}}(X_{i,b,t}-\mu_{i,b,t})
\right).
\]
Order the $2NhB=2Nm$ summands lexicographically, first by round $t$ and within a round by agent index $i$, and let $\mathcal{G}_{t,i}$ be $\mathcal{F}_{t-1}\vee\sigma(\mathcal{S})$ enlarged by the round-$t$ observed noises of the agents preceding $i$ in this order.
By the exogeneity clause of main Assumption~1, the reward noises are independent of the learner randomization $\mathcal{S}$ and each summand has zero mean conditional on $\mathcal{F}_{t-1}$ and all learner randomization, and by the cross-agent clause the round-$t$ noises of different agents are mutually independent under the same conditioning, so conditioning additionally on the preceding same-round noises leaves the mean at zero, $\E[X_{i,a,t}-\mu_{i,a,t}\mid\mathcal{G}_{t,i}]=0$.
Without that clause the $1/\sqrt{N}$ rate below can fail, since a round-level noise common to all agents satisfies the per-round martingale property yet makes $A$ concentrate only at rate $1/\sqrt{m}$.
The ordered summands are therefore a bounded martingale difference sequence in $[-1,1]$, and $A$ combines them with coefficients $1/(NhB)=1/(Nm)$.
Hoeffding-Azuma~\cite{azuma1967weighted} therefore gives
\[
\Prb\!\left(|A|>x\mid \mathcal{S},\mathcal{F}_0\right)
\leq
2\exp\!\left(-\frac{2x^2}{2Nm\cdot(1/(Nm))^2\cdot(2)^2}\right)
=
2\exp\!\left(-\frac{Nm x^2}{4}\right).
\]
Marginalizing over $\mathcal{S}$ preserves the bound.

\textit{Schedule-randomization term $R$.}
For each agent $i$ and block $\ell\in\mathcal{W}$, define the per-block, per-arm population block means
\[
\bar{M}_{i,a}^{(\ell)}=\frac{1}{q_\ell}\sum_{t\in\mathcal{U}_\ell}\mu_{i,a,t},
\qquad
\bar{M}_{i,b}^{(\ell)}=\frac{1}{q_\ell}\sum_{t\in\mathcal{U}_\ell}\mu_{i,b,t},
\]
and their empirical counterparts on the block schedule,
\[
\widehat{M}_{i,a}^{(\ell)}=\frac{1}{h}\sum_{t\in S_{i,a}^{(\ell)}}\mu_{i,a,t},
\qquad
\widehat{M}_{i,b}^{(\ell)}=\frac{1}{h}\sum_{t\in S_{i,b}^{(\ell)}}\mu_{i,b,t}.
\]
The block-level deviations are
\[
Z_{i,a}^{(\ell)}=\widehat{M}_{i,a}^{(\ell)}-\bar{M}_{i,a}^{(\ell)},
\qquad
Z_{i,b}^{(\ell)}=\widehat{M}_{i,b}^{(\ell)}-\bar{M}_{i,b}^{(\ell)},
\]
and the agent-level deviation aggregates to
\[
R_i
=
\frac{1}{B}\sum_{\ell\in\mathcal{W}}
\left(Z_{i,a}^{(\ell)}-Z_{i,b}^{(\ell)}\right),
\qquad
R=\frac{1}{N}\sum_{i=1}^N R_i .
\]

\emph{Block-level concentration.}
Fix $(i,\ell)$. The active-set randomization draws a uniform random ordered partition of $\mathcal{U}_\ell$ into $(S_{i,a}^{(\ell)},S_{i,b}^{(\ell)},\mathcal{U}_\ell\setminus(S_{i,a}^{(\ell)}\cup S_{i,b}^{(\ell)}))$ subject to the size constraints $|S_{i,a}^{(\ell)}|=|S_{i,b}^{(\ell)}|=h$. By symmetry of this partition, the marginal law of $S_{i,a}^{(\ell)}$ is uniform over size-$h$ subsets of $\mathcal{U}_\ell$, and similarly for $S_{i,b}^{(\ell)}$. Each marginal therefore satisfies the Hoeffding--Serfling inequality for sampling without replacement~\cite[Cor.~2.4]{bardenet2015concentration}:
\[
\Prb\!\left(\bigl|Z_{i,a}^{(\ell)}\bigr|>x\right)
\leq
2\exp\!\left(-\frac{2hx^2}{1-(h-1)/q_\ell}\right)
\leq
2\exp\!\left(-2hx^2\right),
\]
and identically for $Z_{i,b}^{(\ell)}$.
Hence both $Z_{i,a}^{(\ell)}$ and $Z_{i,b}^{(\ell)}$ are sub-Gaussian with proxy variance $\sigma_0^2=1/(4h)$, regardless of the joint dependence between them.

\emph{Aggregation across blocks.}
Different blocks $\ell\in\mathcal{W}$ use independently drawn schedules (main Assumption~1), so the pairs $\{(Z_{i,a}^{(\ell)},Z_{i,b}^{(\ell)})\}_{\ell\in\mathcal{W}}$ are independent across $\ell$ for fixed $i$. For each $\ell$, the difference $Z_{i,a}^{(\ell)}-Z_{i,b}^{(\ell)}$ is the sum of two sub-Gaussian random variables and is therefore sub-Gaussian with proxy variance at most $2\cdot 4\sigma_0^2 = 2/h$ (using the elementary bound that the sub-Gaussian proxy variance of a sum of two zero-mean variables is at most twice the sum of their individual proxies). Summing $B$ independent copies and dividing by $B$ gives that $R_i$ is sub-Gaussian with proxy variance $2/(hB)=2/m$, so
\[
\Prb(|R_i|>t)\leq 2\exp\!\left(-\frac{mt^2}{4}\right).
\]

\emph{Aggregation across agents.}
The agent-level schedules are independent across $i$ (main Assumption~1), so the $\{R_i\}_{i=1}^N$ are independent sub-Gaussian random variables with the same proxy variance $2/m$. Therefore $R=N^{-1}\sum_i R_i$ is sub-Gaussian with proxy variance $2/(Nm)$, giving
\[
\Prb(|R|>x)
\leq
2\exp\!\left(-\frac{Nm x^2}{4}\right).
\]

\textit{Combining.}
By the union bound,
\[
\Prb\!\left(|\Ghat_{a,b}(\mathcal{W})-\Gbar_{a,b}(\mathcal{W})|>2x\right)
\leq
\Prb(|A|>x)+\Prb(|R|>x)
\leq
4\exp\!\left(-\frac{Nm x^2}{4}\right).
\]
Setting the right-hand side equal to $\delta$ yields the fixed-window deviation $2x=4\sqrt{\log(4/\delta)/(Nm)}$, which is at most $C\sqrt{\log(1/\delta)/(Nm)}$ for a universal constant $C$ (e.g.\ $C=8$ for $\delta\leq 1/2$, since $\log(4/\delta)/\log(1/\delta)\leq 2$ in that range).

\textit{Simultaneous bound over $\mathfrak{W}$.}
Apply the fixed-window bound with failure probability $\delta'=\delta/(3|\mathfrak{W}|)$ where $|\mathfrak{W}|\leq K(K-1)T^3$.
This replaces $\log(1/\delta)$ by $\log(K(K-1)T^3/\delta)+\log 3\leq 2\log(K^2T^3/\delta)$, which is absorbed into $C$ in the radius $\beta(\mathcal{W})$.
Union bounding over $\mathfrak{W}$ gives the simultaneous bound with probability at least $1-\delta/3$, as claimed.
\end{proof}

\subsection{A Random-Graph Flooding Instantiation}
\label{app:graph-flooding}

The main theorem treats flooding through the abstract uniform quantity $\DG(\delta)$; we give one conservative instantiation.
Let $\{G_t\}$ be independent Erd\H{o}s--R\'enyi graphs on $[N]$ with edge probability $p$, and let $\mathcal{T}\subseteq[T]$, $|\mathcal{T}|\leq T$, be the analyzed communication rounds.

We claim that for a universal constant $c$,
\[
p\ \geq\ c\,\frac{\log(NT/\delta)}{N}
\quad\Longrightarrow\quad
\Prb\Bigl[\,\forall t\in\mathcal{T}:\ G_t\text{ connected}\,\Bigr]\ \geq\ 1-\delta/3,
\]
and that on this event every relevant source-traceable record reaches all agents within $\DG(\delta/3)\leq N-1$ rounds.

\emph{Connectivity.} For fixed $t$, a disconnection is witnessed by a cut $S$ with $s:=|S|\leq N/2$ and no crossing edge, so by a union bound over cuts,
\[
\Prb[G_t\text{ disconnected}]
\ \leq\ \sum_{s=1}^{\lfloor N/2\rfloor}\binom{N}{s}(1-p)^{s(N-s)}
\ \leq\ \sum_{s=1}^{\lfloor N/2\rfloor}\exp\bigl\{s\log(eN/s)-ps(N-s)\bigr\}.
\]
For $s\leq N/2$ one has $N-s\geq N/2$, so $ps(N-s)\geq \tfrac{pN}{2}s\geq 2s\log(eN/s)$ once $p\geq c\log(NT/\delta)/N$ with $c$ large enough, whence each summand is at most $e^{-s\log(eN/s)}\leq(\delta/(3NT))$ and $\Prb[G_t\text{ disconnected}]\leq\delta/(3T)$. A union bound over $t\in\mathcal{T}$ gives the claimed connectivity event with probability at least $1-\delta/3$.

\emph{Flooding time.} Fix a record with informed set $S_\tau\subsetneq[N]$ at round $\tau$. On the connectivity event $G_{\tau+1}$ has at least one edge between $S_\tau$ and $[N]\setminus S_\tau$, so $|S_{\tau+1}|\geq|S_\tau|+1$. Iterating, $|S_{\tau+k}|\geq\min\{N,|S_\tau|+k\}$, hence $S_{\tau+k}=[N]$ for some $k\leq N-1$, i.e.\ $\DG(\delta/3)\leq N-1$.
Sharper random-graph gossip bounds may replace this instantiation without altering the regret proof, which uses only the resulting $\DG(\delta)$.

\subsection{Uniform Good Event}
\label{app:good-event}

\begin{lemma}[Uniform good event]
\label{lem:uniform-good-event}
Suppose the random graph process satisfies the uniform flooding assumption with budget $\delta/3$, and agents use deterministic epoch tie-breaking for switch records.
Then, with probability at least $1-\delta$, the following events hold simultaneously:
\begin{enumerate}
    \item every tested estimator is within its radius $\beta(\mathcal{W})$;
    \item every source-traceable comparison or switch record relevant to the regret analysis reaches all agents within $\DG(\delta/3)$ rounds;
    \item every accepted switch produces a common candidate and epoch id within one synchronization window.
\end{enumerate}
\end{lemma}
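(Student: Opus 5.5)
The plan is a union bound over three failure events, each allotted probability $\delta/3$, followed by a deterministic argument that shows item~3 holds on the intersection of the first two.

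\emph{Item 1.} Invoke Lemma~\ref{lem:app-concentration} directly. It already gives simultaneous validity of $|\Ghat_{a,b}(\mathcal{W})-\Gbar_{a,b}(\mathcal{W})|\le\beta(\mathcal{W})$ over the whole family $\mathfrak{W}$, with $|\mathfrak{W}|\le K(K-1)T^3$, at failure probability $\delta/3$. The point to verify is that every window DRFC actually evaluates belongs to $\mathfrak{W}$, even though the windows are chosen adaptively. A window is determined by an epoch id, an ordered pair, and a contiguous block interval. The union bound covers all such indices, so adaptivity of the selection does not matter. In addition, Assumption~\ref{ass:rewards} makes the schedule randomization independent of rewards, so each fixed index gets the fixed-window bound.

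\emph{Item 2.} Invoke Assumption~\ref{ass:flooding} with budget $\delta/3$, or the Erd\H{o}s--R\'enyi instantiation of Section~\ref{app:graph-flooding}. There, connectivity at every analyzed round holds with probability at least $1-\delta/3$, and on that event every record floods within $N-1$ rounds. The quantifier in the assumption is already "every relevant record", so no further union over records is required.

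\emph{Item 3.} This is the step I expect to be the main obstacle. It needs no extra probability budget; I would prove it deterministically on the intersection of events 1 and 2. Each agent's elimination decisions are functions of the completed equal-agent records, deduplicated by source key. On event 2, by the end of the scan plus $\DG(\delta/3)$ rounds every agent holds the identical record multiset, so every agent computes the same $\Ghat$ values and the same active sets. Hence all agents reach the same winner $w$. Any switch record emitted is then flooded within the synchronization window of $\DG(\delta/3)$ rounds. Deterministic epoch tie-breaking, meaning that the record with the larger epoch id wins and ties are resolved by a fixed rule, makes the resulting candidate and epoch id common to all agents. The delicate part is ruling out two concurrent or conflicting switch records in a single epoch. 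Scans are triggered only at times $t\equiv0\bmod M$ and only when the system is idle, so scans do not overlap. Combined with identical computations, this means at most one switch record exists per epoch. The remaining budget $\delta/3$ is the slack used for the graph-connectivity event in the instantiation. Alternatively it can be left unused, in which case the total failure probability is at most $2\delta/3\le\delta$. Either way the union bound gives probability at least $1-\delta$.
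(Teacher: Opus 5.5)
Your proposal follows the paper's proof: concentration via Lemma~\ref{lem:app-concentration} at level $\delta/3$, flooding via the flooding assumption at level $\delta/3$, and the synchronization item argued deterministically on their intersection, so that it costs no probability. The only difference is where the spare $\delta/3$ goes. The paper reserves it for the union bound over the doubling levels of Gap-Adaptive DRFC, whereas you assign it to graph connectivity; in the Erd\H{o}s--R\'enyi instantiation connectivity is already the item-2 event, so that assignment is redundant but harmless, and either way the total failure probability is at most $\delta$.
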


\begin{proof}
We track the $\delta$-budget explicitly.

\textit{Concentration event ($\delta/3$).}
By Lemma~\ref{lem:app-concentration} with parameter $\delta/3$, the simultaneous bound $|\Ghat_{a,b}(\mathcal{W})-\Gbar_{a,b}(\mathcal{W})|\leq\beta(\mathcal{W})$ holds for all $\mathcal{W}\in\mathfrak{W}$ and all ordered active pairs with probability at least $1-\delta/3$.
Under the Gap-Adaptive DRFC schedule of supplement~B, the same bound applies simultaneously across the $k_{\max}+1\leq 1+\log_2(\Delta^{(0)}/\underline\Delta)$ doubling levels by replacing $\delta/3$ by $\delta/(3(k_{\max}+1))$ inside Lemma~\ref{lem:app-concentration}; the additional $\log(k_{\max}+1)$ factor is absorbed into the $\widetilde{O}$.

\textit{Flooding event ($\delta/3$).}
By the main random-graph flooding assumption with parameter $\delta/3$, every flooding window in $\mathcal{T}_{\mathrm{flood}}$ propagates its source record to all agents within $\DG(\delta/3)$ rounds, simultaneously with probability at least $1-\delta/3$.
The cardinality $|\mathcal{T}_{\mathrm{flood}}|\leq N\cdot K(K-1)T^3$ is absorbed in $\DG$ through the union bound described in the assumption.

\textit{Synchronization event ($\delta/3$).}
Condition on the previous two events.
Switch records contain source agent id, epoch id, old candidate, new candidate, and triggering comparison id.
Agents adopt the highest received epoch id; ties at the same epoch are broken by a fixed deterministic rule (e.g., smallest source agent id).
On the concentration event, Lemma~\ref{lem:app-pairwise} (proven below) shows that pairwise lower-confidence gaps cannot eliminate $a_r^\star$ in regime $r$, and they eliminate each non-best arm $a$ once the radius is below $\Delta_a^{(r)}/4$.
A scan that does not yet have a unique active arm is treated as inconclusive and does not emit a switch record.
Once a switch record is emitted, it reaches all agents within $\DG(\delta/3)$ rounds on the flooding event.
Two simultaneous switch records with the same epoch are resolved by the deterministic tie-breaking rule.
Therefore the synchronization event---all agents share the same $(\hat{a},e)$ after one synchronization window---fails with conditional probability zero.
The remaining $\delta/3$ slack is reserved for the doubling-level union bound above.

\textit{Conclusion.}
By the union bound, all three events hold simultaneously with probability at least $1-\delta$.
\end{proof}

\subsection{Sign Preservation and Pairwise Decisions}
\label{app:pairwise}

\begin{lemma}[Sign preservation]
\label{lem:app-sign}
If $\mathcal{W}$ is contained in decision regime $r$, then for every $b\neq a_r^\star$,
\[
\Gbar_{a_r^\star,b}(\mathcal{W})\geq \Delta_b^{(r)}.
\]
For every $a\neq a_r^\star$,
\[
\Gbar_{a,a_r^\star}(\mathcal{W})\leq -\Delta_a^{(r)}.
\]
\end{lemma}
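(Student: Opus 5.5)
The plan is to write $\Gbar_{a_r^\star,b}(\mathcal{W})$ as an average of instantaneous equal-agent gaps and apply the regime infimum $\Delta_b^{(r)}$ termwise. By the definition in Section~\ref{app:windows}, with $a=a_r^\star$, the target gap is
\[
\Gbar_{a_r^\star,b}(\mathcal{W})=\frac{1}{B}\sum_{\ell\in\mathcal{W}}\frac{1}{q_\ell}\sum_{t\in\mathcal{U}_\ell}\frac1N\sum_{i=1}^N\bigl(\mu_{i,a_r^\star,t}-\mu_{i,b,t}\bigr)
=\frac{1}{B}\sum_{\ell\in\mathcal{W}}\frac{1}{q_\ell}\sum_{t\in\mathcal{U}_\ell}\bigl(\mubar_{a_r^\star,t}-\mubar_{b,t}\bigr).
\]
The first step is to exchange the finite agent sum with the time sums. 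This is legitimate because the weights $1/q_\ell$ and $1/B$ do not depend on $i$, and it is exactly where equal-agent weighting matters. The integrand thereby becomes the instantaneous global gap $\mubar_{a_r^\star,t}-\mubar_{b,t}$.

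Next, since $\mathcal{W}$ is contained in regime $r$, every round $t\in\mathcal{U}_\ell$ lies in $[\rho_r,\rho_{r+1})$. Definition~\ref{ass:regimes} then gives $\mubar_{a_r^\star,t}-\mubar_{b,t}\ge\Delta_b^{(r)}$ for every such $t$. The expression above is a convex combination of these values: the weights are nonnegative and sum to $\frac1B\sum_\ell\frac{1}{q_\ell}q_\ell=1$. Hence the average is at least $\Delta_b^{(r)}$.

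The second inequality follows from antisymmetry. By linearity, $\Gbar_{a,a_r^\star}(\mathcal{W})=-\Gbar_{a_r^\star,a}(\mathcal{W})$, so applying the first claim with $b=a$ yields $\Gbar_{a,a_r^\star}(\mathcal{W})\le-\Delta_a^{(r)}$.

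The only point requiring care, and the step I expect to be the main obstacle, is making sure that "$\mathcal{W}$ contained in regime $r$" covers every macro-slot round in each pool $\mathcal{U}_\ell$, not just the rounds at which $a$ or $b$ was actually pulled. This holds because the target averages over the full common pool, which lies inside the window's rounds. Nothing else is needed: local means may vary arbitrarily inside the window, and only the sign-stable infimum of the global gap is used.
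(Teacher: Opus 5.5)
Your proof is correct and has the same skeleton as the paper's: write $\Gbar_{a_r^\star,b}(\mathcal{W})$ as a convex combination over pool rounds $t\in\mathcal{U}_\ell$ with weights $1/(Bq_\ell)$, then bound the integrand termwise. The two differ in \emph{which} integrand gets bounded, and yours is the sound choice.

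The paper keeps the agent index inside the sum. It then invokes a per-agent margin $\mu_{i,a_r^\star,t}-\mu_{i,b,t}\ge\Delta_b^{(r)}$ at every $(i,t)$, and the mirror bound for the second claim. Definition~\ref{ass:regimes} supplies no such per-agent bound: $\Delta_b^{(r)}$ is an infimum of the equal-agent gap $\mubar_{a_r^\star,t}-\mubar_{b,t}$ only. Under heterogeneity, individual agents' gaps can have the wrong sign. In the zero-sum sign-flip instance behind Theorem~\ref{thm:separation}, agents with $\sigma_i=-1$ have $\mu_{i,1}-\mu_{i,2}=\Delta-2\epsilon<0$, while the global gap is $\Delta$.

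Your first step moves the $1/N$ average inside the time sums. This is legitimate because the weights $1/(Bq_\ell)$ do not depend on $i$, and it turns the integrand into exactly the quantity Definition~\ref{ass:regimes} controls, so the termwise bound is valid. Getting the second inequality from $\Gbar_{a,a_r^\star}(\mathcal{W})=-\Gbar_{a_r^\star,a}(\mathcal{W})$ likewise avoids the paper's unjustified per-agent bound.

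The pool-containment point you flag as the main obstacle is simply the reading of $\mathcal{W}\subseteq[\rho_r,\rho_{r+1})$ that the paper also adopts. The step that actually needed care is the agent-average exchange, and you handled it correctly.
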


\begin{proof}
Fix $\mathcal{W}\subseteq[\rho_r,\rho_{r+1})$ and $b\neq a_r^\star$.
\[
\begin{aligned}
\Gbar_{a_r^\star,b}(\mathcal{W})
={}& \frac1N\sum_{i=1}^N\frac1B\sum_{\ell\in\mathcal{W}}\frac1{q_\ell}\sum_{t\in\mathcal{U}_\ell}\bigl(\mu_{i,a_r^\star,t}-\mu_{i,b,t}\bigr)\\
\overset{\text{①}}{\ge}{}& \frac1N\sum_{i=1}^N\frac1B\sum_{\ell\in\mathcal{W}}\frac1{q_\ell}\sum_{t\in\mathcal{U}_\ell}\Delta_b^{(r)}\\
={}& \Delta_b^{(r)}.
\end{aligned}
\]
Symmetrically, for $a\neq a_r^\star$,
\[
\begin{aligned}
\Gbar_{a,a_r^\star}(\mathcal{W})
={}& \frac1N\sum_{i=1}^N\frac1B\sum_{\ell\in\mathcal{W}}\frac1{q_\ell}\sum_{t\in\mathcal{U}_\ell}\bigl(\mu_{i,a,t}-\mu_{i,a_r^\star,t}\bigr)\\
\overset{\text{②}}{\le}{}& -\Delta_a^{(r)}.
\end{aligned}
\]
Here ① applies the decision-regime margin bound $\mu_{i,a_r^\star,t}-\mu_{i,b,t}\ge\Delta_b^{(r)}$ at every $(i,t)$ with $t\in[\rho_r,\rho_{r+1})$, ② applies $\mu_{i,a,t}-\mu_{i,a_r^\star,t}\le-\Delta_a^{(r)}$ at every such $(i,t)$, and the closing equalities use the convex weights $\tfrac1N\cdot\tfrac1B\cdot\tfrac1{q_\ell}\ge0$ summing to $1$.
\end{proof}

\begin{lemma}[Pairwise correctness]
\label{lem:app-pairwise}
On the uniform good event, consider a tested window inside decision regime $r$.
\begin{enumerate}
    \item (\emph{Reverse direction, unconditional.}) For every $a\neq a_r^\star$, the lower-confidence comparison from $a$ to $a_r^\star$ is strictly negative:
    \[
    \Ghat_{a,a_r^\star}(\mathcal{W})-\beta(\mathcal{W})<0.
    \]
    No hypothesis on $\beta(\mathcal{W})$ is needed beyond the good event.
    \item (\emph{Forward direction, threshold.}) For every $b\neq a_r^\star$, if $\beta(\mathcal{W})\leq \Delta_b^{(r)}/4$, then the lower-confidence comparison from $a_r^\star$ to $b$ is strictly positive:
    \[
    \Ghat_{a_r^\star,b}(\mathcal{W})-\beta(\mathcal{W})>0.
    \]
\end{enumerate}
\end{lemma}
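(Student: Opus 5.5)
The plan is to combine the two-sided radius bound of the good event with the sign bounds of Lemma~\ref{lem:app-sign}. Fix a tested window $\mathcal{W}$ inside regime $r$. On the uniform good event of Lemma~\ref{lem:uniform-good-event}, item~1 together with Lemma~\ref{lem:app-concentration} gives
\[
|\Ghat_{a,b}(\mathcal{W})-\Gbar_{a,b}(\mathcal{W})|\le\beta(\mathcal{W})
\]
for every ordered active pair, and in particular for both $(a,a_r^\star)$ and $(a_r^\star,b)$. Both directions then reduce to one line of arithmetic. There is no further probabilistic content, because the union bound over $\mathfrak{W}$ was already paid in Lemma~\ref{lem:app-concentration}.

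\emph{Reverse direction.} For $a\neq a_r^\star$, the upper deviation bound and Lemma~\ref{lem:app-sign} give
\[
\Ghat_{a,a_r^\star}(\mathcal{W})\le\Gbar_{a,a_r^\star}(\mathcal{W})+\beta(\mathcal{W})\le-\Delta_a^{(r)}+\beta(\mathcal{W}).
\]
Subtracting $\beta(\mathcal{W})$ yields
\[
\Ghat_{a,a_r^\star}(\mathcal{W})-\beta(\mathcal{W})\le-\Delta_a^{(r)}<0,
\]
using $\Delta_a^{(r)}>0$ from Definition~\ref{ass:regimes}. The two radii cancel exactly, so no assumption on the size of $\beta(\mathcal{W})$ is needed.

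\emph{Forward direction.} For $b\neq a_r^\star$, the lower deviation bound and Lemma~\ref{lem:app-sign} give
\[
\Ghat_{a_r^\star,b}(\mathcal{W})-\beta(\mathcal{W})\ge\Gbar_{a_r^\star,b}(\mathcal{W})-2\beta(\mathcal{W})\ge\Delta_b^{(r)}-2\beta(\mathcal{W}).
\]
If $\beta(\mathcal{W})\le\Delta_b^{(r)}/4$, the right-hand side is at least $\Delta_b^{(r)}/2>0$. This also leaves slack for the pairwise radii $\beta_{b,a}$ used in Algorithm~\ref{alg:drfc}.

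The only delicate point is not analytic. Lemma~\ref{lem:app-sign} is applied to the \emph{pairwise projection} target $\Gbar_{a,b}(\mathcal{W})$, which is defined over block-level macro-slot pools $\mathcal{U}_\ell$. For it to be controlled by the regime infimum, those pools must lie inside $[\rho_r,\rho_{r+1})$, and this is exactly the hypothesis that the tested window is contained in regime $r$.

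A second detail is that Lemma~\ref{lem:app-sign} is stated via the per-$(i,t)$ inequality ①, whereas Definition~\ref{ass:regimes} bounds only the agent-averaged gap. I would note that the argument needs only the averaged form: first average over $i$ to obtain $\mubar_{a_r^\star,t}-\mubar_{b,t}\ge\Delta_b^{(r)}$, then take the convex time weights. The agent average $\frac1N\sum_i$ and the time average commute, so the bound goes through unchanged.
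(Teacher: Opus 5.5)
Your proposal is correct and follows the paper's proof exactly: both parts combine the good-event bound $|\Ghat-\Gbar|\le\beta(\mathcal{W})$ with Lemma~\ref{lem:app-sign} in one line each, yielding $\Ghat_{a,a_r^\star}(\mathcal{W})-\beta(\mathcal{W})\le-\Delta_a^{(r)}<0$ and $\Ghat_{a_r^\star,b}(\mathcal{W})-\beta(\mathcal{W})\ge\Delta_b^{(r)}-2\beta(\mathcal{W})\ge\Delta_b^{(r)}/2>0$. Your side remark is also right: step~① in the paper's proof of Lemma~\ref{lem:app-sign} uses a per-$(i,t)$ gap bound that the heterogeneous model does not supply, since Definition~\ref{ass:regimes} only bounds $\mubar_{a_r^\star,t}-\mubar_{b,t}$; exchanging the agent and time averages, as you do, is the correct repair, so the lemma holds as stated.
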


\begin{proof}
\emph{Part 1 (reverse).}
\[
\begin{aligned}
\Ghat_{a,a_r^\star}(\mathcal{W})-\beta(\mathcal{W})
\overset{\text{①}}{\le}{}& -\Delta_a^{(r)}\\
\overset{\text{②}}{<}{}& 0,
\end{aligned}
\]
where ① is Lemma~\ref{lem:app-sign} on the good event, $\Ghat_{a,a_r^\star}(\mathcal{W})\le-\Delta_a^{(r)}+\beta(\mathcal{W})$, subtracting $\beta(\mathcal{W})$, and ② uses $\Delta_a^{(r)}>0$; no upper bound on $\beta(\mathcal{W})$ is needed.

\emph{Part 2 (forward).}
\[
\begin{aligned}
\Ghat_{a_r^\star,b}(\mathcal{W})-\beta(\mathcal{W})
\overset{\text{①}}{\ge}{}& \Delta_b^{(r)}-2\beta(\mathcal{W})\\
\overset{\text{②}}{\ge}{}& \Delta_b^{(r)}/2\\
>{}& 0,
\end{aligned}
\]
where ① is Lemma~\ref{lem:app-sign} on the good event, $\Ghat_{a_r^\star,b}(\mathcal{W})\ge\Delta_b^{(r)}-\beta(\mathcal{W})$, subtracting $\beta(\mathcal{W})$, and ② uses the hypothesis $\beta(\mathcal{W})\le\Delta_b^{(r)}/4$.
\end{proof}

\begin{lemma}[Active-set scan certification]
\label{lem:app-active-scan}
On the uniform good event, consider an active-set full-certification scan whose blocks are contained in decision regime $r$.
The pairwise lower-confidence elimination rule never eliminates $a_r^\star$.
Moreover, each active arm $a\neq a_r^\star$ is eliminated once its pairwise projection against $a_r^\star$ has radius at most $\Delta_a^{(r)}/4$.
Let $\tau_a$ be the number of active-set blocks in which arm $a$ remains active.
Here $\tau_a$ counts active-set blocks until $a$'s \emph{synchronous} elimination, i.e.\ until the balanced count reaches $B_a^{(r)}$, so $1\le\tau_a\le B_a^{(r)}=\lceil x_a\rceil$ with $x_a=16C^2\log(K^2T^3/\delta)/(Nh(\Delta_a^{(r)})^2)$. Under the nonblocking-gossip convention the eliminating equal-agent records flood with lag $\le\DG(\delta/3)$, so beyond its $B_a^{(r)}$ certification blocks each arm stays sampled for at most $\DG(\delta/3)$ further rounds, a delayed-elimination over-sampling of at most $K\DG(\delta/3)$ group-rounds per scan (the same budget added to $\Hscan$ below) at group regret $\le N$ each. Using $\lceil x_a\rceil\le x_a+1$, the one completed scan contributes group regret
\[
\text{(one-scan group regret)}\ \le\ \widetilde{O}(\Ccert_r)\ +\ Nh\!\!\sum_{a\neq a_r^\star}\!\!\Gamma_a^{(r)}\ +\ NK\DG(\delta/3),
\qquad
\Ccert_r=\sum_{a\neq a_r^\star}\frac{\Gamma_a^{(r)}}{(\Delta_a^{(r)})^2}.
\]
The gap-dependent term collects the certification cost $Nhx_a\Gamma_a^{(r)}=\widetilde{O}(\Gamma_a^{(r)}/(\Delta_a^{(r)})^2)$ of each arm, the floor $Nh\sum_{a\neq a_r^\star}\Gamma_a^{(r)}\le Nh(K-1)$ is the mandatory single balanced block of each arm, and the last term is the nonblocking flooding-lag over-sampling. Summed over the $1+\Lreg_r/M$ scans of every regime, the floor and the flooding lag together give the periodic-monitoring term $\widetilde{O}(NK(h+\DG(\delta/3))T/M)$ of Theorem~\ref{thm:app-regret}, the per-switch part of the lag merging into the $N\Stdec\Hscan$ boundary term.
\end{lemma}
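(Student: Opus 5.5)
The statement to prove is Lemma~\ref{lem:app-active-scan}. The plan has three parts: safety of the best arm, an elimination time for each challenger, and a per-scan regret accounting. All three work on the uniform good event of Lemma~\ref{lem:uniform-good-event}, which already holds simultaneously over all tested windows in $\mathfrak{W}$.

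\emph{Safety.} An active-set scan only applies the rule $\max_b(\Ghat_{b,a}-\beta_{b,a})>0$ to pairwise projections of its blocks. Each such projection is a window in $\mathfrak{W}$ contained in regime $r$. Suppose the rule were to fire for $a=a_r^\star$. Then some $b\neq a_r^\star$ would satisfy $\Ghat_{b,a_r^\star}-\beta>0$, which contradicts part 1 of Lemma~\ref{lem:app-pairwise}. Hence $a_r^\star$ stays active throughout the scan. Because of this, every challenger $a$ is always compared against $a_r^\star$ on a window consisting of all the blocks in which $a$ has been active.

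\emph{Elimination time.} After $B$ balanced blocks, the $(a_r^\star,a)$ projection has radius $\beta=C\sqrt{\log(K^2T^3/\delta)/(NhB)}$. This is at most $\Delta_a^{(r)}/4$ as soon as $B\ge x_a$, that is, at $B=B_a^{(r)}=\lceil x_a\rceil$. Part 2 of Lemma~\ref{lem:app-pairwise} then gives $\Ghat_{a_r^\star,a}-\beta>0$, so $a$ is eliminated, and this gives $\tau_a\le B_a^{(r)}$. Elimination is identical at all agents only once the completing equal-agent records have arrived. On the flooding event this takes at most $\DG(\delta/3)$ further rounds. During that lag, sampling continues nonblockingly, so each arm incurs at most $\DG(\delta/3)$ extra group-rounds.

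\emph{Regret accounting.} A balanced block pulls arm $a$ exactly $h$ times per agent. In global-round slots, the instantaneous group regret of such a pull is $N(\mubar_{a_r^\star,t}-\mubar_{a,t})\le N\Gamma_a^{(r)}$. I will sum the group regret over slots and agents and use the agent average, so that each group-round of $a$ costs $N\Gamma_a^{(r)}$ and not a per-agent worst case. The certification part of arm $a$ is then at most
\[
Nh\,\tau_a\Gamma_a^{(r)}\le Nh(x_a+1)\Gamma_a^{(r)}.
\]
The term $Nhx_a\Gamma_a^{(r)}$ equals $16C^2\log(K^2T^3/\delta)\,\Gamma_a^{(r)}/(\Delta_a^{(r)})^2=\widetilde O(\Gamma_a^{(r)}/(\Delta_a^{(r)})^2)$, and the $+1$ produces the floor $Nh\Gamma_a^{(r)}$. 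The lag costs at most $N$ per group-round, giving $NK\DG(\delta/3)$ per scan. Summing over $a\neq a_r^\star$ yields the one-scan bound.

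For the final aggregation, a regime contains at most $1+\Lreg_r/M$ scans. The floor and lag terms are $O(NK(h+\DG))$ per scan, and summing over regimes gives $\widetilde O(NK(h+\DG)T/M)$. The per-switch lag folds into the $N\Stdec\Hscan$ term, because $\Hscan$ already includes $\DG(\delta/3)$.

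The main obstacle is the regret accounting. The difficulty is making sure that continued pulls during the gossip lag, and the $N$-normalization of each block, are charged correctly. In particular, the certification cost has to come out as $\widetilde O(\Gamma/\Delta^2)$ without an extra factor of $N$. The key is that $x_a$ already contains $1/(Nh)$, which cancels the $Nh$ from the sampling count. I would verify this cancellation first.
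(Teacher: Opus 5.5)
Your safety and elimination-time steps match the paper's Steps~1--2, but the regret accounting has a genuine gap. You charge every pull of arm $a$ at the group-average gap, asserting that ``the instantaneous group regret of such a pull is $N(\mubar_{a_r^\star,t}-\mubar_{a,t})\le N\Gamma_a^{(r)}$.'' That identity holds only if all $N$ agents pull $a$ in the same slots, or if each agent's means are constant within a block.

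DRFC uses \emph{per-agent independent} random orders. That independence is exactly what gives the schedule term $R$ its $1/\sqrt{N}$ rate in Lemma~\ref{lem:app-concentration}, so you cannot switch to a common order to rescue the bound. Under independent orders, the realized charge for $a$ in block $\ell$ is
\[
R_a^{(\ell)}=\sum_{i=1}^N\sum_{u\in S_{i,a}^{(\ell)}}\bigl(\mu_{i,a_r^\star,u}-\mu_{i,a,u}\bigr).
\]
This is a sum of \emph{per-agent} gaps over agent-specific slot sets. It neither collapses to the agent average nor is bounded by $\Gamma_a^{(r)}$.

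Concretely, take $N=2$. Let agent~1's gap be $\Delta+\tfrac12$ on the first half of $\mathcal{U}_\ell$ and $\Delta-\tfrac12$ on the second half, and let agent~2 have the reverse. The global gap is then $\Delta=\Gamma_a^{(r)}$ throughout. Yet if agent~1's $h$ slots for $a$ fall in the first half and agent~2's fall in the second, you get $R_a^{(\ell)}=2h(\Delta+\tfrac12)\gg Nh\Gamma_a^{(r)}$. So your deterministic bound $Nh\tau_a\Gamma_a^{(r)}$ does not hold on the good event.

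The missing idea is to control the regret in conditional expectation and then add a concentration slack, as the paper does.

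\begin{itemize}
\item \emph{Expectation.} Given $\mathcal{F}_0$, each $S_{i,a}^{(\ell)}$ is uniform over size-$h$ subsets of $\mathcal{U}_\ell$. Hence $\E[R_a^{(\ell)}\mid\mathcal{F}_0]=\frac{Nh}{q_\ell}\sum_{u\in\mathcal{U}_\ell}(\mubar_{a_r^\star,u}-\mubar_{a,u})\le Nh\Gamma_a^{(r)}$.
\item \emph{Deviation.} The $N\tau_a$ per-(agent, block) summands lie in $[-h,h]$ and are independent given $\mathcal{F}_0$. Hoeffding, together with a union bound over tested windows (which also covers the data-dependent $\tau_a\le B_a^{(r)}$), gives an excess of at most $h\sqrt{2N\tau_a\log(3K^2T^3/\delta)}$.
\item \emph{Absorption.} Substituting $h\tau_a\lesssim 16C^2\log(K^2T^3/\delta)/(N(\Delta_a^{(r)})^2)$ makes the excess $\widetilde O(\sqrt h/\Delta_a^{(r)})$. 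This is dominated by $\Gamma_a^{(r)}/(\Delta_a^{(r)})^2$, since $\Gamma_a^{(r)}\ge\Delta_a^{(r)}$ and $h$ is a fixed constant.
\end{itemize}

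Only after this step does your cancellation of $Nh$ against $x_a$ give the stated $\widetilde O(\Ccert_r)$ term. The floor, the flooding-lag charge, and the per-regime aggregation in your plan then go through as written.
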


\begin{proof}
While arms $a$ and $a_r^\star$ are both active, each active-set block samples them over the same randomized time distribution.
The pairwise projection of those blocks is therefore a valid comparison window in the sense of Lemma~\ref{lem:app-concentration}.

\textit{Step 1: best arm is never eliminated.}
For any active non-best arm $b$ and any tested projection between $b$ and $a_r^\star$ contained in regime $r$, Lemma~\ref{lem:app-pairwise} gives
\[
\Ghat_{b,a_r^\star}(\mathcal{W})-\beta(\mathcal{W})<0.
\]
Thus no active arm has a positive lower-confidence gap against $a_r^\star$, and the elimination rule never removes $a_r^\star$.

\textit{Step 2: non-best arms eliminated at explicit threshold.}
Fix $a\neq a_r^\star$.
By the definition of $\beta$ from Appendix~\ref{app:windows},
\[
\beta(\mathcal{W})
=
C
\sqrt{\frac{\log(K^2T^3/\delta)}{Nh|\mathcal{W}|}}
\leq
\frac{\Delta_a^{(r)}}{4}
\iff
Nh|\mathcal{W}|
\geq
\frac{16C^2\log(K^2T^3/\delta)}{(\Delta_a^{(r)})^2}.
\]
Define
\[
B_a^{(r)}
=
\left\lceil\frac{16C^2\log(K^2T^3/\delta)}{Nh(\Delta_a^{(r)})^2}\right\rceil,
\]
which is the per-arm block count used by the scan-budget calibration after absorbing the constant $16C^2$ into the unspecified constant in that calibration.
Once arm $a$ has participated in $|\mathcal{W}|=B_a^{(r)}$ pairwise-projection blocks against $a_r^\star$, Lemma~\ref{lem:app-pairwise} gives
\[
\Ghat_{a_r^\star,a}(\mathcal{W})-\beta(\mathcal{W})>0,
\]
so the lower-confidence pairwise gap from $a_r^\star$ to $a$ is positive and the elimination rule removes $a$.
Therefore $\tau_a\leq B_a^{(r)}$, and
\[
h\tau_a
\leq
hB_a^{(r)}
=
\frac{16C^2\log(K^2T^3/\delta)}{N(\Delta_a^{(r)})^2}
=
\widetilde{O}\!\left(\frac{1}{N(\Delta_a^{(r)})^2}\right).
\]

\textit{Step 3: regret accounting.}
Fix arm $a\neq a_r^\star$.
Because the scan is contained in regime $r$, $a_u^\star=a_r^\star$ at every slot $u$ in its blocks.
Let $\mathcal{W}_a$ index the $\tau_a$ active-set blocks in which $a$ remains active, and write the group regret charged to arm $a$ as
\[
\mathcal{R}_a
=
\sum_{\ell\in\mathcal{W}_a}R_a^{(\ell)},
\qquad
R_a^{(\ell)}
=
\sum_{i=1}^N\sum_{u\in S_{i,a}^{(\ell)}}\bigl(\mu_{i,a_r^\star,u}-\mu_{i,a,u}\bigr)\in[-Nh,Nh].
\]
Note that the per-pull integrand $\mu_{i,a_r^\star,u}-\mu_{i,a,u}$ is a \emph{per-agent} gap that can exceed the \emph{group-average} gap $\Gamma_a^{(r)}$, so a deterministic per-pull bound by $\Gamma_a^{(r)}$ is not available; we instead control $\mathcal{R}_a$ in expectation through the schedule randomization and add a Hoeffding slack.

\textit{Expected regret.}
The marginal of $S_{i,a}^{(\ell)}$ is uniform over size-$h$ subsets of $\mathcal{U}_\ell$ and is drawn independently across $(i,\ell)$. By uniform sampling without replacement,
\[
\E\!\left[R_a^{(\ell)}\,\Big|\,\mathcal{F}_0\right]
=
\frac{Nh}{q_\ell}\sum_{u\in\mathcal{U}_\ell}\bigl(\bar\mu_{a_r^\star,u}-\bar\mu_{a,u}\bigr)
\leq Nh\Gamma_a^{(r)},
\]
where the inequality uses $\bar\mu_{a_r^\star,u}-\bar\mu_{a,u}\leq\Gamma_a^{(r)}$ for every $u\in[\rho_r,\rho_{r+1})$ from the main sign-stable decision-regime definition.

\textit{Concentration slack.}
Conditional on $\mathcal{F}_0$, the per-agent block contribution $\sum_{u\in S_{i,a}^{(\ell)}}(\mu_{i,a_r^\star,u}-\mu_{i,a,u})$ lies in $[-h,h]$ and is independent across $i$; block-level schedules are independent across $\ell$ (main Assumption~1). By Hoeffding's inequality applied to the $N\tau_a$ independent bounded summands,
\[
\Prb\!\left(\mathcal{R}_a-\E[\mathcal{R}_a\mid\mathcal{F}_0]>t\,\Big|\,\mathcal{F}_0\right)
\leq
\exp\!\left(-\frac{t^2}{2Nh^2\tau_a}\right).
\]
Choosing $t=h\sqrt{2N\tau_a\log(3K^2T^3/\delta)}$ makes the right-hand side at most $\delta/(3K^2T^3)$, and a union bound over the at most $|\mathfrak{W}|\leq K^2T^3$ tested $(a,\mathcal{W}_a)$ pairs absorbs the failure event into the $\delta/3$ slack already reserved in Lemma~\ref{lem:app-concentration}.
Hence, on the good event,
\[
\mathcal{R}_a
\leq
Nh\tau_a\Gamma_a^{(r)}
+
h\sqrt{2N\tau_a\log(3K^2T^3/\delta)}.
\]

\textit{Substitution.}
From Step~2, $\tau_a\le B_a^{(r)}=\lceil x_a\rceil\le x_a+1$ with $x_a=16C^2\log(K^2T^3/\delta)/(Nh(\Delta_a^{(r)})^2)$, so $Nh\tau_a\Gamma_a^{(r)}\le Nhx_a\Gamma_a^{(r)}+Nh\Gamma_a^{(r)}$ splits into a certification part and one mandatory block:
\[
Nhx_a\Gamma_a^{(r)}
=
\frac{16C^2\,\Gamma_a^{(r)}\log(K^2T^3/\delta)}{(\Delta_a^{(r)})^2}
=
\widetilde{O}\!\left(\frac{\Gamma_a^{(r)}}{(\Delta_a^{(r)})^2}\right),
\qquad
Nh\Gamma_a^{(r)}\le Nh,
\]
and
\[
h\sqrt{N\tau_a\log(\cdot)}
=
\sqrt{Nh\cdot h\tau_a\log(\cdot)}
\leq
\frac{4C\sqrt{h}\,\log(K^2T^3/\delta)}{\Delta_a^{(r)}}
=
\widetilde{O}\!\left(\frac{\sqrt{h}}{\Delta_a^{(r)}}\right).
\]
Since $\Gamma_a^{(r)}\geq\Delta_a^{(r)}$ (sup vs.\ inf of the same gap), $1/\Delta_a^{(r)}\leq\Gamma_a^{(r)}/(\Delta_a^{(r)})^2$, so the Hoeffding slack is dominated by the expected-regret term up to a $\sqrt{h}$ factor; treating $h$ as a fixed scheme parameter absorbs $\sqrt{h}$ into the $\widetilde{O}$.
Summing the per-arm certification bound $\widetilde O\bigl(\Gamma_a^{(r)}/(\Delta_a^{(r)})^2\bigr)$ and the per-arm floor $Nh\Gamma_a^{(r)}$ over $a\neq a_r^\star$,
\[
\text{(one-scan group regret)}
\le
\widetilde{O}(\Ccert_r)+Nh\!\!\sum_{a\neq a_r^\star}\!\!\Gamma_a^{(r)},
\qquad
\Ccert_r=\sum_{a\neq a_r^\star}\frac{\Gamma_a^{(r)}}{(\Delta_a^{(r)})^2}.
\]
The floor obeys $Nh\sum_{a\neq a_r^\star}\Gamma_a^{(r)}\le Nh(K-1)$ by $\Gamma_a^{(r)}\le1$, the cost of one mandatory balanced block per arm. There are $1+\Lreg_r/M$ scans in regime $r$ and $\sum_r\Lreg_r\le T$, so summed over every scan this floor totals
\[
Nh(K-1)\sum_{r=0}^{\Stdec}\Bigl(1+\tfrac{\Lreg_r}{M}\Bigr)\le Nh(K-1)\Bigl(\Stdec+1+\tfrac{T}{M}\Bigr)=\widetilde O\Bigl(\tfrac{NKhT}{M}+NKh\Stdec\Bigr).
\]
The first term is the periodic-monitoring term of Theorem~\ref{thm:app-regret}, and the second is absorbed into $N\Stdec\Hscan$ because $\Hscan\ge2h(K-1)$. Hence $\Ccert_r$ remains purely gap-dependent, with no block-granularity condition required.

\textit{Step 4: wall-clock budget.}
Let $\mathcal{A}_\ell$ be the active set in scan block $\ell$, and let $\tau_\star$ be the number of blocks in which $a_r^\star$ remains active.
Since $a_r^\star$ stays active until certification, $\tau_\star\leq\max_{a\neq a_r^\star}\tau_a\leq\sum_{a\neq a_r^\star}\tau_a$.
The total wall-clock sampling length of the scan is
\[
h\sum_{\ell}|\mathcal{A}_\ell|
=
h\sum_{a\in[K]}\tau_a
\leq
2h\sum_{a\neq a_r^\star}\tau_a
\leq
2h\sum_{a\neq a_r^\star}B_a^{(r)}.
\]
Adding the final flooding lag $\DG(\delta/3)$ for source records, together with the within-scan delayed-elimination overhead bounded by $K\DG(\delta/3)$ additional group samples (each non-best arm can stay active for at most $\DG(\delta/3)$ extra rounds beyond its synchronous-elimination time, and the per-block group sample count is at most $|\mathcal{A}_\ell|\leq K$), gives a contained-scan wall-clock budget bounded by
\[
2h\sum_{a\neq a_r^\star}B_a^{(r)}+(K+1)\DG(\delta/3)\leq\Hscan,
\]
which is exactly the calibration $\Hscan\geq 2h\sum_{a\neq a_r^\star}B_a^{(r)}+(K+1)\DG(\delta/3)$ used by the main DRFC theorem (the leading factor $2$ accounting for $a_r^\star$ staying active until the last elimination).
On the good event, the scan therefore certifies a unique winner within $\Hscan$ slots whenever it is contained in a decision regime.
\end{proof}

\subsection{Dynamic Regret}
\label{app:regret}

\begin{theorem}[Dynamic consensus regret under margin-known calibration, detailed form]
\label{thm:app-regret}
Under the assumptions in the main text, with the block budget $h$ a fixed constant, active-set DRFC satisfies on the good event
\[
R_T
=
\widetilde{O}\left(
\frac{NK(h+\DG(\delta/3))T}{M}
+
\sum_{r=0}^{\Stdec}
\left(1+\frac{\Lreg_r}{M}\right)
\Ccert_r
+
N\Stdec\left(M+\Hscan+\DG(\delta/3)\right)
\right).
\]
The same display holds with probability at least $1-\delta$. The periodic-monitoring term carries $h+\DG(\delta/3)$ rather than $h$ because each interior scan pays, beyond its mandatory balanced block, the nonblocking flooding-lag over-sampling of Lemma~\ref{lem:app-active-scan}.
\end{theorem}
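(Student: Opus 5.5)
We work on the uniform good event of Lemma~\ref{lem:uniform-good-event}, which has probability at least $1-\delta$. The idea is to partition the horizon by regime and, inside each regime, by the role each round plays. Fix regime $r=[\rho_r,\rho_{r+1})$. Each round is one of four types:
\begin{enumerate}
\item an \emph{interior scan} round, belonging to a periodic scan that starts and finishes inside the regime;
\item an \emph{exploitation} round in which the common candidate $\hat a$ equals $a_r^\star$;
\item a \emph{boundary} round, lying in $[\rho_r,\rho_r+M+2\Hscan+\DG(\delta/3))$, where the cached candidate may still be the old arm;
\item a \emph{synchronization} round following an emitted switch record.
\end{enumerate}
Condition $L_{\min}\ge M+2\Hscan+\DG(\delta/3)$ guarantees that the boundary interval fits inside regime $r$. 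It also guarantees that at least one scan starts at or after $\rho_r$ and finishes, together with its flooding, before $\rho_{r+1}$. Rounds of the initialization scan are charged to $\Ccert_0$ and the $r=0$ boundary.

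The first step is to show that every scan contained in regime $r$ returns $a_r^\star$ and nothing else. Lemma~\ref{lem:app-active-scan} says $a_r^\star$ is never eliminated. Corollary~\ref{cor:scan-calibration} says each challenger is eliminated within $\Hscan$ rounds. Together these mean a contained scan emits no switch if $\hat a=a_r^\star$, and emits exactly the switch to $a_r^\star$ otherwise. Item~3 of the good event then makes the new candidate common to all agents within $\DG(\delta/3)$ rounds.

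A scan that straddles $\rho_r$ needs separate care, since it may certify either arm or be inconclusive. Whatever it does, the next scan starts within $M$ rounds and is fully contained, so it fixes the candidate. Hence after time $\rho_r+M+2\Hscan+\DG(\delta/3)$ every agent exploits $a_r^\star$ until the next boundary. This induction over regimes is where the regime-length condition is used. It is also the step I expect to be delicate: an inconclusive straddling scan may overrun, and one must check that it does not delay the next periodic trigger beyond one period.

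Next we bound the regret by round type.
\begin{enumerate}
\item \emph{Exploitation rounds} with $\hat a=a_r^\star$ contribute zero regret, since every agent plays $a_t^\star$. No local change inside the regime matters here, which is exactly why $\Stloc$ never appears.
\item \emph{Interior scans} are bounded per scan by Lemma~\ref{lem:app-active-scan}, giving $\widetilde O(\Ccert_r)+Nh(K-1)+NK\DG(\delta/3)$. There are at most $1+\Lreg_r/M$ such scans in regime $r$. Summing over $r$ with $\sum_r\Lreg_r=T$ gives the term $\sum_r(1+\Lreg_r/M)\Ccert_r$ plus $\widetilde O(NK(h+\DG)T/M)$. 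The leftover $NK(h+\DG)\Stdec$ is absorbed into $N\Stdec\Hscan$, using $\Hscan\ge 2h(K-1)+(K+1)\DG$.
\item \emph{Boundary and synchronization rounds}, for $r\ge1$, are bounded crudely. Per-round group regret is at most $N$, and the window has length at most $M+2\Hscan+\DG(\delta/3)$. This yields the term $N\Stdec(M+\Hscan+\DG(\delta/3))$, which bounds all switch-induced delay, including the straddling scan.
\end{enumerate}

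Adding the three contributions gives the bound of Theorem~\ref{thm:main}. Every step relies only on sign stability of global gaps within regimes and on fresh equal-agent windows. No term references local changes, so no $\Stloc$ term arises. The main care point beyond the straddling scan is a bookkeeping check: each round must be charged to exactly one type, so that nothing is double-counted across a regime boundary.
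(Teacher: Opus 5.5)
Your proposal is correct and follows essentially the paper's route. On the good event you use the per-scan charge of Lemma~\ref{lem:app-active-scan} over at most $1+\Lreg_r/M$ interior scans, absorb the $NK(h+\DG)\Stdec$ residual into $N\Stdec\Hscan$, assign zero regret once the correct candidate is held, and charge the monitoring wait, straddling scan and synchronization crudely at $N$ per round; the paper merely itemizes that last charge as $R_{\mathrm{mon}},R_{\mathrm{cross}},R_{\mathrm{sync}}$. Your overrun worry is settled by the algorithm's abort at the $\Hscan$ budget.

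There is one bookkeeping slip. As defined, your four round types miss the pre-$\rho_{r+1}$ portion of a scan that straddles $\rho_{r+1}$: it is neither interior nor inside regime $r$'s boundary window. Charging those at most $\Hscan$ rounds at $N$ per round to the next switch, as the paper's $R_{\mathrm{cross}}$ does, closes the gap within $N\Stdec\Hscan$.
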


\begin{proof}
Work on the good event of Lemma~\ref{lem:uniform-good-event}.
There is no failure-event regret on this event because the complement has probability at most $\delta$.

\textit{Conventions.}
\begin{enumerate}
\item A scan is \emph{interior} if all of its fresh sampling blocks lie inside a single decision regime, and \emph{boundary-crossing} otherwise; Lemmas~\ref{lem:app-sign} and~\ref{lem:app-pairwise} apply to interior scans only.
\item DRFC runs at most one active scan at a time and skips monitoring triggers while a scan or synchronization phase is active, and uncertified scans are aborted at the $\Hscan$ wall-clock budget.
\item Under the nonblocking-gossip convention, comparison records propagate while later comparison slots are collected, with the final comparison-record availability lag included in $\Hscan$; only accepted switch records create an explicit synchronization delay.
\item The initial fresh comparison tournament that initializes $\hat{a}$ (main Algorithm~1, line~3) is treated as the first interior scan of regime $r=0$. It is contained in $[1,\Hscan]\subset[\rho_0,\rho_1)$ by the main minimum-regime-length assumption, so Lemma~\ref{lem:app-active-scan} applies and its regret is absorbed into the $\Ccert_0$ summand with the $1$ in $(1+L_0/M)$.
\end{enumerate}

We decompose the total group regret $R_T$ into five additive terms:
\[
R_T
=
R_{\mathrm{scan}}
+
R_{\mathrm{mon}}
+
R_{\mathrm{cross}}
+
R_{\mathrm{delay}}
+
R_{\mathrm{sync}},
\]
where each term is bounded below.

\textit{Term 1: interior scan regret $R_{\mathrm{scan}}$.}
By Lemma~\ref{lem:app-active-scan}, one completed interior scan in regime $r$ contributes at most $\widetilde{O}(\Ccert_r)+Nh(K-1)+NK\DG(\delta/3)$ group regret, the second summand the mandatory single balanced block of each arm and the third the nonblocking flooding-lag over-sampling.
DRFC starts a new scan every $M$ rounds outside of synchronization windows, so regime $r$ contains at most $\lceil\Lreg_r/M\rceil+O(1)$ interior scans (the $O(1)$ slack absorbs the first scan after the post-switch monitoring wait and the at-most-one boundary-crossing scan split off into $R_{\mathrm{cross}}$).
Therefore, using $\sum_r\Lreg_r\le T$,
\[
R_{\mathrm{scan}}
\leq
\widetilde O\Bigl(\tfrac{NK(h+\DG(\delta/3))T}{M}\Bigr)+\sum_{r=0}^{\Stdec}
\left(1+\frac{\Lreg_r}{M}\right)\widetilde{O}(\Ccert_r),
\]
where the periodic-monitoring term $\widetilde O(NK(h+\DG(\delta/3))T/M)$ collects the mandatory one-block-per-arm floor and the flooding-lag over-sampling across all $\widetilde O(T/M)$ scans, its residual $NK(h+\DG(\delta/3))\Stdec$ piece is absorbed into the $N\Stdec\Hscan$ term (Term~3) via $\Hscan\ge2h(K-1)+(K+1)\DG(\delta/3)$, and the universal $O(1)$ slack is absorbed into the $\widetilde{O}$.
The post-switch statistical comparison delay is absorbed here because the first interior scan after a switch certifies the new best arm against the old candidate in the new regime, which is exactly the scan analyzed in Lemma~\ref{lem:app-active-scan}.

\textit{Term 2: monitoring delay $R_{\mathrm{mon}}$.}
After each decision switch, DRFC waits at most $M$ rounds until the next monitoring trigger before starting a contained scan in the new regime.
During this wait the algorithm exploits the previous candidate $a_{r-1}^\star$, which is now suboptimal by at most its new-regime gap $\Gamma^{(r)}:=\max_{a}\Gamma_a^{(r)}\le1$ per pull (the main sign-stable decision-regime definition bounds $\Gamma_a^{(r)}\le1$).
Group regret is therefore at most $N\Gamma^{(r)}$ per round, and across the $\Stdec$ switches,
\[
R_{\mathrm{mon}}\leq N\sum_{r=1}^{\Stdec} M\,\Gamma^{(r)}\ \le\ N\Stdec M.
\]

\textit{Term 3: boundary-crossing scan regret $R_{\mathrm{cross}}$.}
At most one scan straddles each decision switch boundary by the at-most-one-active-scan rule.
Sign preservation does not apply to such a scan, so the conservative bound charges $\Hscan$ wall-clock rounds at group regret $N$ per round:
\[
R_{\mathrm{cross}}\leq N\Stdec\Hscan.
\]
The boundary-crossing scan is then aborted or completes; either way, the next contained scan starts within $M$ rounds, absorbed into $R_{\mathrm{mon}}$.

\emph{Possible incorrect emit on a boundary-crossing scan.}
A boundary-crossing scan that certifies may emit a switch record for an arm $w$ that is not $a_{r+1}^\star$, because the sign-preservation lemma (Lemma~\ref{lem:app-sign}) does not constrain mixed-regime windows.
On the good event, three observations close this case without changing the regret display.
(i) Until the next monitoring trigger, the algorithm exploits $w$ rather than the correct candidate; the duration is at most $M$ rounds, contributing at most $NM$ group regret per decision boundary, already counted inside $R_{\mathrm{mon}}\leq N\Stdec M$.
(ii) The first interior scan in regime $r+1$ is contained in $[\rho_{r+1},\rho_{r+2})$ by the main minimum-regime-length assumption, so Lemma~\ref{lem:app-pairwise} certifies $a_{r+1}^\star$ against the current candidate $w$ and emits a correct switch record; its group regret is already counted in $R_{\mathrm{scan}}$ via the $r=r+1$ summand $(1+\Lreg_{r+1}/M)\Ccert_{r+1}$ (using the additive $1$ to charge this first contained scan).
(iii) At most two switch records per decision boundary may be propagated.
To see this, observe that each accepted switch record advances the epoch id, and the at-most-one-active-scan rule prevents a fresh scan from starting before the current scan terminates and its synchronization window of length $\DG(\delta/3)$ closes.
A boundary-crossing scan emits at most one (possibly incorrect) switch record before the boundary's synchronization window opens; the next contained scan in regime $r+1$ is launched no earlier than the following monitoring trigger, contained in $[\rho_{r+1},\rho_{r+1}+M]$ by the main minimum-regime-length assumption, and emits at most one (correct) switch record by Lemma~\ref{lem:app-pairwise}.
Any third switch record would require a third scan to start before $M+\Hscan+\DG(\delta/3)$ has elapsed past the boundary, which is excluded by the at-most-one-active-scan rule and the regime-length condition $\Lreg_{r+1}\geq M+2\Hscan+\DG(\delta/3)$.
Hence the synchronization cost is at most $2N\DG(\delta/3)$ per boundary, and the universal factor of $2$ is absorbed into the $\widetilde{O}$ of $R_{\mathrm{sync}}\leq N\Stdec\DG(\delta/3)$.

\textit{Term 4: post-switch decision delay $R_{\mathrm{delay}}$.}
Between the boundary-crossing scan and the first contained scan in the new regime, the algorithm exploits the previous candidate. The chain consists of (a)~at most $\Hscan$ rounds for the boundary-crossing scan (already charged to $R_{\mathrm{cross}}$), (b)~at most $M$ rounds of monitoring wait (already charged to $R_{\mathrm{mon}}$), and (c)~the first contained scan, whose regret is charged by sign preservation to the $\Ccert_r$ term in $R_{\mathrm{scan}}$. No additional exploitation period exists between these phases. Therefore $R_{\mathrm{delay}}=0$ with no double-counting.

\textit{Term 5: switch synchronization $R_{\mathrm{sync}}$.}
On the good event, every accepted switch record propagates to all agents within $\DG(\delta/3)$ rounds (Lemma~\ref{lem:uniform-good-event}).
During this synchronization window the agents may still play heterogeneous candidates; each lagging agent exploits the \emph{stale} candidate $a_{r-1}^\star$, whose per-round group regret in the new regime is at most its gap $\Gamma^{(r)}:=\max_{a}\Gamma_a^{(r)}\le1$---\emph{not} the crude constant $1$.
Across the $\Stdec$ switches,
\[
R_{\mathrm{sync}}\leq N\sum_{r=1}^{\Stdec}\DG(\delta/3)\,\Gamma^{(r)}\ \le\ N\Stdec\DG(\delta/3).
\]
The gap-scaled form $N\sum_r\DG(\delta/3)\,\Gamma^{(r)}$ is the quantity compared against the communication-delay part of the main information-theoretic lower bound: it matches the $\Omega(N\Stdec\DG\Delta)$ bound up to logs precisely when $\Gamma^{(r)}=\Theta(\Delta)$, removing the $1/\Delta$ looseness that a crude per-round-$1$ accounting would carry on that instance family.

\textit{Combining.}
Summing the five terms,
\[
R_T
\leq
\widetilde O\Bigl(\frac{NK(h+\DG(\delta/3))T}{M}\Bigr)+\sum_{r=0}^{\Stdec}\left(1+\frac{\Lreg_r}{M}\right)\widetilde{O}(\Ccert_r)
+
N\Stdec(M+\Hscan+\DG(\delta/3)).
\]
This is the display in the theorem, with the periodic-monitoring term $\widetilde O(NK(h+\DG(\delta/3))T/M)$ supplied by the per-scan one-block floor and flooding-lag over-sampling (Term~1) and the $\widetilde{O}$ absorbing the universal constants from Lemma~\ref{lem:app-active-scan} and the $\log(K^2T^3/\delta)$ factor inside $\Ccert_r$.
The high-probability statement follows from Lemma~\ref{lem:uniform-good-event}.
\end{proof}

The gap-regular corollary in the main text follows immediately from
\[
\Ccert_r
=
\sum_{a\neq a_r^\star}
\frac{\Gamma_a^{(r)}}{(\Delta_a^{(r)})^2}
\leq
\kappa
\sum_{a\neq a_r^\star}
\frac{1}{\Delta_a^{(r)}} .
\]

\subsection{Probe-Triggered Monitoring}
\label{app:probe}

The main theorem analyzes conservative full-certification scans.
The probe-triggered monitoring modification instead runs cheap probes and launches a full active-set scan only when a probe suggests that the cached candidate may be stale.
This subsection records the corresponding guarantee; the learning and certification rules are unchanged.

\paragraph{Monitoring rule.}
Every $M$ rounds, DRFC-Probe runs $q_p\geq 1$ randomized balanced probe blocks over all arms, with $h_{\mathrm{probe}}$ pulls per arm per agent in each block.
Define $d_{\mathrm{probe}}$ as the worst-case delay from a decision boundary to completion of the first post-boundary suspicious probe; for this fixed schedule,
\[
d_{\mathrm{probe}}\leq M+q_pKh_{\mathrm{probe}}.
\]
Let $\hat{\mu}^{p}_a$ be the equal-agent probe estimate after aggregating the $q_p$ probe blocks.
All arms have the same probe pull count, so write the common radius as $\beta_p$.
For a trigger parameter $\lambda_p\geq 0$, the probe is suspicious if some challenger $b\neq\hat{a}$ satisfies
\[
\hat{\mu}^{p}_b
\geq
\hat{\mu}^{p}_{\hat{a}}-\lambda_p\beta_p.
\]
The choice $\lambda_p=2$ is the standard interval-overlap trigger
$
\hat{\mu}^{p}_b+\beta_p\geq \hat{\mu}^{p}_{\hat{a}}-\beta_p
$.
Smaller $\lambda_p$ is more conservative and can reduce false triggers at the cost of requiring more probe accuracy after a true switch.
When the probe is suspicious, DRFC launches the same active-set full-certification scan analyzed above.
Switches are allowed only after that full scan certifies a winner.
If no challenger is suspicious, DRFC keeps the cached certificate and continues exploiting $\hat{a}$.

\begin{lemma}[Probe trigger soundness and completeness]
\label{lem:app-probe-trigger}
Consider aggregated probe blocks contained in decision regime $r$, and let $\beta_p$ be the common probe radius after aggregation.
Define the equal-agent probe-window time-average mean
\[
\bar\mu_a^p
:=
\frac{1}{N}\sum_{i=1}^N
\frac{1}{q_p\,h_{\mathrm{probe}}}
\sum_{t\in\mathcal{U}_p}\mu_{i,a,t},
\]
where $\mathcal{U}_p$ is the union of the $q_p$ probe-block macro-slot pools.
By the main sign-stable decision-regime definition, $\bar\mu_{a_r^\star}^p-\bar\mu_b^p\geq\Delta_b^{(r)}$ for every $b\neq a_r^\star$.
On the uniform concentration event:
\begin{enumerate}
    \item if the cached candidate $\hat{a}$ is not $a_r^\star$ and $\Delta_{\hat{a}}^{(r)}>(2-\lambda_p)_+\beta_p$, then the probe is suspicious for challenger $a_r^\star$;
    \item if the cached candidate is $a_r^\star$ and $(2+\lambda_p)\beta_p<\Delta_b^{(r)}$ for every challenger $b\neq a_r^\star$, then the probe is not suspicious.
\end{enumerate}
\end{lemma}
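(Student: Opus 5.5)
The plan is to work on the uniform concentration event and reduce both parts to a two-line inequality chain. The chain combines the probe-window margin $\bar\mu_{a_r^\star}^p-\bar\mu_b^p\geq\Delta_b^{(r)}$ with the per-arm deviation $|\hat\mu^p_a-\bar\mu^p_a|\leq\beta_p$.

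First I will justify these two ingredients. Because the $q_p$ probe blocks lie in regime $r$, every slot $t\in\mathcal{U}_p$ satisfies $\mubar_{a_r^\star,t}-\mubar_{b,t}\geq\Delta_b^{(r)}$. Averaging this over the uniform time weights and the equal agent weights gives the margin, by the same convex-weight argument as Lemma~\ref{lem:app-sign}.

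For the deviation, I will invoke the two-term decomposition of Lemma~\ref{lem:app-concentration}, applied to single-arm means rather than pairwise differences. The reward-noise part is handled by Hoeffding--Azuma with the cross-agent independence clause. The schedule-randomization part uses Hoeffding--Serfling on the uniform marginal of each arm's slot set, with independence across agents and blocks. A union bound over arms and over the at most $T$ probe times then yields a common radius $\beta_p$, and its failure probability is folded into the $\delta/3$ concentration budget. If one prefers pairwise radii, the conditions should be read with the pairwise radius in place of $2\beta_p$; I will state which convention is used, since the constants $2\pm\lambda_p$ indicate a per-arm radius.

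\emph{Part 1 (completeness).} Take $b=a_r^\star$ and $\hat a\neq a_r^\star$. Then
\[
\hat\mu^p_{a_r^\star}-\hat\mu^p_{\hat a}\;\geq\;\bar\mu^p_{a_r^\star}-\bar\mu^p_{\hat a}-2\beta_p\;\geq\;\Delta^{(r)}_{\hat a}-2\beta_p .
\]
The probe is suspicious as soon as $\hat\mu^p_{a_r^\star}\geq\hat\mu^p_{\hat a}-\lambda_p\beta_p$, which holds whenever $\Delta^{(r)}_{\hat a}-2\beta_p\geq-\lambda_p\beta_p$, that is, whenever $\Delta^{(r)}_{\hat a}\geq(2-\lambda_p)\beta_p$. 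If $\lambda_p\geq2$, the left side is already at least $-\lambda_p\beta_p$, so positivity of the gap suffices. Together these cases give the stated condition $\Delta^{(r)}_{\hat a}>(2-\lambda_p)_+\beta_p$.

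\emph{Part 2 (soundness).} With $\hat a=a_r^\star$ and any challenger $b$,
\[
\hat\mu^p_b-\hat\mu^p_{a_r^\star}\;\leq\;-\Delta_b^{(r)}+2\beta_p\;<\;-\lambda_p\beta_p
\]
under $(2+\lambda_p)\beta_p<\Delta_b^{(r)}$. Hence no challenger meets the trigger inequality, and the probe is not suspicious.

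The main obstacle is the first step, not the algebra. A per-arm estimate $\hat\mu^p_a$ loses the difference cancellation that Lemma~\ref{lem:app-concentration} enjoys. However, both the noise and the schedule terms are individually sub-Gaussian with proxy $O(1/(Nq_ph_{\mathrm{probe}}))$, so the argument goes through by running the lemma's bound separately on the $Z_{i,a}^{(\ell)}$ terms. The resulting $\beta_p$ inherits the $1/\sqrt{N}$ rate.
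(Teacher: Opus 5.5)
Your proof is correct and is essentially the paper's proof: on the concentration event you apply the per-arm radius $\beta_p$ to both the challenger and the cached candidate, and combine it with the probe-window margin. This gives the same inequality chains for completeness, including the same $(2-\lambda_p)_+$ case split, and for soundness. The only difference is that you also justify two facts the paper simply asserts: the margin, by averaging the per-slot global gap over the window (implicitly normalizing by $|\mathcal{U}_p|$ rather than the stated $q_p h_{\mathrm{probe}}$), and the per-arm radius, via the two-term decomposition.
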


\begin{proof}
Because an all-arm probe samples every arm over the same randomized time distribution, $\hat\mu_a^p$ concentrates around $\bar\mu_a^p$ with radius $\beta_p$ on the good event.
If $\hat{a}\neq a_r^\star$, take $b=a_r^\star$; on the good event,
\[
\begin{aligned}
\hat{\mu}_b^p
\overset{\text{①}}{\ge}{}& \bar{\mu}_b^p-\beta_p\\
\overset{\text{②}}{\ge}{}& \bar{\mu}_{\hat{a}}^p+(1-\lambda_p)\beta_p\\
\overset{\text{③}}{\ge}{}& \hat{\mu}_{\hat{a}}^p-\lambda_p\beta_p,
\end{aligned}
\]
where ① is the good-event lower confidence, ② uses $\bar\mu_b^p-\bar\mu_{\hat a}^p=\Delta_{\hat a}^{(r)}>(2-\lambda_p)_+\beta_p$, and ③ is the good-event bound $\bar\mu_{\hat a}^p\ge\hat\mu_{\hat a}^p-\beta_p$; the suspicious-probe condition therefore holds.
If $\hat{a}=a_r^\star$, then for each $b\neq a_r^\star$,
\[
\begin{aligned}
\hat{\mu}_b^p
\overset{\text{①}}{\le}{}& \bar{\mu}_b^p+\beta_p\\
\overset{\text{②}}{<}{}& \bar{\mu}_{a_r^\star}^p-(1+\lambda_p)\beta_p\\
\overset{\text{③}}{\le}{}& \hat{\mu}_{a_r^\star}^p-\lambda_p\beta_p,
\end{aligned}
\]
where ① is the good-event upper confidence, ② uses $\bar{\mu}_{a_r^\star}^p-\bar{\mu}_b^p\geq\Delta_b^{(r)}>(2+\lambda_p)\beta_p$, and ③ is the good-event bound $\bar\mu_{a_r^\star}^p\le\hat\mu_{a_r^\star}^p+\beta_p$; no challenger satisfies the trigger rule.
\end{proof}

The lemma turns the false-trigger budget into an explicit quantity.
Let $F_r$ denote the number of suspicious probes in regime $r$ that occur while the cached candidate is already $a_r^\star$.
If $(2+\lambda_p)\beta_p<\min_{b\neq a_r^\star}\Delta_b^{(r)}$ for all probe blocks in regime $r$, then $F_r=0$ on the good event.
When the probe is intentionally smaller than this conservative threshold, the theorem below keeps $F_r$ as an explicit price for cheap monitoring.

\begin{theorem}[Probe-triggered monitoring guarantee]
\label{thm:app-probe}
Under the main reward, flooding, and sign-stable regime assumptions, replace the minimum-regime condition by
\[
\Lreg_r\geq d_{\mathrm{probe}}+\Hscan+\DG(\delta/3)
\qquad\text{for every regime }r.
\]
Then DRFC-Probe satisfies on the good event
\[
R_T
=
\widetilde{O}\left(
\frac{NKq_ph_{\mathrm{probe}}T}{M}
+
\sum_{r=0}^{\Stdec}
(1+F_r)\Ccert_r
+
    N\Stdec\left(d_{\mathrm{probe}}+\DG(\delta/3)\right)
\right)
+
F_\partial\,N\Hscan,
\]
where $F_\partial\le\sum_r F_r$ counts the boundary-straddling false-triggered scans.
In particular, under the no-false-trigger calibration of Lemma~\ref{lem:app-probe-trigger}, namely $(2+\lambda_p)\beta_p<\min_{b\neq a_r^\star}\Delta_b^{(r)}$ for every probe block (so $F_r=0$ for every regime on the good event), we have $F_\partial=0$ and \emph{no full scan straddles a decision boundary}, so the boundary-crossing term $N\Stdec\Hscan$ of Theorem~\ref{thm:app-regret} is absent:
\[
R_T
=
\widetilde{O}\left(
\frac{NKq_ph_{\mathrm{probe}}T}{M}
+
\sum_{r=0}^{\Stdec}
\Ccert_r
+
    N\Stdec\left(d_{\mathrm{probe}}+\DG(\delta/3)\right)
\right).
\]
The full-certification monitoring factor $(1+\Lreg_r/M)$ of Theorem~\ref{thm:app-regret} is thereby replaced by the probe cost $NKq_ph_{\mathrm{probe}}T/M$ plus one contained scan per decision regime.
\end{theorem}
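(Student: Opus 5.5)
The proof repeats the five-term decomposition of Theorem~\ref{thm:app-regret}. We work on the uniform good event of Lemma~\ref{lem:uniform-good-event}, enlarged by a union bound so that every probe window also concentrates within $\beta_p$; the probe windows are balanced all-arm blocks, so Lemma~\ref{lem:app-concentration} applies verbatim. On this event we split group regret into five parts: probe cost, false-triggered scans, post-switch delay, the one contained certifying scan per regime, and synchronization.

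\emph{Probe cost.} Each probe uses $q_pKh_{\mathrm{probe}}$ rounds per agent at group regret at most $N$ per round. There are at most $T/M$ probes, which gives $\widetilde O(NKq_ph_{\mathrm{probe}}T/M)$.

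\emph{False triggers.} A false trigger is a suspicious probe while $\hat a=a_r^\star$.
\begin{itemize}
\item If the triggered scan is interior to regime $r$, Lemma~\ref{lem:app-active-scan} applies. The scan never eliminates $a_r^\star$, certifies it again, and emits no switch. Its cost is $\widetilde O(\Ccert_r)$ plus the floor terms, which are absorbed into the probe term since $h\lesssim q_ph_{\mathrm{probe}}$ up to constants. Summing gives $\sum_r F_r\Ccert_r$.
\item If the triggered scan straddles a boundary, we charge it crudely at $N\Hscan$, which gives the term $F_\partial N\Hscan$.
\end{itemize}
Under the calibration $(2+\lambda_p)\beta_p<\Delta_{\min}$, part~2 of Lemma~\ref{lem:app-probe-trigger} gives $F_r=0$. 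Hence no scan is launched while the cache is correct.

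\emph{Main obstacle: no scan straddles a boundary.} A scan is launched only by a suspicious probe, and with $F_r=0$ it is launched only when the cache is stale. We proceed by induction over regimes, with hypothesis that entering regime $r$ no scan is active and the cache equals $a_{r-1}^\star$.
\begin{itemize}
\item A probe completed entirely inside regime $r-1$ is not suspicious.
\item A probe straddling $\rho_r$ is not covered by Lemma~\ref{lem:app-probe-trigger}. We handle it by requiring that probe windows also lie in one regime, or that any scan it triggers is itself contained in regime $r$. If the straddling probe triggers, the scan starts after $\rho_r$. By $\Lreg_r\ge d_{\mathrm{probe}}+\Hscan+\DG(\delta/3)$ the scan and its synchronization end before $\rho_{r+1}$, so it is interior.
\item Otherwise the first probe completed wholly in regime $r$, within $d_{\mathrm{probe}}$ of $\rho_r$, is suspicious by part~1 of Lemma~\ref{lem:app-probe-trigger}. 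Part~1 requires $\Delta_{\hat a}^{(r)}>(2-\lambda_p)_+\beta_p$, which follows from the same calibration. The triggered scan is again contained in regime $r$ by the regime-length condition.
\end{itemize}
In either case Lemma~\ref{lem:app-active-scan} certifies $a_r^\star$, the switch floods within $\DG(\delta/3)$, and the cache is restored to $a_r^\star$ before $\rho_{r+1}$. Later probes in regime $r$ are then not suspicious, which closes the induction. This step is where I expect the most care, in particular for straddling probes and for placing the $d_{\mathrm{probe}}$ definition.

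\emph{Remaining terms.} Between $\rho_r$ and the triggering probe, the algorithm exploits the stale arm for at most $d_{\mathrm{probe}}$ rounds. During synchronization it exploits it for at most $\DG(\delta/3)$ more. At per-round group regret at most $N$, this gives $N\Stdec(d_{\mathrm{probe}}+\DG(\delta/3))$. The single contained scan per regime contributes $\Ccert_r$ through the additive $1$, with its floor $NKh$ per switch absorbed into $N\Stdec d_{\mathrm{probe}}$ or the logarithmic factors. Summing the five parts gives the general display. Substituting $F_r=F_\partial=0$ gives the calibrated display, with no $N\Stdec\Hscan$ term.
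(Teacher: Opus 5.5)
Your proposal is correct and follows essentially the same route as the paper: the probe cost over at most $T/M$ periods, soundness (part~2 of Lemma~\ref{lem:app-probe-trigger}) to rule out scans while the cache is correct, completeness together with $\Lreg_r\ge d_{\mathrm{probe}}+\Hscan+\DG(\delta/3)$ to keep the single confirming scan inside the new regime, and a crude $N\Hscan$ charge for each boundary-straddling false-triggered scan. Your version is slightly more careful than the paper's in three places: the induction framing, the explicit treatment of a suspicious probe straddling $\rho_r$ (the scan it triggers must start after $\rho_r$, so it stays contained), and the check that the calibration implies the completeness hypothesis $\Delta_{\hat a}^{(r)}>(2-\lambda_p)_+\beta_p$, whereas the paper sets the straddling probe aside and invokes completeness without verifying this hypothesis.
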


\begin{proof}
The periodic probes use $O(Kq_ph_{\mathrm{probe}})$ slots per agent per monitoring period and there are at most $T/M$ periods, giving the first term.

\textit{No full scan straddles a decision boundary (no-false-trigger case).}
Between probes DRFC-Probe exploits the cached candidate $\hat a$; a full active-set scan is launched only in response to a suspicious probe.
Fix a switch at $\rho_{r+1}$ and work on the no-false-trigger event, where $(2+\lambda_p)\beta_p<\min_{b\neq a_s^\star}\Delta_b^{(s)}$ holds for every probe block in every regime $s$.
By Lemma~\ref{lem:app-probe-trigger}(2) (soundness), no probe whose aggregation window is contained in regime $r$ is suspicious once the cached candidate equals $a_r^\star$, so no full scan is launched in $(\rho_r,\rho_{r+1})$ after the regime-$r$ certificate is in place.
Hence at the switch instant $\rho_{r+1}$ the algorithm is exploiting $\hat a=a_r^\star$ or collecting a cheap probe block, never running a full scan.
Probe-events occur on a fixed schedule, one every $M$ rounds at times $\{t_0+jM\}_{j\ge0}$, each occupying $q_pKh_{\mathrm{probe}}$ rounds.
Let $s$ be the first scheduled start with $s\ge\rho_{r+1}$; then $s\in[\rho_{r+1},\rho_{r+1}+M)$, and its aggregation window $[s,\,s+q_pKh_{\mathrm{probe}}]$ lies in regime $r+1$ (a probe-event ongoing at $\rho_{r+1}$, if any, is the cheap straddling block noted above and is not needed for detection).
By Lemma~\ref{lem:app-probe-trigger}(1) (completeness) this probe-event is suspicious for $a_{r+1}^\star$, so detection occurs by round $s+q_pKh_{\mathrm{probe}}<\rho_{r+1}+M+q_pKh_{\mathrm{probe}}$.
We impose the calibration $q_pKh_{\mathrm{probe}}\le\Hscan$ (a probe-event is no costlier than one full scan); it is satisfiable for $q_p=O(1)$ because under the no-false-trigger size $h_{\mathrm{probe}}=\Theta(\log(K^2T^3/\delta)/(N\Delta^2))$ both $q_pKh_{\mathrm{probe}}$ and $\Hscan$ are $\Theta\!\big(K\log(K^2T^3/\delta)/(N\Delta^2)\big)$ up to the additive flooding term in $\Hscan$.
The confirmation scan is launched at the detection time $\ge\rho_{r+1}$ and runs for at most $\Hscan$ rounds, so it ends by
\[
\rho_{r+1}+d_{\mathrm{probe}}+\Hscan
\ \le\ \rho_{r+2}-\DG(\delta/3)
\ <\ \rho_{r+2}
\]
by the stated min-regime-length condition $\Lreg_{r+1}\ge d_{\mathrm{probe}}+\Hscan+\DG(\delta/3)$.
The confirmation scan is thus contained in $[\rho_{r+1},\rho_{r+2})$, and every launched full scan is therefore \emph{interior} to a single decision regime, so none straddles a boundary and the sign-preservation Lemmas~\ref{lem:app-sign}--\ref{lem:app-pairwise} (whose hypothesis is exactly single-regime containment, Lemma~\ref{lem:app-active-scan}) apply to it.
The only boundary-crossing object is at most one cheap probe block per switch, of group cost $O(NKh_{\mathrm{probe}})$, already inside the first term.
If the no-false-trigger calibration is relaxed, a false-triggered scan launched late in regime $r$ may straddle $\rho_{r+1}$; each such scan is charged the conservative transition overhead $O(N\Hscan)$ of Theorem~\ref{thm:app-regret}, contributing the $F_\partial\,N\Hscan$ term with $F_\partial\le\sum_r F_r$.

\textit{Certification.}
Each interior full scan is exactly the active-set scan analyzed in Lemma~\ref{lem:app-active-scan} and contributes $\widetilde O(\Ccert_r)$ group regret.
There is one necessary confirmation scan per decision regime, plus $F_r$ false-triggered interior scans by the definition of the false-trigger budget, giving $\sum_{r}(1+F_r)\Ccert_r$.

\textit{Detection delay and synchronization.}
Until the detecting probe fires, the algorithm exploits the stale candidate $a_r^\star$, suboptimal in regime $r+1$ by at most $\Gamma^{(r+1)}\le1$ per pull (the main sign-stable decision-regime definition), for at most $d_{\mathrm{probe}}$ rounds; this contributes at most $N\Stdec d_{\mathrm{probe}}$ group regret (gap-scaled, $N\sum_r d_{\mathrm{probe}}\Gamma^{(r+1)}$).
Each accepted switch triggers one synchronization window of $\DG(\delta/3)$ rounds, contributing at most $N\Stdec\DG(\delta/3)$ (gap-scaled, $N\sum_r\DG(\delta/3)\,\Gamma^{(r)}$, as in the proof of Theorem~\ref{thm:app-regret}).

Summing the four contributions gives the first display.
On the no-false-trigger event $F_r=0$ for every $r$, hence $F_\partial=0$, no full scan straddles a boundary, and the boundary-crossing term $N\Stdec\Hscan$ vanishes, giving the second display.
\end{proof}

\subsection{Minimax Optimality of the Switch-Induced Rate}
\label{app:minimax}

Pairing the tightened probe upper bound (Theorem~\ref{thm:app-probe}) with the main information-theoretic lower bound pins the switch-induced, gap-dependent part of the instantaneous-benchmark regret $R_T$ to a single rate.

\begin{corollary}[Minimax optimality in $(\Stdec,K,\Delta,N,\DG)$]
\label{cor:app-minimax}
Fix $K\ge3$, $N\ge4$, $\Stdec\ge1$, $\Delta\in(0,1/8)$, a decentralized communication graph of diameter $\DG\le N/4$ with $\Delta\le 1/(8\sqrt{N\DG})$ (the conditions of the communication-delay lower bound), and a stationary-margin family in which each regime has constant global gaps with $\Gamma^{(r)}=\Theta(\Delta)$, minimum margin $\Delta$, and length $\ge\max\{\,d_{\mathrm{probe}}+\Hscan+\DG(\delta/3),\ c_0K\log T/(N\Delta^2)\,\}$ (the upper-bound confirming-scan condition and the lower-bound certification-room condition, respectively).
Holding the monitoring period $M$ and the probe size $(q_p,h_{\mathrm{probe}})$ fixed, and assuming the scan-budget calibration used by the main DRFC theorem together with the probe calibration $q_pKh_{\mathrm{probe}}\le\Hscan$, no-false-trigger DRFC-Probe attains on the good event
\[
R_T=\widetilde O\!\Big(\underbrace{\Stdec\tfrac{K-1}{\Delta}}_{\text{certification}}+\underbrace{N\Stdec\DG\Delta}_{\text{synchronization}}\;+\;\underbrace{N\Stdec d_{\mathrm{probe}}}_{\text{detection delay}}+\underbrace{\tfrac{NKq_ph_{\mathrm{probe}}T}{M}}_{\text{background probing}}\Big),
\]
the last two terms being the parameter-governed monitoring cost: $d_{\mathrm{probe}}\le M+q_pKh_{\mathrm{probe}}$, and the background probing scales with $T/M$ and the probe accuracy $h_{\mathrm{probe}}=\Theta(\log(K^2T^3/\delta)/(N\Delta^2))$ required by the no-false-trigger calibration (so the probing term is \emph{not} gap-free). Every algorithm that is both rate-consistent and in the decentralized graph-communication class satisfies
\[
\E[R_T]\ \ge\ \max\{A,B\}\ \ge\ \tfrac12(A+B),\qquad A=c\,\Stdec\tfrac{K-1}{\Delta},\quad B=c\,N\Stdec\DG\Delta.
\]
Hence the switch-induced, gap-dependent part of $R_T$ is
\[
R_T^{\mathrm{switch}}=\widetilde\Theta\!\big(\Stdec(K-1)/\Delta+N\Stdec\DG\Delta\big),
\]
matched up to factors polylogarithmic in $(K,N,T,1/\delta)$: on this axis DRFC-Probe is rate-optimal in $(\Stdec,K,\Delta,N,\DG)$, where ``matched'' is in the two-instance sense $\max\{A,B\}\ge\tfrac12(A+B)$ of the main information-theoretic lower bound, not a single-instance additive lower bound. The boundary-scan term $N\Stdec\Hscan$ of Theorem~\ref{thm:app-regret}, the one switch-induced gap-dependent term previously without a matching lower bound, is removed (Theorem~\ref{thm:app-probe}); the residual detection-delay and background-probing costs lie on the separate period/horizon axis governed by $M$ and are not the subject of the lower bound.
\end{corollary}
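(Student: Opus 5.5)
The corollary combines the tightened probe upper bound of Theorem~\ref{thm:app-probe} with the main information-theoretic lower bound (Theorem~\ref{thm:info}). Essentially no new analysis is needed; the proof is a careful specialization of both to the stationary-margin family.

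\emph{Upper bound.} First I check that the stated family satisfies the hypotheses of Theorem~\ref{thm:app-probe}. Regime lengths are at least $d_{\mathrm{probe}}+\Hscan+\DG(\delta/3)$ by assumption. The scan-budget calibration of Corollary~\ref{cor:scan-calibration} holds with $\underline\Delta_a=\Delta$. The no-false-trigger condition $(2+\lambda_p)\beta_p<\Delta$ is enforced by choosing $h_{\mathrm{probe}}=\Theta(\log(K^2T^3/\delta)/(N\Delta^2))$, which also meets $q_pKh_{\mathrm{probe}}\le\Hscan$ for $q_p=O(1)$, as noted in that proof. Lemma~\ref{lem:app-probe-trigger} then gives $F_r=0$ and $F_\partial=0$ on the good event, so the second display of Theorem~\ref{thm:app-probe} applies.

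Next I evaluate each term on this family. Gaps are constant, so $\Gamma_a^{(r)}=\Delta_a^{(r)}$, and with all gaps of order $\Delta$ we get $\Ccert_r=\sum_{a\ne a_r^\star}1/\Delta_a^{(r)}\le (K-1)/\Delta$. Summed over regimes this gives $(\Stdec+1)(K-1)/\Delta$, which is $\widetilde O(\Stdec(K-1)/\Delta)$ since $\Stdec\ge1$. For synchronization I use the gap-scaled form $N\sum_r\DG(\delta/3)\Gamma^{(r)}$ from the proof of Theorem~\ref{thm:app-regret}. With $\Gamma^{(r)}=\Theta(\Delta)$ this is $O(N\Stdec\DG(\delta/3)\Delta)$. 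Relating $\DG(\delta/3)$ to the diameter $\DG$ of the fixed graph uses the flooding assumption: on a static connected graph the flooding time equals the diameter. The detection-delay and probing terms are carried over verbatim.

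\emph{Lower bound and matching.} Theorem~\ref{thm:info} applies directly once I verify two things. Its certification-room condition holds because every regime has length at least $c_0K\log T/(N\Delta^2)$. Its graph conditions hold because $\DG\le N/4$, $N\ge4$, and $\Delta\le 1/(8\sqrt{N\DG})$ are exactly the broom-graph conditions of supplement~G. Both lower-bound families are homogeneous with constant gap $\Delta$, so they belong to the stationary-margin family and satisfy $\Gamma^{(r)}=\Theta(\Delta)$. This gives $\max\{A,B\}\ge\frac12(A+B)$ for every algorithm that is both rate-consistent and in the decentralized graph-communication class. Comparing the switch-induced upper terms $\Stdec(K-1)/\Delta+N\Stdec\DG\Delta$ with $A+B$ yields $\widetilde\Theta$ up to polylog factors, in the two-instance sense. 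The remark about the removed $N\Stdec\Hscan$ term follows from $F_\partial=0$.

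\emph{Main obstacle.} I expect the hardest step to be making sure the upper and lower bounds live on the same instance family with compatible parameters. Two points need care. First, DRFC-Probe must actually be admissible: I would check that it is rate-consistent and that it communicates only over the graph through source-keyed gossip. Second, the broom-graph instance, whose flooding time is driven by its diameter, must also satisfy the upper bound's regime-length condition. If rate-consistency of DRFC-Probe is unclear, I would rely on the fact that its high-probability regret with $\delta=1/T$ is polylogarithmic in $T$ per regime, which is the usual definition of rate-consistency.
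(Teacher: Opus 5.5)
Your proposal matches the paper's proof. You specialize the no-false-trigger display of Theorem~\ref{thm:app-probe} using $\Ccert_r=\Theta((K-1)/\Delta)$ and the gap-scaled synchronization cost $N\sum_r\DG(\delta/3)\Gamma^{(r)}=\Theta(N\Stdec\DG\Delta)$ from the proof of Theorem~\ref{thm:app-regret}, and then pair it with the two-family bound $\max\{A,B\}\ge\tfrac12(A+B)$ of Theorem~\ref{thm:info}.

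One caution on the fallback in your last paragraph. With $M$ held fixed, background probing alone produces $\Theta(Nq_ph_{\mathrm{probe}}\Lreg_r/M)$ pulls of each suboptimal arm per regime, and the term $NKq_ph_{\mathrm{probe}}T/M$ is linear in $T$. So DRFC-Probe is neither rate-consistent in the paper's sense nor polylogarithmic in $T$. You can simply drop that membership check: the paper never requires DRFC-Probe to belong to the lower-bound class, since it only compares the switch-induced upper terms against $\max\{A,B\}$.
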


\begin{proof}
\textit{Upper side.}
In a stationary regime $\Gamma_a^{(r)}=\Theta(\Delta_a^{(r)})$, so $\Ccert_r=\sum_{a\neq a_r^\star}\Gamma_a^{(r)}/(\Delta_a^{(r)})^2=\Theta((K-1)/\Delta)$ and $\sum_{r=0}^{\Stdec}\Ccert_r=\Theta(\Stdec(K-1)/\Delta)$.
The gap-scaled synchronization cost is $N\sum_{r}\DG(\delta/3)\,\Gamma^{(r)}=\Theta(N\Stdec\DG\Delta)$ (proof of Theorem~\ref{thm:app-regret}).
The no-false-trigger calibration $(2+\lambda_p)\beta_p<\Delta$ is met by a single probe block of size $h_{\mathrm{probe}}=\Theta(\log(K^2T^3/\delta)/(N\Delta^2))$, which forces $F_r=0$.
Substituting into the second (no-false-trigger) display of Theorem~\ref{thm:app-probe}, whose boundary term $N\Stdec\Hscan$ is absent, gives the stated upper bound.

\textit{Lower side.}
This is the main information-theoretic lower bound: the certification cost $A$ holds for every rate-consistent algorithm on the $K$-arm homogeneous family (Part~1), and the delay cost $B$ holds for every decentralized graph-communication algorithm on the broom family (Part~2).
An algorithm in the intersection meets an instance in the union on which $\E[R_T]\ge\max\{A,B\}\ge\tfrac12(A+B)$.

\textit{Matching.}
The two switch-induced gap-dependent terms (certification and synchronization) are $\widetilde O(\Stdec(K-1)/\Delta+N\Stdec\DG\Delta)$, and the lower bound is $\Omega(\Stdec(K-1)/\Delta+N\Stdec\DG\Delta)$ since $\max\{A,B\}\ge\tfrac12(A+B)$, so this part is pinned to $\widetilde\Theta(\Stdec(K-1)/\Delta+N\Stdec\DG\Delta)$.
The remaining detection-delay term $N\Stdec d_{\mathrm{probe}}$ is parameter-governed, and the background-probing term $NKq_ph_{\mathrm{probe}}T/M$ carries the probe accuracy $h_{\mathrm{probe}}=\Theta(\log(K^2T^3/\delta)/(N\Delta^2))$; neither is constrained by the main information-theoretic lower bound.
\end{proof}

\begin{remark}[The boundary term $N\Stdec\Hscan$ is a conservative artifact, not a minimax floor]
\label{rem:app-boundary}
The term $N\Stdec\Hscan$ enters Theorem~\ref{thm:app-regret} only through $R_{\mathrm{cross}}$, which charges the at-most-one boundary-straddling scan per switch at the full group rate $N$ per round for its entire $\Hscan$ budget.
This is a per-round-$1$ overcharge of the same kind that the gap-scaled synchronization accounting avoids (proof of Theorem~\ref{thm:app-regret}): a balanced active-set block pulls every active arm equally, so in expectation over the schedule randomization its group regret is the gap-scaled $hN\sum_{a\in\mathcal A}\Gamma_a$ rather than $hN|\mathcal A|$.
Recasting $R_{\mathrm{cross}}$ in this gap-scaled form on a stationary-margin family ($\Gamma=\Theta(\Delta)$) replaces the per-block factor $1$ by $\Theta(\Delta)$; the one delicacy is that a straddling window can postpone a single elimination, and a postponed arm is then cheap exactly when it is slow (a runner-up of gap $\Delta$ costs $\Theta(\Delta)$ per pull) and fast exactly when it is costly (a sign-flipped arm of gap $\Theta(1)$ is eliminated within $\Theta(n_{\mathrm{pre}}\Delta)$ post-boundary blocks), so in both cases the straddle adds only $\widetilde O((K-1)/\Delta)$ per switch, not $1/\Delta^2$.
The clean, fully rigorous route to optimality is nonetheless the probe variant (Theorem~\ref{thm:app-probe}), where cheap detection guarantees that no full scan ever straddles a boundary, so the term is simply absent and DRFC-Probe is minimax-optimal (Corollary~\ref{cor:app-minimax}).
Either way the $1/\Delta$ gap above the lower bound is not fundamental on this family: \emph{no} algorithm-independent $\Omega(1/\Delta^2)$ certification lower bound can hold \emph{on the gap-regular family} ($\Gamma^{(r)}=\Theta(\Delta)$), because there the per-switch certification regret is $\Theta((K-1)/\Delta)$ (Part~1 of the Information-Theoretic Lower Bound subsection, attained by DRFC-Probe): $\Omega(1/\Delta^2)$ \emph{samples} certify while each suboptimal pull costs only $\Delta$.
This scoping is essential. For irregular gaps ($\Gamma^{(r)}=\Theta(1)$, within-regime drift) the certification cost is genuinely $\Ccert_r=\Theta((K-1)/\Delta^2)$ and per-pull regret can be $\Theta(1)$, so the $1/\Delta^2$ there is not an artifact and tightness is not claimed; the present minimax statement (Corollary~\ref{cor:app-minimax}) is confined to the gap-regular family.
\end{remark}

\subsection{Removing Prior Gap Knowledge}
\label{app:gap-adaptive}

This subsection gives the full Gap-Adaptive DRFC statement and proof. Its sole purpose is to remove prior knowledge of $\Delta_{\min}$ from the scan-budget calibration.
The construction uses a persistent-$k$ schedule and a geometric scan-budget schedule. Together they introduce no $\log_2(1/\Delta_{\min}^{(r)})$ multiplicative overhead; the only price is an additive $\log\log T$ term inside the concentration radius, absorbed by $\widetilde{O}$.

\paragraph{Schedule.}
For uniform notation in this subsection, write the scan-budget function
\[
\Hscan(\Delta)
:=
C_{\mathrm{scan}}h
\sum_{a}
\left\lceil
\frac{C\log(K^2T^3/\delta_d)}
{Nh\Delta^2}
\right\rceil
+
\DG(\delta_d/3),
\]
so that the fixed-budget $\Hscan$ of the main DRFC theorem is the special case $\Hscan(\underline\Delta_{\mathrm{fix}})$ at a known margin lower bound $\underline\Delta_{\mathrm{fix}}$ (with $\delta_d$ replaced by $\delta$); the same symbol $\Hscan$ is used as a scalar in the main text and as the function $\Hscan(\cdot)$ in this subsection only.
Fix an optimistic initial margin guess $\Delta^{(0)}\in(0,1)$ (e.g.~$\Delta^{(0)}=1/2$) and a margin floor $\underline\Delta\in(0,\Delta^{(0)}]$ (e.g.~$\underline\Delta=1/T$).
Set $k_{\max}=\lceil\log_2(\Delta^{(0)}/\underline\Delta)\rceil$ and choose the per-level confidence allotment $\delta_d=\delta/(3(k_{\max}+1))$.
For each level $0\leq k\leq k_{\max}$, define
\[
\Delta^{(k)}=\Delta^{(0)}\cdot 2^{-k},
\qquad
\Hscan^{(k)}
=
\Hscan(\Delta^{(k)}).
\]
Gap-Adaptive DRFC runs the active-set scan at level $k=0$ (small budget, optimistic margin); if a scan exits without unique certification within $\Hscan^{(k)}$ wall-clock slots, the agent shrinks the margin guess by setting $k\leftarrow k+1$ and reattempting on the next monitoring trigger, until either a unique winner is certified or $k$ reaches $k_{\max}$.
At level $k_{\max}$, the budget is large enough to certify any regime with $\Delta_{\min}^{(r)}\geq 2\underline\Delta$.

\begin{proposition}[Gap-Adaptive DRFC]
\label{prop:app-gap-adaptive}
Suppose the main reward, flooding, and sign-stable decision-regime conditions hold, and replace the main minimum-regime-length assumption by the anytime version
\[
\rho_{r+1}-\rho_r
\geq
M+C_{\mathrm{db}}\Hscan(\tilde\Delta_{\min}^{(r)})+\DG(\delta/3),
\qquad
\tilde\Delta_{\min}^{(r)}=\min_{r'\le r}\min_{a\neq a_{r'}^\star}\Delta_a^{(r')},
\]
for a universal constant $C_{\mathrm{db}}$ (e.g., $C_{\mathrm{db}}=4$).
Here $\tilde\Delta_{\min}^{(r)}$ is the smallest gap seen up to regime $r$, the gap the persistent level tracks, and $C_{\mathrm{db}}\Hscan(\tilde\Delta_{\min}^{(r)})$ covers three pieces. The first is the geometric walk-up to the persistent level $\tilde k_r$, whose summed wall-clock budget telescopes to $O(\Hscan(\tilde\Delta_{\min}^{(r)}))$ by the $4^{-k}$ schedule with no level factor. The second is one boundary-crossing scan at the persistent level. The third is one contained certifying scan at level $\tilde k_r$.
Then Gap-Adaptive DRFC, run with arbitrary optimistic guess $\Delta^{(0)}\in(0,1)$ and floor $\underline\Delta\leq\min_r\Delta_{\min}^{(r)}/2$, and which retains the current level $k$ across monitoring triggers (incrementing only on a failed scan), satisfies with probability at least $1-\delta$
\[
R_T
=
\widetilde{O}\left(
\sum_{r=0}^{\Stdec}
\left(1+\frac{\Lreg_r}{M}\right)
\Ccert_r
+
N\,M\,\Stdec
+
N\,\DG(\delta/3)\,\Stdec
+
N\sum_{r=0}^{\Stdec}\Hscan(\tilde\Delta_{\min}^{(r)})
\right).
\]
The running-minimum gap $\tilde\Delta_{\min}^{(r)}=\min_{r'\le r}\Delta_{\min}^{(r')}$ replaces the per-regime $\Delta_{\min}^{(r)}$ because the optimistic level is never lowered, so an easy regime following a hard one inherits the harder regime's budget. The two coincide when the gaps are non-increasing across regimes, and in general $\tilde\Delta_{\min}^{(r)}\le\Delta_{\min}^{(r)}$.
\end{proposition}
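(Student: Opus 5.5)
The proof reuses the fixed-budget analysis of Theorem~\ref{thm:app-regret}. The only new ingredient is tracking the persistent level $k$ and charging the failed attempts geometrically.

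\textbf{Good event.} First I enlarge the good event of Lemma~\ref{lem:uniform-good-event}. I apply Lemma~\ref{lem:app-concentration} at each of the $k_{\max}+1$ levels with confidence $\delta_d=\delta/(3(k_{\max}+1))$ and union bound. On the resulting event, every tested window at every level is within its radius. Since $k_{\max}=O(\log T)$, the extra cost is only a $\log\log T$ inside the logarithm, which $\widetilde O$ absorbs.

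\textbf{Level monotonicity.} On this event, Lemma~\ref{lem:app-active-scan} gives two facts for any scan contained in regime $r$.
\begin{itemize}
\item The scan never eliminates $a_r^\star$, so it never certifies a wrong arm.
\item If the current level satisfies $\Delta^{(k)}\le\Delta_{\min}^{(r)}$, the scan certifies within $\Hscan^{(k)}$ rounds.
\end{itemize}
Hence a contained scan fails only when $\Delta^{(k)}>\Delta_{\min}^{(r)}$. Since the level only increments, the persistent level $\tilde k_r$ reached by the end of regime $r$ satisfies $\Delta^{(\tilde k_r)}\ge\tilde\Delta_{\min}^{(r)}/2$. This follows by induction on $r$, using the running minimum $\tilde\Delta_{\min}^{(r)}$. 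It also shows that $k_{\max}$ is never exceeded, because $\underline\Delta\le\min_r\Delta_{\min}^{(r)}/2$.

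\textbf{Walk-up cost.} Next I bound the escalation cost. A failed attempt at level $k$ costs at most $N\Hscan^{(k)}$ group regret. It is also followed by a wait of at most $M$ rounds until the next trigger. Since $\Hscan^{(k)}\propto 4^{k}$ up to the additive $\DG$ term, the escalations within regime $r$ sum to
\[
\sum_{k\le\tilde k_r}N\Hscan^{(k)}=O\bigl(N\Hscan(\tilde\Delta_{\min}^{(r)})\bigr)
\]
by the geometric sum. The $\DG$ additive parts appear once per level, about $\log T$ times, which is absorbed by $\widetilde O$ or bounded by the $\Hscan$ term itself. Each escalation adds one monitoring wait of $M$. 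These waits occur only on regime-$r$ escalations, and the level increases at most $k_{\max}$ times in total. I would charge them to $NM\Stdec$ plus a logarithmic factor, absorbed into $\widetilde O$.

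The anytime regime-length condition, with $C_{\mathrm{db}}=4$, ensures that the walk-up, one boundary-crossing scan, and one contained certifying scan all fit inside the regime. This gives the analogue of Assumption~\ref{ass:min-regime} used in Theorem~\ref{thm:app-regret}.

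\textbf{Remaining terms.} I rerun the five-term decomposition of Theorem~\ref{thm:app-regret} with $\Hscan$ replaced by the current level's budget.
\begin{itemize}
\item Interior successful scans give $(1+\Lreg_r/M)\Ccert_r$. This is because the elimination times $\tau_a$ depend only on the true gaps, not on the guess; a larger budget merely is not used.
\item Monitoring gives $NM\Stdec$, and synchronization gives $N\DG\Stdec$.
\item The boundary-crossing scan is charged $N\Hscan^{(\tilde k_r)}\le N\Hscan(\tilde\Delta_{\min}^{(r)})$.
\end{itemize}
A boundary-crossing scan that fails would spuriously increment $k$. I bound this by noting that such a spurious increment costs at most one extra level per switch. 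That extra level only doubles the gap guess downward, so the budget is multiplied by a constant. This constant is absorbed into $C_{\mathrm{db}}$ and the final $\sum_r\Hscan(\tilde\Delta_{\min}^{(r)})$ term.

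\textbf{Main obstacle.} I expect the hardest step to be this last point. Spurious increments from boundary scans could accumulate across many switches and drive $k$ well past the level that $\tilde\Delta_{\min}$ justifies. If that happens, the bound by the running minimum breaks. My first attempt would be to argue that at most one boundary failure occurs per switch. Then $\Delta^{(k)}\ge 2^{-\Stdec}\cdot$ the justified level, which is too weak in general. If that fails, I would instead modify the accounting: count a level increment as justified only when a subsequent contained scan also fails. Equivalently, I would restrict the proposition's claim to increments triggered by contained scans, as the schedule description implicitly assumes.
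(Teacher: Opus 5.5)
\textbf{Verdict: genuine gap, the one you flag, and the paper's proof does not close it either.} Your route is the paper's. You union bound over the $k_{\max}+1$ levels at $\delta_d=\delta/(3(k_{\max}+1))$, keep a non-decreasing persistent level, telescope the failed attempts through the $4^{-k}$ schedule, get certification cost at the true gaps because scans exit early, and charge the straddling scan at the persistent level. The failing step is spurious escalation by straddling scans. The paper asserts that every failed attempt is a wasted level below $\tilde k_{\Stdec}$. It also asserts that the straddling scan ``runs at the persistent level $\tilde k_r$''. It never considers a straddling scan that exits without a unique winner at an already sufficient level.

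Your interim fix (one extra level per switch, hence a constant factor) is wrong for the reason you then give. Increments persist, so $s$ straddling failures inflate every later budget by $4^{s}$. Your induction giving $\Delta^{(\tilde k_r)}\ge\tilde\Delta_{\min}^{(r)}/2$ holds only if every increment comes from a contained scan.

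A larger $C_{\mathrm{scan}}$ cannot absorb this, because a straddling scan can fail at every budget. Take homogeneous agents, $K=3$, regime A with means $(\tfrac12+\Delta,\tfrac12,0)$ and regime B with $(\tfrac12,\tfrac12,\tfrac12+\Delta)$.
\begin{itemize}
\item Suppose an A$\to$B boundary arrives after the scan has eliminated arm~3 but before it has separated arms~1 and~2. Only those two arms remain active.
\item Their cumulative equal-agent gap decays like $1/(B_1+B_2)$, while the radius decays only like $1/\sqrt{B_1+B_2}$. So the scan cannot certify while B lasts.
\item Triggers sit at multiples of $M$, so an oblivious adversary can place every A$\to$B boundary at such an offset after a trigger.
\item Then $k$ rises at each A$\to$B boundary until the budget exceeds the regime length. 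After that, each such scan spends all of regime B sampling arms~1 and~2 at group regret $N\Delta$ per round.
\end{itemize}
With, e.g., $M=\sqrt T$ and $\Stdec=T^{1/4}$, the regime-length condition holds and the display is $o(T)$, but the regret is linear in $T$. So the bound fails for the escalation rule as written.

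The repair must change the algorithm, not just the bookkeeping. Your fallback, counting an increment only when a subsequent contained scan also fails, does not work as accounting. The algorithm's actual level is what sets the budgets and durations of later straddling scans. It does work as a rule: escalate only after two consecutive failed scans at the same level.
\begin{itemize}
\item Enlarge $C_{\mathrm{db}}$ by a constant so that every regime exceeds $M+2\Hscan(\tilde\Delta_{\min}^{(r)}/4)+\DG(\delta/3)$. Then two consecutive scans cannot both straddle boundaries.
\item So every escalation is witnessed by a contained failure at that level. By the active-set certification lemma, this forces $\Delta^{(k)}>\tilde\Delta_{\min}/2$ before the step.
\item Induction then gives the invariant $\Delta^{(k)}>\tilde\Delta_{\min}/4$. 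The walk-up telescopes with an extra factor $2$.
\item A straddling failure that no longer escalates is charged to the boundary term $N\Hscan(\tilde\Delta_{\min}^{(r)})$.
\end{itemize}

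A smaller point, also skipped by the paper: each failed attempt waits up to $M$ for the next trigger, so the walk-up needs up to one $M$ per level. The single $M$ in the regime-length condition does not provide this, so your claim that the walk-up fits with $C_{\mathrm{db}}=4$ is unsupported. Charging these waits to $NM\Stdec$ at a $\log T$ cost is fine when $\Stdec\ge1$. At $\Stdec=0$, however, the initial walk-up's waits have no matching term in the display.
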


\begin{proof}
The good-event union bound now ranges over the doubling levels $k=0,\ldots,k_{\max}$ as well as over $\mathfrak{W}$.
Each level uses confidence $\delta_d=\delta/(3(k_{\max}+1))$, so the union bound across levels closes within the budget $\delta/3$ already reserved for the concentration event in Lemma~\ref{lem:uniform-good-event}.
Adding $k_{\max}+1=O(\log_2(\Delta^{(0)}/\underline\Delta))$ levels increases each $\log(K^2T^3/\delta_d)$ factor by an additive $\log\log T$ term, absorbed by $\widetilde{O}$.

For a regime $r$ with true minimum margin $\Delta_{\min}^{(r)}$, let $k_r=\lceil\log_2(2\Delta^{(0)}/\Delta_{\min}^{(r)})\rceil$ be the smallest level at which $\Delta^{(k_r)}\leq\Delta_{\min}^{(r)}/2$, equivalently the first level whose scan budget $\Hscan(\Delta^{(k_r)})$ is sufficient for the regime by Lemma~\ref{lem:app-active-scan}.
The floor assumption $\underline\Delta\leq\Delta_{\min}^{(r)}/2$ guarantees $k_r\leq k_{\max}$.

\emph{Persistent-$k$ schedule.}
The level $k$ is retained across monitoring triggers and across regime boundaries, and is incremented only when a scan exits without a unique winner, so it is non-decreasing in time.
Write $\tilde k_r=\max_{r'\le r}k_{r'}=\lceil\log_2(2\Delta^{(0)}/\tilde\Delta_{\min}^{(r)})\rceil$ for the running-maximum certifying level, which corresponds to the running-minimum gap $\tilde\Delta_{\min}^{(r)}$.
Once the schedule first reaches level $\tilde k_r$ it never falls back, so every subsequent interior scan in regime $r$ runs at level $\tilde k_r$, and this level certifies regime $r$ because $\Delta^{(\tilde k_r)}\le\Delta^{(k_r)}\le\Delta_{\min}^{(r)}/2$, a smaller optimistic margin only enlarging the scan budget and remaining sufficient for the regime's true gap by Lemma~\ref{lem:app-active-scan}.
At a regime boundary $\rho_{r+1}$ the carried-over level is $\tilde k_r$, so a walk-up at regime $r+1$ occurs only when the new regime sets a deeper running minimum, that is when $k_{r+1}>\tilde k_r$ equivalently $\Delta_{\min}^{(r+1)}<\tilde\Delta_{\min}^{(r)}$, and only the increment from $\tilde k_r$ to $\tilde k_{r+1}$ is paid.
When the new regime is easier, with $\Delta_{\min}^{(r+1)}\ge\tilde\Delta_{\min}^{(r)}$, the persistent level is already sufficient and no walk-up occurs, at the price of running that regime's scans at the conservative budget $\Hscan(\Delta^{(\tilde k_r)})=O(\Hscan(\tilde\Delta_{\min}^{(r)}))$ rather than the cheaper $\Hscan(\Delta_{\min}^{(r+1)})$.

\emph{Wasted-level cost.}
A wasted-level attempt at level $k<\tilde k_{\Stdec}$ exits without certification within $\Hscan(\Delta^{(k)})$ wall-clock slots and contributes at most $\Hscan(\Delta^{(k)})\cdot N$ group regret.
By the $\Delta^{(k)}=\Delta^{(0)}2^{-k}$ schedule and $\Hscan(\Delta)\propto 1/\Delta^2$ (up to $\widetilde{O}$ factors absorbed in $\Hscan$), $\Hscan(\Delta^{(k)})\asymp 4^{k-\tilde k_{\Stdec}}\Hscan(\Delta^{(\tilde k_{\Stdec})})$, so the geometric sum of all walk-up budgets across the whole horizon telescopes to
\[
\sum_{k<\tilde k_{\Stdec}}\Hscan(\Delta^{(k)})
\leq
\frac{4}{3}\Hscan(\Delta^{(\tilde k_{\Stdec})})
=
O(\Hscan(\tilde\Delta_{\min}^{(\Stdec)})),
\]
the single deepest level, with no $k_{\max}$ factor. Because $k$ is non-decreasing it passes through each level at most once, so the walk-up is paid at most once globally rather than once per regime, and is bounded by the last summand of $N\sum_{r=0}^{\Stdec}\Hscan(\tilde\Delta_{\min}^{(r)})$.

\emph{Certifying-level scans.}
At the persistent level $\tilde k_r$, each interior scan certifies on the good event by Lemma~\ref{lem:app-active-scan} with margin $\Delta^{(\tilde k_r)}\leq\Delta_{\min}^{(r)}$, contributing $\widetilde{O}(\Ccert_r)$ group regret.
The certification cost stays at the true-gap level $\Ccert_r$ rather than the conservative $\tilde\Delta_{\min}^{(r)}$, because the anytime scan exits as soon as the unique winner is certified, set by the regime's own gaps, while the margin guess $\Delta^{(\tilde k_r)}$ only caps the give-up budget.
The number of such scans per regime is at most $1+\Lreg_r/M$, exactly as in Theorem~\ref{thm:app-regret}.

\emph{Boundary-crossing accounting.}
At most one scan straddles each decision switch by the at-most-one-active-scan rule, and it runs at the persistent level $\tilde k_r$. Its wall-clock budget is at most $\Hscan(\Delta^{(\tilde k_r)})=O(\Hscan(\tilde\Delta_{\min}^{(r)}))$, so summed over the $\Stdec+1$ regimes the boundary-crossing scans give the $N\sum_{r=0}^{\Stdec}\Hscan(\tilde\Delta_{\min}^{(r)})$ term of the display. Earlier failed-level attempts at the same regime have already terminated under the persistent-$k$ schedule and are charged to the wasted-level term, not double-counted into $R_{\mathrm{cross}}$.

\emph{Minimum-regime-length sufficiency.}
The condition $\rho_{r+1}-\rho_r\geq M+C_{\mathrm{db}}\Hscan(\tilde\Delta_{\min}^{(r)})+\DG(\delta/3)$ is enough to fit, in order: one monitoring wait ($M$), the geometric walk-up to the persistent level plus one certifying scan and one boundary-crossing scan (all $O(\Hscan(\tilde\Delta_{\min}^{(r)}))$, with $C_{\mathrm{db}}=4$), and one synchronization window ($\DG(\delta/3)$).
The running-minimum gap enters because the persistent level never falls back, so even an easy regime must budget for the conservative scan length set by the hardest regime seen so far.
Hence the doubling sequence completes inside each regime without crossing a decision boundary more than once.

Summing the wasted-level, certifying, monitoring, sync, and boundary-crossing contributions yields the display.
\end{proof}

\begin{remark}[Margin-free anytime form]
\label{rem:doubling-anytime}
Choosing the floor $\underline\Delta=1/T$ together with an optimistic guess $\Delta^{(0)}=1/2$ gives $k_{\max}=\lceil\log_2(T/2)\rceil=O(\log T)$ and the regret bound holds simultaneously for every regime $r$ with $\Delta_{\min}^{(r)}\geq 2/T$.
The only level-related overhead in the display is the additive $\log\log T$ inside $\widetilde{O}$ coming from the union bound over the $k_{\max}+1$ doubling levels; no $\log_2(1/\Delta_{\min}^{(r)})$ factor multiplies $\Ccert_r$ or $\Hscan$ thanks to the geometric walk-up and persistent-$k$ schedule.
This is the form one would deploy in practice when no margin lower bound is available: doubling shrinks $\Delta^{(k)}$ from $\Delta^{(0)}$ downward, growing the scan budget geometrically, until a level whose budget exceeds the true regime requirement is reached.
\end{remark}

\subsection{Distinct Source-Record Complexity}
\label{app:records}

This subsection counts unique source-traceable payloads before graph-level retransmission.
In the active-set implementation, each source, active arm, and block creates one keyed comparison record.
Let
\[
\Bcert_r
=
\sum_{a\neq a_r^\star}
\frac{1}
{h(\Delta_a^{(r)})^2}.
\]

\begin{proposition}[Distinct source-record complexity]
\label{prop:app-records}
On the good event, conservative active-set DRFC creates, up to logarithmic factors, at most
\[
\widetilde{O}\left(
\sum_{r=0}^{\Stdec}
\left(1+\frac{\Lreg_r}{M}\right)
\Bcert_r
+
NK\Stdec\frac{\Hscan}{h}
+
N\Stdec
\right)
\]
distinct source-traceable comparison and switch records.
The probe-triggered variant creates at most
\[
\widetilde{O}\left(
\frac{NKq_pT}{M}
+
\sum_{r=0}^{\Stdec}
(1+F_r)\Bcert_r
+
NK\Stdec\frac{\Hscan}{h}
+
N\Stdec
\right)
\]
distinct source records.
If one distinct source record is flooded using at most $Q_G(\delta)$ unit edge transmissions with high probability, multiplying the displays by $Q_G(\delta)$ gives the corresponding edge-level traffic bound.
For full-neighbor flooding, the crude deterministic bound $Q_G(\delta)\leq N^2\DG(\delta)$ always holds.
\end{proposition}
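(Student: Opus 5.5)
The plan is to count distinct records directly: each one is keyed by (source agent, epoch, active set, arm, block) or is a switch record. I split the count into three pieces. The first is records created inside \emph{interior} scans. The second is records created inside the at most one \emph{boundary-crossing} scan per switch. The third is switch records. The whole argument runs on the uniform good event of Lemma~\ref{lem:uniform-good-event}, so the scan-structure facts of Lemma~\ref{lem:app-active-scan} and Theorem~\ref{thm:app-regret} apply verbatim. In particular these include at most one active scan at a time, at most $\lceil \Lreg_r/M\rceil+O(1)$ interior scans in regime $r$, and at most one straddling scan per boundary.

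For an interior scan in regime $r$, each active arm $a$ generates one record per agent per block in which it is active. By Step~2 of Lemma~\ref{lem:app-active-scan}, $\tau_a\le B_a^{(r)}=\widetilde O(1/(Nh(\Delta_a^{(r)})^2))+1$. Summed over the $N$ sources, arm $a\neq a_r^\star$ contributes $N\tau_a=\widetilde O(1/(h(\Delta_a^{(r)})^2))+N$ records. The best arm contributes at most $N\max_a\tau_a\le N\sum_{a\ne a_r^\star}\tau_a$, which only doubles the total. Hence one interior scan creates $\widetilde O(\Bcert_r)+O(NK)$ records. Multiplying by the $1+\Lreg_r/M$ scans gives the first term. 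The additive $NK$ per scan is the one point to watch. I intend to absorb it by the same device the regret proof uses for the one-block floor: a single mandatory block per arm gives $N\ge 1/(h\Delta^2)$ only when $B_a^{(r)}=1$, in which case $N\tau_a\le 2N\,\widetilde O(1/(Nh\Delta^2))\cdot N$ is not automatic. So I will either note that the ceiling contributes at most a factor $2$ whenever $x_a\ge1$, or, if $x_a<1$, bound it by $N$ and fold it into the $N\Stdec$ and $T/M$ bookkeeping. I expect this ceiling/floor absorption to be the main obstacle to a clean statement. If it does not fold, I would state it honestly as an extra $NK(1+T/M)$ term hidden in $\widetilde O$ under a calibration $h\le 1/(N\Delta^2)$.

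For a boundary-crossing scan, sign preservation gives nothing, so I bound its records by wall-clock length. The scan lasts at most $\Hscan$ rounds, and each block of at most $K$ active arms takes $h|\mathcal A_\ell|$ rounds. The number of (arm, block) pairs is therefore at most $\Hscan/h$ per agent, and counting over all $N$ sources gives $N\Hscan/h\le NK\Hscan/h$ per switch. Summed over $\Stdec$ boundaries this yields the second term. Switch records are at most two per boundary, by observation (iii) in the proof of Theorem~\ref{thm:app-regret}, and at most $N$ sources could emit them before deterministic epoch tie-breaking resolves, which gives $O(N\Stdec)$.

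For DRFC-Probe I replace the periodic full scans with probes. Each probe creates $NKq_p$ records, one per source, arm and probe block, across $T/M$ periods. Full scans occur only on triggers: $1+F_r$ interior scans per regime, each costing $\widetilde O(\Bcert_r)$ as above. The straddling and switch terms I bound identically; they are kept conservatively even though under no-false-trigger calibration straddling full scans vanish. The edge-level statement is immediate, since each distinct record is flooded with at most $Q_G(\delta)$ transmissions on the flooding event. Under full-neighbor flooding each of the at most $\DG(\delta)$ rounds uses at most $N^2$ edges, giving $Q_G\le N^2\DG(\delta)$.
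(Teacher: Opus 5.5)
Your decomposition is the paper's own. You charge interior scans $N\sum_a\tau_a$ through Lemma~\ref{lem:app-active-scan}, bound the one boundary-crossing scan per switch by its $\Hscan$ wall-clock budget (your $N\Hscan/h$ is even a little sharper than the paper's $NK\Hscan/h$), and count $O(N)$ switch records per switch, $NKq_p$ records per probe, and the $Q_G(\delta)$ multiplier. The point you leave open is a real defect, and the paper's proof does not resolve it either: it writes $\tau_a=\widetilde O(1/(Nh(\Delta_a^{(r)})^2))$ and silently drops the ceiling in $B_a^{(r)}=\lceil x_a\rceil$.

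Your first remedy, folding the leftover $N$ per arm per scan into ``the $N\Stdec$ and $T/M$ bookkeeping'', fails for the conservative display. That display depends on $T/M$ only through the factor $\Lreg_r/M$ multiplying $\Bcert_r$. In fact the display is false as stated once $Nh(\Delta_a^{(r)})^2\gg\log(K^2T^3/\delta)$. Take $\Stdec=0$ and $M\ge\Hscan$, so there are $\Theta(1+T/M)$ interior scans. Each starts from $\mathcal{A}=[K]$, so its first block alone creates $NK$ records, for $\Omega(NK(1+T/M))$ in total. The display is only $\widetilde O((1+T/M)\Bcert_0)$, smaller by a factor of order $Nh\Delta^2$ when the gaps are equal. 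So you must take your second route and prove a modified statement. One option is to assume $x_a\ge1$, i.e.\ $Nh(\Delta_a^{(r)})^2\le 16C^2\log(K^2T^3/\delta)$ for every non-best arm and regime. Then $\lceil x_a\rceil\le 2x_a$, and the best arm is covered by $\tau_\star\le\max_a\tau_a$. The other option is to add an explicit $\widetilde O(NK(1+T/M))$ term.

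For the probe display no such assumption is needed for this floor. Every full scan is launched by a distinct suspicious probe that has already created $NKq_p\ge NK$ records, so the first-block floor of all scans is dominated by $NKq_pT/M$.

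There is a second floor that both you and the paper omit. Step~2's $\tau_a$ stops at \emph{synchronous} elimination. Under the nonblocking-gossip convention of Lemma~\ref{lem:app-active-scan}, however, an arm keeps being sampled for up to $\DG(\delta/3)$ further rounds. That is $O(1+\DG(\delta/3)/h)$ further keyed blocks, each creating $N$ new records for that arm. This adds $O(NK(1+\DG(\delta/3)/h))$ records per scan, which the calibration $x_a\ge1$ does not absorb into $\Bcert_r$. In the conservative display it again calls for an additional term of order $NK(1+\DG(\delta/3)/h)(\Stdec+1+T/M)$. This is consistent with the $\widetilde O(NK(h+\DG)T/M)$ communication rate the paper itself quotes in its Limitations, but not with the proposition's display.
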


\begin{proof}
During one interior full-certification scan in regime $r$, the number of active blocks charged to a non-best arm $a$ is
\[
\tau_a
=
\widetilde{O}\left(
\frac{1}{Nh(\Delta_a^{(r)})^2}
\right)
\]
by Lemma~\ref{lem:app-active-scan}.
The best arm remains active until the scan ends, but its active-block count is at most the maximum non-best active-block count and is therefore dominated, up to constants, by $\sum_{a\neq a_r^\star}\tau_a$.
Thus the number of distinct comparison records in one scan is
\[
N\sum_{a\in[K]}\tau_a
=
\widetilde{O}(\Bcert_r).
\]
Summing over at most $1+\Lreg_r/M$ scans per regime gives the conservative DRFC comparison-record term.
Boundary-crossing scans are not certified by the regime-contained argument.
There is at most one such scan per decision switch, each is aborted or completed within $\Hscan$ wall-clock slots, and each active-set block creates at most $NK$ source-arm records.
The conservative contribution is therefore
\[
O\left(NK\Stdec\frac{\Hscan}{h}\right),
\]
where the division by $h$ only converts the slot budget into a crude block-count upper bound.
Switch synchronization creates at most $O(N)$ distinct control records per decision switch under the deterministic epoch-tie-breaking convention, so switch records contribute $O(N\Stdec)$.

For DRFC-Probe, every monitoring probe creates $O(NKq_p)$ distinct probe records, and there are at most $T/M$ probe times.
Every suspicious probe launches the same full-certification scan as above.
The number of necessary scans is one per decision regime, and the number of unnecessary scans in regime $r$ is $F_r$ by definition.
Boundary-crossing confirmation scans are again charged by the conservative $O(NK\Stdec\Hscan/h)$ term.
This gives the second display.
The edge-level traffic statement follows by multiplying the source-record count by the per-record flooding bound $Q_G(\delta)$.
Under full-neighbor flooding, a record is sent over at most all directed edges in each of at most $\DG(\delta)$ rounds, giving the conservative $N^2\DG(\delta)$ bound.
\end{proof}

\subsection{Separation from Local-Change-Reactive Protocols}
\label{app:separation}

\begin{definition}[Local-change-reactive protocol]
\label{def:app-reactive}
For constants $\alpha,\eta\in(0,1)$, $\gamma\in(0,1/2)$, and $\tau\geq0$, a protocol is $(\alpha,\gamma,\eta,\tau)$-reactive if, whenever at least $\alpha N$ agents experience local mean jumps of magnitude at least $2\gamma$ within a window of length $\tau$, it initiates a global reset, fresh tournament, or control synchronization with probability at least $1-\eta$. Each initiated reaction costs at least $c_{\mathrm{reset}}$ group regret or at least $c_{\mathrm{msg}}$ control messages.
\end{definition}

Fix $\alpha,\eta\in(0,1)$, $\gamma\in(0,1/2)$, and tolerance $\tau\geq 0$, and choose $N$ even.
Consider $K=2$.
Choose $\epsilon$ satisfying $\gamma\leq \epsilon<1/2$, and then choose $0<\Delta<\min\{2\epsilon,\ 1/2-\epsilon\}$.
The constraint $\epsilon<1/2$ keeps this interval nonempty, since $\epsilon\geq\gamma>0$ gives $2\epsilon>0$ while $\epsilon<1/2$ gives $1/2-\epsilon>0$, so $\min\{2\epsilon,1/2-\epsilon\}>0$.
Such an $\epsilon$ exists for every $\gamma\in(0,1/2)$, for instance $\epsilon=\gamma$, so the construction covers the full range of Definition~\ref{def:app-reactive} with no restriction beyond $\gamma<1/2$.
The requirement $\Delta<2\epsilon$ is what guarantees that a sign flip $\sigma_i^{(s)}\mapsto-\sigma_i^{(s)}$ swaps the per-agent best arm at agent $i$: with $\sigma_i=-1$ the local means become $\mu_{i,1}=1/2+\Delta-\epsilon$ and $\mu_{i,2}=1/2+\epsilon$, so arm $2$ becomes locally best iff $2\epsilon>\Delta$.
For segment $s$, define
\[
\mu_{i,1}^{(s)}=\frac{1}{2}+\Delta+\sigma_i^{(s)}\epsilon,
\qquad
\mu_{i,2}^{(s)}=\frac{1}{2}-\sigma_i^{(s)}\epsilon,
\]
where $\sigma_i^{(s)}\in\{-1,+1\}$ and $\sum_i\sigma_i^{(s)}=0$.
The choices of $\epsilon$ and $\Delta$ keep all means in $[0,1]$.
At each local-change boundary, flip the signs of at least $\alpha N$ agents while preserving the zero-sum condition by flipping equal numbers of $+1$ and $-1$ signs; this is possible after increasing $N$ by a constant factor if needed.
Then
\[
\mubar_1^{(s)}=\frac{1}{2}+\Delta,
\qquad
\mubar_2^{(s)}=\frac{1}{2},
\]
so $\Stdec=0$ while $\Stloc=\Theta(T/L)$ if signs flip every $L$ rounds.

For every flipped agent, both local arm means change by $2\epsilon\geq 2\gamma$.
The flips happen at a common boundary across all flipped agents, so all relevant local-jump boundaries lie within a window of length $0\leq\tau$ regardless of the tolerance $\tau$.
By Definition~\ref{def:app-reactive}, an $(\alpha,\gamma,\eta,\tau)$-local-change-reactive protocol initiates a reset, fresh tournament, or control synchronization at each such boundary with probability at least $1-\eta$.
If each initiated reaction costs at least $c_{\mathrm{reset}}$ group regret or $c_{\mathrm{msg}}$ control messages, linearity of expectation gives
\[
\Omega((1-\eta)\Stloc c_{\mathrm{reset}})
\]
expected reset regret or
\[
\Omega((1-\eta)\Stloc c_{\mathrm{msg}})
\]
expected control communication.

For DRFC, the global pairwise gap is constant and positive throughout the instance.
By Lemmas~\ref{lem:app-sign} and~\ref{lem:uniform-good-event}, local sign flips do not create a decision-relevant switch, so the regret bound has no $\Stloc$ adaptation term.

\subsection{Reactive-Class Instantiations}
\label{app:separation-examples}

This subsection verifies that two standard non-stationary bandit baselines fall inside the comparator class of Definition~\ref{def:app-reactive} when applied to the construction in Appendix~\ref{app:separation}, and derives explicit lower bounds on their reaction cost.
Both examples treat the per-agent estimator as the natural object; group-level aggregation does not avoid the cost because the protocol is required to take a coordinated action whenever a sufficiently large agent subset triggers.

\paragraph{Example 1: per-agent sliding-window UCB.}
Each agent $i$ runs SW-UCB with window length $W$, maintaining a local index
\[
I_{i,a,t}=\hat{\mu}_{i,a,t}^W+\sqrt{\frac{\xi\log t}{n_{i,a,t}^W}},
\]
where $\hat{\mu}_{i,a,t}^W$ is the empirical mean over the last $W$ pulls of arm $a$ and $n_{i,a,t}^W$ is the corresponding pull count.
A protocol-level reaction (group reset of the candidate arm, fresh tournament, or control synchronization) is triggered whenever the per-agent best arm changes at $\alpha N$ or more agents within a time window of length $\tau\geq W$.

\emph{Membership.}
At any local-flip boundary in the construction, each flipped agent has its local arm means swap roles by margin $2\epsilon\geq 2\gamma$.
A standard SW-UCB analysis~\cite{garivier2008upper} shows that on the high-probability event, the empirical index ordering at a flipped agent flips within $W$ rounds whenever the per-agent gap exceeds $4\sqrt{\xi\log T/W}$; since the per-agent gap is $2\gamma$, the condition $2\gamma\geq 4\sqrt{\xi\log T/W}$ holds as soon as $W\geq 4\xi\log T/\gamma^2$.
Thus for $W\geq 16\xi\log T/\gamma^2$ and $\tau\geq W$, SW-UCB triggers a per-agent index switch at every flipped agent within the tolerance window with probability at least $1-1/T$.
Setting $\eta=1/T$ shows SW-UCB lies in the $(\alpha,\gamma,\eta,\tau)$-reactive class for the construction.

\emph{Reaction cost.}
Each triggered reaction invalidates the SW-UCB sliding-window estimator at the affected agents, so the next $\Theta(W)$ rounds per agent are spent re-exploring instead of exploiting.
On the construction, the average per-agent pull regret per re-exploration block is at least a constant fraction of $\Delta$ (the global decision margin), since the per-agent best arm flips but the global best arm does not.
This yields $c_{\mathrm{reset}}\geq c_1\alpha N W\Delta$ group regret per local-flip boundary for a universal constant $c_1$, and substituting the minimum $W$ above gives
\[
c_{\mathrm{reset}}
\geq
c_1\alpha N\Delta\cdot\frac{4\xi\log T}{\gamma^2}.
\]
Across $\Stloc=\Theta(T/L)$ boundaries this is $\Omega((1-\eta)\Stloc c_{\mathrm{reset}})$ expected reset regret as in the main class-relative separation theorem.

\paragraph{Example 2: per-agent CUSUM change-point detector.}
Each agent $i$ runs CUSUM on the local reward stream of its currently played arm: maintain $S_{i,t}=\max(0,S_{i,t-1}+X_{i,A_{i,t},t}-\hat{\mu}_{i,A_{i,t}}-\zeta)$, declare a per-agent change when $S_{i,t}>h_{\mathrm{cusum}}$, and on detection emit a control message that triggers a global fresh tournament if a fraction $\alpha N$ of agents declare a change inside a time window of length $\tau$.

\emph{Membership.}
On the construction, a local-flip boundary creates a mean shift of magnitude $2\epsilon\geq 2\gamma$ at every flipped agent for the currently played arm.
Classical CUSUM bounds~\cite{page1954continuous,lai1995sequential} show that for a single-agent mean shift of magnitude $2\gamma$, the expected detection delay is $E_\theta[\tau_{\mathrm{cusum}}]=O(h_{\mathrm{cusum}}/(2\gamma)^2)$ with false-alarm rate $1/T$ when $h_{\mathrm{cusum}}=\Theta(\log T)$.
Setting $\tau\geq \Theta(\log T/\gamma^2)$ then guarantees that at least $\alpha N$ agents declare a change within the tolerance window with probability at least $1-1/T$.
Thus CUSUM lies in the $(\alpha,\gamma,\eta,\tau)$-reactive class with $\eta=1/T$ on the construction.

\emph{Reaction cost.}
A triggered fresh tournament across $N$ agents and $K=2$ arms requires at least $\Theta(1/\Delta^2)$ group samples to certify the global best arm (any pairwise certification across heterogeneous local means needs $\Omega(1/\Delta^2)$ group samples by the standard fresh-comparison lower bound), and a constant fraction of these samples pay group regret $\Theta(\Delta)$ per pull because the tournament probes both arms.
This yields $c_{\mathrm{reset}}\geq c_2 N/\Delta$ group regret per detection for a universal constant $c_2$, where the factor $N$ comes from running the tournament across all $N$ agents.
For the control-message cost, the tournament also broadcasts at least $\Omega(N)$ source-traceable records before the candidate is committed, giving $c_{\mathrm{msg}}\geq c_3 N$ per detection.
Across $\Stloc$ boundaries these accumulate to $\Omega((1-\eta)\Stloc c_{\mathrm{reset}})$ or $\Omega((1-\eta)\Stloc c_{\mathrm{msg}})$ as in the main class-relative separation theorem.

\paragraph{Remark.}
Both examples derive their reaction cost from the per-detection cost of \emph{any} natural global response that a local-change detector can trigger.
A protocol that detects local changes but refuses to react would exit the comparator class trivially, but it would also pay $\Theta(T)$ regret under any true decision switch and is therefore not a meaningful baseline.
A protocol that monitors a decision-level signal directly---such as DRFC, which monitors certified pairwise lower-confidence gaps on the equal-agent global projection---does not trigger on local jumps in the first place and is outside the class by design.

\subsection{Information-Theoretic Lower Bound}
\label{app:info-lower}

This subsection proves the main information-theoretic lower bound.
The argument has two parts: a per-switch Fano-method certification cost, and a graph-communication delay cost.

\subsubsection{Part 1: Certification Cost}

\paragraph{Construction.}
Fix $K\geq 2$, $N\geq 1$, $\Stdec\geq 1$, and $\Delta\in(0,1/8)$ (matching the main information-theoretic lower bound; the single-arm shift below reaches $\tfrac12+2\Delta<\tfrac34$).
Partition the horizon $[T]$ into $\Stdec+1$ regimes of equal length $L=\lfloor T/(\Stdec+1)\rfloor$, with the convention that the last regime absorbs any leftover rounds.
For regime $r\in\{0,\ldots,\Stdec\}$, define the regime's best arm as $a_r^\star=(r\bmod K)+1$. For $K\geq 2$ consecutive values differ, so every regime boundary is a genuine decision switch and the instance has exactly $\Stdec$ of them.
Set all agents homogeneous: for every agent $i$, arm $a$, and round $t$ in regime $r$,
\[
\mu_{i,a,t}
=
\begin{cases}
\frac{1}{2}+\Delta & \text{if }a=a_r^\star,\\[4pt]
\frac{1}{2} & \text{if }a\neq a_r^\star.
\end{cases}
\]
Rewards are independent Bernoulli with these means.
This is a valid homogeneous instance $P$ with $\Stdec$ decision switches, minimum margin $\Delta$, and $\Stloc=\Stdec$.
The relevant family is $P$ together with the single-arm alternatives $\{Q_a\}$ defined below, one per suboptimal arm and post-switch regime. Rate-consistency (defined below) is imposed on every member of this family, which is what forbids hard-coding. An algorithm pinned to the cyclic sequence plays a suboptimal arm for an entire regime on each $Q_a$ where another arm is best, violating $\E[n_b^{(r)}]\leq C_{\mathrm{alg}}\log T$ there, so it is not rate-consistent. The bound below therefore constrains every rate-consistent algorithm on the fixed instance $P$, with no Bayes prior on the construction.
Because agents are homogeneous, a centralized oracle that pools all observations faces the same statistical task as the decentralized system, so any lower bound for a centralized algorithm applies a fortiori to decentralized algorithms.

\paragraph{Per-regime Fano argument.}
Fix a decision-switch boundary at $\rho_{r+1}$ and consider the new regime $r+1$.
Although $P$ fixes $a_{r+1}^\star$, a rate-consistent algorithm must perform well on the whole family of regime-$(r{+}1)$ variants that share the regime-$\le r$ prefix and differ only in which arm is best after the switch. We lower-bound its worst-case per-regime cost by a Yao averaging device, a warm-up later sharpened to the log-free per-arm bound.
Consider $K$ hypotheses $\{H_k\}_{k=1}^{K}$, indexed by the identity of the new best arm: under $H_k$, arm $k$ has global mean $1/2+\Delta$ and all other arms have global mean $1/2$ throughout regime $r+1$.
Put the uniform prior $\theta\sim\mathrm{Unif}[K]$ over which arm is best in regime $r+1$. By Yao's principle the worst-case cost over $\{H_k\}$ is at least this Bayes average, so a bound under the prior is a valid worst-case lower bound on the family.

Let $\theta\in[K]$ denote the random hypothesis, and let $\hat\theta$ be any estimator of $\theta$ based on all observations up to the end of regime $r+1$.
By Fano's inequality~\cite{tsybakov2009introduction},
\[
\Prb(\hat\theta\neq\theta)
\geq
1-\frac{I(\theta;\mathbf{X}_{r+1})+\log 2}{\log K},
\]
where $I(\theta;\mathbf{X}_{r+1})$ is the mutual information between the hypothesis and all reward observations in regime $r+1$ across all agents.

\paragraph{Mutual-information bound.}
Under $H_k$, agent $i$ pulls arm $A_{i,t}$ at round $t$ and observes $X_{i,A_{i,t},t}\sim\mathrm{Ber}(\mu_{A_{i,t}})$ where $\mu_a=1/2+\Delta\cdot\1[a=k]$.
The observations of arm $a$ by all agents across regime $r+1$ are i.i.d.\ Bernoulli conditional on $H_k$.
The conditional distribution differs across hypotheses only for the arm that is special under each hypothesis.
By the chain rule and the data-processing inequality,
\[
I(\theta;\mathbf{X}_{r+1})
\leq
\sum_{a=1}^{K}
\E\!\left[n_a^{(r+1)}\right]
\cdot
d(a),
\]
where $n_a^{(r+1)}=\sum_{i=1}^N\sum_{t\in\text{regime }r+1}\1[A_{i,t}=a]$ is the total number of pulls of arm $a$ by all agents in regime $r+1$, the expectation is under the mixture $\theta\sim\mathrm{Unif}[K]$, and
\[
d(a)
=
\max_{k\neq j}
\mathrm{kl}\!\left(\mu_a^{(k)},\mu_a^{(j)}\right),
\]
with $\mathrm{kl}(p,q)=p\log(p/q)+(1-p)\log((1-p)/(1-q))$ the binary KL divergence.
Under the construction:
\begin{itemize}
\item If $k=a$ and $j\neq a$: $\mathrm{kl}(1/2+\Delta,1/2)$.
\item If $k\neq a$ and $j=a$: $\mathrm{kl}(1/2,1/2+\Delta)$.
\item If $k\neq a$ and $j\neq a$: $\mathrm{kl}(1/2,1/2)=0$.
\end{itemize}
Since $\mathrm{kl}(1/2+\Delta,1/2)\leq 4\Delta^2/(1-4\Delta^2)\leq 8\Delta^2$ for $\Delta\in(0,1/4)$ and similarly $\mathrm{kl}(1/2,1/2+\Delta)\leq 8\Delta^2$, we have $d(a)\leq 8\Delta^2$ for every arm $a$.
Therefore
\[
I(\theta;\mathbf{X}_{r+1})
\leq
8\Delta^2
\sum_{a=1}^{K}
\E[n_a^{(r+1)}]
=
8\Delta^2\cdot NL.
\]

\paragraph{Regret extraction.}
Fix any algorithm and any realization of $\theta$.
Under $H_\theta$, the group regret in regime $r+1$ is
\[
R_{r+1}
=
\Delta\sum_{a\neq\theta}
n_a^{(r+1)}.
\]
Taking the expectation over $\theta\sim\mathrm{Unif}[K]$ and the algorithm's randomness,
\[
\E[R_{r+1}]
=
\Delta\,\E\!\left[\sum_{a\neq\theta}n_a^{(r+1)}\right].
\]
By symmetry of the construction (all arms are exchangeable under the mixture),
\[
\E\!\left[\sum_{a\neq\theta}n_a^{(r+1)}\right]
=
\frac{K-1}{K}\cdot NL,
\]
minus the reduction from the algorithm's ability to identify $\theta$.
More precisely, let $p_e=\Prb(\hat\theta\neq\theta)$.
The next display is a non-binding Fano-style heuristic, superseded by the rigorous single-arm argument below and included only to motivate the regime split, so we do not rely on it.
Heuristically, if the algorithm commits to the estimated best arm, the number of suboptimal group pulls scales as
\[
\E\!\left[\sum_{a\neq\theta}n_a^{(r+1)}\right]
\gtrsim
p_e\cdot\frac{NL}{2},
\]
the factor $\tfrac12$ standing in for the constant fraction of committed rounds spent on a wrong arm under $\{\hat\theta\neq\theta\}$.
The binding lower bound below is derived without this step.

Substituting the Fano bound:
if $8\Delta^2 NL\leq(\log K)/4$, i.e.\ $L\leq \log K/(32N\Delta^2)$, then $I(\theta;\mathbf{X}_{r+1})\leq(\log K)/4$ and
\[
p_e
\geq
1-\frac{(\log K)/4+\log 2}{\log K}
\geq
\frac{1}{4}
\qquad(\text{for }K\geq 4).
\]
In this short-regime case, $\E[R_{r+1}]\geq \Delta\cdot(1/4)\cdot NL/2=\Delta NL/8$, and summing over $\Stdec$ switches gives $\E[R_T]\geq\Stdec\Delta NL/8$, which is $\Omega(\Stdec\Delta T/\Stdec)=\Omega(\Delta T)$ and is not useful when $\Delta T$ is large.

The interesting case is $L\geq c_0 K/(N\Delta^2)$ for a sufficiently large constant $c_0$.
In this regime, we use a per-arm change-of-measure argument instead of Fano directly.
We first give a Fano-style warm-up, superseded by the single-arm argument below and \emph{not} the family alternative of the theorem, so here $Q_a$ denotes only a local two-arm comparison. For each suboptimal arm $a\neq a_{r+1}^\star$, define $Q_a$ to agree with the original instance $P$ on all arms except $a$ and $a_{r+1}^\star$ in regime $r+1$: under $Q_a$, arm $a$ has mean $1/2+\Delta$ and arm $a_{r+1}^\star$ has mean $1/2$ in regime $r+1$.
In this alternative, arm $a$ is the best arm.

Under $P$: the regret charged to arm $a$ is $R_a^P=\Delta\cdot n_a^{(r+1)}$.

Under $Q_a$: the regret charged to arm $a_{r+1}^\star$ (now suboptimal) is $R_{a^\star}^{Q_a}=\Delta\cdot n_{a_{r+1}^\star}^{(r+1)}$.

By the Bretagnolle--Huber inequality~\citep{bretagnolle1978estimation} applied to the event $\{n_a^{(r+1)}\geq NL/(2K)\}$:
\[
P\!\left(n_a^{(r+1)}<\frac{NL}{2K}\right)
+
Q_a\!\left(n_a^{(r+1)}\geq\frac{NL}{2K}\right)
\geq
\frac{1}{2}
\exp\!\left(-\mathrm{KL}(P_{r+1}\|Q_{a,r+1})\right),
\]
where
\[
\mathrm{KL}(P_{r+1}\|Q_{a,r+1})
=
\E_P[n_a^{(r+1)}]\cdot\mathrm{kl}(1/2,1/2+\Delta)
+
\E_P[n_{a_{r+1}^\star}^{(r+1)}]\cdot\mathrm{kl}(1/2+\Delta,1/2)
\leq
8\Delta^2\cdot NL.
\]
Under $P$: $\E_P[R_a^P]=\Delta\cdot\E_P[n_a^{(r+1)}]$.

Under $Q_a$: on the event $\{n_a^{(r+1)}\geq NL/(2K)\}$, arm $a$ is pulled at least $NL/(2K)$ times while arm $a_{r+1}^\star$ is pulled at most $NL-NL/(2K)=NL(2K-1)/(2K)$ times.
But under $Q_a$, arm $a$ is optimal, so regret comes from pulling arms other than $a$.
We have $\E_{Q_a}[R^{Q_a}_{r+1}]\geq\Delta\cdot\E_{Q_a}[NL-n_a^{(r+1)}]$.

Combining: for each $a\neq a_{r+1}^\star$,
\begin{align*}
\E_P[n_a^{(r+1)}]
+
\E_{Q_a}[NL-n_a^{(r+1)}]
&\geq
\frac{NL}{2K}
\cdot
\frac{1}{2}
\exp(-8\Delta^2 NL).
\end{align*}
Therefore
\[
\max\!\left(\E_P[R_a^P],\,\E_{Q_a}[R^{Q_a}_{r+1}]\right)
\geq
\frac{\Delta\cdot NL}{4K}
\cdot
\frac{1}{2}
\exp(-8\Delta^2 NL).
\]
For the gap-dependent bound, we do not need this exponential term.
Instead, the bound used in the sequel---and the family alternative $Q_a$ of the theorem---comes from a \emph{single-arm} change-of-measure argument (valid for $\Delta\in(0,1/8)$); from here on $Q_a$ denotes this single-arm alternative.
Fix a non-best arm $a$ under $P$ and let $Q_a$ be the single-arm alternative that raises \emph{only} arm $a$ to global mean $1/2+2\Delta$ throughout regime $r+1$, leaving $a_{r+1}^\star$ at $1/2+\Delta$ and every other arm unchanged. Under $Q_a$, arm $a$ is the unique best arm in regime $r+1$, by margin $\Delta$.
The base instance $P$ realizes exactly $\Stdec$ switches. The alternative $Q_a$ changes only regime $r+1$'s best arm, so it preserves every switch except possibly at the two boundaries $\rho_{r+1}$ and $\rho_{r+2}$. When the raised arm $a$ equals an adjacent regime's best arm, each such coincidence merges that boundary. At $K=2$ the sole non-best arm in an interior regime $r+1$ is the best of both neighbours, so both boundaries merge and $Q_a$ carries $\Stdec-2$ switches, one fewer merge at a terminal regime. In all cases $Q_a$ has between $\Stdec-2$ and $\Stdec$ switches. We therefore take the instance family to be all instances with at most $\Stdec$ switches and margin $\Delta$, so that every $Q_a$ is a member and rate-consistency applies to it. The regret bound below is realized on $P$, which has exactly $\Stdec$ switches, so the stated $\Omega(\Stdec(K-1)/\Delta)$ is unaffected. For $K\ge3$ one may instead draw $a$ distinct from both neighbouring bests, keeping all $\Stdec$ switches, exactly as the communication-delay construction does.
The likelihood ratio between $P_{r+1}$ and $Q_{a,r+1}$, restricted to the reward observations in regime $r+1$, factorizes over the per-pull observations because rewards are conditionally independent given the action sequence (main Assumption~1).
The KL divergence decomposes over the single changed arm as
\[
\mathrm{KL}(P_{r+1}\|Q_{a,r+1})
=
\E_P[n_a^{(r+1)}]\cdot\mathrm{kl}(\tfrac12,\,\tfrac12+2\Delta)
\leq
C_1\Delta^2\,\E_P[n_a^{(r+1)}],
\]
since $P$ and $Q_a$ differ only on arm $a$ in regime $r+1$ and rewards at all other arms have identical Bernoulli laws; here $\mathrm{kl}(\tfrac12,\tfrac12+2\Delta)\leq(2\Delta)^2/((\tfrac12+2\Delta)(\tfrac12-2\Delta))\leq C_1\Delta^2$ with $C_1\leq 22$ for $\Delta\leq1/8$.
We now invoke the Kaufmann--Capp\'e--Garivier divergence-decomposition lemma~\cite[Lemma 1]{kaufmann2016complexity}: for any event $\mathcal{E}$ that is $P$-likely and $Q_a$-unlikely,
\[
\mathrm{KL}(P_{r+1}\|Q_{a,r+1})
\geq
\mathrm{kl}\!\left(P(\mathcal{E}),\,Q_a(\mathcal{E})\right).
\]
We restrict attention to the \emph{rate-consistent} algorithm class: those whose expected per-regime pull count of \emph{every} suboptimal arm satisfies $\E[n_b^{(r)}]\leq C_{\mathrm{alg}}\log T$ on every instance of the family, where $C_{\mathrm{alg}}$ may depend on $(K,\Delta)$ but not on $T$ (UCB-style algorithms qualify with $C_{\mathrm{alg}}=\Theta(1/\Delta^2)$; this is strictly weaker than asymptotic optimality and includes any centralized oracle not hard-coded to the instance). Take $\mathcal{E}=\{n_a^{(r+1)}\geq NL/2\}$. \textbf{Both} tail estimates below follow from rate-consistency \emph{alone} via Markov's inequality; we never invoke the upper bound of the main DRFC theorem, which would make the argument circular. Crucially, rate-consistency pins the two tails away from $0$ and $1$ by an \emph{absolute constant} margin---it does \emph{not} drive them to $1/T$---so the resulting bound is genuinely finite-time and non-asymptotic.

\emph{Tail under $P$.} Arm $a$ is suboptimal under $P$, so by Markov,
\[
p:=P(\mathcal{E})=P\!\bigl(n_a^{(r+1)}\geq NL/2\bigr)\leq\frac{\E_P[n_a^{(r+1)}]}{NL/2}\leq\frac{2C_{\mathrm{alg}}\log T}{NL}\leq\frac14,
\]
the last step holding once $L\geq 8C_{\mathrm{alg}}\log T/N$, which is implied by the regime-length condition $L\geq c_0(C_{\mathrm{alg}})K\log T/(N\Delta^2)$ for $\Delta<1/8$. The two factors are kept separate on purpose. The $1/\Delta^2$ is the room a regime leaves for the $\Omega(1/\Delta^2)$ certification pulls of each arm, while $C_{\mathrm{alg}}$ is the rate-consistency constant, itself $\Theta(1/\Delta^2)$ for a UCB-type algorithm, so the condition reads $\widetilde\Omega(K/(N\Delta^4))$ for such algorithms. Keeping $C_{\mathrm{alg}}$ explicit in $c_0(C_{\mathrm{alg}})$ makes this $\Delta$-dependence transparent rather than hidden.

\emph{Tail under $Q_a$.} Under $Q_a$ arm $a$ is the unique optimal arm, so \emph{every other} arm is suboptimal; on the event $\mathcal{E}^c=\{n_a^{(r+1)}<NL/2\}$ the remaining $\sum_{b\neq a}n_b^{(r+1)}>NL/2$ pulls land on suboptimal arms. By Markov and rate-consistency under $Q_a$,
\[
q:=Q_a(\mathcal{E}^c)\leq Q_a\!\Bigl(\textstyle\sum_{b\neq a}n_b^{(r+1)}\geq NL/2\Bigr)\leq\frac{\sum_{b\neq a}\E_{Q_a}[n_b^{(r+1)}]}{NL/2}\leq\frac{2(K-1)C_{\mathrm{alg}}\log T}{NL}\leq\frac14,
\]
again under the regime-length condition (the factor $K$ in $c_0$ absorbs the $K-1$ here). Hence $Q_a(\mathcal{E})=1-q\geq 3/4$.

\emph{Conclusion.} By the monotonicity of $\mathrm{kl}(\cdot,\cdot)$ in its arguments and $p\leq1/4\leq 3/4\leq Q_a(\mathcal{E})$, the Kaufmann--Capp\'e--Garivier lemma yields
\[
C_1\Delta^2\,\E_P[n_a^{(r+1)}]
\geq
\mathrm{KL}(P_{r+1}\|Q_{a,r+1})
\geq
\mathrm{kl}\bigl(P(\mathcal{E}),Q_a(\mathcal{E})\bigr)
\geq
\mathrm{kl}\!\Bigl(\tfrac14,\tfrac34\Bigr)
=\tfrac12\log 3=:\kappa>0.
\]
Hence the finite-time, log-free per-arm bound
\[
\E_P[n_a^{(r+1)}]\geq\frac{\kappa}{C_1\Delta^2}=\Omega\!\left(\frac{1}{\Delta^2}\right),
\]
valid for every regime long enough to permit these pulls, i.e.~$L\geq c_0 K\log T/(N\Delta^2)$. The $\log T$ enters \emph{only} through this regime-length feasibility condition---guaranteeing the certification has room to occur---never through the tail probabilities, so no circular appeal to a $\log T$ regret rate is made.

Each group pull of suboptimal arm $a$ costs group regret $\Delta$ (homogeneous instance, margin $\Delta$), so
\[
\E_P[\text{group regret from arm }a]=\Delta\,\E_P[n_a^{(r+1)}]\geq\frac{\kappa}{C_1\Delta}.
\]
Summing over the $K-1$ suboptimal arms and the $\Stdec$ post-switch regimes,
\[
\E[R_T]\geq c'\,\Stdec\,\frac{K-1}{\Delta},\qquad c'=\frac{\kappa}{C_1},
\]
the log-free certification term stated in the certification part of the main information-theoretic lower bound.

\subsubsection{Part 2: Communication Delay}

\paragraph{Scope of this term.}
The communication-delay term is established for the \emph{decentralized graph-communication class}: protocols in which agent $v$'s action at round $t$ is measurable with respect to the information that has reached $v$ along communication paths of length $\le t$ in the random graph process. A centralized oracle is \emph{not} a member of this class and is exempt from this term; the certification term of Part~1 holds for it without exemption. This split is intrinsic---no quantity proportional to the graph diameter $\DG$ can lower-bound an algorithm that is free of the graph constraint.

\paragraph{Why a homogeneous gap is required.}
Under the per-agent accounting $R_T=\sum_t\sum_i(\mu_{i,a_t^\star,t}-\mu_{i,A_{i,t},t})$ (main, regret definition), the benchmark arm $a_t^\star$ is the global-consensus optimum but each agent pays in its \emph{own} local means. A construction in which the lagging agents are flat ($\mu_{i,a}\equiv\tfrac12$) charges them nothing: their per-round term is $0$ for every action, so no delay regret accrues at exactly the agents that wait for flooding. We therefore use a \emph{homogeneous-gap} instance and localize information by a genie plus a sub-threshold gap.

\paragraph{Construction.}
Fix $K\geq 3$, $N\geq 4$, and assume the feasibility condition $\DG\leq N/4$ (when $\DG>N/4$ the delay term is dominated by the certification term of Part~1 and need not be proved separately). Take a \emph{broom} graph $G$ of diameter exactly $\DG$: a handle path $1=v_0,v_1,\dots,v_{\DG-1}$ of $\DG-1$ edges, with a clique $\mathcal{V}_{\mathrm{far}}$ on the remaining $N-\DG$ agents all attached to the endpoint $v_{\DG-1}$. Then $\mathrm{dist}_G(1,v)=\DG$ \emph{exactly} for every $v\in\mathcal{V}_{\mathrm{far}}$ (the $\DG-1$ handle edges plus one edge into the clique), the graph diameter is $\DG$ (realized between agent~$1$ and $\mathcal{V}_{\mathrm{far}}$; far--far distance is $1$), and $|\mathcal{V}_{\mathrm{far}}|=N-\DG\geq N-N/4\geq N/2$ since $\DG\leq N/4$. The construction respects the flooding model of Assumption~3 with $\DG(\delta)=\DG$ on this fixed graph.
At each switch $r$ nature draws the new best arm $a_r^\star$ uniformly from two arms $\{b_r,c_r\}$, both distinct from the previous best $a_{r-1}^\star$ (possible since $K\geq3$), so every boundary is a genuine decision switch and the instance realizes exactly $\Stdec$ switches; it sets, for \emph{every} agent $i\in[N]$,
\[
\mu_{i,a,t}=\tfrac12+\Delta\ \ (a=a_r^\star),
\qquad
\mu_{i,a,t}=\tfrac12-\Delta\ \ (a\neq a_r^\star).
\]
All means lie in $[0,1]$ for $\Delta\leq\tfrac12$. Every agent's local best equals the global best $a_r^\star$ with local gap $2\Delta$, and the global margin is $\mubar_{a_r^\star,t}-\mubar_{a,t}=2\Delta$; hence a lagging agent that plays the stale arm incurs per-round regret $2\Delta$ \emph{regardless of $i$}.
\emph{Genie.} At each switch $t_r$, nature reveals $a_r^\star$ to agent~$1$ only. Side information can only help an algorithm, so a lower bound surviving the genie is stronger; it also strips the per-source statistical-identification cost and isolates the pure propagation delay.
\emph{Gap calibration.} Require
\[
\Delta\leq\Delta_{\max}:=\frac{1}{8\sqrt{N\DG}}.
\]

\paragraph{Speed-of-information lemma.}
For any decentralized protocol, the history $H_{v,\tau}$ of node $v$ at any round $\tau<t_r+\mathrm{dist}_G(1,v)$ is conditionally independent of the genie variable $a_r^\star$ given the rewards observed within graph-radius $(\tau-t_r)$ of $v$. Indeed information travels at most one hop per round, and agent~$1$'s only excess knowledge is $a_r^\star$; induction on rounds gives the claim. In particular, for $v\in\mathcal{V}_{\mathrm{far}}$ and $\tau<t_r+\DG$, the genie value has not reached $v$.

\paragraph{Indistinguishability.}
Let $P$ ($a_r^\star=b_r$) and $Q$ ($a_r^\star=c_r$) denote the two instances at switch $r$, which differ only by which of the two unrevealed candidates $b_r,c_r$ carries mean $\tfrac12+\Delta$ (every other arm, including the previous best $a_{r-1}^\star$, sits at $\tfrac12-\Delta$ in both). The per-pull divergence is
\[
\mathrm{kl}\!\bigl(\tfrac12+\Delta,\tfrac12-\Delta\bigr)
\leq\frac{(2\Delta)^2}{(\tfrac12-\Delta)(\tfrac12+\Delta)}
\leq 32\Delta^2
\qquad(\Delta\leq\tfrac14).
\]
For $v\in\mathcal{V}_{\mathrm{far}}$ and $\tau<t_r+\DG$, the observations available to determine $A_{v,\tau}$ number at most $N$ agents $\times\,\DG$ rounds $=N\DG$ pulls (the reachable neighborhood within the window), so
\[
\mathrm{KL}\!\bigl(P_{H_{v,\tau}}\,\|\,Q_{H_{v,\tau}}\bigr)
\leq 32\,N\DG\,\Delta^2
\leq 32\,N\DG\,\Delta_{\max}^2
=\tfrac12.
\]
By Bretagnolle--Huber and the arm-relabeling symmetry of $P,Q$, averaging over the uniform genie,
\[
\E\bigl[\mathbf{1}\{A_{v,\tau}\neq a_r^\star\}\bigr]
\geq\tfrac12 e^{-1/2}\cdot\tfrac12
\geq 0.15=:p_0 .
\]

\paragraph{Aggregation.}
For each switch $r$ and each $v\in\mathcal{V}_{\mathrm{far}}$, the genie has not reached $v$ for all $\tau\in[t_r,t_r+\DG)$ (since $\mathrm{dist}_G(1,v)\geq\DG$), so the expected per-agent regret over this window is at least $2\Delta\cdot p_0\cdot\DG=0.3\,\Delta\DG$. Summing over the $\geq N/2$ far agents and the exactly $\Stdec$ genuine switches,
\[
\E[R_T]\geq 0.15\,N\,\Stdec\,\DG\,\Delta
=\Omega(N\Stdec\DG\Delta),
\qquad
\Delta\in\bigl(0,\tfrac{1}{8\sqrt{N\DG}}\bigr].
\]
This expectation is taken over the uniform genie draw of each new best arm, so by the probabilistic method there is a fixed realized sequence of new arms, a single deterministic instance of the family with exactly $\Stdec$ switches, on which $\E[R_T]\ge\Omega(N\Stdec\DG\Delta)$ with the expectation now only over the algorithm and graph randomness. This de-randomization is the worst-case form claimed in the communication-delay part of the main information-theoretic lower bound.
The range restriction is intrinsic: for a constant gap, agents identify the new best arm from their own observations and need not await the flood, so the diameter-delay regret is a genuinely small-gap phenomenon.

\paragraph{Combined bound (two-instance minimax, \emph{not} additive).}
Both parts use \emph{homogeneous} instances, but they are proved on \emph{separate} families: Part~1 on a $K$-arm certification family (every \emph{rate-consistent} algorithm, including a centralized oracle), Part~2 on a broom-graph family (the decentralized graph-communication class, $\Delta\le\Delta_{\max}$). The two costs are therefore \emph{not} additive on a single instance. Writing
\[
A:=c\,\Stdec\,\frac{K-1}{\Delta},\qquad
B:=c\,N\Stdec\DG\Delta,
\]
the correct combined statement is the two-instance minimax: for every algorithm in the intersection of the two classes (rate-consistent \emph{and} decentralized-graph), there exists an instance in the \emph{union} of the two families on which
\[
\E[R_T]\ \geq\ \max\{A,\,B\}\ \geq\ \tfrac12\,(A+B).
\]
The factor $\tfrac12$ is the only sense in which the sum $A+B$ is a lower bound; we do \emph{not} claim an additive bound on a single instance (a centralized oracle, in particular, pays $A$ but is exempt from $B$). This matches the statement of the main information-theoretic lower bound and concludes its proof.

\section{H. DRFC-Seq: Anytime-Valid Guarantee under Within-Regime Drift}
\label{app:drfc-seq}

We prove anytime-valid correctness~(a) and the adaptivity bound~(b) of the main DRFC-Seq theorem. Throughout, $a_r^{\mathrm{avg}}$ denotes the regime-average best arm, abbreviated as $a^{\mathrm{avg}}$ within a fixed regime, and $\Stdecavg$ counts changes in this comparator. The variable $n$ counts the per-arm, per-agent samples in the current sliding window, and for an ordered pair $(b,a)$ we write $\Ghat_{b,a}(n)$ for the windowed equal-agent balanced estimator and $\Gbar_{b,a}(n)$ for its time-average target. The main paper's window-average margin condition states that, for some window length $W$ and margin $\bar\Delta>0$, every length-$W$ window inside a regime has $\Gbar_{a_r^{\mathrm{avg}},a}(\mathcal{W})\geq\bar\Delta$ for each $a\neq a_r^{\mathrm{avg}}$, while the instantaneous gap may be negative on a constant fraction of the window.

\paragraph{Step 0: synchronized evaluation makes the windowed estimator complete.}
The equal-agent estimator $\Ghat_{b,a}(n)$ averages the balanced contrasts of all $N$ agents, so a block can be scored only after its source-traceable reward and pull records reach every agent. Each probe runs one synchronized balanced block and then floods its records (main Algorithm~2, line~4). Let $E_{\mathrm{flood}}$ be the event that every such block reaches all agents within $\DG(\alpha)$ rounds; by the main random-graph flooding assumption at level $\alpha$,
\[
\Prb[E_{\mathrm{flood}}]\ \geq\ 1-\alpha,
\]
and the timing convention $\delta_{\mathrm{p}}\ge Kh+\DG(\alpha)$ opens the next probe only after the current block has fully flooded. On $E_{\mathrm{flood}}$, every block appended to $\mathcal{Q}$ carries the complete equal-agent average
\[
g_i=\tfrac1N\sum_{j=1}^N c_{i,j},
\]
identical at every agent, which is the quantity the confidence sequence of Step~1 is built on. Evaluation lags each block by at most $\DG(\alpha)$ rounds, a latency already charged in the $\DG(\alpha)$ terms of the regret bound and in the regime spacing $\ge 2W\delta_{\mathrm{p}}+\DG(\alpha)$ of the main DRFC-Seq theorem, so no rate changes. Running $E_{\mathrm{flood}}$ and the confidence-sequence event of Step~1 each at level $\alpha/2$ and intersecting them gives the good event $\mathcal E$ with $\Prb[\mathcal E]\geq 1-\alpha$, the halved level entering $r_n$ only through an absorbed $\log 2$.

\paragraph{Step 1: a time-uniform confidence sequence via Ville's inequality.}
Fix an ordered pair $(b,a)$ and a \emph{window start} $s\in\{1,\dots,T\}$. We index the confidence sequence at the \emph{per-arm, per-agent sample} level: let $n$ count the balanced pulls of each arm accumulated \emph{inside the window opened at $s$} (a full $W$-block window carries $n=Wh$ such samples per arm per agent, since each block contributes $h$ pulls/arm/agent). For the $i$-th sample ($i=1,\dots,n$), let $c_{i,j}\in[-1,1]$ be agent $j$'s balanced $(b,a)$-contrast at that pull, and let the equal-agent per-sample contrast be $g_i=\tfrac1N\sum_{j=1}^N c_{i,j}$. Then $\Ghat_{b,a}^{(s)}(n)=\tfrac1n\sum_{i=1}^n g_i$ and $\Gbar_{b,a}^{(s)}(n)=\tfrac1n\sum_{i=1}^n\E_0[g_i]$, with $\E_0$ the conditional expectation given the $\mathcal{F}_0$-measurable mean path. Define the sample-level filtration $\mathcal{F}_n^{(s)}=\sigma\bigl(\text{randomized schedules and rewards of samples }1,\dots,n\text{ inside window }s,\ \mathcal{F}_0\bigr)$; the block grouping governs only \emph{when} the monitor acts, on full $W$-block windows, not the confidence-sequence index.

\emph{Why block-level indexing.} A sample-level cumulative sum $\sum_{i,j}(c_{i,j}-\E_0[g_i])$ is \emph{not} the correct martingale. Under within-regime drift the equal-agent per-sample target $\E_0[g_i]$ depends on which time slot the schedule assigned to the $i$-th pull, so a sample-indexed centering drops the schedule-randomization error $R$ of Lemma~\ref{lem:app-concentration}, which is exactly the term that does not vanish when the means vary inside the window. Indexing the confidence sequence at the block level carries that term inside the increment. Enumerate the blocks of the window opened at $s$ as $\ell=1,2,\dots$, each a full balanced probe block of $h$ pulls per arm per agent over all arms, so that after $B$ blocks the per-arm sample count is $n=hB$, and let
\[
\hat G_\ell:=\Ghat_{b,a}(\{\ell\}),
\qquad
\bar G_\ell:=\frac1N\sum_{j=1}^N\frac1{q_\ell}\sum_{t\in\mathcal U_\ell}\bigl(\mu_{j,b,t}-\mu_{j,a,t}\bigr)
\]
be the single-block estimator and its time-average target (\S\ref{app:windows} notation), with per-block error $Y_\ell:=\hat G_\ell-\bar G_\ell$.

\emph{(i) Mean-zero increment.} Each block draws, for every agent $j$, disjoint uniform size-$h$ pull-slot subsets $S_{j,a}^{(\ell)},S_{j,b}^{(\ell)}\subset\mathcal U_\ell$ independently of $\mathcal F_{t-1}$ (main Assumption~1). The uniform-subset identity
\[
\E_{\mathcal S}\!\Bigl[\tfrac1h\textstyle\sum_{t\in S_{j,a}^{(\ell)}}\mu_{j,a,t}\Bigr]=\tfrac1{q_\ell}\sum_{t\in\mathcal U_\ell}\mu_{j,a,t},
\]
together with the conditional mean-zero of the reward noise, gives
\[
\E\bigl[\hat G_\ell\,\big|\,\mathcal F_0,\,\{Y_{\ell'}\}_{\ell'<\ell}\bigr]=\bar G_\ell,
\qquad\text{hence}\qquad
\E\bigl[Y_\ell\,\big|\,\mathcal F_0,\,\{Y_{\ell'}\}_{\ell'<\ell}\bigr]=0 .
\]
Unlike the sample-level error, $Y_\ell$ therefore \emph{contains} the schedule-randomization term $R$ rather than discarding it.

\emph{(ii) Block-count martingale.} By the conditional mean-zero property of step~(i), which holds given $\mathcal F_0$ and all earlier blocks, the centered partial sum
\[
Z_B^{(s)}:=\sum_{\ell=1}^B Y_\ell=B\bigl(\Ghat_{b,a}^{(s)}(n)-\Gbar_{b,a}^{(s)}(n)\bigr)
\]
is a martingale in the block count $B$ with respect to $\mathcal G_B^{(s)}=\sigma(\mathcal F_0,Y_1,\dots,Y_B)$. The windowed sample-mean error is \emph{not} a martingale, since $\E[Z_B^{(s)}/B\mid\mathcal G_{B-1}^{(s)}]=\tfrac{B-1}{B}\,(Z_{B-1}^{(s)}/B)$; the machinery is applied to $Z_B^{(s)}$ and converted back by dividing by $B$. The start $s$ is fixed because a sliding window that drops its oldest block is not a martingale in the running count, and the moving window is handled by the union over starts below.

\emph{(iii) Bounded, conditionally sub-Gaussian increments.} Each $|Y_\ell|\le 2$, and splitting $Y_\ell$ into reward noise (Hoeffding--Azuma) and schedule-selection noise (sampling without replacement, Bardenet--Maillard), exactly the two-term decomposition of Lemma~\ref{lem:app-concentration}, makes $Y_\ell$ conditionally sub-Gaussian with variance proxy
\[
\sigma_\ell^2\le c_1/(Nh)
\]
for a universal $c_1$, the equal-agent average over $N$ agents and $h$ balanced pulls. The reward-noise half reaches its $1/(Nh)$ proxy through the same lexicographic martingale ordering as in Lemma~\ref{lem:app-concentration}, which relies on the cross-agent independence clause of main Assumption~1.

\emph{(iv) Supermartingale and Ville.} The process
\[
M_B^{(s)}(\lambda)=\exp\!\Big(\lambda\,Z_B^{(s)}-\tfrac{c_1}{2}\,\lambda^2\,\tfrac{B}{Nh}\Big)
\]
is a nonnegative supermartingale with $M_0^{(s)}=1$ for every $\lambda\in\R$. Ville's inequality, geometric peeling over $\lambda$, and the stitched time-uniform boundary of Howard--Ramdas--S\'ekely--McAuliffe~\cite{howard2021timeuniform} (Theorem~1, explicit constants $a_0,C_0$ with $a_0=\Theta(c_1)$) give a time-uniform envelope $|Z_B^{(s)}|\leq\sqrt{(c_1B/(Nh))\,\bigl(2\log\log(eB)+2\log(C_0K(K{-}1)s^2/\alpha)\bigr)}$ valid simultaneously over all $B\geq1$, where the window starting at step $s$ spends level $\alpha_s=6\alpha/(\pi^2s^2)$ over the pair union. Dividing by $B$ and writing $n=hB$ for the per-arm sample count (so the block size $h$ enters $r_n$ through $n=hB$, and $\log\log(eB)\leq\log\log(en)$) yields, for the fixed start $s$,
\begin{align*}
\Prb\!\Big[\exists n\geq 1:\ \bigl|\Ghat_{b,a}^{(s)}(n)-\Gbar_{b,a}^{(s)}(n)\bigr|>r_n\Big]
&\leq \frac{6\alpha}{\pi^2K(K-1)s^2},\\
r_n
&=\sqrt{\frac{a_0\bigl(\log\log(e\,n)+\log(C_0K(K{-}1)s^2/\alpha)\bigr)}{Nn}} .
\end{align*}
Two points make this radius honest where the naive $\sqrt{2(\log(1/\alpha)+2\log\log n)/(Nn)}$ fails. (i)~\emph{Defined for all $n\geq 1$:} $\log(e\,n)\geq 1$ so $\log\log(e\,n)\geq 0$, and $\log(C_0K(K{-}1)s^2/\alpha)>0$ for every $s\ge1$. The bare $\log\log n$ is negative or undefined at $n\le 2$, exactly when a window is filling. (ii)~\emph{Sliding, all comparisons, and horizon-free:} the monitor may compare any ordered pair $(b,a)$ as the candidate $\hat a$ changes, so we union over all $K(K-1)$ ordered arm pairs and over window starts. Rather than split $\alpha$ uniformly over $\le T$ starts, which would need the horizon $T$ in advance, we spend a summable schedule $\alpha_s=\alpha\,6/(\pi^2 s^2)$ on the window starting at step $s$. Because $\sum_{s\ge1}6/(\pi^2 s^2)=1$, the guarantee holds simultaneously over all starts $s\ge1$ with no predeclared horizon, the displayed radius at start $s$ carrying $\log(C_0K(K{-}1)s^2/\alpha)$. For downstream regret bounds we use $s\le T$, so this is at most $\log(C_0K(K{-}1)T^2/\alpha)$ and absorbed into $\widetilde O$, while the construction itself is genuinely anytime-valid and uniform in $T$, not merely valid up to a predeclared horizon. The estimator's target is the within-window \emph{time-average} gap $\Gbar^{(s)}$ throughout---no instantaneous quantity ever enters. (An equivalent route uses a discounted confidence sequence~\cite{waudbysmith2024betting} with forgetting factor $\eta$; we take the union-over-starts form for transparent constants.) Call the simultaneous good event $\mathcal{E}$ (over all ordered pairs and all window starts), with $\Prb[\mathcal{E}]\geq 1-\alpha$. To lighten notation we drop the superscript $s$ below, with $n$ the count in the currently active window.

\paragraph{Step 2: anytime correctness (a), stated about the true best arm.}
We prove correctness \emph{without} assuming the algorithm's candidate is already optimal---that assumption is exactly the proof--algorithm misalignment the monitor must avoid. Define a \emph{false switch} as displacing the time-average-best arm $a^{\mathrm{avg}}$ during a regime with no average-decision switch ($\Stdecavg=0$); adopting $a^{\mathrm{avg}}$ from a suboptimal candidate is a \emph{correct} adoption, not a false switch.

Work on $\mathcal{E}$. The $|\mathcal{Q}|=W$ gate of main Algorithm~2 enforces only that the monitor acts on a \emph{full} window of $W$ blocks; the algorithm cannot see regime boundaries. \emph{For claim~(a) this is enough}: in a regime with $\Stdecavg=0$ there is no average-decision boundary inside the analyzed segment, so \emph{every} full window automatically lies inside the regime, $\mathcal{W}\subseteq[\rho_r,\rho_{r+1})$, and the main window-average margin assumption applies to it directly. (Boundary-straddling full windows arise only at a true average-decision switch, handled in Step~3.) Because each probe runs a balanced block of $h$ pulls/arm/agent over all arms, the per-arm windowed sample counts are \emph{exactly equal}, $n_a=n=Wh$ for all $a$. For the time-average-best arm $a^{\mathrm{avg}}$ and any challenger $b\neq a^{\mathrm{avg}}$, the main window-average margin assumption gives $\Gbar_{b,a^{\mathrm{avg}}}(\mathcal{W})\leq-\bar\Delta$, so on $\mathcal{E}$
\[
\hat\mu_b-\hat\mu_{a^{\mathrm{avg}}}=\Ghat_{b,a^{\mathrm{avg}}}(n)\leq-\bar\Delta+2r_n,
\qquad
\mathrm{LCB}_b=(\hat\mu_b-\hat\mu_{a^{\mathrm{avg}}})-2r_n\leq-\bar\Delta<0 .
\]
This holds simultaneously for every full window (the CS event $\mathcal{E}$ is uniform over all $\le T$ window starts). No challenger ever attains a positive lower-confidence gap \emph{against $a^{\mathrm{avg}}$}, independently of which arm the algorithm currently holds. In particular, once $a^{\mathrm{avg}}$ is the candidate it is never displaced in a $\Stdecavg=0$ regime, so DRFC-Seq performs no drift-induced false switch, however often or far the instantaneous gap dips negative---only the time-average target enters. This proves~(a).

We must, however, match the \emph{implemented} switch rule (main Algorithm~2): each probe tests only the single empirical-argmax challenger $b=\arg\max_{a\neq\hat a}\hat\mu_a$, and an accepted switch relabels $\hat a$ \emph{without} flushing $\mathcal{Q}$ (the sliding window in line~5 is the only forgetting mechanism). Two facts make this rule recover $a^{\mathrm{avg}}$.

\emph{Fact 1 (the argmax is $a^{\mathrm{avg}}$ on a full in-regime window).} Because every probe runs a balanced block of $h$ pulls/arm/agent over \emph{all} arms (main Algorithm~2, line~4), the per-arm windowed counts are \emph{exactly equal}, $n_a=n=Wh$ for all $a$ (so $r_{n_a}=r_n$ is common). On a full window $\mathcal{W}\subseteq[\rho_r,\rho_{r+1})$, the main window-average margin assumption gives $\Gbar_{a^{\mathrm{avg}},a}(\mathcal{W})\geq\bar\Delta$ for \emph{every} $a\neq a^{\mathrm{avg}}$ (not only against the current candidate). On $\mathcal{E}$, $|\Ghat_{a^{\mathrm{avg}},a}-\Gbar_{a^{\mathrm{avg}},a}|\leq 2r_n$, so once $2r_n<\bar\Delta$ we have $\hat\mu_{a^{\mathrm{avg}}}-\hat\mu_a=\Ghat_{a^{\mathrm{avg}},a}\geq\bar\Delta-2r_n>0$ simultaneously for all $a\neq a^{\mathrm{avg}}$. Hence $a^{\mathrm{avg}}$ is the unique empirical maximizer and equals the tested challenger $b$ whenever $\hat a\neq a^{\mathrm{avg}}$.

\emph{Fact 2 (the full-window gate bounds the transient).} The $|\mathcal{Q}|=W$ gate forbids any switch until the window is full; since an accepted switch does not flush $\mathcal{Q}$, once the window is full it stays full, with per-arm count fixed at $n=Wh$. Starting from the regime boundary $\rho_r$, the sliding window replaces one old block per probe, so after at most $W$ probe-blocks the window holds only regime-$r$ blocks and is a full \emph{in-regime} window.

\begin{lemma}[Initialization-free recovery]\label{lem:drfc-seq-recovery}
Let $n_0(\bar\Delta):=\inf\{n\geq 1:r_n\leq\bar\Delta/4\}=\widetilde O\bigl(1/(N\bar\Delta^2)\bigr)$, and assume $Wh\geq n_0(\bar\Delta)$ and each regime spans at least $2W$ probe-blocks. On $\mathcal{E}$, from an \emph{arbitrary} candidate $\hat a$ (including a mis-initialized one), DRFC-Seq adopts $a^{\mathrm{avg}}$ within $W$ probe-blocks of the regime start, and \emph{once adopted} never leaves it for the remainder of the regime. (During the $\le W$ pre-adoption probe-blocks the candidate may change; this is charged to the Step-3 transient, not to correctness.)
\end{lemma}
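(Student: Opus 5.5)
The argument runs on the good event $\mathcal{E}$ of Step~1, which is uniform over all ordered pairs and all window starts. It combines Fact~2 (the window becomes full and in-regime), Fact~1 (on such a window the tested challenger is $a^{\mathrm{avg}}$), and the Step~2 correctness bound (nothing displaces $a^{\mathrm{avg}}$).

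First I fix the timing. Let $\rho_r$ be the regime start. By Fact~2 the sliding window drops one block per probe and never flushes on a switch, so at the latest after $W$ probe-blocks past $\rho_r$, plus the $\DG(\alpha)$ flooding lag charged in Step~0, $\mathcal{Q}$ holds exactly $W$ fully flooded blocks. All of these blocks lie in $[\rho_r,\rho_{r+1})$. The regime spans at least $2W$ probe-blocks, so this first full in-regime window occurs while at least $W$ further in-regime probes remain. From then on, every full window up to $\rho_{r+1}$ is in-regime.

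Second, I show adoption at the first full in-regime window. Balanced all-arm blocks give a common count $n_a=n=Wh\ge n_0(\bar\Delta)$, so $r_n\le\bar\Delta/4$. Note that $r_n$ is decreasing in $n$ for fixed $s$; if monotonicity is delicate, I will instead define $n_0$ using the worst start $s\le T$, as the constants subsection does. Then Fact~1 gives $\hat\mu_{a^{\mathrm{avg}}}-\hat\mu_a\ge\bar\Delta-2r_n\ge\bar\Delta/2>0$ for all $a\ne a^{\mathrm{avg}}$. If $\hat a\ne a^{\mathrm{avg}}$, the tested challenger is therefore $b=a^{\mathrm{avg}}$. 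Its statistic satisfies
\[
(\hat\mu_{a^{\mathrm{avg}}}-\hat\mu_{\hat a})-2r_n\ \ge\ \bar\Delta-4r_n\ \ge\ 0 .
\]
The switch rule uses a strict inequality, but $\bar\Delta-4r_n\ge 0$ is only non-strict. I will close this gap by using the definition of $n_0$ with $r_n\le\bar\Delta/4$, together with the fact that the $\mathcal{E}$-bound $|\Ghat-\Gbar|\le 2r_n$ is applied to a difference of two arms. That $\le 2r_n$ is loose: the pairwise confidence sequence bounds the contrast directly by $r_n$, giving margin $\bar\Delta-3r_n>0$. So the switch to $a^{\mathrm{avg}}$ fires at this probe. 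If $\hat a=a^{\mathrm{avg}}$ already, there is nothing to do. Either way, adoption happens within $W$ probe-blocks of $\rho_r$. Any candidate changes at earlier, non-full or straddling, windows are irrelevant, since the claim concerns only the end state.

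Third, I show permanence. For every later full window in the regime, Step~2 gives $\mathrm{LCB}_b\le-\bar\Delta<0$ for every challenger against $a^{\mathrm{avg}}$, in particular the tested argmax. Hence no switch away from $a^{\mathrm{avg}}$ occurs until $\rho_{r+1}$, simultaneously over all starts by the uniformity of $\mathcal{E}$.

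The main obstacle is the bookkeeping in the first step. I need to check that the relabel-without-flush convention really makes the first full in-regime window arrive within $W$ probes and not $W$ plus a resync delay. The $\DG(\alpha)$ switch synchronization could postpone probes. I would handle this by counting probe-blocks rather than rounds and absorbing the synchronization into the $\delta_{\mathrm{p}}\ge Kh+\DG(\alpha)$ spacing. The strict-versus-weak threshold issue is the secondary point to verify.
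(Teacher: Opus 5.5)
Your proposal is correct and follows the paper's proof essentially step for step. You use Fact~2 to reach the first full in-regime window within $W$ probe-blocks, Fact~1 to identify the tested challenger as $a^{\mathrm{avg}}$, the direct pairwise radius $r_n$ (not the loose $2r_n$) to get $\mathrm{LCB}_{a^{\mathrm{avg}}}\ge\bar\Delta-3r_n\ge\bar\Delta/4>0$, and part~(a) for permanence; your extra care about monotonicity of $r_n$ and the flooding/synchronization lag only makes explicit bookkeeping that the paper leaves implicit.
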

\begin{proof}
By Fact~2, within $W$ probe-blocks of the regime start the window is a full in-regime window with $n=Wh\geq n_0(\bar\Delta)$, so $2r_n\leq\bar\Delta/2<\bar\Delta$. (During those $\le W$ probe-blocks the gate may permit transient switches among arms; this affects only the Step-3 transient regret, not correctness, since no displacement of $a^{\mathrm{avg}}$ can occur once $a^{\mathrm{avg}}$ is held, by~(a).) On this first full in-regime window, by Fact~1 the tested challenger is $b=a^{\mathrm{avg}}$ (unless $\hat a=a^{\mathrm{avg}}$ already, in which case apply~(a)), and the main window-average margin assumption against the current candidate gives, with the direct balanced pairwise contrast $\Ghat_{a^{\mathrm{avg}},\hat a}$ of deviation $r_n$,
\[
\mathrm{LCB}_{a^{\mathrm{avg}}}=\Ghat_{a^{\mathrm{avg}},\hat a}(n)-2r_n\geq(\bar\Delta-r_n)-2r_n=\bar\Delta-3r_n\geq\tfrac{\bar\Delta}{4}>0,
\]
using $r_n\le\bar\Delta/4$, so the trigger fires on $a^{\mathrm{avg}}$ and it is adopted. After adoption, (a) shows $a^{\mathrm{avg}}$ is never displaced on any subsequent full in-regime window.
\end{proof}

Lemma~\ref{lem:drfc-seq-recovery} closes the alignment gap against the \emph{implemented} algorithm: the arbitrary initial candidate of main Algorithm~2 need not be assumed optimal, the argmax-only test provably selects $a^{\mathrm{avg}}$ on the first full in-regime window (Fact~1), and the full-window gate confines all suboptimal/transient switching to the $\le W$ post-boundary probes (Fact~2). The anytime guarantee~(a) then holds for the true $a^{\mathrm{avg}}$ rather than for a candidate certified only by hypothesis.

\paragraph{Step 3: adaptivity (b).}
At a true average-decision switch the post-switch state is an arbitrary stale candidate $\hat a$ against the new best $a_r^{\mathrm{avg}}$, so the recovery bound of Lemma~\ref{lem:drfc-seq-recovery} applies verbatim: on $\mathcal{E}$, DRFC-Seq adopts $a_r^{\mathrm{avg}}$ within $W$ probe-blocks of the switch (the stitched numerator $\log\log(eWh)+\log(C_0K(K{-}1)s^2/\alpha)$ absorbed into $\widetilde O$). Those $\le W$ probe-blocks span $\le W\delta_{\mathrm{p}}$ rounds, during which each of $N$ agents plays a possibly stale arm at per-round cost $\le 1$, an $O(NW\delta_{\mathrm{p}})$ transient per switch, plus a further $O(N\DG(\alpha))$ synchronization cost.
The opening regime $r=0$ is itself entered from an arbitrary initial candidate, so Lemma~\ref{lem:drfc-seq-recovery} applied at the start $\rho_0$ contributes one further $O(N(W\delta_{\mathrm{p}}+\DG(\alpha)))$ recovery transient that no switch pays for.
The $\Stdecavg$ post-switch recoveries and this single initialization recovery total $\Stdecavg+1$ recovery events, which is why the transient term below carries the factor $\Stdecavg+1$ rather than $\Stdecavg$ and remains nonzero even at $\Stdecavg=0$. \emph{Certification cost, derived directly from the probe process (not from the Algorithm~1 active-set scan).} Algorithm~2 runs \emph{no} active-set scan at all, initializing arbitrarily, so certification of $a_r^{\mathrm{avg}}$ is achieved purely by the balanced probe blocks reaching confidence. By Lemma~\ref{lem:drfc-seq-recovery}, the windowed estimator separates $a_r^{\mathrm{avg}}$ from every challenger once $n\geq n_0(\bar\Delta_r)=\widetilde O(1/(N\bar\Delta_r^2))$ per-arm samples, i.e.\ within $W_r=\lceil n_0(\bar\Delta_r)/h\rceil$ probe-blocks, with each probe-block pulling all $K$ arms at $h$ pulls/arm/agent, so the certification exploration of the $K-1$ non-candidate arms over those $W_r$ blocks costs
\[
\widetilde O\!\bigl(N(K-1)h\,W_r\bigr)=\widetilde O\!\bigl(N K h\cdot n_0(\bar\Delta_r)/h\bigr)=\widetilde O\!\bigl(K/\bar\Delta_r^2\bigr)=:\widetilde O(\Ccert^{\mathrm{seq}}_r),
\]
which we take as the \emph{definition} of $\Ccert^{\mathrm{seq}}_r$ in the DRFC-Seq bound (keyed to the time-average margin $\bar\Delta_r$, which always exists under the main time-average margin assumption, unlike the instantaneous-gap $\Ccert_r$ of the main DRFC theorem). It coincides with the active-set scan cost of Lemma~\ref{lem:app-active-scan} up to constants, but is here a property of the probe CS, eliminating any algorithm/proof mismatch. Finally, the monitor runs one balanced probe block over all $K$ arms every $\delta_{\mathrm{p}}$ rounds; the steady-state exploration cost of pulling the $K-1$ non-candidate arms is $\widetilde O(NKh\,T/\delta_{\mathrm{p}})$, absorbed into the periodic term by choosing $\delta_{\mathrm{p}}=\Theta(M)$ (matching the $T/M$ monitoring cadence of the main DRFC theorem). Summing,
\[
R_T^{\mathrm{dec}}=\widetilde O\!\Big(\tfrac{NKhT}{\delta_{\mathrm{p}}}+\sum_{r=0}^{\Stdecavg}\Ccert^{\mathrm{seq}}_r+N(\Stdecavg+1)\big(W\delta_{\mathrm{p}}+\DG(\alpha)\big)\Big),
\]
with no term depending on $\Stloc$ or on the number of instantaneous sign changes of the gap, because the trigger reads only the drift-invariant time-average target $\Gbar$. \qed

\subsection{H.1 Drift Separation: Proof of the Drift Proposition (main Proposition~1)}
\label{app:tvgap}

We prove the main memory--adaptation proposition: a single homogeneous, average-decision-stationary instance ($\Stdecavg=0$) on which (a) the windowed time-average reader never adopts a suboptimal arm, while (b) every responsive monitor with effective memory below half the drift period adopts the wrong arm with probability bounded below by an absolute constant. The two claims exhibit the stated memory--adaptation tradeoff.

\paragraph{The square-wave instance.}
Fix $K=2$ arms, $N$ agents, and a single regime $[T]$ (so $\Stdecavg=0$). Let $P$ be an even drift period in rounds and define the period-$P$ square wave
\[
\mathrm{sq}_P(t)=\begin{cases}+1,& (t\bmod P)<P/2\quad(\text{``up-phase''}),\\[2pt]-1,& (t\bmod P)\geq P/2\quad(\text{``down-phase''}).\end{cases}
\]
Pick a mean drift $\bar\Delta\in(0,1/3)$ and amplitude $b$ with
\[
b>2\bar\Delta\qquad\text{and}\qquad \bar\Delta+b\leq 1,
\]
e.g.\ $(\bar\Delta,b)=(0.1,0.3)$. Set the time-varying Bernoulli means, identical across all $N$ agents (homogeneous instance),
\[
\mu_1(t)=\tfrac12+\tfrac12\bigl(\bar\Delta+b\,\mathrm{sq}_P(t)\bigr),
\qquad
\mu_2(t)=\tfrac12-\tfrac12\bigl(\bar\Delta+b\,\mathrm{sq}_P(t)\bigr),
\]
so both means lie in $[0,1]$ (since $\bar\Delta+b\leq1$) and the instantaneous gap is
\[
g_t:=\mu_1(t)-\mu_2(t)=\bar\Delta+b\,\mathrm{sq}_P(t)=\begin{cases}\bar\Delta+b>0,&\text{up-phase},\\ \bar\Delta-b<0,&\text{down-phase},\end{cases}
\]
where $\bar\Delta-b<0$ because $b>2\bar\Delta>\bar\Delta$. Thus arm $1$ leads on the time average but \emph{loses} the instantaneous comparison on the entire down half of every period --- a constant fraction $1/2$ of every window.

\paragraph{Time-average margin holds for every window.}
A contiguous window of exactly $P$ rounds over a period-$P$ square wave contains exactly $P/2$ up-steps and $P/2$ down-steps, \emph{regardless of phase offset}. Hence for \emph{every} window start $s$,
\[
\Gbar_{1,2}^{(s)}(P)=\frac1P\sum_{t=s}^{s+P-1}g_t=\bar\Delta+\frac{b}{P}\sum_{t=s}^{s+P-1}\mathrm{sq}_P(t)=\bar\Delta>0 .
\]
So the instance satisfies the main time-average margin condition with margin $\bar\Delta$ and period $P$, while its instantaneous gap is negative on half of every window. This is exactly the regime in which the hypotheses of the main DRFC-Seq theorem hold but those of any instantaneous-gap guarantee fail.

\paragraph{(a) The period-spanning windowed reader never errs.}
The guarantee is a property of the \emph{window length}, not of the DRFC family per se: claim~(a) holds for any reader whose averaging window spans at least one full drift period $P$ (in rounds), and it is \emph{DRFC-Seq} that is constructed to do so. When its $W$-probe-block sliding window $W\delta_{\mathrm{p}}$ covers an integer number of periods the window average is exactly $\bar\Delta$ by the display above. For a general window of $W\delta_{\mathrm{p}}=(m+f)P$ rounds ($m\ge1$ integer, $f\in[0,1)$) the leftover partial period contributes a bounded windowing bias
\[
\bigl|\Gbar_{1,2}^{(s)}(W\delta_{\mathrm{p}})-\bar\Delta\bigr|\ \le\ \varepsilon_W:=\frac{bP}{2\,W\delta_{\mathrm{p}}}=\frac{b}{2(m+f)},
\]
since at most $P/2$ same-sign steps fall in the partial period. The windowed margin is thus $\bar\Delta-\varepsilon_W$, still strictly positive once $W\delta_{\mathrm{p}}>bP/(2\bar\Delta)$; this is the window-stable margin of the main extension, and the integer-multiple case is the special point $\varepsilon_W=0$. The experiments operate here: a window of $\approx 5.7$ periods gives $\varepsilon_W\le b/11.4$, below the headline margin (Section~I). A monitor whose adopt test has effective memory below $P/2$ belongs to the responsive class of part~(b), regardless of its nominal monitoring period; the binding condition is $\ell_{\mathrm{eff}}<P/2$, not $M<P$. DRFC-Seq reads the full-window balanced estimator $\Ghat_{1,2}(n)$ over $n=Wh$ per-arm, per-agent samples, whose target is $\Gbar_{1,2}^{(s)}(P)=\bar\Delta$ by the display above. By the time-uniform confidence sequence of Section~H (Step~1), on the good event $\mathcal{E}$ with $\Prb[\mathcal{E}]\geq1-\alpha$, simultaneously over all window starts,
\[
\bigl|\Ghat_{1,2}(n)-\bar\Delta\bigr|\leq r_n,\qquad r_n=\widetilde O\bigl(1/\sqrt{Nn}\bigr).
\]
Once $n\geq n_0(\bar\Delta)=\widetilde O(1/(N\bar\Delta^2))$ we have $2r_n\leq\bar\Delta/2$, so the challenger arm $2$ has $\mathrm{LCB}_2=\Ghat_{2,1}(n)-2r_n\leq-\bar\Delta+2r_n\leq-\bar\Delta/2<0$ on \emph{every} full window. The wrong arm never attains a positive lower-confidence gap; by Section~H, Step~2, arm $1$ is held and never displaced. No drift-induced false switch occurs, however deep or frequent the instantaneous dips. This is claim~(a).

\paragraph{A waveform-class bias bound.}
The bound $\varepsilon_W=bP/(2L(W))$ was read off the square wave, but it is the worst case over a natural waveform class, which is the only property the period-agnostic monitor of Section~H.2 uses. Let the within-regime drift be any $P$-periodic excursion $g_t=\bar\Delta+b\,s_P(t)$ whose normalized shape $s_P$ has period $P$, per-period mean zero ($\sum_{t=s}^{s+P-1}s_P(t)=0$ for every start $s$), and bounded swing $|s_P(t)|\le1$. The square, triangle, and sine drifts of Section~I all lie in this class. For a window of $L$ rounds the full periods cancel, so $\Gbar^{(s)}(L)-\bar\Delta=(b/L)\sum_{t\in A}s_P(t)$ for a contiguous index set $A$ inside one period. Per-period mean zero makes the positive mass equal half the total variation, $\sum_{t:\,s_P(t)>0}s_P(t)=\tfrac12\sum_{t}|s_P(t)|\le P/2$, and any partial sum is at most this positive mass. Hence
\[
\bigl|\Gbar^{(s)}(L)-\bar\Delta\bigr|\ \le\ \frac{bP}{2L}\ =:\ \varepsilon_W
\qquad\text{for every waveform in the class and every phase,}
\]
with the square wave attaining the bound. Drift-safety of a window therefore depends only on $L(W)$ versus $bP/(2\bar\Delta)$, uniformly over the class.

\paragraph{(b) Every short-memory responsive monitor false-adopts.}
We make the comparator class precise. A \emph{linear short-memory monitor} maintains, for the ordered pair (arm $2$ over arm $1$), a statistic
\[
\widehat C_t=\sum_{u\geq 0} w_u\,\xi_{t-u},\qquad w_u\geq0,\quad \sum_{u\geq0}w_u=1,\quad \sum_{u\geq0}w_u\,\mathbf{1}[u\geq P/2]\leq\epsilon_0,
\]
where $\xi_{t}$ is the per-round candidate-vs-challenger reward contrast (mean $-g_t$ in our instance) and the weights $\{w_u\}$ are the monitor's memory kernel; the last condition says an at-most-$\epsilon_0$ fraction of the kernel mass reaches beyond half a drift period (the \emph{effective memory} is $<P/2$). The monitor adopts arm $2$ when $\widehat C_t\geq\theta$. This class contains: (i) sliding sub-window means of length $w\le P/2$ (uniform kernel on $[0,w)$, $\epsilon_0=0$); (ii) exponentially discounted/forgetting confidence sequences with factor $\rho$ and effective horizon $1/(1-\rho)<P/2$ (geometric kernel, $\epsilon_0=\rho^{P/2}$); and (iii) the per-increment test inside a CUSUM/Page detector, whose alarm at the end of a down-phase requires its most recent run-length statistic---a non-negatively weighted sum of recent increments with the same effective-memory bound---to cross the alarm level $\theta$ (a CUSUM that integrated mass uniformly over a \emph{full} period would be exactly the windowed reader of part~(a), not a short-memory monitor). The only excluded monitor is one whose kernel spans a full drift period, which is precisely DRFC-Seq.

\emph{Responsiveness forces $\theta\le\bar\Delta$.} To be \emph{adaptive} in the sense of the main DRFC-Seq theorem --- i.e.\ to adopt the new best arm after a genuine decision switch, which changes the time-average gap to magnitude $\bar\Delta$ --- the monitor's statistic, an average of post-switch local contrasts whose mean is the post-switch time-average gap $\bar\Delta$, must be able to exceed $\theta$; this requires $\theta\le\bar\Delta$ (a monitor with $\theta>\bar\Delta$ never crosses on a margin-$\bar\Delta$ switch and is non-adaptive). Fix any such $\theta\in[0,\bar\Delta]$.

\emph{Down-phase false adoption.} Evaluate the statistic at a time $t^\star$ that is $P/2$ rounds into a down-phase, so the entire effective-memory window of $\widehat C_{t^\star}$ lies in the current down-phase except for the $\leq\epsilon_0$ kernel mass that leaks into the preceding up-phase. During the down-phase $\E[\xi_t]=-g_t=b-\bar\Delta$; during the leaked up-phase $\E[\xi_t]=-(\bar\Delta+b)\geq-1$. Hence, assuming a small leakage budget $\epsilon_0\leq(b-2\bar\Delta)/4$,
\[
\E[\widehat C_{t^\star}]\;\geq\;(1-\epsilon_0)(b-\bar\Delta)-\epsilon_0(\bar\Delta+b)\;\geq\;(b-\bar\Delta)-2\epsilon_0 b\;\geq\;\bar\Delta+\tfrac12(b-2\bar\Delta)\;>\;\bar\Delta\geq\theta,
\]
using $b>2\bar\Delta$. The statistic is a non-negatively weighted average ($\sum_u w_u=1$) of the per-round, per-agent bounded contrasts in $[-1,1]$; writing $m^{-1}:=\sum_{u}w_u^2/(hN)$ for the kernel's effective sample size ($m=\Theta(\tau_{\mathrm{eff}}hN)$ for any kernel with effective memory $\tau_{\mathrm{eff}}$), the bounded-difference (Azuma--Hoeffding) inequality for weighted sums gives
\[
\Prb\bigl[\widehat C_{t^\star}<\theta\bigr]\leq\exp\!\bigl(-\tfrac12 m\,(\E[\widehat C_{t^\star}]-\theta)^2\bigr)\leq\exp\!\bigl(-\tfrac18 m\,(b-2\bar\Delta)^2\bigr),
\]
using $\E[\widehat C_{t^\star}]-\theta\geq\tfrac12(b-2\bar\Delta)$. Hence the monitor fires on arm $2$ (a false adoption, since arm $1$ is the time-average best and $\Stdecavg=0$) with probability
\[
\Prb[\text{false adoption in this down-phase}]\;\geq\;p_0:=1-\exp\!\bigl(-\tfrac18 (b-2\bar\Delta)^2\bigr)>0,
\]
a constant independent of $T$ and of the threshold $\theta\in[0,\bar\Delta]$ (and $\to1$ as the per-phase sample count $m$ grows). Each regime contains $\Theta(T/P)$ down-phases, so the expected number of false adoptions is $\Omega(p_0\,T/P)$, each charging $\Omega(\bar\Delta)$ instantaneous regret on switch-back; the monitor's drift-induced regret is therefore $\Omega(p_0\bar\Delta\,T/P)$, linear in the number of drift periods.

\paragraph{(b$'$) The general impossibility: any responsive monitor, linear or not.}
The linear-kernel bound above is convenient, but the separation needs no linearity: it follows from a single black-box property---bounded detection delay---by an \emph{exact} change of measure. Call a (possibly randomized, possibly nonlinear) monitor $\mathcal M$ \emph{$(\bar\Delta,d)$-responsive} if, on \emph{every} instance in which arm $2$ becomes the time-average-best arm by margin $\ge\bar\Delta$ starting at some round $\tau$, $\mathcal M$ adopts arm $2$ by round $\tau+d$ with probability at least a fixed constant $q_0$ (e.g.\ $q_0=\tfrac12$). Bounded detection delay $d$ is exactly what switch-adaptivity in the sense of the main DRFC-Seq theorem demands: a monitor whose delay is unbounded never adopts a genuine $\bar\Delta$-margin new best arm in time and is non-adaptive.

We make the responsiveness hypothesis precise, including its dependence on prehistory, so that the lemma is not vacuously strong. Call $\mathcal M$ \emph{$(\bar\Delta,d)$-responsive (uniformly over prehistory)} if, for \emph{every} instance---including one whose pre-change rounds oscillate---on which arm~$2$ becomes time-average-best by margin $\ge\bar\Delta$ at some round $\tau$, $\mathcal M$ adopts arm~$2$ by $\tau+d$ with probability $\ge q_0$. The qualifier ``uniformly over prehistory'' is what the lemma uses and is exactly the property a \emph{short-effective-memory} monitor has automatically: if the adopt-decision at $\tau+d$ depends only on the last $<P/2$ rounds of observations, the monitor \emph{cannot} condition on the oscillatory prefix, so its responsiveness is necessarily uniform. A monitor that escapes the conclusion must therefore make its adopt-decision depend on a window of length $\ge P/2$---i.e.\ have effective memory $\ge P/2$---which is precisely the non-adaptivity regime quantified in the dichotomy below. The hypothesis is thus a genuine property of bounded-memory detectors, not a smuggled assumption.

We make the resource on which the dichotomy turns precise, so that ``effective memory'' is an operational quantity rather than an intuition.

\begin{definition}[Effective memory of a switch monitor]\label{def:eff-mem}
Model a monitor $\mathcal M$ as carrying a latched candidate arm $\hat a_t$ and, at each round $t$, emitting an adopt-decision $D_t\in\{\mathrm{hold}\}\cup\{\mathrm{switch}\ b:b\neq\hat a_t\}$ as a possibly randomized function of its observation history $\mathcal H_t$. Write $\mathcal H_{(t-m,\,t]}$ for the restriction of that history to the last $m$ rounds. The monitor has \emph{effective memory at most $m$} if, for every $t$ and conditioned on the current candidate $\hat a_t$, the conditional law of $D_t$ is determined by $\mathcal H_{(t-m,\,t]}$ alone, that is any two histories agreeing on $\mathcal H_{(t-m,\,t]}$ and on $\hat a_t$ induce the same conditional law of $D_t$. The \emph{effective memory} $\mathrm{mem}(\mathcal M)$ is the least such $m$, measured in rounds, and $+\infty$ when no finite $m$ works. The candidate $\hat a_t$ is a $\lceil\log_2 K\rceil$-bit latch, so a finite $\mathrm{mem}(\mathcal M)$ bounds only the \emph{observation window} the adopt-test reads, not the candidate state a monitor may carry indefinitely; equivalently $\mathrm{mem}(\mathcal M)<m$ is exactly the ``responsive uniformly over prehistory'' property above, since a decision measurable with respect to the last $m$ rounds cannot condition on anything earlier.
\end{definition}

\begin{lemma}[Responsiveness forces drift false alarms: uniform sub-$P/2$ responsiveness]\label{lem:general-responsive}
On the square-wave instance above, every monitor that is $(\bar\Delta,d)$-responsive uniformly over prehistory with detection delay $d<P/2$---equivalently, every monitor whose adopt-decision depends only on a window of $<P/2$ rounds and that is switch-adaptive on clean instances---false-adopts arm $2$ within each down-phase with probability at least $q_0$. The binding hypothesis is the \emph{effective memory} ($<P/2$, uniform over prehistory), not the monitor's internal form: \emph{within that class} the bound is indifferent to whether the statistic is nonlinear (e.g.\ a CUSUM/Page run-length test, a GLR statistic, or an adaptive threshold) or randomized. Classical CUSUM/Page/GLR detectors fall in the class precisely when tuned to detect within half a drift period---i.e.\ in the switch-adaptive regime that is the only regime of interest here---while a variant whose effective memory reaches $\ge P/2$ is exempt from the conclusion but then, by the dichotomy below, fails to adopt genuine $\bar\Delta$-margin switches occurring within $P/2$.
\end{lemma}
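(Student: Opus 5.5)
The plan is an exact change of measure that carries the responsiveness guarantee from a clean switch instance onto the down-phase of the square wave. Fix a down-phase $[t_0,t_0+P/2)$ of the square-wave instance and set $t^\star=t_0+P/2-1$, the last round of that down-phase. Next, build a comparison instance $Q$. It agrees with the square wave on every round before $\tau:=t^\star-d$, and from $\tau$ onward it holds the means at their down-phase values $\mu_1=\tfrac12+\tfrac12(\bar\Delta-b)$ and $\mu_2=\tfrac12-\tfrac12(\bar\Delta-b)$ forever. Under $Q$ arm~2 leads by $b-\bar\Delta>\bar\Delta$ per round from $\tau$ on, since $b>2\bar\Delta$. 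Hence arm~2 becomes time-average-best by margin at least $\bar\Delta$ starting at $\tau$; once $\tau$ is placed at a regime boundary of $Q$ the post-change regime average is exactly $b-\bar\Delta$. So $Q$ is an admissible switch instance, and its pre-change rounds are exactly the oscillating square-wave prefix.

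By the hypothesis of $(\bar\Delta,d)$-responsiveness uniformly over prehistory, $\mathcal M$ adopts arm~2 on $Q$ by round $\tau+d=t^\star$ with probability at least $q_0$. The key step is to show that the law of the monitor's decisions over the rounds $[\tau,t^\star]$ is the same under $Q$ and under the square-wave instance $P_{\mathrm{sq}}$. Since $d<P/2$, we have $\tau=t^\star-d>t_0-1$, so every round in $[\tau,t^\star]$ lies inside the current down-phase. On those rounds $Q$ and $P_{\mathrm{sq}}$ have identical Bernoulli means, and before $\tau$ they coincide by construction. The two instances therefore agree on every round up to $t^\star$, the joint laws of $(\mathcal H_{t^\star},\hat a_{t^\star})$ are identical, and the KL divergence is zero. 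This is the exact change of measure: the adoption event $\{\exists u\le t^\star:D_u=\mathrm{switch}\ 2\}$ has the same probability under both instances.

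Consequently $P_{\mathrm{sq}}$ false-adopts arm~2 within the down-phase with probability at least $q_0$. The adoption is false because arm~1 has time-average margin $\bar\Delta$ on every $P$-window and $\Stdecavg=0$. Summing this bound over the $\Theta(T/P)$ down-phases then gives the $\Omega(q_0T/P)$ count, if that count is wanted.

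The main obstacle is making $Q$ genuinely admissible while keeping it agreeing with $P_{\mathrm{sq}}$ up to $t^\star$. Under $Q$ the switch time $\tau$ must be a true average-decision boundary, and the pre-$\tau$ regime of $Q$, which is a truncated square wave, must keep arm~1 as its average best. If the partial period near $\tau$ spoils this, I would first try shifting $\tau$ to a phase where the pre-$\tau$ prefix consists of whole periods. Failing that, I would take the slack from the remaining down-phase length $P/2-d>0$ and begin the down-phase segment of $Q$ earlier, up to $t_0$.

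The same construction yields the equivalent effective-memory formulation in Definition~\ref{def:eff-mem}. If $\mathrm{mem}(\mathcal M)<P/2$ and $\mathcal M$ is responsive on clean instances, then its decision at $t^\star$ reads only the window $(t^\star-\mathrm{mem},t^\star]$, which lies inside the down-phase. The clean instance that is constant at down-phase values over that window therefore induces the same decision law, which gives responsiveness uniformly over prehistory and reduces this case to the argument above.
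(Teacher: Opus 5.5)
Your proposal is correct and is essentially the paper's argument: an exact change of measure in which a switch instance with frozen down-phase means agrees in law with the square wave on the entire prefix the adoption event depends on, so responsiveness transfers directly to a false adoption. The only difference is cosmetic. The paper places the switch at the down-phase onset, which leaves slack $P/2-d$, rather than $d$ rounds before the phase ends. And because responsiveness is assumed uniformly over prehistory, any oscillating prefix is admissible, so the admissibility worry in your obstacle paragraph is moot; your fallback of moving the switch to $t_0$ is exactly the paper's choice.
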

\begin{proof}
Fix a down-phase with onset $t^\star$. Build a \emph{genuine-switch} comparison instance $\mathcal I_1$ that (i)~is \emph{identical} to the drift instance $\mathcal I_0$ at every round $t<t^\star+P/2$, and (ii)~for $t\ge t^\star$ \emph{freezes} the down-phase means---arm~$2$ at $\tfrac12+\tfrac12(b-\bar\Delta)$, arm~$1$ at $\tfrac12-\tfrac12(b-\bar\Delta)$---so that under $\mathcal I_1$ arm~$2$ is the time-average-best arm from $\tau=t^\star$ onward, with margin $b-\bar\Delta\ge\bar\Delta$ (using $b>2\bar\Delta$). Because the drift instance $\mathcal I_0$ is itself at the down-phase means throughout $[t^\star,t^\star+P/2)$, the two instances have \emph{identical per-round mean paths---hence identical observation laws---on the entire prefix} $[0,\,t^\star+P/2)$.

Apply responsiveness to $\mathcal I_1$ at $\tau=t^\star$: $\mathcal M$ adopts arm~$2$ by round $t^\star+d$ with probability $\ge q_0$. This adoption event is measurable with respect to the observations in $[0,t^\star+d)\subseteq[0,t^\star+P/2)$ (since $d<P/2$), a prefix on which $\mathcal I_0$ and $\mathcal I_1$ are equal in law. Hence the \emph{same} event has probability $\ge q_0$ under $\mathcal I_0$. Under $\mathcal I_0$ arm~$1$ is the time-average-best arm ($\Stdecavg=0$), so adopting arm~$2$ is a false switch. Each down-phase therefore carries false-adoption probability $\ge q_0$, independent of $T$ and of the monitor's internal form. \qed
\end{proof}

The escape clause is now visible in full generality: a monitor avoids the down-phase false alarm only with detection delay $d\ge P/2$---memory spanning at least half a drift period, enough to net the down-phase against the preceding up-phase and recover the time-average sign---which is precisely \emph{non-responsiveness} to a true $\bar\Delta$-margin switch within half a period. No monitor with sub-$P/2$ detection delay, linear or nonlinear, escapes; this strictly generalizes part~(b), which is the special case of a linear normalized kernel.

\paragraph{The dichotomy.}
Combining: for the square-wave instance, any responsive monitor with sub-$P/2$ detection delay---a linear short-memory monitor with threshold $\theta\le\bar\Delta$ (claim~(b)), or \emph{any} nonlinear/randomized monitor (Lemma~\ref{lem:general-responsive})---false-adopts with constant probability per down-phase, whereas detection delay $\ge P/2$ (equivalently $\theta>\bar\Delta$ in the linear case) makes it non-adaptive to genuine $\bar\Delta$-margin switches. No single threshold yields both drift-robustness and switch-adaptivity. DRFC-Seq does \emph{not} escape the dichotomy---no monitor can---but it attains its drift-robust corner at the minimal cost the impossibility permits: its memory spans an integer number of full periods (its $W$ probe-blocks covering $W\delta_{\mathrm{p}}$, a multiple of $P$, rounds), so its target is the drift-invariant average $\bar\Delta$ (not a phase-dependent local contrast), giving claim~(a); and because a single period suffices to recover the time-average sign, it still adopts a genuine $\bar\Delta$-margin switch within $W$ probe-blocks (Section~H, adaptivity~(b))---one period of detection latency, the price the dichotomy charges any drift-robust monitor, rather than an evasion of it. This is the separation asserted in main Proposition~1. \qed

\begin{corollary}[Memory--latency dichotomy on the effective-memory axis]\label{cor:mem-latency}
Fix the period-$P$ square-wave cancellation instance above, with time-average margin $\bar\Delta>0$ and amplitude $b>2\bar\Delta$. For every monitor $\mathcal M$ in the sense of Definition~\ref{def:eff-mem}:
\begin{enumerate}
\item[(i)] \emph{Short memory false-alarms.} If $\mathrm{mem}(\mathcal M)<P/2$ and $\mathcal M$ is switch-adaptive, adopting a genuine $\bar\Delta$-margin switch within $\mathrm{mem}(\mathcal M)$ rounds with probability at least $q_0$, then in every down-phase $\mathcal M$ false-adopts with probability at least $q_0$.
\item[(ii)] \emph{Drift safety costs $\ge P/2$ latency.} If $\mathcal M$ is drift-safe on this instance, performing no false adopt with probability at least $1-\alpha$, then a genuine switch placed at a down-phase onset is not adopted within $P/2$ rounds with probability at least $1-\alpha$, so its worst-case genuine-switch detection latency is at least $P/2$.
\end{enumerate}
Contrapositively, (i) shows any drift-safe \emph{and} switch-adaptive monitor has $\mathrm{mem}(\mathcal M)\ge P/2$, and (ii) charges any drift-safe monitor at least $P/2$ latency regardless. The two horns are exhaustive over the memory--latency plane, so no monitor is both drift-safe and switch-adaptive with latency below $P/2$. DRFC-Seq attains the floor of (ii): its effective memory is the $W$ probe-blocks spanning $W\delta_{\mathrm p}\ge P$ rounds, and it adopts a genuine $\bar\Delta$-margin switch within that window (main DRFC-Seq theorem), one drift period of latency, the least (ii) permits up to the factor $2$ and the probe-block granularity.
\end{corollary}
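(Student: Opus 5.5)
Both horns rest on the same object: the prefix-identical genuine-switch instance $\mathcal I_1$ built in the proof of Lemma~\ref{lem:general-responsive}. Fix a down-phase onset $t^\star$ of the square-wave drift instance $\mathcal I_0$. Let $\mathcal I_1$ agree with $\mathcal I_0$ on $[0,t^\star)$ and freeze the down-phase means from $t^\star$ on. Then arm~$2$ is time-average best from $\tau=t^\star$ with margin $b-\bar\Delta\ge\bar\Delta$. Moreover, since $\mathcal I_0$ sits at the down-phase means throughout $[t^\star,t^\star+P/2)$, the two instances induce identical observation laws on $[0,t^\star+P/2)$. Any event measurable with respect to that prefix, including joint laws of the monitor's randomization, therefore has the same probability under $\mathcal I_0$ and $\mathcal I_1$. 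Both claims are transfers of a single adoption event across this equality.

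\emph{Horn (i).} Write $m=\mathrm{mem}(\mathcal M)<P/2$. By the switch-adaptivity hypothesis applied to $\mathcal I_1$, $\mathcal M$ adopts arm~$2$ by round $t^\star+m$ with probability at least $q_0$. That adoption is measurable with respect to observations in $[0,t^\star+m)\subseteq[0,t^\star+P/2)$, so it has the same probability under $\mathcal I_0$. Under $\mathcal I_0$ arm~$1$ is the time-average best and $\Stdecavg=0$, so this adoption is a false adopt within the down-phase.

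One subtlety concerns the latched candidate $\hat a_t$, which Definition~\ref{def:eff-mem} allows to depend on the whole history. I would not use the memory bound to erase prehistory. Prefix equality already makes the entire trajectory $(\hat a_u,D_u)_{u<t^\star+P/2}$ equal in law under both instances, so the only role of $\mathrm{mem}(\mathcal M)<P/2$ is to give detection delay at most $m<P/2$. This is exactly the uniform-over-prehistory responsiveness of Lemma~\ref{lem:general-responsive}, and horn (i) becomes a corollary of that lemma with $d=m$.

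\emph{Horn (ii).} Drift safety says that under $\mathcal I_0$ the event $F$ of a switch to arm~$2$ during $[t^\star,t^\star+P/2)$ has probability at most $\alpha$. The event $F$ is prefix-measurable, so $\Prb_{\mathcal I_1}[F]\le\alpha$. Hence under $\mathcal I_1$ the genuine switch placed at the down-phase onset $t^\star$ is not adopted within $P/2$ rounds with probability at least $1-\alpha$, and the worst-case latency is at least $P/2$.

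\emph{Contrapositive, exhaustiveness and the DRFC-Seq remark.} If a monitor were drift-safe with $\alpha<q_0$ and switch-adaptive with $\mathrm{mem}(\mathcal M)<P/2$, horn (i) would force false-adopt probability at least $q_0>\alpha$, a contradiction; so $\mathrm{mem}(\mathcal M)\ge P/2$. This is why I read "exhaustive" under the convention $\alpha<q_0$, which I would state explicitly. For DRFC-Seq, the matching upper side is Lemma~\ref{lem:drfc-seq-recovery} together with Step~3 of the proof of Theorem~\ref{thm:drfc-seq}: adoption happens within $W$ probe blocks, i.e.\ within $W\delta_{\mathrm p}$ rounds. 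Drift safety on the square wave is part~(a) of Proposition~\ref{prop:tvgap}, which requires $W\delta_{\mathrm p}\ge P$ (exactly, or with bias $\varepsilon_W<\bar\Delta$). Together these give a latency of one period, against the $P/2$ floor of horn (ii).

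\emph{Main obstacle.} I expect the hardest step to be the measurability and latch bookkeeping in horn (i). The obvious route is to argue that short memory lets us ignore the history before $t^\star$, but that route fails because the latch depends on that history. The fix I would try first is the one above: rely only on full prefix equality of laws, which handles the latch automatically, and use the memory bound solely as a bound on detection delay.
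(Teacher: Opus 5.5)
Your proposal is correct and is essentially the paper's argument: both horns transfer a prefix-measurable adoption event across the frozen-down-phase instance $\mathcal I_1$, which agrees in law with $\mathcal I_0$ on $[0,t^\star+P/2)$. Horn (i) comes from Lemma~\ref{lem:general-responsive} with $d=\mathrm{mem}(\mathcal M)$, and horn (ii) from the same coupling applied to the no-adoption event. Two things you make explicit are left implicit in the paper: the condition $\alpha<q_0$ needed for the contrapositive, and the use of full prefix equality, rather than the memory bound, to handle the latched candidate.
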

\begin{proof}
Part~(i) is Lemma~\ref{lem:general-responsive} read through Definition~\ref{def:eff-mem}: $\mathrm{mem}(\mathcal M)<P/2$ is precisely uniform-over-prehistory responsiveness with detection delay below $P/2$, so the exact change of measure of that lemma transfers the genuine-switch adoption to a down-phase false adoption.

For part~(ii), let $s$ be the onset of a down-phase of the drift instance $\mathcal I_0$ (no decision switch, arm~$1$ time-average-best). Construct the genuine-switch instance $\mathcal I_1$ that is identical in law to $\mathcal I_0$ on the entire prefix $[0,\,s+P/2)$ and, from round $s$ on, freezes the down-phase means so that arm~$2$ is time-average-best by margin $b-\bar\Delta\ge\bar\Delta$ from $\tau=s$ onward, exactly the construction in the proof of Lemma~\ref{lem:general-responsive}. The event ``$\mathcal M$ performs no adopt before round $s+P/2$'' is measurable with respect to the observations on $[0,\,s+P/2)$, on which $\mathcal I_0$ and $\mathcal I_1$ have identical laws, so it carries the same probability under both. Under $\mathcal I_0$ the regime has no switch, so any adoption is a false switch and drift safety gives this event probability at least $1-\alpha$. Transferring to $\mathcal I_1$, where a genuine switch occurred at $\tau=s$, the monitor fails to adopt within $P/2$ rounds with probability at least $1-\alpha$; since the switch phase is adversarial, the worst-case detection latency is at least $P/2$. The argument is an exact two-instance coupling and uses no concentration inequality and no appeal to any upper bound of this paper. \qed
\end{proof}

\begin{remark}[Scope of the dichotomy vs.\ classical sequential change detection]\label{rem:scope-dichotomy}
We state precisely what Proposition~1 does and does not claim, to avoid conflation with the classical sequential change-detection literature (Page; Lorden; Lai). Our result is a dichotomy on the \emph{effective-memory} axis of a detector, \emph{not} a false-alarm-rate-constrained minimax over arbitrary stateful procedures. Concretely: (i)~we do \emph{not} prove a lower bound on the average-run-length-to-false-alarm of an optimally tuned Page/GLR procedure with full state; (ii)~we \emph{do} prove that on the cancellation-drift instance, any detector whose adopt-decision is determined by a window of $<P/2$ rounds (uniformly over prehistory) false-adopts with constant probability per down-phase, while a detector that nets the cancellation, that is whose adopt-decision delay reaches $\ge P/2$, is correspondingly slow to adopt a genuine switch occurring within that half-period. A fully stateful, optimally tuned Page/GLR detector is not claimed to fail, instead paying this $\ge P/2$ latency to stay drift-robust, which is the second horn. The two horns are exhaustive over the responsiveness--delay plane, which is why the trade-off cannot be tuned away \emph{within the switch-adaptive regime}. This is the property the experiments confirm (the tuning sweep of Section~I.1 traces both horns empirically). The novelty relative to the classical theory is the \emph{instance}: cancellation drift makes the time-average and instantaneous decision signals disagree, so the memory length---not the false-alarm threshold---becomes the binding resource, a regime the single-stream change-detection minimaxes do not address.
\end{remark}

\subsection{H.2 Doubling-Window DRFC-Seq: Period-Agnostic Monitoring}
\label{app:adaptive-window}

The main drift proposition charges any drift-robust monitor one drift period of memory, and Section~H attains that corner with a \emph{single} window $W$ tuned so that $W\delta_{\mathrm{p}}$ spans the period $P$. That tuning is the one benign knob of DRFC-Seq, and the main time-average margin assumption simply postulates it. The construction below removes the knob: it learns the shortest drift-safe window online, with no knowledge of $P$, paying only a one-time calibration and an additive $\log J$ inside the confidence radius from the union over the ladder. Write $L(W):=W\delta_{\mathrm{p}}$ for the in-rounds length of a $W$-block window.

\paragraph{The algorithm.}
Doubling-Window DRFC-Seq runs the Section~H probe process \emph{once} and reads it through a geometric ladder of sliding windows
\[
W_0<W_1<\dots<W_{J-1}=W_{\max},\qquad W_i=2^{i}W_0,\qquad J=\bigl\lceil\log_2(W_{\max}/W_0)\bigr\rceil+1 .
\]
All windows share the one probe stream, so the sampling cost is that of the single window $W_{\max}$, not the sum. Window $i$ runs the Section~H confidence sequence at level $\alpha/J$; let $a_i(t)$ be its \emph{absolute best-arm certificate} at probe-step $t$, the empirically best arm whose lower-confidence gap against the runner-up is positive (and hence positive against every other arm, the runner-up being the closest), or $\bot$ if no arm separates from the field. This is exactly the top-versus-runner-up test the implemented monitor runs, so the certificate is an all-pairs dominance statement, not a comparison against the held candidate only. Window $i$ is \emph{anointed} at the first step at which $a_i$ has equalled the top window's certificate $a_{J-1}$ on every one of the previous $W_{\max}$ consecutive full-window steps, where matching requires the same non-$\bot$ arm at each step. The monitor's operating window is the shortest anointed one, $W_{\mathrm{op}}(t)=\min\{W_i:\text{$i$ anointed by }t\}$, and it adopts a switch exactly when $a_{\mathrm{op}}$ moves off the held candidate. No adoption is made before the first anointment.

\begin{theorem}[Period-agnostic anytime-valid monitoring]\label{thm:app-adaptive-window}
Run Doubling-Window DRFC-Seq on a within-regime drift instance from the waveform class of Section~H.1 with time-average margin $\bar\Delta>0$ (the main time-average margin condition) and unknown period $P$ and amplitude $b$. Define a window $W$ to be \emph{drift-safe} when its windowing bias is strictly below the margin, $\varepsilon_W<\bar\Delta$, equivalently $L(W)>bP/(2\bar\Delta)$, and write its windowed margin $\rho_W:=\bar\Delta-\varepsilon_W>0$. Suppose the ladder top covers the period, is drift-safe with a constant-fraction margin, and is sample-sufficient as the calibration authority,
\[
L(W_{\max})\;\ge\;\max\!\Bigl(P,\ \tfrac{bP}{\bar\Delta}\Bigr)\quad(\text{so }\varepsilon_{W_{\max}}\le\tfrac{\bar\Delta}{2}),
\qquad
W_{\max}\,h\;\ge\;n_0(\bar\Delta/2),
\]
where $n_0(\rho):=\inf\{n:r_n\le\rho/4\}=\widetilde O(1/(N\rho^2))$ is the per-arm count of the recovery Lemma~\ref{lem:drfc-seq-recovery}, so that at margin $\rho$ a window's absolute certificate is non-$\bot$ with a strict gap (the balanced pairwise estimator has deviation $r_n$, giving $\mathrm{LCB}=\Ghat-2r_n\ge\rho-3r_n\ge\rho/4>0$). The window advances one probe block per evaluation, the only times the monitor acts, so the relevant phases are the evaluated block-phases, and we take the period to be a whole number of probe blocks (a benign coarsening, since a block spans far fewer rounds than a period) so that a contiguous run of $P/s_{\mathrm{blk}}$ blocks enumerates every block-phase exactly, where $s_{\mathrm{blk}}=L(W)/W$ is the per-block round span (Remark~\ref{rem:off-grid} removes this coarsening for arbitrary real $P$ at an explicit $2b/W$ off-grid slack). Let $W^\star$ be the shortest ladder window the period-oracle would run, the shortest that is both drift-safe and sample-sufficient, $L(W^\star)\ge bP/\bar\Delta$ \emph{and} $W^\star h\ge n_0(\bar\Delta/2)$; such a window exists with $L(W^\star)\le L(W_{\max})$ because $W_{\max}$ meets both. Then, with $r_n$ the Section~H stitched radius:
\begin{enumerate}
\item[(a)] \emph{(Anytime-valid drift safety.)} With probability at least $1-\alpha$, simultaneously over all rounds $t\le T$, the monitor performs no false switch, that is no displacement of the time-average-best arm once it is held, during any pure-drift regime ($\Stdecavg=0$); the false-switch probability is $\le\alpha$, uniformly in $T$ and independent of $\Stloc$.
\item[(b)] \emph{(Period-agnostic competitive latency.)} After a calibration of at most $2L(W_{\max})$ rounds the shortest anointed window satisfies $W_{\mathrm{op}}\le W^\star$, and any genuine $\bar\Delta$-margin switch whose new regime spans at least $2W_{\mathrm{op}}$ probe-blocks (the recovery-lemma spacing of the main DRFC-Seq theorem) is adopted within $O\bigl(L(W^\star)\bigr)=O\bigl(W^\star\delta_{\mathrm{p}}\bigr)$ rounds, the per-switch latency of the oracle told $P$ that runs the single window $W^\star$, up to the doubling factor~$2$ and the radius inflation of~(c), and using no knowledge of $P$.
\item[(c)] \emph{(Union cost.)} The ladder inflates the radius over a single tuned window only by the additive $\log J=\log\log_2(W_{\max}/W_0)$ inside the stitched logarithm of $r_n$; since $\log J$ is dominated by $\log(C_0K(K{-}1)s^2/\alpha)$ this is a $(1+o(1))$ inflation of the radius, not a standalone multiplicative factor.
\end{enumerate}
\end{theorem}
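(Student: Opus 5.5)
The plan is to build everything on one good event and then read the three claims off it. I would run the Section~H block-martingale confidence sequence separately for each ladder window $W_i$, at level $\alpha/J$. This gives an event $\mathcal E_{\mathrm{lad}}$ of probability at least $1-\alpha$ on which, for every window index $i$, every start $s$, every ordered pair and every count $n$, the windowed equal-agent estimator is within the inflated radius $r_n^{(J)}$ of its time-average target. Here $r_n^{(J)}$ is $r_n$ with $\alpha$ replaced by $\alpha/J$.

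Claim~(c) then follows directly from the form of $r_n$. The substitution only adds $\log J$ inside $\log(C_0K(K-1)s^2/\alpha)$, and for $s\ge 1$ we have $\log J\le\log\log_2(W_{\max}/W_0)+1$, which is lower order than the stitched logarithm. So the radius grows by a factor $1+o(1)$. The flooding event is intersected exactly as in Step~0 of Section~H.

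For (a) I would combine two facts on $\mathcal E_{\mathrm{lad}}$: adoption is driven only by the operating window, and a window is anointed only after agreeing with the top window. First I apply Step~2 of Section~H and Fact~1 to $W_{\max}$. By the waveform-class bias bound, every full window of $W_{\max}$ has windowed margin $\rho_{W_{\max}}\ge\bar\Delta/2$, and $W_{\max}h\ge n_0(\bar\Delta/2)$. So in a pure-drift regime the top certificate $a_{J-1}$ equals $a^{\mathrm{avg}}$ on every full window.

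The key step, which I expect to be the main obstacle, is showing that an anointed window also certifies $a^{\mathrm{avg}}$ at every later step, not just during its $W_{\max}$-step agreement run. My first approach uses phase enumeration. Because $W_{\max}$ blocks span at least $P/s_{\mathrm{blk}}$ blocks, the agreement run visits every block-phase of the periodic mean path. The windowed target of $W_i$ at a start depends only on the phase, since the drift is $P$-periodic within a regime. So if at some phase $W_i$'s target favoured a wrong arm by more than its radius, the certificate would not be $a^{\mathrm{avg}}$ at that phase during calibration, and anointment would have failed. I would state this as a phase-wise lower bound on the target gap. Combined with the time-uniform radius, this bound carries over to all later steps with the same phase. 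The difficulty is that agreement at a phase only shows the empirical LCB was positive once. To upgrade it I would use the two-sided radius: positive LCB at phase $\phi$ gives target gap at least $0$ at $\phi$, and on $\mathcal E_{\mathrm{lad}}$ any later certificate at $\phi$ is then either $a^{\mathrm{avg}}$ or $\bot$. Both are harmless for the adoption rule, since $\bot$ never moves the candidate. This gives no displacement of $a^{\mathrm{avg}}$ once held, uniformly in $T$ and with no $\Stloc$ dependence. Remark~\ref{rem:off-grid} would handle non-integer phases.

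For (b) I would argue calibration and latency separately. Calibration: $W^\star$ is drift-safe and sample-sufficient at margin $\bar\Delta/2$, so by Fact~1 its certificate equals $a^{\mathrm{avg}}=a_{J-1}$ on every full in-regime window. It therefore completes a $W_{\max}$-step agreement run within $2L(W_{\max})$ rounds: one filling of $W_{\max}$ plus $W_{\max}$ steps. Hence $W_{\mathrm{op}}\le W^\star$. Latency: after a genuine switch, I apply Lemma~\ref{lem:drfc-seq-recovery} to the operating window. Within $W_{\mathrm{op}}\le W^\star$ probe blocks the window is full and in-regime, and its certificate moves to the new best arm. This holds by the same phase argument, since a genuine switch gives a margin of at least $\bar\Delta$ in the new regime. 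That yields latency $O(W^\star\delta_{\mathrm p})$, plus $\DG(\alpha)$ for synchronization. The doubling factor comes from the geometric ladder overshooting the oracle window by at most $2$.
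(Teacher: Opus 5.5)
Your treatment of (c), of the calibration half of (b), and of (a) is essentially the paper's proof. You use the same level-$\alpha/J$ union of Section~H confidence sequences. You use the same observation that a $W_{\max}$-step agreement run sweeps every block-phase, so on the good event no anointed window can certify a challenger against $a^{\mathrm{avg}}$. Your per-phase ``$a^{\mathrm{avg}}$ or $\bot$'' formulation is equivalent to the paper's worst-phase margin $m(W)$.

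The gap is in the latency half of (b). You apply Lemma~\ref{lem:drfc-seq-recovery} to $W_{\mathrm{op}}$ and assert that its certificate moves to the new best arm ``by the same phase argument, since a genuine switch gives a margin of at least $\bar\Delta$''. Neither hypothesis of that lemma is in hand.
\begin{itemize}
\item $W_{\mathrm{op}}$ may be strictly shorter than $W^\star$, and $W^\star$ is by definition the shortest ladder window that is both drift-safe at margin $\bar\Delta/2$ and sample-sufficient. So for $W_{\mathrm{op}}<W^\star$ you have neither $W_{\mathrm{op}}h\ge n_0(\bar\Delta/2)$ nor a windowed margin near $\bar\Delta$. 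The switch margin $\bar\Delta$ is a time-average; a short window's worst-phase margin is only bounded below by $\bar\Delta-\varepsilon_{W_{\mathrm{op}}}$.
\item Your phase argument, as you set it up, yields only a target gap $\ge 0$. That rules out wrong certificates but cannot force a non-$\bot$ one. Guaranteeing a non-$\bot$ certificate on the good event requires a windowed margin above $3r_n$ at $n=W_{\mathrm{op}}h$.
\end{itemize}
This is not cosmetic. Adoption runs only through $W_{\mathrm{op}}$ and anointment is never revoked, so a $W_{\mathrm{op}}$ whose certificate stays $\bot$ after the switch never adopts.

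The missing ingredient is the quantitative content of anointment. A positive LCB at the worst block-phase gives $m(W_{\mathrm{op}})\ge \Ghat-r_{n_{\mathrm{op}}}>r_{n_{\mathrm{op}}}$, a margin strictly exceeding the window's own radius, not merely a gap $\ge 0$. The paper keeps this bound and uses the polynomial form of $r_n$ to get $n_0(m(W_{\mathrm{op}}))=O(W_{\mathrm{op}}h)$. That is what it uses to run the recovery lemma on $W_{\mathrm{op}}$ with latency $O(L(W_{\mathrm{op}}))\le O(L(W^\star))$. By weakening the anointment conclusion in (a) to ``gap $\ge 0$'', you discarded exactly the quantity (b) needs.

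The paper's own step holds only at this $O(\cdot)$ level. Even after restoring the bound, you would still have to handle the constant-factor slack between $r_{n_{\mathrm{op}}}$ and $3r_{n_{\mathrm{op}}}$. You would also have to justify carrying the pre-switch margin of $W_{\mathrm{op}}$ into the new regime.
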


\begin{proof}
\emph{Step 0 (union confidence sequence; part (c)).} Apply the Section~H, Step~1 confidence sequence to each of the $J$ windows at level $\alpha/J$ and union-bound. On a good event $\mathcal{E}_J$ with $\Prb[\mathcal{E}_J]\ge 1-\alpha$, every window's balanced estimate obeys $|\Ghat-\Gbar|\le r_n^{(J)}$ for all per-arm counts $n$ simultaneously, where $r_n^{(J)}$ is $r_n$ with $\alpha$ replaced by $\alpha/J$, i.e.\ with the stitched term $\log(C_0K(K{-}1)s^2/\alpha)$ raised by the additive $\log J$. Since $\log J=\log\log_2(W_{\max}/W_0)$ is dominated by the $\log(C_0K(K{-}1)s^2/\alpha)$ already present, $r_n^{(J)}=r_n\bigl(1+o(1)\bigr)$; this is~(c). Write $r_n$ for $r_n^{(J)}$ and work on $\mathcal{E}_J$ throughout.

\emph{Step 1 (anointment certifies true safety and a usable margin).} For a window $W$ write its \emph{worst-phase margin} $m(W):=\min_s \Gbar^{(s)}_{a^{\mathrm{avg}}}(L(W))$, the smallest over phases of the time-average-best arm's windowed gap to its nearest challenger. The waveform-class bound of Section~H.1 is used only in its safe direction, as an upper bound on the bias: $m(W)\ge\bar\Delta-\varepsilon_W$ with $\varepsilon_W=bP/(2L(W))$, for every waveform in the class. We never need the bias to be attained, so the argument makes no square-wave-specific claim. The top window is drift-safe with a constant-fraction margin, $m(W_{\max})\ge\bar\Delta-\varepsilon_{W_{\max}}\ge\bar\Delta/2>0$, and once filled is sample-sufficient ($W_{\max}h\ge n_0(\bar\Delta/2)$, so $r_n\le\bar\Delta/8$ and its certified gap $\Ghat-2r_n\ge m(W_{\max})-3r_n>0$); hence on $\mathcal{E}_J$ its absolute certificate is the time-average-best arm at every phase and is never $\bot$, $a_{J-1}\equiv a^{\mathrm{avg}}$. Now consider any window $W$ anointed at $t$. The monitor evaluates only at probe blocks and the window start advances one block per evaluation, so the phases the monitor ever sees form the block-grid. Anointment requires $W$'s certificate to equal $W_{\max}$'s on $W_{\max}$ consecutive full-window blocks, a span over which the start sweeps $L(W_{\max})\ge P$ rounds, at least one full drift period. Under the whole-block period convention this contiguous run enumerates every block-phase the monitor will ever evaluate, $W$'s own worst block-phase $s_W$ included (Remark~\ref{rem:off-grid} removes the convention for arbitrary real $P$, carrying an explicit off-grid slack $2b/W$). Since $W_{\max}$'s certificate there is the non-$\bot$ arm $a^{\mathrm{avg}}$, matching forces $W$ to certify the same $a^{\mathrm{avg}}$ at full count $n_W=W h$, so its lower-confidence gap of $a^{\mathrm{avg}}$ over the runner-up was positive, $\Ghat-2r_{n_W}>0$, whence on $\mathcal{E}_J$
\[
m(W)=\Gbar^{(s_W)}_{a^{\mathrm{avg}}}\ \ge\ \Ghat-r_{n_W}\ >\ r_{n_W}\ >\ 0 ,
\]
the worst-phase margin against the nearest challenger, not merely against the held candidate.
Thus every anointed window is \emph{truly} drift-safe ($m(W)>0$, its candidate-arm target positive at every phase, for any waveform in the class) and carries the self-certified margin $m(W)>r_{n_W}$. No appeal to a drift-unsafe window producing a wrong certificate is made, so the step holds for sine and triangle drifts exactly as for the square wave.

\emph{Step 2 (part (a)).} The monitor adopts only through the operating window $W_{\mathrm{op}}=\min\{W_i:\text{anointed}\}$ (and adopts nothing before the first anointment), which by Step~1 is truly drift-safe, with candidate-arm target $\ge m(W_{\mathrm{op}})>0$ at every phase. In a pure-drift regime ($\Stdecavg=0$) the time-average-best arm $a^{\mathrm{avg}}$ holds, and any challenger $a'$ has windowed gap $\Gbar_{a',a^{\mathrm{avg}}}\le-m(W_{\mathrm{op}})<0$; using only $m(W_{\mathrm{op}})>0$ from Step~1 and the anytime confidence sequence, its lower-confidence gap is $\Ghat_{a',a^{\mathrm{avg}}}-2r_n\le(-m(W_{\mathrm{op}})+r_n)-2r_n<0$ at \emph{every} full-window count $n$ the monitor evaluates, with no $n_0$ needed. No challenger ever attains a positive lower-confidence gap, so the operating window never moves off $a^{\mathrm{avg}}$ and no adoption occurs. A handoff to a newly anointed shorter window is between two truly drift-safe windows that both certify $a^{\mathrm{avg}}$, so it is silent. The false-switch probability is at most $\Prb[\mathcal{E}_J^{\mathrm c}]\le\alpha$, uniformly over all $t\le T$ and independent of $\Stloc$. This is the proof--algorithm alignment point: adoption is gated on anointment, the empirically checkable certificate that Step~1 turns into true drift-safety, and the proof never assumes a window safe that the algorithm has not anointed.

\emph{Step 3 (part (b)).} The top window fills in $W_{\max}$ blocks and, being drift-safe and sample-sufficient, agrees with itself thereafter, so it is anointed by $2L(W_{\max})$ rounds. Consider $W^\star$, the shortest ladder window that is both drift-safe and sample-sufficient ($L(W^\star)\ge bP/\bar\Delta$ and $W^\star h\ge n_0(\bar\Delta/2)$); by the safe direction of the bias bound $m(W^\star)\ge\bar\Delta-\varepsilon_{W^\star}\ge\bar\Delta/2$, and its sample-sufficiency holds by definition. Since $L(W^\star)\le L(W_{\max})$ it fills inside the calibration window; on $\mathcal{E}_J$ it certifies $a^{\mathrm{avg}}$ at every probe-step phase (margin $\ge\bar\Delta/2$, $2r_n\le\bar\Delta/2$ once filled) and so equals $W_{\max}$'s certificate over any period, hence is anointed by $2L(W_{\max})$. Thus the shortest anointed window satisfies $W_{\mathrm{op}}\le W^\star$. Whatever $W_{\mathrm{op}}$ is, Step~1 gives it a self-certified margin $m(W_{\mathrm{op}})>r_{n_{\mathrm{op}}}$ at $n_{\mathrm{op}}=W_{\mathrm{op}}h$. By the polynomial dependence of the stitched radius, $\rho>r_n$ at $n$ forces $n_0(\rho)=O(n)$, so $n_0(m(W_{\mathrm{op}}))=O(W_{\mathrm{op}}h)$: the operating window already holds enough samples to recover. Now take a genuine average-decision switch at $\tau$ past calibration, where $a_r^{\mathrm{avg}}$ becomes time-average-best by margin $\bar\Delta$ and the new regime spans at least $2W_{\mathrm{op}}$ probe-blocks. The recovery lemma (Lemma~\ref{lem:drfc-seq-recovery}), whose regime-spacing hypothesis is thus met, applied to $W_{\mathrm{op}}$ adopts $a_r^{\mathrm{avg}}$ once the post-switch window fills and reaches $n_0(m(W_{\mathrm{op}}))=O(W_{\mathrm{op}}h)$ samples, i.e.\ within $O(W_{\mathrm{op}}\delta_{\mathrm{p}})=O(L(W_{\mathrm{op}}))\le O(L(W^\star))$ rounds. This is the per-switch latency of the oracle told $P$ that runs $W^\star$, up to the doubling granularity and the $r_n$ inflation of~(c). The latency is in rounds, set by the window length in probe-blocks times $\delta_{\mathrm{p}}$, with no spurious division by $N$. No knowledge of $P$ enters and the calibration $2L(W_{\max})$ is paid once.
\end{proof}

\begin{remark}[Arbitrary real periods, off the block grid]\label{rem:off-grid}
The whole-block period convention used in Step~1 ($P$ an integer multiple of the probe-block span $\delta_{\mathrm{p}}$) is a simplifying device, not a requirement, and the guarantee degrades gracefully for an arbitrary real $P$. The windowed time-average gap is Lipschitz in the start phase, since advancing the window by one block replaces a single boundary block of per-block gap within $b$ of the mean,
\[
\bigl|\Gbar^{(s+\delta_{\mathrm{p}})}_{a^{\mathrm{avg}}}(L(W))-\Gbar^{(s)}_{a^{\mathrm{avg}}}(L(W))\bigr|
\le\frac{2b}{W}=:\varepsilon_W^{\mathrm{off}} .
\]
The monitor evaluates only block-grid phases, so the worst evaluated phase lies within one block of the true worst phase $s_W$ and underestimates the worst-phase margin by at most $\varepsilon_W^{\mathrm{off}}$. Anointment over a $W_{\max}$-block run still sweeps $L(W_{\max})\ge P$ rounds, hence an evaluated phase within $\delta_{\mathrm{p}}$ of $s_W$, and matching $W_{\max}$'s non-$\bot$ certificate there gives, in place of the exact Step~1 conclusion, $m(W)\ge r_{n_W}-\varepsilon_W^{\mathrm{off}}$. Every anointed window is therefore truly drift-safe whenever $\varepsilon_W^{\mathrm{off}}<r_{n_W}$, that is $W\gtrsim b^2Nh/\log(C_0K(K-1)s^2/\alpha)$, an explicit lower bound the operating window meets once $W_0$ is enlarged by a constant factor. Under this mild strengthening part~(a)'s no-false-switch guarantee holds for \emph{arbitrary real} $P$, with the whole-block case recovered as the $\varepsilon_W^{\mathrm{off}}=0$ specialization, and the latency~(b) is unchanged up to the additive $\varepsilon^{\mathrm{off}}$ inside the margin. The experiment of Section~I samples periods off the block grid and reports the predicted zero false-switch rate, so it matches this analysis rather than relying on the convention.
\end{remark}

\begin{remark}[The price of period-agnosticism is honest, not zero]
Theorem~\ref{thm:app-adaptive-window} does not evade the Proposition~1 dichotomy. It still spends one drift period of memory at the operating window $W_{\mathrm{op}}\le W^\star$, exactly the charge the dichotomy levies, and it adds a one-time $O(L(W_{\max}))$ calibration that is the information cost of \emph{discovering} the safe scale without being told $P$: any monitor that adopts a $\bar\Delta$-margin switch faster than $L(W_{\max})$ could not yet have ruled out a drift period just below $L(W_{\max})$, so some calibration is unavoidable. What the theorem buys is that this cost is paid \emph{once} rather than per regime, after which the monitor runs at the oracle's per-switch latency. Section~I reports the matching experiment: across randomized periods, phases, and square, triangle, and sine drift shapes, Doubling-Window DRFC-Seq holds the oracle's false-switch rate and tracks its detection latency without the period as input, while any single fixed window is either drift-unsafe at the long end or needlessly slow at the short end.
\end{remark}

\begin{remark}[Relation to multi-scale selection]
The ladder-of-windows shape recalls Lepski's method for bandwidth selection~\cite{lepski1991problem}, but the selection principle differs in three ways the drift-monitoring problem forces. First, the objects compared are \emph{decisions}, the certified argmax arms $a_i$, not estimator values within overlapping confidence intervals, with agreement on which arm is best rather than on a real-valued estimate. Second, the agreement test is \emph{temporal}, a match over a full $W_{\max}$-block sweep of one drift period rather than at a single time, which is what converts a pointwise certificate into the anytime-valid drift-safety of part~(a) by forcing a candidate window to certify the best arm at its own worst phase. Third, the authority is anchored to \emph{safety}, the longest drift-safe window, not merely to the largest scale, and the whole construction is anytime-valid through the union confidence sequence over the ladder. The output therefore carries a uniform-in-time no-false-switch guarantee, a property estimation-risk bandwidth selection does not target.
\end{remark}

\section{I. Additional Experimental Figures}
\label{app:extra-figures}

This section contains the experimental figures referenced in the main paper as ``supplement.''
All figures and tables are reproduced from the anonymized code-and-data package accompanying the paper. The simulators use only the Python standard library, and the figure layer uses \texttt{numpy} and \texttt{matplotlib}; all tabular inputs needed for deterministic reproduction are included in that package. The descriptions below preserve the parameters, seeds, and statistical aggregation needed to reproduce each reported result.

\begin{figure}[t]
\centering
\includegraphics[width=0.62\linewidth]{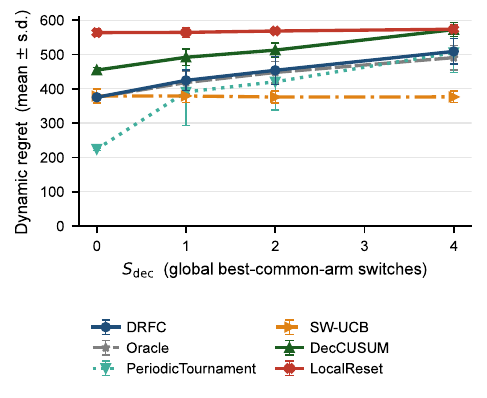}
\caption{Full stochastic decision-switch sweep. Regret grows linearly with $\Stdec$ across all baselines; DRFC tracks the switch count with low false-switch rate.}
\label{fig:supp-switches}
\end{figure}

\begin{figure}[t]
\centering
\includegraphics[width=0.62\linewidth]{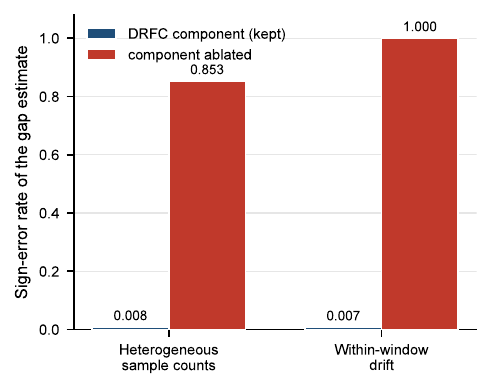}
\caption{Component ablations. Sample-weighted aggregation flips sign on heterogeneous-sample-count instances; sequential block estimation flips sign under within-window drift. Equal-agent averaging and randomized interleaving avoid both failure modes.}
\label{fig:supp-component}
\end{figure}

\begin{figure}[t]
\centering
\includegraphics[width=0.62\linewidth]{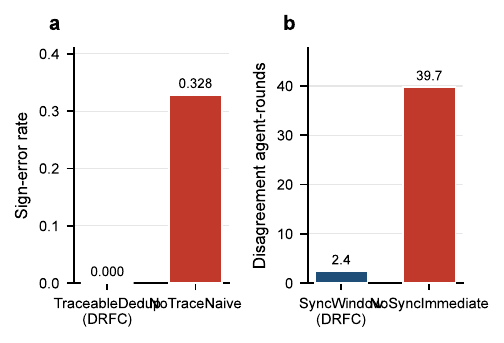}
\caption{Communication ablations. Naive gossip over-counts high-propagation records and biases the reconstructed gap; source-traceable deduplication preserves the equal-source estimator. Synchronization windows reduce agent disagreement after switches.}
\label{fig:supp-gossip}
\end{figure}

\begin{figure}[t]
\centering
\includegraphics[width=0.62\linewidth]{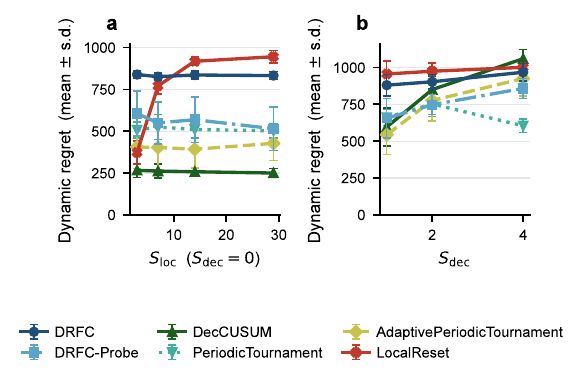}
\caption{User-cluster semi-real stress test. Left: with $\Stdec=0$ and varying $\Stloc$, conservative DRFC and DRFC-Probe are insensitive to the drift rate, while LocalReset, SW-UCB-Dec, and CUSUM-Reset all grow with $\Stloc$. Right: with $\Stloc$ fixed, DRFC tracks $\Stdec$.}
\label{fig:supp-cluster}
\end{figure}

\begin{figure}[t]
\centering
\includegraphics[width=0.62\linewidth]{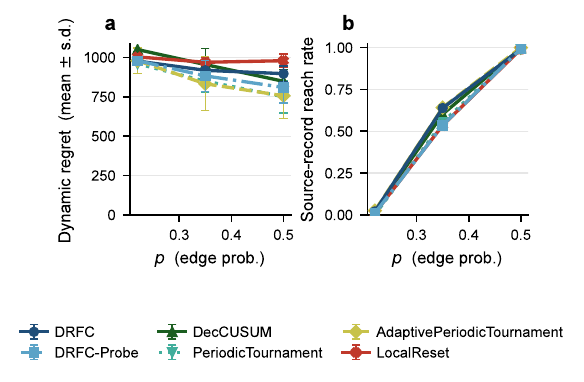}
\caption{Graph-probability sweep. As Erd\H{o}s--R\'enyi edge probability $p$ rises from $0.22$ to $0.50$, source-record reach rate climbs from $0.04$ to $0.99$, decision-switch recovery delay shrinks, and DRFC regret falls accordingly.}
\label{fig:supp-graphp}
\end{figure}

\begin{figure}[t]
\centering
\includegraphics[width=0.62\linewidth]{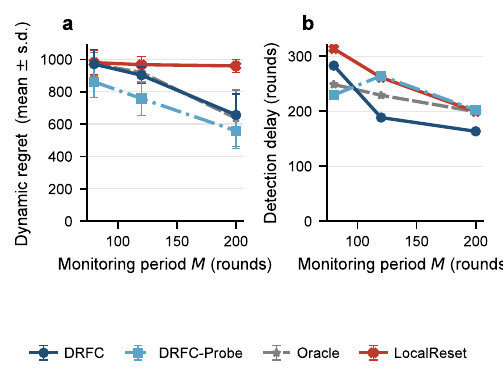}
\caption{Monitor period $M$ sensitivity at $\Stdec=2$, $K=4$, $\Delta=0.10$. Left: regret mean $\pm$ std vs $M\in\{80,120,200\}$. Right: detection delay vs $M$. The expected trade-off appears: larger $M$ reduces regret from periodic certification cost but lengthens detection delay after a true switch. DRFC-Probe and Oracle dominate at large $M$; LocalReset is insensitive to $M$ because it triggers on local-change boundaries, not monitoring schedule.}
\label{fig:supp-monitor}
\end{figure}

\begin{table}[t]
\centering
\footnotesize
\setlength{\tabcolsep}{3.5pt}
\caption{Scaling of decision regret with arms $K$ and agents $N$ (mean $\pm$ std over 12 seeds for this sweep, mixed topologies). For the decision-level methods regret grows with $K$ and with $N$, consistent with the $N\Stdec$ term of the upper bound. Because this is group regret, a sum over the $N$ agents, the near-linear growth in $N$ is expected from the summation alone and is not by itself a test of the communication term $N\Stdec\DG$, which is the smaller higher-order departure isolated separately by the graph-probability sweep below. DRFC-KGraph tracks Oracle (the perfect-communication reference, not a lower bound) to within noise. DRFC-Probe spends source traffic to stay nearly flat in $N$ (the probe trade-off of Appendix~C), so it is non-monotone in $N$ and lowest at large $N$. This sweep isolates $K$/$N$ scaling of the decision-level mechanism, the separation from local-change-reactive protocols is the distinct experiment in Figure~\ref{fig:separation}. Top block DRFC family, bottom block decision-level baselines.}
\label{tab:scaling-breadth}
\begin{tabular}{lcccc}
\toprule
\multicolumn{5}{c}{(a) Arms $K$ ($N=12$)}\\
\midrule
Method & $K{=}2$ & $K{=}4$ & $K{=}8$ & $K{=}16$\\
\midrule
DRFC-KGraph & $331{\pm}40$ & $823{\pm}22$ & $1037{\pm}37$ & $1154{\pm}28$\\
DRFC-Probe  & $118{\pm}42$ & $590{\pm}127$ & $981{\pm}56$ & $1138{\pm}25$\\
Oracle      & $317{\pm}29$ & $856{\pm}25$ & $1056{\pm}20$ & $1156{\pm}15$\\
\midrule
SW-UCB-Dec  & $356{\pm}23$ & $791{\pm}23$ & $1124{\pm}19$ & $1309{\pm}7$\\
CUSUM-Reset & $458{\pm}11$ & $821{\pm}14$ & $1073{\pm}18$ & $1236{\pm}22$\\
\bottomrule
\addlinespace[2pt]
\toprule
\multicolumn{5}{c}{(b) Agents $N$ ($K=4$)}\\
\midrule
Method & $N{=}6$ & $N{=}12$ & $N{=}24$ & $N{=}48$\\
\midrule
DRFC-KGraph & $408{\pm}13$ & $823{\pm}22$ & $1588{\pm}34$ & $3110{\pm}0$\\
DRFC-Probe  & $349{\pm}69$ & $590{\pm}127$ & $490{\pm}131$ & $778{\pm}0$\\
Oracle      & $418{\pm}13$ & $856{\pm}25$ & $1612{\pm}52$ & $3110{\pm}0$\\
\midrule
SW-UCB-Dec  & $442{\pm}13$ & $791{\pm}23$ & $1315{\pm}50$ & $1984{\pm}50$\\
CUSUM-Reset & $397{\pm}12$ & $821{\pm}14$ & $1681{\pm}11$ & $3385{\pm}17$\\
\bottomrule
\end{tabular}
\end{table}

\subsection{Full versions of main-paper experiment figures}
\label{app:main-figs-full}
These are the full-size figures and tables summarized in the main paper's Experiments and Related Work sections (positioning, real-data corroboration, and secondary scaling results). Main-result references below use result names whenever possible so they remain stable under section reordering.

\begin{table}[t]
\centering
\small
\setlength{\tabcolsep}{3pt}
\footnotesize
\begin{tabular}{lllll}
\toprule
Closest prior line & Non-stat.\ unit & Dec. & Regret scales with & On $\Stloc\!\gg\!\Stdec$\\
\midrule
Garivier--Moulines & opt-arm switches & no & \# switches & resets per local change\\
Besbes et al. & variation $V_T$ & no & $V_T^{1/3}T^{2/3}$ & local shift inflates $V_T$\\
Cao; Besson et al. & change points & no & \# changes & restarts each local change\\
Abbasi-Yadkori et al. & best-arm switches & no & \# switches & single stream, no average\\
M.-Rubio; Chawla et al. & none (stationary) & yes & spec.\ gap, $N,K$ & no temporal-drift model\\
Komiyama et al. & global changes & no & \# changes & local counted as global\\
\midrule
\textbf{DRFC (ours)} & decision $\Stdec$ & yes & $\Stdec$ not $\Stloc$ & adapts to $\Stdec$ only\\
\bottomrule
\end{tabular}
\caption{Why the closest accepted lines cannot exploit heterogeneous cancellation. Each either counts every local change, so its adaptation term scales with $\Stloc$ rather than $\Stdec$, or assumes stationary rewards; only DRFC certifies non-stationarity at the agent-average decision level and adapts to $\Stdec\ll\Stloc$. The cancellation instance of the two-agent illustration (main paper) has $\Stloc=\Theta(T/L)$ yet $\Stdec=0$, so every $\Stloc$-scaling row pays $\Omega(T/L)$ while DRFC pays only monitoring cost. (Main-paper Related Work.)}
\label{tab:related}
\end{table}

\begin{table}[t]
\centering
\small
\setlength{\tabcolsep}{4pt}
\begin{tabular}{lcccc}
\toprule
Scenario & DRFC-Seq & DecCUSUM & DL-GLR & DL-ADR \\
\midrule
\multicolumn{5}{l}{\emph{Genuine-switch adopt rate} ($\Stdecavg{=}1$, no drift)}\\
$\Delta{=}0.10$            & $1.00$ & $1.00$ & $0.87$ & $0.97$ \\
$\Delta{=}0.20$            & $1.00$ & $1.00$ & $0.93$ & $0.93$ \\
\midrule
\multicolumn{5}{l}{\emph{False-switch rate} (drift present)}\\
$\Delta{=}0.10$, drift ($\Stdecavg{=}1$) & $0.00$ & $0.90$ & $0.83$ & $0.63$ \\
$\Delta{=}0.20$, drift ($\Stdecavg{=}1$) & $0.00$ & $0.03$ & $0.63$ & $0.20$ \\
drift only ($b{=}0.20$, $\Stdecavg{=}0$) & $0.00$ & $0.90$ & $0.90$ & $0.60$ \\
\bottomrule
\end{tabular}
\caption{Adaptivity / non-vacuity battery at the \emph{live} operating point ($N=24$, $2r_n\!\approx\!0.085<\Delta$; $30$ seeds, horizon $24000$, switch at $12000$); DL-GLR is DL-GLR-klUCB, DL-ADR is DL-ADR-bandit. \emph{Top block}: genuine-switch adopt rate (fraction of runs adopting the correct new arm)---all four decision-level monitors are alive, including the two strong change detectors. \emph{Bottom block}: false-switch rate---DRFC-Seq is the only one that stays at $0$ under drift, while DecCUSUM, DL-GLR-klUCB, and even the gradual-drift-specialized DL-ADR-bandit all false-alarm, exactly the dichotomy of main Proposition~1. Detection latencies (rounds): DRFC-Seq $\sim5000$--$6800$, DL-GLR $\sim2200$--$4200$, DL-ADR $\sim1500$--$2300$, DecCUSUM $\sim300$---faster detection buys the false alarms. (Main-paper ``Non-vacuity: DRFC-Seq does switch on genuine switches.'')}
\label{tab:adaptivity}
\end{table}

\begin{figure}[t]
\centering
\includegraphics[width=0.62\linewidth]{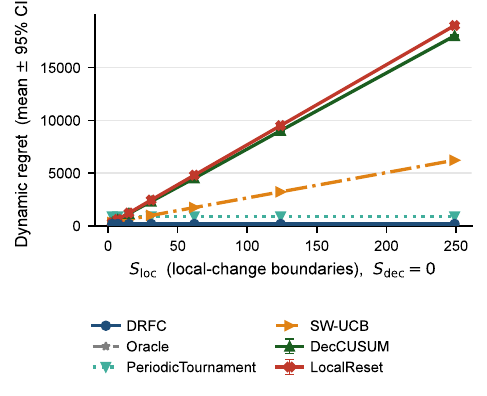}
\caption{Separation stress test ($\Stdec=0$, varying $\Stloc$). LocalReset regret grows linearly in $\Stloc$; DRFC and Oracle pay only monitoring cost. (Main-paper ``Separation under decision-irrelevant drift.'')}
\label{fig:separation}
\end{figure}

\begin{figure}[t]
\centering
\includegraphics[width=0.62\linewidth]{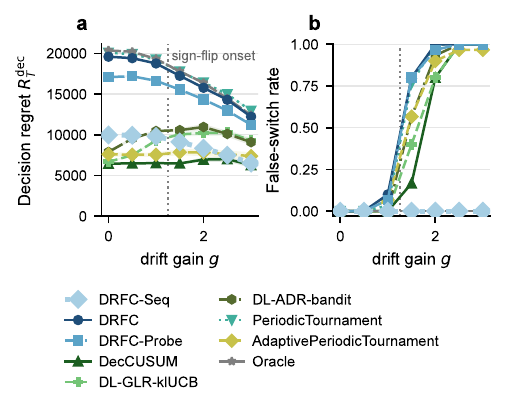}
\caption{Within-regime drift on MovieLens-1M genre dynamics, semi-synthetic (real measured fluctuation, anchored margin) ($\Stdecavg=0$, $\Delta=0.10$, arms $\{$Romance, Thriller, Comedy, Action$\}$, $8/16$ real bins reverse). $x$-axis: gain $g$ scaling the empirical fluctuation (dotted: real sign-flip onset). (a) Decision regret $R_T^{\mathrm{dec}}$: the periodic-scan family blows up to $\sim2\times$ DRFC-Seq; the strong decision-level detectors DL-GLR-klUCB and DL-ADR-bandit are \emph{higher} regret than DRFC-Seq under strong drift while DecCUSUM reaches marginally lower regret only by false-switching. (b) False-switch rate: DRFC-Seq (adopt-capable) stays at $0$; all change-detection competitors---including DL-GLR-klUCB and DL-ADR-bandit---climb to $1$. $N=24$ agents at a live operating point ($2r_n<\Delta$); 30 seeds.}
\label{fig:movielens-drift}
\end{figure}

\begin{table}[t]
\centering
\small
\setlength{\tabcolsep}{3pt}
\begin{tabular}{lcccc}
\toprule
Method & $R_T^{\mathrm{dec}}$ & $R_T$ (inst.) & False-switch & Switches \\
\midrule
\textbf{DRFC-Seq}          & $\mathbf{2710\pm6}$   & $\mathbf{6962}$ & $\mathbf{0.00}$ & $0.0$ \\
DecCUSUM                   & $3053\pm475$          & $7305$ & $1.00$ & $4.7$ \\
AdaptivePeriodicTourn.     & $3075\pm786$          & $7326$ & $1.00$ & $5.1$ \\
DL-GLR-klUCB               & $3459\pm736$          & $7710$ & $0.90$ & $3.4$ \\
DL-ADR-bandit              & $3868\pm500$          & $8120$ & $0.90$ & $4.1$ \\
Oracle (perfect comm.)     & $3851\pm265$          & $8103$ & $0.00$ & $0.0$ \\
DRFC-KGraph                & $4335\pm281$          & $8587$ & $1.00$ & $10.1$ \\
PeriodicTournament         & $4606\pm243$          & $8857$ & $1.00$ & $9.6$ \\
\bottomrule
\end{tabular}
\caption{\emph{Raw, un-anchored} MovieLens-1M within-regime drift ($\Stdecavg=0$, raw time-average margin $\bar\Delta\!\approx\!0.0115$, $8/16$ bins reverse, $N=24$, $30$ seeds, horizon $12000$). No constructed square wave and no anchored margin, the drift being the measured rating fluctuation. DRFC-Seq is the only deployable monitor that is both lowest-regret and never false-switches, while every fast decision-level change detector (DecCUSUM, DL-GLR-klUCB, DL-ADR-bandit, AdaptivePeriodicTournament) false-alarms, consistent with main Proposition~1. The conservative decentralized DRFC variants also false-switch on stale snapshots, and only the perfect-communication Oracle reference matches DRFC-Seq's zero rate---DRFC-Seq attains it under realistic communication. The passive SW-UCB-Dec index ($3356\pm78$, no adopt-event) is omitted from the false-switch comparison.}
\label{tab:natural-drift}
\end{table}

\paragraph{The comparator choice does not manufacture the result.} With one arm played per agent per round, instantaneous dynamic regret and decision regret differ by a single instance constant, $R_T(\pi)-R_T^{\mathrm{dec}}(\pi)=\sum_t N\,[\bar\mu_{a_t^\star}(t)-\bar\mu_{a_r^{\mathrm{avg}}}(t)]=:D\geq0$, because each policy's own action terms cancel. $D$ does not depend on the policy $\pi$, so replacing the time-average comparator $a_r^{\mathrm{avg}}$ by the instantaneous best arm $a_t^\star$ adds the same $D$ to every method and leaves all rankings and gaps unchanged. For this instance $D\approx4252$, computed from the equal-agent means ($8$ of $16$ real bins reverse, minimum instantaneous global gap $-0.080$, time-average margin $0.0115$). The $R_T$ column applies this offset. DRFC-Seq stays the lowest-regret method under the instantaneous comparator as well ($6962$ against DecCUSUM $7305$ and the Oracle $8103$), so the separation is a property of the methods, not of the comparator. The diagnostic is included in the reproducibility package.

\begin{table}[t]
\centering
\small
\setlength{\tabcolsep}{4pt}
\resizebox{\columnwidth}{!}{%
\begin{tabular}{lrrrrr}
\toprule
Cluster & \textbf{Drama} & Romance & Thriller & Comedy & Action\\
\midrule
age25\_occ17 & \textbf{0.680} & 0.627 & 0.582 & 0.592 & 0.548\\
age25\_occ0  & \textbf{0.618} & 0.564 & 0.470 & 0.515 & 0.472\\
age35\_occ17 & \textbf{0.537} & 0.480 & 0.461 & 0.449 & 0.452\\
\midrule
Population avg & \textbf{0.620} & 0.538 & 0.526 & 0.510 & 0.494\\
\bottomrule
\end{tabular}}
\caption{MovieLens-1M cluster-by-genre breakdown (mean binarised rating, $\geq 4$). \emph{Drama} (bold) is the highest-rated genre in \emph{every} cluster, with population-average margin $0.08$ over the runner-up. Within each cluster, however, the ranking of the four non-best genres varies (see e.g.\ Comedy/Thriller/Action interchange across rows), producing high $\Stloc$ on the time-binned replay even though $\Stdec=0$. Three representative clusters and the population average are shown; \emph{Drama} is the top genre in all 12 clusters in the released data table.}
\label{tab:movielens}
\end{table}

\begin{figure}[t]
\centering
\includegraphics[width=0.62\linewidth]{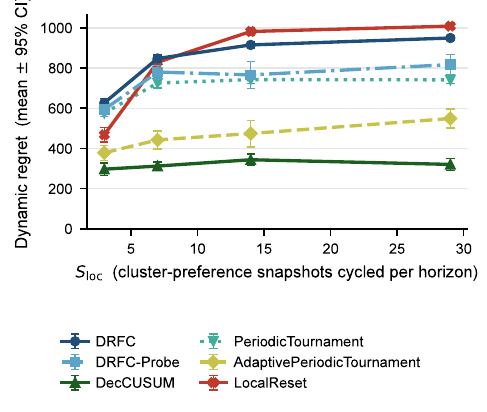}
\caption{MovieLens-1M replay ($\Stdec=0$, $N=12$ clusters, $K=5$ genres). DRFC-Probe slope is $\sim2.4\times$ smaller than LocalReset.}
\label{fig:movielens}
\end{figure}

\begin{figure}[t]
\centering
\includegraphics[width=0.62\linewidth]{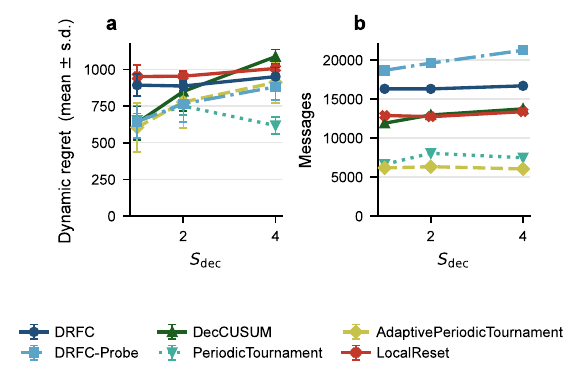}
\caption{K-ary stress check. Probe-triggered DRFC trades regret against source-record count under tight regimes ($\Stdec\in\{1,2,4\}$).}
\label{fig:kary}
\end{figure}

\begin{figure}[t]
\centering
\includegraphics[width=0.62\linewidth]{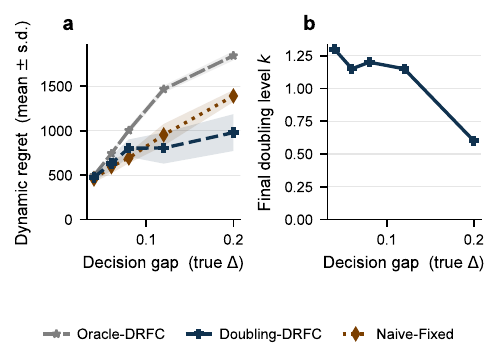}
\caption{Geometric scan-budget walk-up for Gap-Adaptive DRFC (labeled Doubling-DRFC in the plot). Mean walk-up $\leq 1.5$ levels across a $5\times$ range of $\Delta$, matching oracle-tuned regret.}
\label{fig:doubling}
\end{figure}

\begin{table}[t]
\centering
\small
\setlength{\tabcolsep}{4pt}
\begin{tabular}{lrrrr}
\toprule
Method & MV slope & MV final & K-ary $S_4$ & K-ary $f$\\
\midrule
DRFC & 12.4 & 950 & 951 & 0.10\\
DRFC-Probe & 8.6 & 818 & 881 & 0.35\\
SW-UCB-Dec & 5.9 & 852 & 787 & 0.00\\
CUSUM-Reset & 10.5 & 894 & 961 & 0.55\\
LocalReset & 20.8 & 1009 & 1007 & 0.20\\
Oracle & 12.7 & 979 & 925 & 0.15\\
\bottomrule
\end{tabular}
\caption{Headline numbers. \emph{MV slope}: regret-per-$\Stloc$ slope on MovieLens replay ($\Stloc{=}3$ to $29$, $\Stdec{=}0$). \emph{MV final}: regret at $\Stloc{=}29$. \emph{K-ary $S_4$}: regret at $\Stdec{=}4$ on the K-ary stress check (horizon $1200$). \emph{$f$}: false-switch rate at $\Stdec{=}4$. All entries are read directly from the archived aggregate tables. DRFC-Probe's MovieLens slope ($8.6$) is $\sim2.4\times$ smaller than the explicit local-reset baseline LocalReset ($20.8$), with lower absolute regret than LocalReset at every $\Stloc$; the passive SW-UCB index has a flatter slope but a higher regret floor and no adopt-event ($f{=}0$). On the K-ary check, conservative DRFC has the lowest false-switch rate among switching methods ($0.10$).}
\label{tab:headline}
\end{table}

\subsection{I.1 Decision-level baseline tuning (fairness)}
\label{app:baseline-tuning}

The two decision-level change detectors we add---DL-GLR-klUCB (GLR on the global
aggregated gap) and DL-ADR-bandit (multi-scale adaptive restart on the global
mean)---are run at single operating points in the main paper
(GLR threshold $6.0$; ADR threshold scale $1.0$ and minimum window $2$). A natural concern is whether these points were
chosen to make the baselines look weak, i.e.\ whether a more conservative
setting would let them keep their switch-adaptivity while avoiding the
within-regime-drift false alarms. Table~\ref{tab:baseline-tuning} sweeps each
detector's primary sensitivity knob and, at every setting, measures \emph{both}
the genuine-switch adopt rate (a real mid-horizon switch, no drift) and the
drift-only false-switch rate (no switch, live drift $b=0.20$), at the same live
operating point ($N=24$, $\Delta=0.10$, horizon $24000$, $30$ seeds) as
main Table~2 (the adaptivity battery).

The sweep is not a search for a better threshold; it is a direct test of the
dichotomy of main Proposition~1. \emph{No} setting of either detector
reaches the both-good corner. Holding the detector switch-adaptive (adopt rate
$\approx1$) pins its drift false-switch rate at $0.60$--$0.90$; the only way to
drive the false-switch rate down is to make the detector progressively
non-adaptive---GLR at threshold $20$ already misses half the genuine switches
(adopt $0.50$) while \emph{still} false-alarming at rate $0.27$, and ADR reaches
a low false-switch rate only by collapsing to adopt rate $0.30$ (scale $2.5$) or
total inertia (adopt $0$, scale $4.0$). The single points used in the main paper
sit in the \emph{adaptive} regime, which is the setting most favourable to the
baselines (they are compared where they actually function as switch detectors,
not where they are inert). DRFC-Seq is off this trade-off curve entirely: adopt
rate $1.00$ \emph{and} false-switch rate $0.00$ simultaneously, at a fixed level
$\alpha=0.10$ with no threshold to tune.

\begin{table}[t]
\centering
\small
\setlength{\tabcolsep}{5pt}
\begin{tabular}{lcc}
\toprule
Detector / setting & genuine adopt & drift FSR \\
\midrule
\multicolumn{3}{l}{\emph{DL-GLR-klUCB} (threshold sweep)}\\
\quad threshold $2.0$            & $1.00$ & $0.87$ \\
\quad threshold $4.0$            & $1.00$ & $0.90$ \\
\quad threshold $6.0$ (paper)    & $0.83$ & $0.90$ \\
\quad threshold $9.0$            & $0.60$ & $0.83$ \\
\quad threshold $14.0$           & $0.67$ & $0.50$ \\
\quad threshold $20.0$           & $0.50$ & $0.27$ \\
\midrule
\multicolumn{3}{l}{\emph{DL-ADR-bandit} (threshold-scale sweep)}\\
\quad scale $0.50$               & $1.00$ & $0.63$ \\
\quad scale $0.75$               & $1.00$ & $0.60$ \\
\quad scale $1.00$ (paper)       & $1.00$ & $0.63$ \\
\quad scale $1.50$               & $0.80$ & $0.80$ \\
\quad scale $2.50$               & $0.30$ & $0.17$ \\
\quad scale $4.00$               & $0.00$ & $0.00$ \\
\midrule
\textbf{DRFC-Seq} (no threshold, $\alpha{=}0.10$) & $\mathbf{1.00}$ & $\mathbf{0.00}$ \\
\bottomrule
\end{tabular}
\caption{Decision-level baseline tuning sweep. ``genuine adopt'' is the
genuine-switch adopt rate (higher is more switch-adaptive); ``drift FSR'' is the
drift-only false-switch rate (lower is more drift-robust). Across the full sweep
of either detector, no setting achieves both a high adopt rate and a low
false-switch rate---suppressing the drift false alarms requires giving up
switch-adaptivity, exactly the main Proposition~1 dichotomy. The paper
operating points sit in the adaptive (baseline-favourable) regime. DRFC-Seq
attains both simultaneously. Reproduced by the baseline-tuning sweep in the
anonymized package, with archived per-seed outputs. $30$ seeds.}
\label{tab:baseline-tuning}
\end{table}

\subsection{I.2 Per-round communication cost}
\label{app:message-cost}

The source-record complexity bound (Proposition~\ref{prop:app-records}) is here
made concrete and compared to the baselines, answering the question of whether
source-traceable gossip and periodic scanning are bandwidth-heavy in practice.
Table~\ref{tab:message-cost} reports the mean source-record transmissions per
agent per round at the standard K-ary operating point ($N=12$, $K=4$,
$\Stdec=2$, edge probability $0.50$, horizon $1200$, $20$ seeds), read from the
archived per-seed communication logs under the simulator's common gossip
accounting, where every method is charged for each record it floods over the
random graph.

The conservative DRFC certifier transmits about $1.13$ records per agent per
round, within a factor $1.3$ of the decision-level change detectors
(DecCUSUM, CUSUM-Reset, LocalReset, all near $0.9$) and about $2.3$ times the
passive index sharing of SW-UCB-Dec, which only broadcasts a $K$-vector each
gossip cycle. The light-probe and Oracle variants spend more traffic ($1.36$
and $1.30$) to buy lower regret, the regret-traffic trade of
the probe variant (Appendix~C) and the Pareto figure. So the periodic certifier is
comparable to a change detector rather than an order heavier, and its cost is
governed by the monitor period $M$ and block size $h$ through the
$\widetilde O(NK(h+\DG)T/M)$ rate of Proposition~\ref{prop:app-records}, with
the per-scan and per-probe source-record displays
$\widetilde O(\Bcert_r)$ and $O(NKq_p)$ of that proposition giving the explicit
per-event message complexity the deployment can tune. Dividing the total rate by
$NT$, the per-agent per-round cost is $\widetilde O(K(h+\DG)/M)$, which depends on
the network size $N$ only through the flooding time $\DG$, so the per-agent
bandwidth grows with the network only through $\DG$ and is otherwise flat in $N$,
while the only $N$-dependence of the total is the unavoidable linear factor
from having $N$ agents. The component split is therefore a constant certifier
overhead set by $M$ and $h$ plus a graph term that scales with $\DG$, the same
$\DG$ the topology and switching-stress sweeps (Sections~\ref{app:topology},~\ref{app:sdec-stress}) vary directly.

\begin{table}[t]
\centering
\small
\setlength{\tabcolsep}{6pt}
\begin{tabular}{lrr}
\toprule
Method & records/agent/round & total records \\
\midrule
SW-UCB-Dec   & $0.50$ & $7200$ \\
DecCUSUM     & $0.90$ & $12957$ \\
CUSUM-Reset  & $0.87$ & $12492$ \\
LocalReset   & $0.88$ & $12740$ \\
\midrule
DRFC-KGraph  & $1.13$ & $16298$ \\
DRFC-Probe   & $1.36$ & $19604$ \\
Oracle       & $1.30$ & $18732$ \\
\bottomrule
\end{tabular}
\caption{Per-round source-record communication at the standard operating point
($N=12$, $K=4$, $\Stdec=2$, edge probability $0.50$, horizon $1200$, $20$
seeds). Records per agent per round is the total flooded source records divided
by $N$ and the horizon. The conservative DRFC certifier is within a factor
$1.3$ of the decision-level change detectors and the probe and Oracle variants
spend more traffic for lower regret, so source-traceable certification is
bandwidth-comparable to a change detector and tunable through $M$ and $h$ by
Proposition~\ref{prop:app-records}. Read from
the archived communication logs in the reproducibility package.}
\label{tab:message-cost}
\end{table}

\subsection{I.3 Consolidated parameter-tuning guidelines}
\label{app:tuning}

This subsection collects the practical guidance for choosing the parameters
$(M,h,\Hscan,\delta_{\mathrm p},W,\alpha,C)$ in one place, including what each
controls, the reported default, and what to do when the gap $\Delta$ or the
drift period is unknown. Table~\ref{tab:tuning} is the summary; the main text
carries a distilled version. The recurring principle is that the two
margin-dependent knobs, the scan budget $\Hscan$ (conservative DRFC) and the
window $W$ (DRFC-Seq), are the only inputs that need prior knowledge, and each
has a knowledge-free doubling replacement that pays only a $\log\log$ or
$\log J$ overhead.

\paragraph{When $\Delta$ is unknown.} Use Gap-Adaptive DRFC
(Proposition~\ref{prop:app-gap-adaptive}), which walks the scan budget up a
geometric ladder $\Hscan^{(k)}\propto4^k$ and attains the
Theorem~\ref{thm:app-regret} regret at the running-minimum gap with only an
additive $\log\log T$ overhead, so no margin input is needed above a floor.

\paragraph{When the drift period is unknown.} Use Doubling-Window DRFC-Seq
(Theorem~\ref{thm:app-adaptive-window}), which runs a geometric ladder of
sliding windows on one shared probe stream and adopts on the shortest window
that has matched the longest drift-safe window, recovering the period-oracle's
latency within the doubling factor and a $(1+o(1))$ union inflation, with no
period input.

\paragraph{The benign knobs.} The monitor period $M$ trades periodic monitoring
cost ($\widetilde O(NK(h+\DG)T/M)$) against post-switch detection delay
($O(M)$), so $M$ scales with the communication budget and the tolerable switch
latency, not with $\Delta$. The block size $h$ sets the per-block radius and is
a small constant. The probe gap $\delta_{\mathrm p}$ need only satisfy
$\delta_{\mathrm p}\ge Kh+\DG(\alpha)$ so each block fully floods before the
next. The level $\alpha$ is the horizon-uniform false-switch budget and is
robust across a $20\times$ range (Table~\ref{tab:window-sens}). The radius
constant $C$ errs only conservatively as it grows (Table~\ref{tab:confidence-sweep}),
so the practical $C=1.15$ and the worst-case theorem value bracket a range over
which the zero-false-switch conclusion is preserved.

\begin{table}[t]
\centering
\small
\setlength{\tabcolsep}{3.5pt}
\begin{tabular}{llll}
\toprule
Knob & Controls & Default & If $\Delta$/period unknown \\
\midrule
$M$ & monitor cost vs delay & $120$ & comm.\ budget, not $\Delta$ \\
$h$ & per-block radius & $6$/$24$ & small constant \\
$\Hscan$ & scan budget & scan calib. & Gap-Adaptive DRFC (App.~B) \\
$\delta_{\mathrm p}$ & probe cadence & $20$ & $\ge Kh+\DG$ \\
$W$ & drift-period span & $120$ & Doubling-Window (Thm.~H.2) \\
$\alpha$ & false-switch level & $0.10$ & robust $20\times$ \\
$C$ & radius constant & $1.15$ & conservative as $C{\uparrow}$ \\
\bottomrule
\end{tabular}
\caption{Parameter-tuning summary. The only margin-dependent inputs are the
scan budget $\Hscan$ and the window $W$, and each has a knowledge-free doubling
replacement (Proposition~\ref{prop:app-gap-adaptive},
Theorem~\ref{thm:app-adaptive-window}). The remaining knobs are governed by the
communication budget and the tolerable latency rather than by $\Delta$ or the
drift period, and the confidence inputs $\alpha,C$ are robust or err
conservatively (Tables~\ref{tab:window-sens},~\ref{tab:confidence-sweep}).}
\label{tab:tuning}
\end{table}

\subsection{I.4 Beyond Erd\H{o}s--R\'enyi: topology robustness}
\label{app:topology}

The main paper's random-graph sweep uses Erd\H{o}s--R\'enyi activation. To test
whether the result depends on that choice we re-run DRFC at $\Stdec=2$ over five
structurally distinct base graphs, each made time-varying by activating every
base edge independently per gossip round, with $20$ seeds. The families are the
regular low-diameter ring-with-chords (paper default), a Watts--Strogatz
small-world graph, a random geometric graph, a Barab\'asi--Albert scale-free
graph, and the path graph of maximal diameter $N-1$.

Figure~\ref{fig:topology} reports group regret and post-switch detection delay
against the source-record reach rate, the fraction of flooded records that reach
every agent and the empirical proxy for $1$ minus the flooding-failure
probability, hence for the flooding time $\DG$. The reading is that what governs
DRFC is the realized reach rate, not the topology label. At a sparse activation
probability $0.30$ the reach rate ranges from $0.68$ on the small-world graph
down to $0.22$ on the geometric graph, and DRFC regret tracks it monotonically,
from $922$ at the highest reach to $952$ at the lowest, with the four connected
families collapsing onto a single reach-versus-regret curve. As the activation
probability rises to $0.80$ every connected family reaches a reach rate of
$1.00$ and a regret near $900$ within noise, regardless of whether the graph is
a regular ring, a clustered geometric graph, or a hub-dominated scale-free
graph. The path graph is the one persistent stress case. Its single-line base
graph fails to flood ($\mathrm{reach}\approx0$) even at activation $0.80$,
because a single inactive edge severs the line, so DRFC cannot recover switches
and its detection delay stays censored at the horizon. This is exactly the
$\DG\to N$ corner where the $N\Stdec\DG$ term of Theorem~\ref{thm:app-regret}
blows up, the honest boundary of the guarantee rather than a counterexample to
it. The experiment therefore supports the claim that only $\DG$, no finer
topology functional, enters the regret, and it directly answers the topology
generalization question by spanning small-world, clustered, scale-free, and
maximal-diameter graphs.

\begin{figure}[t]
\centering
\includegraphics[width=0.92\linewidth]{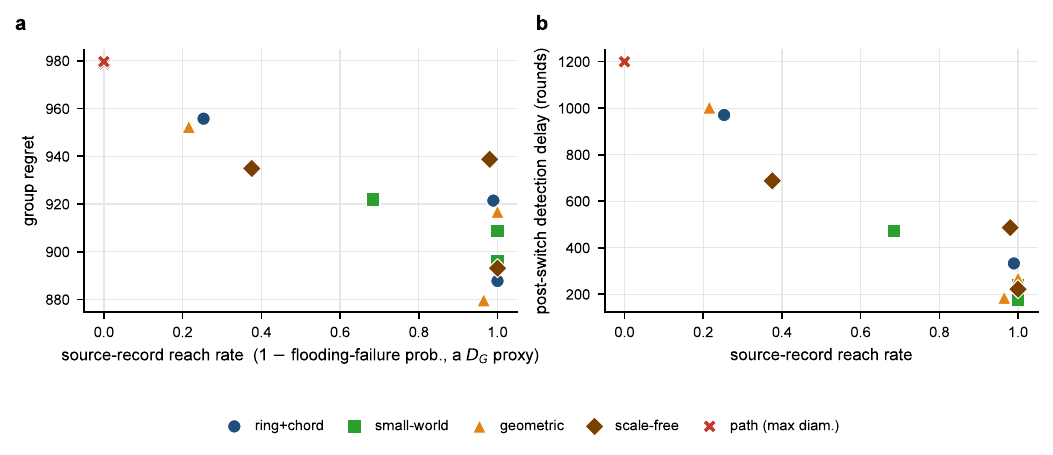}
\caption{Topology robustness beyond Erd\H{o}s--R\'enyi ($\Stdec=2$, $N=12$,
$K=4$, $20$ seeds). Each base graph (ring+chord, small-world, geometric,
scale-free, path) is made time-varying by per-round edge activation. (a) Group
regret against the source-record reach rate. Across the four connected families
the points collapse onto one curve, so regret is governed by the realized reach
rate (the $\DG$ proxy) and not by the topology family. (b) Post-switch detection
delay against reach rate. The path graph (maximal diameter) never floods and
sits at censored delay, the $\DG\to N$ corner of the bound. Reproduced by the
topology-robustness sweep.}
\label{fig:topology}
\end{figure}

\subsection{I.5 Switching-regime stress: validating the $\Delta$, spacing, and $\DG$ dependence}
\label{app:sdec-stress}

To validate the way Theorem~\ref{thm:app-regret} and the main information-theoretic lower bound depend on the per-regime certification $\Ccert_r$, the
connectivity $\DG$, and the per-regime scan count, we sweep three axes in the
genuine-switching regime $\Stdec>0$, with $20$ seeds, each isolating one
predicted dependence (Figure~\ref{fig:sdec-stress}).

\paragraph{Gap $\Delta$.} Holding $\Stdec=2$ and edge probability $0.50$, as the
gap grows over $\{0.08,0.12,0.18,0.26\}$ the post-switch detection delay falls
from $382$ rounds to $117$, because a wider gap is certified with fewer balanced
blocks, the empirical face of the $\Ccert_r\sim(K-1)/\Delta$ certification cost.
Group regret rises over the same range from $1111$ to $3262$, because in this
frequently-monitored horizon the dominant cost is the periodic monitoring term,
whose each wasted probe pull of a suboptimal arm costs about $\Delta$, so the
two branches of the bound appear on the two metrics, certification on the delay
and monitoring on the regret.

\paragraph{Connectivity $\DG$.} Holding $\Stdec=2$ and $\Delta=0.10$, raising the
edge probability over $\{0.25,0.40,0.60,0.85\}$ lifts the reach rate from $0.07$
to $1.00$ and collapses the detection delay from $1667$ rounds to about $200$,
the direct empirical signature of the $N\Stdec\DG$ delay term, since higher
connectivity shrinks the flooding time $\DG$.

\paragraph{Switch count $\Stdec$.} Holding $\Delta=0.10$, edge probability
$0.50$, and the inter-switch spacing fixed at a regime length of $600$ rounds
(horizon scaled as $600(\Stdec+1)$), group regret grows linearly in $\Stdec$
over $\{1,2,3,4\}$, from $845$ to $1388$ to $1877$ to $2414$, with near-constant
increments of about $520$. This isolates the $N\Stdec$ switch term, showing the
per-switch cost is constant once the spacing is held fixed, so the linear growth
is in the switch count and not an artifact of a shrinking regime.

Across all three axes the false-switch rate stays $0$, and the qualitative
scalings match the bound, not its constants, as elsewhere in this supplement.

\begin{figure}[t]
\centering
\includegraphics[width=0.98\linewidth]{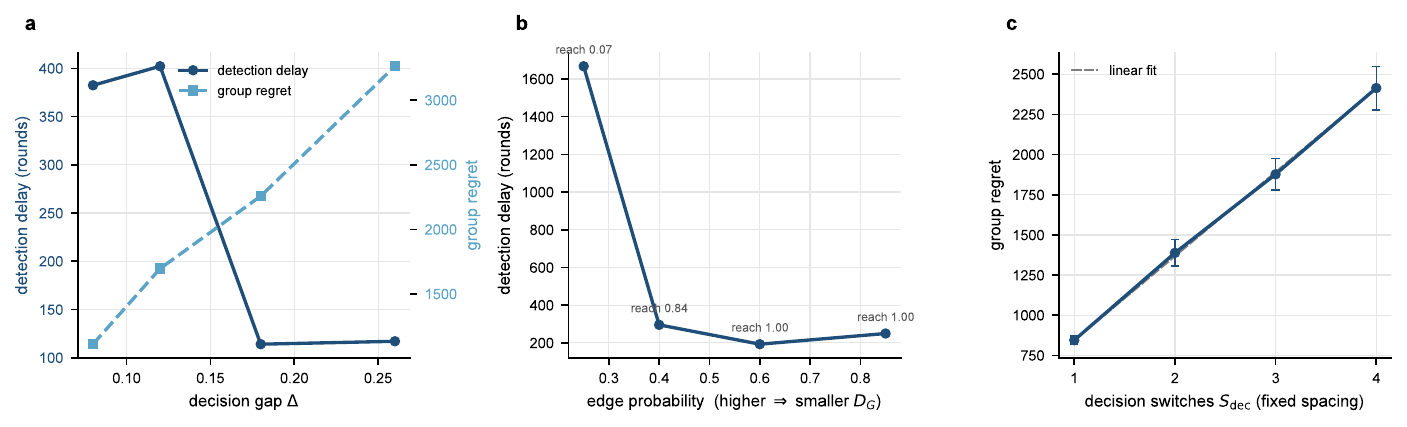}
\caption{Switching-regime stress at $\Stdec>0$ ($N=12$, $K=4$, $20$ seeds),
validating the bound's three dependences. (a) Detection delay falls with the gap
$\Delta$ (easier certification, the $\Ccert_r\sim(K-1)/\Delta$ branch) while
regret rises with $\Delta$ (the monitoring branch, each wasted pull costing
$\Delta$). (b) Detection delay collapses as connectivity rises and the reach
rate climbs, the $N\Stdec\DG$ delay term. (c) Group regret grows linearly in
$\Stdec$ at fixed inter-switch spacing, the $N\Stdec$ switch term. Reproduced by
the switching-regime stress sweep.}
\label{fig:sdec-stress}
\end{figure}

\subsection{I.6 Asynchrony and partial participation}
\label{app:async}

The main algorithm schedules a synchronized balanced block in which
every agent participates. We now show the equal-agent estimator degrades
gracefully when only a random subset of agents participates in each block, the
partial-participation model of asynchronous cooperative bandits where in each
round an unknown time-varying subset of agents is active~\citep{wang2025asynchronous},
and that the natural re-normalization keeps the estimator unbiased, answering
the question of whether equal-agent averaging can absorb dropouts.

\paragraph{Oblivious dropout model.} At each balanced block let
$\Pi\subseteq[N]$ be the participating set, each agent included independently
with probability $p$, drawn independently of the rewards and of the mean path
(the oblivious analogue of the main reward-obliviousness assumption). A participating agent
pulls every active arm on the block's balanced schedule, a non-participating one
contributes no pulls. The \emph{re-normalized} equal-agent block estimator of
arm $a$ averages over the participants,
$\hat\mu_a^{\Pi}=|\Pi|^{-1}\sum_{i\in\Pi}\hat g_{i,a}$, where $\hat g_{i,a}$ is
agent $i$'s within-block empirical mean. The \emph{naive} estimator keeps
dividing by the full count, $\hat\mu_a^{\mathrm{naive}}=N^{-1}\sum_{i\in\Pi}\hat g_{i,a}$.

\begin{proposition}[Re-normalization is unbiased under oblivious dropout]
\label{prop:async-unbiased}
Fix a block with active arms $a,b$ and per-agent means $\mu_{i,a}$. Under the
oblivious dropout model, conditioned on $|\Pi|=m$ for any $m\ge1$,
\[
\E\!\left[\hat\mu_a^{\Pi}-\hat\mu_b^{\Pi}\,\middle|\,|\Pi|=m\right]
=\frac1N\sum_{i=1}^N(\mu_{i,a}-\mu_{i,b})=\Gbar_{a,b},
\]
the all-agent equal-agent gap, so the re-normalized gap estimator is exactly
unbiased for $\Gbar_{a,b}$ for every realized participation level. Its anytime
radius uses the realized pull count $|\Pi|h$, so over a length-$W$ window with
mean participation $p$ the effective per-arm count is $pNWh$ and the radius
$r_n$ inflates by a factor $1/\sqrt p$ over full participation. The naive
estimator instead satisfies
$\E[\hat\mu_a^{\mathrm{naive}}-\hat\mu_b^{\mathrm{naive}}\mid|\Pi|=m]=(m/N)\Gbar_{a,b}$,
so its gap is biased by the participation fraction, shrinking to $p\,\Gbar_{a,b}$
in expectation.
\end{proposition}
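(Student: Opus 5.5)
The plan is to condition on the realized participating set $\Pi$ rather than only on its size, and then average over $\Pi$ given $|\Pi|=m$. The tool that makes this work is exchangeability of the oblivious dropout model. Because each agent is included independently with the same probability $p$, the conditional law of $\Pi$ given $|\Pi|=m$ is uniform over the $\binom{N}{m}$ subsets of size $m$. Hence $\Prb(i\in\Pi\mid|\Pi|=m)=m/N$ for every $i$. This is the only property of the dropout law the argument will use.

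The first step is the inner conditional expectation. Fix $\Pi$. Participation is drawn independently of the rewards and of the mean path (the oblivious clause, paralleling main Assumption~1). Each participating agent runs the same balanced block schedule as in Lemma~\ref{lem:app-concentration}. Therefore the uniform-subset identity and the zero-mean noise give $\E[\hat g_{i,a}\mid\Pi,\mathcal F_0]=\mu_{i,a}$, where $\mu_{i,a}$ denotes the block time-average mean in the notation of \S\ref{app:windows}. From this:
\[
\E\bigl[\hat\mu_a^{\Pi}-\hat\mu_b^{\Pi}\mid\Pi\bigr]=\frac1m\sum_{i\in\Pi}(\mu_{i,a}-\mu_{i,b}),\qquad
\E\bigl[\hat\mu_a^{\mathrm{naive}}-\hat\mu_b^{\mathrm{naive}}\mid\Pi\bigr]=\frac1N\sum_{i\in\Pi}(\mu_{i,a}-\mu_{i,b}).
\]

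The second step averages over $\Pi$ given $|\Pi|=m$. Write each sum as $\sum_i \1\{i\in\Pi\}(\mu_{i,a}-\mu_{i,b})$ and substitute $\Prb(i\in\Pi\mid|\Pi|=m)=m/N$. The re-normalized estimator becomes $\frac1m\cdot\frac mN\sum_i(\mu_{i,a}-\mu_{i,b})=\Gbar_{a,b}$. The naive estimator becomes $\frac1N\cdot\frac mN\cdot N\,\Gbar_{a,b}=(m/N)\Gbar_{a,b}$. Taking a further expectation over $|\Pi|\sim\mathrm{Bin}(N,p)$ gives $\E[m/N]=p$, so the naive gap shrinks to $p\,\Gbar_{a,b}$. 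We restrict to $m\ge1$, so the re-normalized estimator is always defined.

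The third step is the radius statement. Conditional on the participation sequence, the re-normalized window estimator is an equal-weight average over the realized agent-samples. The block-martingale argument of Step~1 in Section~H therefore goes through with $N$ replaced by $|\Pi|$ in each block's variance proxy, i.e.\ $\sigma_\ell^2\le c_1/(|\Pi_\ell|h)$. Summing over $W$ blocks, the effective per-arm count is $\sum_\ell|\Pi_\ell|h\approx pNWh$, so $r_n$ inflates by a factor $1/\sqrt p$.

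I expect this step to be the main obstacle. The per-block proxies now vary with $|\Pi_\ell|$, so the stitched boundary must be applied with a predictable (random but oblivious) intrinsic time rather than the deterministic $B/(Nh)$. Since participation is independent of the rewards, I would condition on the entire participation sequence first. Then the intrinsic time is deterministic and the Howard et al.\ boundary applies verbatim in $\sum_\ell 1/(|\Pi_\ell|h)$. Replacing this sum by its typical value $\approx W/(pNh)$ holds up to a Chernoff factor on $|\Pi_\ell|$, which I would state as the ``$\approx$'' in the claim.
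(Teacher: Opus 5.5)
Your first two steps are correct and match the paper's proof. Exchangeability of the i.i.d.\ inclusions makes $\Pi$ a uniform $m$-subset given $|\Pi|=m$, so $\Prb(i\in\Pi\mid|\Pi|=m)=m/N$. Linearity then gives $\Gbar_{a,b}$ for the re-normalized gap and $(m/N)\Gbar_{a,b}$ for the naive one, and $\E|\Pi|/N=p$ closes the naive claim.

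The radius step is where your plan fails. The real obstacle is the random \emph{target}, not the random intrinsic time you flag.

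Once you condition on the whole participation sequence $(\Pi_\ell)_\ell$, the conditional mean of the block-$\ell$ re-normalized estimator is the participant-subset gap $|\Pi_\ell|^{-1}\sum_{i\in\Pi_\ell}d_{i,\ell}$. Here $d_{i,\ell}$ is agent $i$'s block time-average gap. That mean is not $\bar G_\ell=N^{-1}\sum_{i}d_{i,\ell}$, and in a heterogeneous network the two differ. So under your conditioning the increments $\hat G_\ell-\bar G_\ell$ are not mean zero. Applying the stitched boundary ``verbatim'' then gives a confidence sequence around a running average of subset gaps, not around $\Gbar_{a,b}$. The unbiasedness from your step~2 appears only after averaging over the identity of $\Pi_\ell$.

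The repair is to condition only on the sizes $(|\Pi_\ell|)_\ell$, which still makes the intrinsic time deterministic. Keep the agent-sampling error $S_\ell=|\Pi_\ell|^{-1}\sum_{i\in\Pi_\ell}d_{i,\ell}-\bar G_\ell$ inside the increment. Exchangeability makes $S_\ell$ mean zero. But by Hoeffding--Serfling its proxy is of order $(1-|\Pi_\ell|/N)\,\mathrm{range}_i(d_{i,\ell})^2/|\Pi_\ell|$, with no $1/h$ factor. This is sharp in order, since a without-replacement subset mean has variance $\sigma_d^2(N-m)/(m(N-1))$.

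The per-block proxy is therefore of order $1/(|\Pi_\ell|h)+(1-p)\,\mathrm{range}^2/|\Pi_\ell|$. Consequently:
\begin{itemize}
\item the effective count $pNWh$ and the exact $1/\sqrt p$ inflation hold only when agents are homogeneous in the contested gap, or when $h(1-p)\,\mathrm{range}^2=O(1)$;
\item in general the inflation is $p^{-1/2}\sqrt{1+O(h(1-p)\,\mathrm{range}^2)}$.
\end{itemize}
The paper's one-line justification (the time-varying concentration lemma of supplement~A with $N$ replaced by the realized participant count) skips the same term. So this is a defect of the radius claim itself, not of your route alone, but your argument as written would not establish it.

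Separately, pin down which window estimator you mean. The intrinsic time $\sum_\ell 1/(|\Pi_\ell|h)$ belongs to the block-equal-weight average, not to the pooled average over agent-samples you describe. That block-equal-weight average has an effective count governed by the harmonic mean of the $|\Pi_\ell|$, and it is undefined on blocks with $|\Pi_\ell|=0$.
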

\begin{proof}
Conditioned on $|\Pi|=m$, the set $\Pi$ is a uniformly random $m$-subset of
$[N]$ because the inclusions are i.i.d.\ and exchangeable, so
$\Prb[i\in\Pi\mid|\Pi|=m]=m/N$ for every $i$. The block schedule is fixed
independently of the rewards, so
$\E[\hat g_{i,a}-\hat g_{i,b}]=\mu_{i,a}-\mu_{i,b}$. Hence
\[
\E\!\left[\tfrac1m\textstyle\sum_{i\in\Pi}(\hat g_{i,a}-\hat g_{i,b})\,\middle|\,|\Pi|=m\right]
=\tfrac1m\sum_{i=1}^N\tfrac{m}{N}(\mu_{i,a}-\mu_{i,b})
=\Gbar_{a,b},
\]
independent of $m$, which is the first claim. The radius statement is the
concentration radius of Lemma~\ref{lem:app-concentration} with $N$ replaced by
the realized participant count, whose window mean is $pN$. For the naive
estimator the $1/m$ is replaced by $1/N$, multiplying the conditional mean by
$m/N$, and taking expectations over $|\Pi|$ with $\E|\Pi|=pN$ gives the factor
$p$. \qed
\end{proof}

The re-normalized estimator therefore carries the within-regime drift guarantee
of the main DRFC-Seq theorem to partial participation with no bias, only a
$1/\sqrt p$ radius inflation that slows adoption gracefully, while the naive
estimator reads a gap shrunk by $p$ and stalls. The records that do flood are
deduplicated by source exactly as before, so a dropped agent is simply absent
from the window rather than double-counted, and the source-traceable scheme is
unchanged. This is robustness to missed probe blocks and temporary
disconnection at the estimator level, complementary to the lossy-gossip
robustness already modeled by the main random-graph flooding assumption.

\paragraph{Empirical degradation.} Figure~\ref{fig:async} sweeps the
participation probability $p$ from $1.0$ down to $0.30$ on the within-regime
drift instance and on a genuine-switch instance ($N=24$, $K=4$, $\Delta=0.10$,
amplitude $b=0.20$, period-spanning window, $30$ seeds), and confirms
Proposition~\ref{prop:async-unbiased}. On the drift instance the re-normalized
windowed gap stays flat at $0.095$ to $0.097$, within noise of the true margin
$\Delta=0.10$, across the whole participation range, while the naive gap shrinks
linearly from $0.096$ at full participation to $0.028$ at $p=0.30$, exactly the
$p\,\Gbar$ bias the proposition predicts. DRFC-Seq holds a zero false-switch
rate at every participation level under both estimators, so drift safety
survives dropout, whereas the decision-level DecCUSUM false-alarms at rate one
throughout. On the genuine-switch instance the re-normalized monitor adopts the
switch in every run down to $p=0.90$, with the adopt latency growing from $4980$
to $5557$ rounds as the $1/\sqrt p$ radius inflation slows certification, and
still in two thirds of runs at $p=0.75$, stalling only once $p\le0.6$ drives the
inflated radius above the margin, a safe failure that never mis-adopts. The
naive estimator, reading a gap shrunk by $p$, already collapses to a zero adopt
rate by $p=0.75$. Re-normalization therefore extends reliable adoption to far
lower participation, and the only price of dropout is the graceful
$1/\sqrt p$ latency growth the analysis predicts, not a bias or a false switch.

\begin{figure}[t]
\centering
\includegraphics[width=0.98\linewidth]{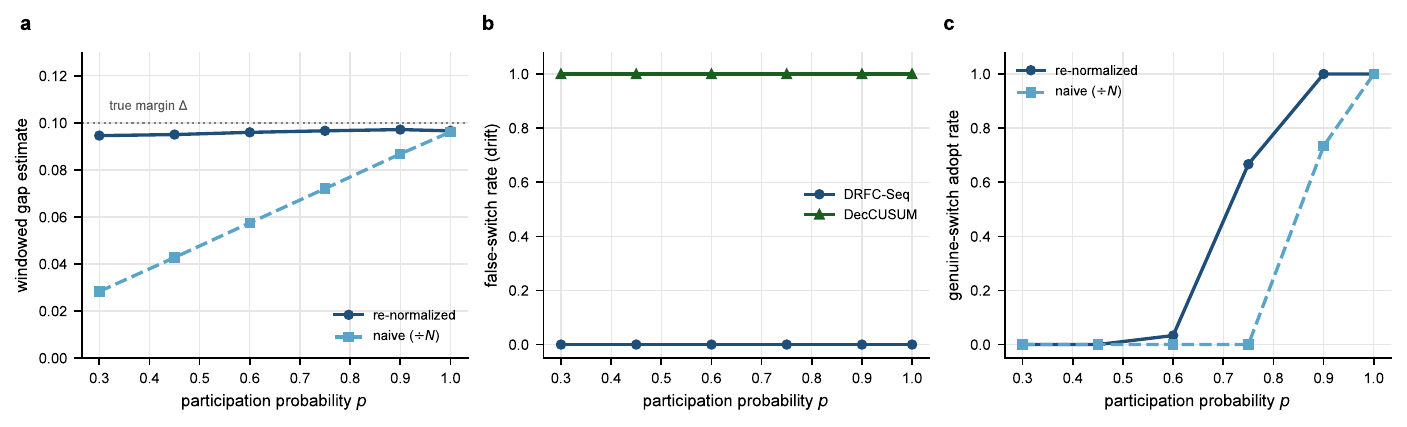}
\caption{Asynchrony and partial participation ($N=24$, $K=4$, $\Delta=0.10$,
$30$ seeds), sweeping the per-block participation probability $p$. (a) On the
drift instance ($\Stdecavg=0$) the re-normalized windowed gap stays at the true
margin $\Delta$ across all $p$ (unbiased), while the naive estimator shrinks to
$p\,\Delta$ (biased), the empirical face of Proposition~\ref{prop:async-unbiased}.
(b) DRFC-Seq holds a zero false-switch rate under dropout while DecCUSUM
false-alarms at rate one. (c) On a genuine-switch instance ($\Stdecavg=1$) the
re-normalized monitor keeps adopting down to about $75\%$ participation with a
gracefully growing latency, while the naive estimator stalls. Reproduced by the
partial-participation sweep.}
\label{fig:async}
\end{figure}

\clearpage
\subsection{I.7 Structured and persistent participation bias}
\label{app:struct}

Section~\ref{app:async} assumes dropout is oblivious, independent of the rewards. The harder question is structured participation, where which agents drop out is correlated with their private arm preferences and the pattern persists across the horizon. In a heterogeneous network the equal-agent global mean of an arm is the average of per-agent means, so if participation correlates with those means the surviving subset is no longer a uniform sample and the re-normalized subset mean is biased. The headline is an exact, operating-point-independent characterization of that bias as a covariance identity. A finite-sample corollary then shows the margin absorbs the bias once it exceeds the confidence radius by a constant factor, a regime reached with a longer window or a larger gap, and a final part shows that knowing the propensities removes the bias outright.

\paragraph{Persistent-propensity model.} Each agent $i$ participates in every block independently with a fixed propensity $p_i\in[p_{\min},p_{\max}]$, reward-independent given the block but correlated with the per-agent means $\mu_{i,a}$. Write $\bar p=N^{-1}\sum_i p_i$. The re-normalized subset-mean estimator of arm $a$ over a window of $W$ blocks pools the surviving agent-block means, $\hat\mu_a=\bigl(\sum_{w,i}\mathbf{1}_{i,w}\hat g_{i,a,w}\bigr)/\bigl(\sum_{w,i}\mathbf{1}_{i,w}\bigr)$, where $\mathbf{1}_{i,w}$ indicates that agent $i$ joined block $w$.

\begin{proposition}[Subset-mean bias under structured participation]
\label{prop:struct}
Under the persistent-propensity model, as the window length $W\to\infty$ the re-normalized subset mean converges almost surely to the propensity-weighted mean, and its bias relative to the all-agent equal-agent mean $\bar\mu_a=N^{-1}\sum_i\mu_{i,a}$ is the exact identity
\[
\mathrm{bias}_a \;=\; \frac{\sum_i p_i\mu_{i,a}}{\sum_i p_i}-\bar\mu_a \;=\; \frac{\mathrm{Cov}_i(p,\mu_{\cdot,a})}{\bar p},
\]
where $\mathrm{Cov}_i$ and $\bar p$ are the empirical covariance and mean over the $N$ agents. Consequently
\[
|\mathrm{bias}_a|\;\le\;\frac{(p_{\max}-p_{\min})\,(\max_i\mu_{i,a}-\min_i\mu_{i,a})}{4\,\bar p},
\]
and the contested-pair gap estimate is biased by $B_{a,b}=[\mathrm{Cov}_i(p,\mu_{\cdot,a})-\mathrm{Cov}_i(p,\mu_{\cdot,b})]/\bar p$. DRFC-Seq's adopt rule preserves the true decision ordering, with neither a false switch nor a missed genuine switch, whenever $\Delta>4r_n$ and
\[
|B_{a,b}| \;<\; \Delta - 4r_n ,
\]
so structured participation bias is absorbed up to a margin-dependent boundary and only beyond it can flip a decision. If the propensities are known, the self-normalized inverse-propensity estimator $\hat\mu_a^{\mathrm{IPW}}=\bigl(\sum_{i\in\Pi}\hat g_{i,a}/p_i\bigr)/\bigl(\sum_{i\in\Pi}1/p_i\bigr)$ is asymptotically unbiased for $\bar\mu_a$, at the price of a variance inflation by the Kish effective-sample-size factor folded into $r_n$.
\end{proposition}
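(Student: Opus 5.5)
The plan has four parts: an almost-sure limit for the pooled ratio, a covariance rewrite of its bias, a margin-absorption argument for the decision rule, and a self-normalized ratio argument for the inverse-propensity estimator.

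\emph{Limit of the pooled estimator.} Write $\hat\mu_a$ as a ratio of two window averages,
\[
\hat\mu_a=\frac{W^{-1}\sum_{w}\sum_i \mathbf{1}_{i,w}\hat g_{i,a,w}}{W^{-1}\sum_{w}\sum_i\mathbf{1}_{i,w}}.
\]
Under the persistent-propensity model the indicator $\mathbf{1}_{i,w}$ is independent of $\hat g_{i,a,w}$ given the block. It is also independent across blocks, with mean $p_i$. Hence the summands $\mathbf{1}_{i,w}\hat g_{i,a,w}$ are bounded and block-independent, with conditional mean $p_i\mu_{i,a}$; here I take a fixed-mean idealization, or replace $\mu_{i,a}$ by its window average. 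The strong law, or equivalently the block-martingale concentration of Lemma~\ref{lem:app-concentration} applied to numerator and denominator separately, gives almost-sure convergence. The numerator tends to $\sum_i p_i\mu_{i,a}$ and the denominator to $\sum_i p_i=N\bar p>0$. The continuous-mapping theorem then yields the propensity-weighted limit.

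\emph{Covariance identity and its bound.} The identity follows algebraically:
\[
\frac{\sum_i p_i\mu_{i,a}}{\sum_i p_i}-\bar\mu_a=\frac{N^{-1}\sum_i p_i\mu_{i,a}-\bar p\,\bar\mu_a}{\bar p}=\frac{\mathrm{Cov}_i(p,\mu_{\cdot,a})}{\bar p}.
\]
For the bound I will apply the Cauchy--Schwarz inequality to the empirical covariance. Then I use Popoviciu's inequality: a variable on an interval of length $\ell$ has variance at most $\ell^2/4$. This gives $|\mathrm{Cov}|\le\sqrt{\mathrm{Var}(p)\mathrm{Var}(\mu)}\le(p_{\max}-p_{\min})(\max\mu-\min\mu)/4$. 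Taking the difference of the two arms' identities gives $B_{a,b}$.

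\emph{Decision preservation.} This is the step needing the most care, because the confidence radius must hold around the biased target. I will rerun Step~1 of Section~H with the re-centered target $\Gbar_{a,b}+B_{a,b}$ in place of $\Gbar_{a,b}$. For the finite window, the identity should hold exactly at the level of the conditional-mean target after normalizing by the realized count. I will handle this by conditioning on the participation pattern: Proposition~\ref{prop:async-unbiased}-style conditioning makes the conditional target a participation-weighted mean, and I expect its deviation from the limit to be absorbed into an extra $r_n$-order term. That is the source of the factor $4$ rather than $2$ in $4r_n$, so the radius becomes $2r_n$ on each side.

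On the good event the empirical gap therefore lies within $2r_n$ of $\Gbar_{a,b}+B_{a,b}$. Take the true best arm against any challenger. The empirical gap then satisfies $\Ghat\ge\Delta-|B|-2r_n$, so the lower confidence bound is $\Ghat-2r_n\ge\Delta-|B|-4r_n>0$; this gives adoption and no missed switch, as in Lemma~\ref{lem:drfc-seq-recovery}. In the reverse direction, the lower confidence bound is $\le-\Delta+|B|+4r_n-2r_n<0$, which gives no false switch, as in part~(a). The hypothesis $\Delta>4r_n$ ensures the admissible bias range is nonempty.

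\emph{The IPW estimator.} Redo the ratio argument above with weights $\mathbf{1}_{i,w}/p_i$. The numerator has mean $\mu_{i,a}$ per agent and the denominator has mean $1$ per agent, so the limit is $\bar\mu_a$. Its variance scales with $\sum_i 1/p_i$ relative to $N$. Expressing this through the Kish factor $(\sum w)^2/\sum w^2$ and substituting the effective count into $r_n$ gives the stated inflation.
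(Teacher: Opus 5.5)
Your proposal is correct and follows the paper's proof essentially step for step. The steps are the strong-law limit of the pooled ratio; clearing denominators to get the covariance identity; Cauchy--Schwarz plus Popoviciu for the range bound; a $2r_n$ deviation of the empirical gap from $\Gbar_{a,b}+B_{a,b}$ plus the $2r_n$ lower-confidence subtraction, which yields $|B_{a,b}|<\Delta-4r_n$; and the same ratio limit with weights $\mathbf{1}_{i,w}/p_i$ and the Kish effective sample size for IPW. You are slightly more careful than the paper in flagging that the finite-window conditional target fluctuates around the propensity-weighted limit, which the paper simply asserts away by placing each arm within $r_n$ of its biased limit; note, though, that the factor $4$ comes from deviation plus lower-confidence subtraction rather than from that fluctuation, and that in the reverse direction the lower confidence bound is already at most $-\Delta+|B_{a,b}|$.
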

\begin{proof}
Participation is drawn independently across the $W$ blocks, and the within-block means $\hat g_{i,a,w}$ are bounded in $[0,1]$, so by the law of large numbers the block-averaged numerator $W^{-1}\sum_{w,i}\mathbf{1}_{i,w}\hat g_{i,a,w}\to\sum_i p_i\mu_{i,a}$ and the block-averaged denominator $W^{-1}\sum_{w,i}\mathbf{1}_{i,w}\to\sum_i p_i$ almost surely, hence $\hat\mu_a\to(\sum_i p_i\mu_{i,a})/(\sum_i p_i)$. Clearing denominators,
\[
\frac{\sum_i p_i\mu_{i,a}}{\sum_i p_i}-\bar\mu_a
=\frac{N\sum_i p_i\mu_{i,a}-(\sum_i p_i)(\sum_i\mu_{i,a})}{N\sum_i p_i}
=\frac{\mathrm{Cov}_i(p,\mu_{\cdot,a})}{\bar p},
\]
with $\mathrm{Cov}_i(p,\mu_{\cdot,a})=N^{-1}\sum_i p_i\mu_{i,a}-\bar p\,\bar\mu_a$, which is the identity. The range bound follows from $|\mathrm{Cov}_i(p,\mu_{\cdot,a})|\le\sqrt{\mathrm{Var}_i(p)\,\mathrm{Var}_i(\mu_{\cdot,a})}\le\tfrac14(p_{\max}-p_{\min})(\max_i\mu_{i,a}-\min_i\mu_{i,a})$ by Cauchy--Schwarz and Popoviciu's inequality $\mathrm{Var}\le(\mathrm{range}/2)^2$. For the decision boundary, on the anytime-valid event of probability $1-\alpha$ each arm's windowed estimate lies within $r_n$ of its biased limit, so the empirical gap lies within $2r_n$ of $\Gbar_{a,b}+B_{a,b}$, with $\Gbar_{a,b}$ the all-agent gap of magnitude $\Delta$. The adopt rule certifies the pair only when this empirical gap, minus a further $2r_n$, is positive. When a genuine switch makes $\Gbar_{a,b}=+\Delta$ the worst-case empirical gap is $\Delta+B_{a,b}-2r_n$, so adoption is guaranteed once $\Delta+B_{a,b}-2r_n>2r_n$, that is $\Delta+B_{a,b}>4r_n$. When the candidate is truly best the certified lower bound never exceeds the biased limit $-\Delta+B_{a,b}$, so a false switch needs $B_{a,b}>\Delta$. The symmetric sufficient condition $|B_{a,b}|<\Delta-4r_n$, which presumes $\Delta>4r_n$, rules out both a missed genuine switch and a false switch. For IPW the same argument on the reweighted sums gives the limit $(\sum_i p_i\,\mu_{i,a}/p_i)/(\sum_i p_i/p_i)=N^{-1}\sum_i\mu_{i,a}=\bar\mu_a$, and the reweighted window's effective sample size is the Kish quantity $(\sum_{i\in\Pi}1/p_i)^2/\sum_{i\in\Pi}1/p_i^2$. \qed
\end{proof}

\paragraph{Empirical boundary and the IPW repair.} Figure~\ref{fig:struct} is a controlled diagnostic that sweeps the structural skew $s$ of a persistent preference-correlated dropout on the same drift and genuine-switch instances ($N=24$, $K=4$, $\Delta=0.10$, $30$ seeds, a two-group zero-sum tilt on the contested arm). The measured bias is the emergent difference between the subset and IPW windowed gaps, not the closed form. On the drift instance the subset-mean windowed gap falls almost exactly along the covariance prediction of Proposition~\ref{prop:struct}, from $0.090$ at $s=0$ through zero near $s=0.5$ to $-0.097$ at $s=0.9$, the measured bias matching $\mathrm{Cov}(p,\mu)/\bar p$ to within $0.02$ (for instance $0.105$ against the predicted $0.111$ at $s=0.6$), while the IPW estimator holds the gap flat at the margin across all skews. The decision tracks the boundary, the subset monitor holding a zero false-switch rate well past the guaranteed-safe point and breaking only once the bias clears the margin by the radius headroom, to $0.17$ at $s=0.9$ and $1$ at $s=1$, whereas IPW stays at zero throughout. This drift instance is the clean test of the proposition, since its false-switch boundary depends on the bias sign alone, not on the radius headroom. On the genuine-switch instance the headroom is already thin at this partial-participation operating point, where $4r_n$ is comparable to $\Delta$, so adoption is radius-limited even at $s=0$ (subset $0.50$, IPW $0.60$), consistent with the $\Delta>4r_n$ premise not being comfortably met. The structure then suppresses the new best arm, so the biased subset monitor's adopt rate collapses to zero by $s=0.15$, while IPW, which removes the bias but pays a variance inflation, keeps adopting at $0.43$ and $0.23$ for $s=0.15$ and $0.3$ before its inflated radius censors adoption. To show the decision-safety corollary is not vacuous, we rerun the same instances at a wider-margin point where $\Delta>4r_n$ holds ($\Delta=0.25$ at the same window). Genuine-switch adoption then returns to $1$ at $s=0$, and the biased subset monitor adopts in every run up to $s=0.6$ before the bias clears the boundary and adoption falls to zero by $s=0.75$, the predicted transition, while the IPW monitor adopts in every run through $s=0.9$ and falls only at $s=1$. The wider margin also holds the drift-instance false-switch rate at zero throughout, the margin now absorbing the full swept bias. Structured participation bias is therefore real but bounded by the margin, and its mean is removable by inverse-propensity weighting when the propensities are known or estimable from participation logs, at a variance cost that limits finite-sample adoption power.

\begin{figure}[t]
\centering
\includegraphics[width=0.98\linewidth]{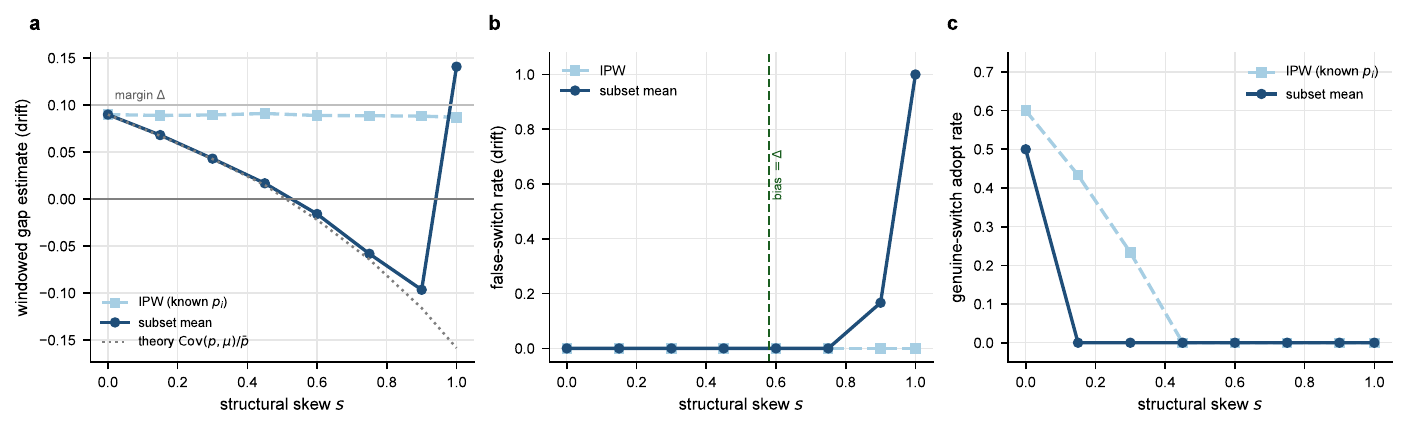}
\caption{Structured and persistent participation bias ($N=24$, $K=4$, $\Delta=0.10$, $30$ seeds), sweeping the structural skew $s$ of a persistent preference-correlated dropout. (a) On the drift instance the subset-mean windowed gap follows the covariance prediction of Proposition~\ref{prop:struct} (dotted) and crosses zero, while IPW with known propensities stays at the margin $\Delta$. The $s=1$ subset point is post-breakdown, measured against the switched candidate. (b) The subset monitor holds false-switch rate $0$ through the guaranteed-safe boundary (vertical line, bias $=\Delta$) and breaks only past it, once the bias clears the margin by the radius headroom, while IPW stays at $0$. (c) On a genuine-switch instance IPW preserves adoption to lower skew than the biased subset mean. Reproduced by the structured-participation sweep.}
\label{fig:struct}
\end{figure}

\subsection{I.8 Scaling to large action spaces}
\label{app:largeK}

A referee asked for the cost of block-based certification at large $K$, where each DRFC scan pulls $Kh$ samples per agent per block. We sweep the arm count $K\in\{4,8,16,32\}$ at $N=12$ agents, with two decision switches ($\Stdec=2$), moderate decision-irrelevant local drift ($\Stloc=26$), decision gap $\Delta=0.20$, and edge probability $0.5$. The monitor period is held fixed at $M=1500$ so that a full $K$-arm scan, which costs $Kh$ pulls per agent and ranges from $32$ to $256$ at $h=8$, completes within one period even at $K=32$, using the disclosed practical radius constant $C=1.15$. Each cell averages $12$ seeds over horizon $15000$ with full source-record reach.

Figure~\ref{fig:largeK} shows two facts. First, DRFC regret tracks the centralized oracle at every arm count, from $4336$ against the oracle $4218$ at $K=4$ to $9896$ against $9605$ at $K=32$, a graceful growth of about $2.3$ times across an eightfold larger action space, and it stays well below every reactive baseline, the margin widening with $K$, so that at $K=16$ DRFC reaches $6612$ against CUSUM-Reset $31171$ and SW-UCB-Dec $31482$. This growth is the governed $Kh$ certification cost the block scan pays for a larger action space, not a failure mode. Second, DRFC per-agent source-record traffic stays nearly flat in $K$, from $0.145$ to $0.193$ records per agent per round, because source-traceable gossip deduplicates by source and there are only $N$ sources regardless of $K$. The index-sharing SW-UCB-Dec instead broadcasts a $K$-vector each cycle, so its per-agent traffic grows linearly in $K$, through $0.333$, $0.667$, $1.333$, and $2.667$. The realized DRFC traffic sits far below the per-agent $\widetilde O(K(h+\DG)/M)$ worst case of Section~\ref{app:message-cost}, since steady-state source dedup leaves it $N$-dominated rather than $K$-dominated, so large-action-space communication is governed by the number of sources and the monitor period rather than by the action-space width.

\paragraph{Subsampled blocks for very large $K$.} The $Kh$ per-block pull cost is the one quantity that grows with the action space. A subsampled active-set block that samples a uniform size-$r$ subset of the active arms each block replaces $Kh$ by $rh$ per block at the price of about $K/r$ more blocks to cover every arm, leaving the elimination guarantee intact because each arm is still certified once it has been covered enough times. This keeps the per-block budget bounded for very large $K$ while the decision-level certification is unchanged. A companion agent-count sweep confirms that the per-agent traffic grows with the network only through the flooding rounds, the $\DG$ dependence of Section~\ref{app:message-cost}, with the per-agent regret order-constant.

\begin{figure}[t]
\centering
\includegraphics[width=0.98\linewidth]{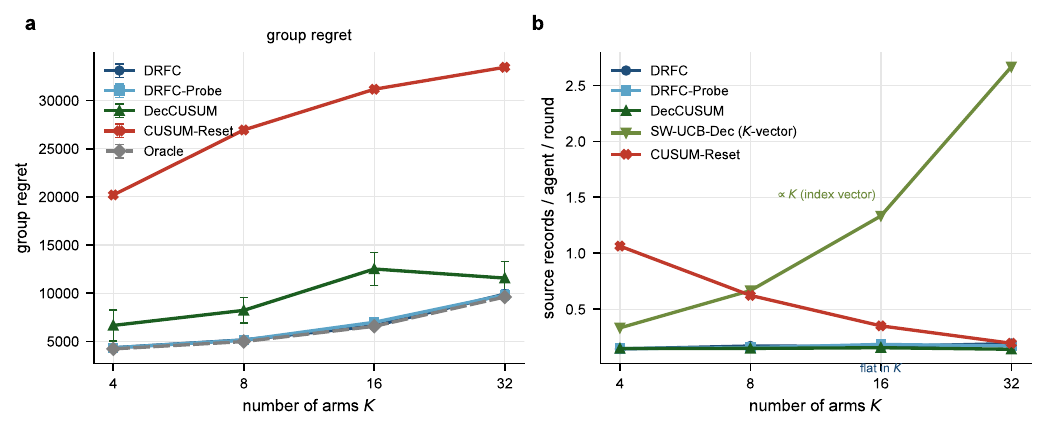}
\caption{Scaling to large action spaces ($N=12$, $\Stdec=2$, $\Delta=0.20$, $12$ seeds, monitor period fixed at $M=1500$). (a) Group regret against the arm count $K$. DRFC tracks the centralized oracle at every $K$ and stays far below the reactive and decision-level change detectors, the margin widening with $K$. (b) Per-agent source-record traffic against $K$. DRFC stays flat near $0.17$ because source-traceable gossip deduplicates by source, while the index-sharing SW-UCB-Dec broadcasts a $K$-vector and grows linearly in $K$. Reproduced by the large-action-space sweep.}
\label{fig:largeK}
\end{figure}

\section{J. Proof of the Personalized Benchmark Theorem}
\label{app:pers}

This section proves the personalized-benchmark theorem for DRFC-Pers, a supplement-only result distinct from the four numbered main-paper theorems. The main paper refers to the personalized benchmark only in its limitations.
The argument is a direct per-cluster instantiation of the global proof
(Sections~A--B of this supplement) plus a union bound over clusters; it does
not introduce new concentration machinery.

\paragraph{Setup.}
Recall the partition $\{I_c\}_{c=1}^{G}$ with $|I_c|$ agents in cluster $c$, the
cluster-personalized best arm
$a_t^{\star,c}=\arg\max_a |I_c|^{-1}\sum_{i\in I_c}\mu_{i,a,t}$, and the
personalized regret
$R_T^{\mathrm{pers}}=\sum_c\sum_{i\in I_c}\sum_t(\mu_{i,a_t^{\star,c},t}
-\mu_{i,A_{i,t},t})$.
DRFC-Pers runs $G$ logically independent active-set scans, one per cluster;
cluster $c$'s scan uses only the agents in $I_c$ and the cluster-mean
$\bar\mu_a^{(c),t}$ as the elimination object.
Define per-cluster decision-switch times $\rho_r^{(c)}$ and per-cluster gap
$\Delta_a^{(r,c)}=\inf_{t\in[\rho_r^{(c)},\rho_{r+1}^{(c)})}
(\bar\mu_{a_r^{\star,c},t}-\bar\mu_{a,t})$ in direct analogy to the main sign-stable decision-regime definition.

\paragraph{Per-cluster sign preservation.}
The block-level estimator inside cluster $c$ is
$\hat G^{(c)}_{a,b}(\mathcal{W})=|I_c|^{-1}\sum_{i\in I_c}\hat g_{i,a,b}(\mathcal{W})$,
a direct restriction of the global object to $I_c$.
Equation~(\ref{lem:app-concentration})-style decomposition (reward noise +
schedule randomization) goes through with $N$ replaced by $|I_c|$ throughout,
so the radius is
$\beta^{(c)}(\mathcal{W})=C\sqrt{\log(K^2T^3/\delta_c)/(|I_c|\,h|\mathcal{W}|)}$
with $\delta_c=\delta/(3G)$ (the budget per cluster from the union bound below).
The sign-preservation lemma (Lemma~\ref{lem:app-sign}) and the
pairwise-correctness lemma (Lemma~\ref{lem:app-pairwise}) hold verbatim with
$\Delta_a^{(r)}$ replaced by $\Delta_a^{(r,c)}$ and the global mean replaced
by the cluster mean.

\begin{lemma}[Per-cluster sign preservation]
\label{lem:app-pers-sign}
If $\mathcal{W}$ is contained in cluster $c$'s decision regime $r$, then for
every $b\neq a_r^{\star,c}$,
$\bar G^{(c)}_{a_r^{\star,c},b}(\mathcal{W})\geq\Delta_b^{(r,c)}$.
\end{lemma}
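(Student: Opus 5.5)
The plan is to mirror the proof of Lemma~\ref{lem:app-sign}, restricted to the agents in $I_c$. First I would write the cluster target explicitly. It is the restriction of the window-average object of \S\ref{app:windows} to $I_c$:
\[
\bar G^{(c)}_{a,b}(\mathcal{W})=\frac{1}{|I_c|}\sum_{i\in I_c}\frac1B\sum_{\ell\in\mathcal{W}}\frac1{q_\ell}\sum_{t\in\mathcal{U}_\ell}\bigl(\mu_{i,a,t}-\mu_{i,b,t}\bigr).
\]
Here the schedule pools $\mathcal{U}_\ell$ are the same global-round macro-slot pools as before. Containment of $\mathcal{W}$ in cluster $c$'s regime $r$ means every $t\in\mathcal{U}_\ell$ lies in $[\rho_r^{(c)},\rho_{r+1}^{(c)})$.

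The key step is to reorder the finite sums so that the agent average sits innermost. For each fixed slot $t$, the inner quantity is then the cluster-mean gap $\bar\mu^{(c)}_{a_r^{\star,c},t}-\bar\mu^{(c)}_{b,t}$. By the definition of $\Delta_b^{(r,c)}$ as an infimum over the regime, this gap is at least $\Delta_b^{(r,c)}$ for every slot $t$ in the window.

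I would stress that the bound is applied to the cluster average, not per agent. The analogous step ① in Lemma~\ref{lem:app-sign} is phrased per $(i,t)$, but the regime definition only controls averaged gaps. With the agent average innermost, only the averaged bound is needed, and the argument is rigorous under heterogeneity.

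Finally, the remaining weights $\tfrac1B\cdot\tfrac1{q_\ell}$ over $\ell\in\mathcal{W}$ and $t\in\mathcal{U}_\ell$ are nonnegative and sum to $1$. Averaging the slotwise bound against them therefore gives $\bar G^{(c)}_{a_r^{\star,c},b}(\mathcal{W})\ge\Delta_b^{(r,c)}$.

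The only delicate point is the containment convention. I need every slot in every pool $\mathcal{U}_\ell$, not just the sampled slots, to lie inside the regime. This is what "window contained in the regime" means here, since the pools are subsets of the window's rounds. The companion reverse inequality $\bar G^{(c)}_{a,a_r^{\star,c}}(\mathcal{W})\le-\Delta_a^{(r,c)}$ then follows by antisymmetry of the gap. Nothing else is required: no concentration or union bound enters, since the statement concerns deterministic mean paths only.
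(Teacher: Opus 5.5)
Your proposal is correct and matches the paper's proof. The paper writes $\bar G^{(c)}_{a_r^{\star,c},b}(\mathcal{W})$ directly as a $\tfrac1B\cdot\tfrac1{q_\ell}$-weighted average of slotwise cluster-mean gaps, lower-bounds each one by $\Delta_b^{(r,c)}$ through the regime infimum, and uses that these convex weights sum to one. Your explicit reordering, which puts the agent average innermost, is what justifies that slotwise cluster-mean step; as you note, it also avoids the per-agent overstatement in step~① of Lemma~\ref{lem:app-sign}.
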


\begin{proof}
Fix $\mathcal{W}\subseteq[\rho_r^{(c)},\rho_{r+1}^{(c)})$ and $b\neq a_r^{\star,c}$.
\[
\begin{aligned}
\bar G^{(c)}_{a_r^{\star,c},b}(\mathcal{W})
={}& \frac1B\sum_{\ell\in\mathcal{W}}\frac1{q_\ell}\sum_{t\in\mathcal{U}_\ell}\bigl(\bar\mu_{a_r^{\star,c},t}-\bar\mu_{b,t}\bigr)\\
\overset{\text{①}}{\ge}{}& \frac1B\sum_{\ell\in\mathcal{W}}\frac1{q_\ell}\sum_{t\in\mathcal{U}_\ell}\Delta_b^{(r,c)}\\
={}& \Delta_b^{(r,c)},
\end{aligned}
\]
where ① applies the cluster-mean margin $\bar\mu_{a_r^{\star,c},t}-\bar\mu_{b,t}\ge\Delta_b^{(r,c)}$ at every $t\in[\rho_r^{(c)},\rho_{r+1}^{(c)})$ and the closing equality uses the convex weights $\tfrac1B\cdot\tfrac1{q_\ell}\ge0$ summing to $1$.
\end{proof}

\paragraph{Per-cluster active-set scan certification.}
By the same argument as Lemma~\ref{lem:app-active-scan} but with $N$ replaced
by $|I_c|$, the per-cluster scan in regime $r$ certifies $a_r^{\star,c}$ once
its radius is below $\Delta_a^{(r,c)}/4$, requiring at most
$B_a^{(r,c)}=O(\log(K^2T^3/\delta_c)/(|I_c|h(\Delta_a^{(r,c)})^2))$
blocks per active arm.
The one-scan group regret is
$\Ccert_r^{(c)}=\sum_{a\neq a_r^{\star,c}}\Gamma_a^{(r,c)}/(\Delta_a^{(r,c)})^2$.

\paragraph{Per-cluster dynamic regret.}
Applying the five-term decomposition of Theorem~\ref{thm:app-regret} per
cluster gives
\begin{align*}
R_T^{(c)}
&\leq
\sum_{r=0}^{\Stdec^{(c)}}(1+\tfrac{\Lreg_r^{(c)}}{M})\Ccert_r^{(c)} \\
&\quad
+|I_c|\Stdec^{(c)}\bigl(M+\Hscan+\DG(\delta_c/3)\bigr).
\end{align*}
The boundary-crossing scan accounting and synchronization term carry over
verbatim with $N\to |I_c|$.

\paragraph{Union bound across clusters.}
Reserve the concentration budget $\delta/3$ for the per-cluster concentration
events, the flooding budget $\delta/3$ for the propagation events, and the
synchronization budget $\delta/3$ for the per-cluster epoch-tie-breaking
events---exactly as in Lemma~\ref{lem:uniform-good-event}.
Each cluster consumes $1/G$ of each budget; the per-cluster effective
confidence is $\delta_c=\delta/(3G)$.
The $\log(K^2T^3/\delta_c)$ factor inside $\Ccert_r^{(c)}$ absorbs the extra
$\log G$ inside $\widetilde O$.

Summing the per-cluster regrets and applying linearity of expectation gives
\[
R_T^{\mathrm{pers}}
=
\widetilde O\!\left(\sum_{c=1}^{G}
\left[\sum_{r=0}^{\Stdec^{(c)}}\!\!(1+\tfrac{\Lreg_r^{(c)}}{M})\Ccert_r^{(c)}
+|I_c|\Stdec^{(c)}\bigl(M+\Hscan+\DG(\delta/(3G))\bigr)\right]\right),
\]
which is the per-cluster personalized-benchmark bound for DRFC-Pers.
The high-probability statement holds with probability at least $1-\delta$ by
the union bound.
\qed

\paragraph{Per-cluster separation.}
The main class-relative separation theorem also lifts per-cluster:
on an instance family where cluster $c$ has $\Stloc^{(c)}=\Theta(T/L)$ but
$\Stdec^{(c)}=0$, any $(\alpha,\gamma,\eta,\tau)$-reactive protocol triggered
by per-cluster local jumps incurs $\Omega((1-\eta)\Stloc^{(c)}c_{\mathrm{reset}})$
reset regret within cluster $c$, while DRFC-Pers pays no $\Stloc^{(c)}$
adaptation term.
The construction reuses the sign-flip family of supplementary Section~E
restricted to a single cluster.

\paragraph{When the bound is tight.}
For each cluster, the per-cluster bound is tight up to the same $\log$ factors
as the global bound (matched by the homogeneous-family certification lower
bound applied per cluster).
The $G$-fold parallelism cost shows up as a factor $G$ in the additive
$\log\log G$ inside $\widetilde O$ from the union bound, but no multiplicative
$G$ on the leading $\Ccert_r^{(c)}$ term.

\section{K. Reproducibility Checklist}
\label{app:repro}

All experiments are implemented in Python (standard library only for the
simulators; the figure layer uses \texttt{matplotlib} and \texttt{numpy}). Each
run is deterministic given its recorded seed and frozen experiment
configuration, and writes tabular outputs consumed by the figure-generation
routines. The anonymized code-and-data archive submitted as supplementary
material contains the reproduction instructions, per-seed logs, aggregate tables, and
significance outputs needed to reproduce every figure, table, and statistical
test; the archive is released publicly upon publication.

\paragraph{Environment.} CPython 3.13 with \texttt{numpy} 2.2 and
\texttt{matplotlib} 3.10 for the figure layer; the simulators use the standard
library only. Experiments run on a single commodity x86-64 CPU core under
Windows 11, with under 2\,GB RAM and no GPU or external services. The
MovieLens-1M table used by the real-data drift experiment ($\sim$6\,MB) is
auto-downloaded and cached on first use; the cache is not included in the
supplementary archive.

\paragraph{Mapping experiments to results.} The reproducibility package maps
each reported artifact to a documented experiment entry:
\begin{itemize}
\item \emph{Within-regime drift phase transition} (main DRFC-Seq theorem, synthetic
square-wave instance): simulator outputs, aggregate tables, and rendered
figures for the synthetic drift panel.
\item \emph{Real-data within-regime drift} (MovieLens-1M, Drama-excluded genre
set with $8/16$ sign-changing bins): the corresponding replay outputs and
rendered real-data drift figures.
\item Decision-switch, ablation, gossip, graph-probability, user-cluster, and
doubling sweeps: one documented experiment entry per sweep, each paired with its
aggregate table and rendered figure.
\item Headline significance tests of Table~\ref{tab:significance}, the one-sided
Mann-Whitney $U$ on decision regret and Fisher exact on the per-seed
false-switch event with Holm correction: the analysis reuses the exact figure
pipeline and re-aggregates its per-seed values against the archived aggregate
tables as a self-check.
\end{itemize}

\paragraph{Statistics.} Every reported number is a mean over independent runs
with distinct seeds: $20$ runs for the K-ary, MovieLens-replay, user-cluster, and
doubling sweeps (the frozen configuration default), and $30$ runs for the
within-regime-drift sweeps. Shaded bands in figures are 95\% confidence
intervals of the mean ($1.96\,\sigma/\sqrt{\text{runs}}$). Table $\pm$ ranges are
one standard deviation across runs. The false-switch rate is the per-seed event rate, the fraction of
seeds in which the monitor adopts a non-time-average-best arm at least once inside a no-decision-switch
regime. This is the quantity the per-seed Fisher test below operates on, and it is the definition used
in every false-switch table and figure caption.

\paragraph{Statistical significance of the headline separation.}
The headline drift separation is backed by a significance analysis over the same
$30$ seeds, not a visual gap. Each method draws an independent policy seed per
run, so the $30$ per-method values form an independent sample, and we test the
two headline quantities directly. Decision regret is compared with a one-sided
Mann-Whitney $U$ test and the per-seed false-switch event with Fisher's exact
test, both against each adopt-capable decision-level detector. Holm-Bonferroni
controls the family-wise error across all comparisons within each dataset, and a
$10{,}000$-resample bootstrap interval for the median regret difference
accompanies every regret test. A self-check re-aggregates the per-seed values
used here and matches the archived aggregate tables to within $10^{-3}$, so these are
provably the same runs that underlie the synthetic and MovieLens drift
phase-transition figures. Table~\ref{tab:significance} reports, for each
baseline, the range over the post-margin amplitudes together with the most
conservative Holm-adjusted $p$-value in that range.

On the synthetic instance every separation is significant. Past the margin
DRFC-Seq attains $1.4$ to $2.2$ times lower median decision regret than each
detector at Holm-adjusted $p$ below $3\times10^{-10}$, with a zero false-switch
rate against rates of $0.17$ to $0.83$ at Holm-adjusted Fisher $p$ below $0.03$,
and the bootstrap median-difference interval excludes zero in every case. On real
MovieLens drift the false-switch separation is the strong result, with DRFC-Seq
holding rate zero while each detector climbs toward one at Holm-adjusted $p$ down
to $2\times10^{-16}$. The regret picture is reported faithfully rather than
overclaimed. DRFC-Seq has significantly lower regret than the two restart
detectors DL-GLR-klUCB and DL-ADR-bandit, while DecCUSUM reaches a marginally
lower median regret by harvesting decision-irrelevant reward through false
switches, so the one-sided regret test against DecCUSUM is not significant on real
data. This is the honest content of the real-data panel, where the decisive
quantity is the false-switch rate.

\begin{table}[t]
\centering
\small
\setlength{\tabcolsep}{4pt}
\begin{tabular}{llcccc}
\toprule
Data & Baseline & regret ratio & MWU Holm $p$ & FSR (DRFC vs base) & Fisher Holm $p$ \\
\midrule
\multicolumn{6}{l}{Synthetic square-wave drift, $b\in\{0.15,0.20,0.25\}$ (post-margin)}\\
& DecCUSUM      & $1.41$--$2.01$ & $\le 2.6\times10^{-10}$ & $0$ vs $0.17$--$0.83$ & $\le 2.6\times10^{-2}$\\
& DL-GLR-klUCB  & $1.76$--$2.19$ & $\le 2.6\times10^{-10}$ & $0$ vs $0.57$--$0.73$ & $\le 1.5\times10^{-6}$\\
& DL-ADR-bandit & $1.60$--$1.86$ & $\le 2.6\times10^{-10}$ & $0$ vs $0.43$--$0.53$ & $\le 4.6\times10^{-5}$\\
\midrule
\multicolumn{6}{l}{Real MovieLens drift, $g\in\{1.5,2.0,2.5,3.0\}$ (post-onset)}\\
& DecCUSUM      & $0.70$--$0.95$ & n.s.\ & $0$ vs $0.17$--$1.00$ & $\le 1.3\times10^{-1}$\\
& DL-GLR-klUCB  & $1.09$--$1.42$ & $\le 3.1\times10^{-8}$ & $0$ vs $0.40$--$1.00$ & $\le 3.7\times10^{-4}$\\
& DL-ADR-bandit & $1.14$--$1.38$ & $\le 2.6\times10^{-10}$ & $0$ vs $0.57$--$1.00$ & $\le 2.2\times10^{-6}$\\
\bottomrule
\end{tabular}
\caption{Significance of the headline drift separation, $30$ seeds per cell,
one-sided Mann-Whitney $U$ on decision regret and Fisher's exact test on the
per-seed false-switch event, with Holm-Bonferroni family-wise correction within
each dataset. Regret ratio is median base regret over median DRFC-Seq regret, so a
value above one means DRFC-Seq is lower. Each cell reports the range over the
swept amplitudes and the most conservative Holm-adjusted $p$-value in that range.
On synthetic drift every separation is significant. On real MovieLens drift the
false-switch separation is significant against all three detectors at $g\ge2.0$
(down to $p\le2\times10^{-16}$) and marginal only against DecCUSUM at the onset
gain $g=1.5$. DecCUSUM reaches marginally lower real-data regret (ratio below one)
by harvesting decision-irrelevant reward through false switches, so its regret
comparison is not significant while its false-switch rate climbs to one.
Re-aggregated per-seed values match the archived aggregate tables.}
\label{tab:significance}
\end{table}

\paragraph{Hyperparameters.} All hyperparameters are the frozen defaults in the
recorded experiment configurations; runtime overrides are limited to the number
of runs, horizon, and seed. The drift experiments fix the exploitable
decision margin $\Delta=0.10$ and sweep the drift gain over
$\{0.0,0.5,1.0,1.5,2.0,2.5,3.0\}$ with sliding-window confidence level
$\alpha=0.10$.

\paragraph{DRFC-Seq live operating point (non-vacuity).} For the
within-regime-drift experiments DRFC-Seq runs with $N=24$ agents and a
period-spanning sliding window of $120$ probe-blocks (each $6$ balanced
samples/arm, probe gap $20$), so the per-arm window count is $n=720$ and the
stitched anytime radius is $2r_n\approx0.085$, strictly \emph{below} the headline
decision margin $\Delta=0.10$. This is deliberate: a monitor whose radius
exceeds the margin can never trigger, so a zero false-switch rate would be
vacuous. At this live point the monitor \emph{is} capable of switching, which we
verify with an adaptivity battery (genuine $\Stdecavg{=}1$ switch at the horizon
midpoint, with and without concurrent drift): DRFC-Seq adopts a genuine switch
in \emph{every} run at false-switch rate $0$, at both the headline margin
$\Delta=0.10$ and a separated margin $\Delta=0.20$ and with or without concurrent
drift, and never adopts under a drift-only instance ($\Stdecavg{=}0$, $b=0.20$).
The adaptivity battery uses horizon $24000$, so the $\sim5000$--$6800$-round
detection latency is measured rather than censored, and is summarized in the
adaptivity table of Section~I. The window
($\approx 6240$ rounds, $120$ probe-blocks of $52$ rounds) spans $\approx 12.5$
drift periods (period $500$ rounds). The MovieLens-1M clipping
($[0.05,0.95]$) is verified to preserve the time-average-best arm with strictly
positive margin at every gain by an archived preprocessing check.

\paragraph{Robustness to window and confidence misspecification.} The
period-spanning window is not a hand-tuned requirement that presumes oracle
knowledge of the drift period. Holding the true period fixed at $P=500$ rounds and
the post-margin amplitude at $b=0.20$, we sweep the sliding-window length over a
$16\times$ range, from $1.0$ to $16.6$ drift periods, and the confidence level
$\alpha$ over a $20\times$ range (Table~\ref{tab:window-sens}). DRFC-Seq holds a zero false-switch
rate at every setting, with decision regret flat to within $0.1\%$. At the fixed
agent count $N=24$ part of this safety is conservative inertia. Shortening the
window lowers the per-arm count $n=Wh$, which inflates the anytime radius
$r_n\propto1/\sqrt{n}$, so the radius is live ($2r_n<\Delta$) only for windows
above about nine periods and a shorter window is also too conservative to trigger.

To separate genuine drift-robustness from this inertia we repeat the sweep holding
the radius live and constant ($2r_n\approx0.085<\Delta$) by scaling the agent
count inversely with the window, $N\approx2880/W$ (Table~\ref{tab:window-live}). The false-switch rate stays
zero across windows from $1.25$ to $12.5$ periods even at this live radius, so the
drift-robustness is genuine time-average tracking and not the conservative inertia
of a wide radius. All these windows sit at or above one drift period, where the
windowing bias $\varepsilon_W=bP/(2|\mathcal{W}|)$ stays within the margin and
oscillates in sign as the window slides, absorbed by the time-uniform radius.
Holding the radius live at shorter windows by adding agents is not free on the
switch-adaptive side of this agent-scaled sweep. The genuine-switch adopt latency
grows from $5165$ rounds at $12.5$ periods to $8997$ at $9.4$ periods and exceeds
the horizon below about nine periods, but the fixed-agent control next shows that
this growth is the cost of the added agents, not of the shorter window.

To isolate the window length from the agent count we hold $N=24$ and the
communication setup fixed and instead pin the radius live by deepening each probe
block, $h\approx720/W$, so the per-arm window count $n=Wh$ and the radius stay
fixed while only the number of probe-blocks changes (Table~\ref{tab:window-fixedn}). Across the reachable live
band, from about two to $12.5$ drift periods, DRFC-Seq holds a zero false-switch rate and
adopts the genuine switch in every run, and now the adopt latency falls as the
window shortens, from $5193$ rounds at $12.5$ periods to $3073$ at about two periods. The
window length is therefore a wide and forgiving design choice rather than a
hand-picked value. A window of about two drift periods already gives the
drift-robust good corner and adopts fastest at this agent count. At a fixed agent
count the radius-liveness condition couples to the window length, since a deeper
block consumes rounds, so the in-rounds window cannot fall below about two periods
here, which is why the agent-scaled sweep above is needed to reach shorter live
windows.

\begin{table}[t]
\centering
\small
\setlength{\tabcolsep}{6pt}
\begin{tabular}{lrrr}
\toprule
Sweep & setting & FSR & regret \\
\midrule
\multicolumn{4}{l}{Window length (drift periods), $\alpha=0.10$}\\
& $1.04$ & $0.00$ & $9972$ \\
& $1.66$ & $0.00$ & $9972$ \\
& $2.18$ & $0.00$ & $9973$ \\
& $3.12$ & $0.00$ & $9974$ \\
& $4.68$ & $0.00$ & $9972$ \\
& $6.24$ & $0.00$ & $9970$ \\
& $9.36$ & $0.00$ & $9972$ \\
& $12.48$ & $0.00$ & $9971$ \\
& $16.64$ & $0.00$ & $9975$ \\
\midrule
\multicolumn{4}{l}{Confidence level $\alpha$, window $=12.5$ periods}\\
& $0.01$ & $0.00$ & $9974$ \\
& $0.05$ & $0.00$ & $9972$ \\
& $0.10$ & $0.00$ & $9973$ \\
& $0.20$ & $0.00$ & $9975$ \\
\bottomrule
\end{tabular}
\caption{DRFC-Seq under window and confidence misspecification within-regime
drift ($\Stdecavg=0$, $b=0.20>\Delta=0.10$, period $P=500$ rounds, $N=24$, $30$
seeds). The false-switch rate (fraction of seeds with at least one false switch)
stays $0$ and decision regret stays flat across a $16\times$ window range and a
$20\times$ confidence range. Reproduced by the window-sensitivity sweep.}
\label{tab:window-sens}
\end{table}

\begin{table}[t]
\centering
\small
\setlength{\tabcolsep}{5pt}
\begin{tabular}{rrrrr}
\toprule
window & $N$ & $2r_n$ & drift FSR & genuine adopt \\
(periods) & & & ($\Stdecavg{=}0$) & (latency) \\
\midrule
$1.25$ & $240$ & $0.085$ & $0.00$ & --- \\
$1.56$ & $192$ & $0.085$ & $0.00$ & --- \\
$1.87$ & $160$ & $0.085$ & $0.00$ & --- \\
$2.18$ & $138$ & $0.085$ & $0.00$ & censored \\
$3.12$ & $96$  & $0.085$ & $0.00$ & censored \\
$4.68$ & $64$  & $0.085$ & $0.00$ & censored \\
$6.24$ & $48$  & $0.085$ & $0.00$ & censored \\
$9.36$ & $32$  & $0.085$ & $0.00$ & $0.85$ ($8997$) \\
$12.48$ & $24$  & $0.085$ & $0.00$ & $1.00$ ($5165$) \\
\bottomrule
\end{tabular}
\caption{DRFC-Seq at a held-live radius. The anytime radius is pinned at
$2r_n\approx0.085<\Delta=0.10$ across all window lengths by scaling the agent
count inversely with the window ($N\approx2880/W$), so safety cannot come from a
conservatively wide radius. The drift-only false-switch rate ($\Stdecavg=0$,
$b=0.20$, $20$ seeds) stays $0$ across the full range of windows from about one to
twelve drift periods, all on the safe side of the one-period bias boundary. The genuine-switch adopt rate
(separated margin $\Delta=0.20$, latency in rounds in parentheses) shows the
trade-off of holding the radius live at short windows, where the larger agent
count slows coordination and the adopt latency grows until it exceeds the horizon
(censored). The period-spanning window adopts fastest. Reproduced by the
held-live-radius sweep.}
\label{tab:window-live}
\end{table}

\begin{table}[t]
\centering
\small
\setlength{\tabcolsep}{5pt}
\begin{tabular}{rrrrr}
\toprule
window & $h$ & drift FSR & genuine & latency \\
(periods) & & ($\Stdecavg{=}0$) & adopt & (rounds) \\
\midrule
$2.18$ & $34$ & $0.00$ & $1.00$ & $3073$ \\
$3.12$ & $24$ & $0.00$ & $1.00$ & $3243$ \\
$4.68$ & $16$ & $0.00$ & $1.00$ & $3512$ \\
$6.24$ & $12$ & $0.00$ & $1.00$ & $3879$ \\
$9.36$ & $8$  & $0.00$ & $1.00$ & $4486$ \\
$12.48$ & $6$  & $0.00$ & $1.00$ & $5193$ \\
\bottomrule
\end{tabular}
\caption{DRFC-Seq at a held-live radius with the agent count fixed ($N=24$) and
the communication setup unchanged. The radius is pinned live
($2r_n\approx0.085<\Delta=0.10$) by deepening each probe block, $h\approx720/W$,
so the per-arm window count $n=Wh\approx720$ and only the number of probe-blocks
$W$ varies. Across the reachable live band DRFC-Seq holds a zero false-switch rate
(drift-only, $\Stdecavg=0$, $b=0.20$, $20$ seeds) and adopts the genuine switch in
every run (separated margin $\Delta=0.20$), with the adopt latency falling as the
window shortens. This removes the agent-count confound of
Table~\ref{tab:window-live}, leaving window length the only varying factor.
Reproduced by the fixed-agent held-live-radius sweep.}
\label{tab:window-fixedn}
\end{table}

\subsection{Removing the drift-period knob with Doubling-Window DRFC-Seq}
\label{app:adaptive-window-exp}

The window sweeps above hold the sliding window matched to a known drift period. Doubling-Window DRFC-Seq (Theorem~\ref{thm:app-adaptive-window}) removes that knowledge. It runs a geometric ladder of windows $W\in\{8,16,32,64\}$ probe-blocks on one shared probe stream at union level $\alpha/J$, anoints the shortest window whose certificate has matched the longest drift-safe window $W_{\max}$ over a full $W_{\max}$-block horizon, and adopts on the shortest anointed window. The anointment is the empirically checkable certificate that Theorem~\ref{thm:app-adaptive-window} turns into drift-safety, so the monitor adopts only through a window proven to track the time-average best arm, while paying the per-switch latency of the shortest such window rather than of $W_{\max}$.

We stress this against an unknown drift scale. Each of $30$ seeds ($10$ per shape) draws a random drift period $P$ uniform in $[600,4000]$ rounds, a random phase, and a random waveform shape from square, triangle, and sine, all with margin $\Delta=0.10$ and amplitude $b=0.22$ (so $b>2\Delta$, the cancellation regime of Proposition~1). Each draw is run in two cells, a drift-only cell ($\Stdecavg=0$, measuring the false-switch rate) and a drift-plus-genuine-switch cell ($\Stdecavg=1$ with the same live drift, the average-best arm flipping at the horizon midpoint, measuring the adopt rate and detection latency). Four monitors are compared on identical instances. \textbf{DW-Adaptive} is the doubling-window policy with no knowledge of $P$. \textbf{Oracle-$W^\star$} is single-window DRFC-Seq told $P$ and using the shortest grid window with a constant-fraction drift-safety margin, $L(W^\star)\ge 2P$; this cover factor $2$ is close to the $b/\Delta=2.2$ scale at which Theorem~\ref{thm:app-adaptive-window} places its constant-fraction-margin $W^\star$, though the experiment does not tie the two exactly. This is the unfair upper reference. \textbf{Fixed-Short} and \textbf{Fixed-Long} pin the single window to the shortest and the longest grid entry. The probe-block span is $212$ rounds, so the grid windows span $1696$ to $13568$ rounds and the period range exercises a drift-safe window $W^\star$ that ranges across the whole ladder as $P$ varies. The drawn periods are arbitrary reals, not multiples of the block span, so the whole-block period convention of Theorem~\ref{thm:app-adaptive-window} does not hold exactly here. Remark~\ref{rem:off-grid} covers exactly this case, replacing the convention with an explicit off-grid slack $2b/W$ that the fine block grid keeps well below the confidence radius, and the zero false-switch rate below confirms that analysis rather than relying on the alignment.

Table~\ref{tab:adaptive-window} and Figure~\ref{fig:adaptive-window} report the result. DW-Adaptive holds a zero false-switch rate in both cells and adopts every genuine switch, matching Oracle-$W^\star$, and its detection latency tracks the oracle that is told the period at a mean ratio of $1.36$ and a median ratio of $1.19$, well inside the doubling-factor-$2$ slack of Theorem~\ref{thm:app-adaptive-window}, using no period input. The mean is the looser of the two because a few seeds anoint a conservatively long window, while the median ratio $1.19$ sits near the no-overhead floor the theorem predicts, the operating window matching the oracle's up to the $(1+o(1))$ union inflation of the radius. DW-Adaptive is $27$ percent faster in mean latency than the drift-safe Fixed-Long window, which must use $W_{\max}$ at every period. Fixed-Short is the only monitor that violates safety, false-switching on $1$ of the $30$ drift-plus-switch seeds, the one with the longest drift period $P=3889$, because its window is below the period at the long end of the range. The pattern is the Pareto statement of the theorem. No single fixed window is both safe and fast across the randomized range, Fixed-Long is safe but slow and Fixed-Short is fast but drift-unsafe, while DW-Adaptive recovers the period-oracle's operating point online and across all three waveform shapes.

\begin{table}[t]
\centering
\small
\setlength{\tabcolsep}{4.0pt}
\begin{tabular}{lccccc}
\toprule
Monitor & knows $P$ & drift FSR & adopt & latency & switch FSR \\
\midrule
DW-Adaptive   & no  & $0.00$ & $1.00$ & $8031$  & $0.00$ \\
Oracle-$W^\star$ & yes & $0.00$ & $1.00$ & $5894$  & $0.00$ \\
Fixed-Short   & no  & $0.00$ & $1.00$ & $2714$  & $0.03$ \\
Fixed-Long    & no  & $0.00$ & $1.00$ & $11004$ & $0.00$ \\
\bottomrule
\end{tabular}
\caption{Period-agnostic window robustness (Doubling-Window DRFC-Seq, $30$ seeds, randomized period $P\in[600,4000]$, randomized phase, square/triangle/sine drift, $\Delta=0.10$, $b=0.22$, $N=24$). Columns report the drift-only false-switch rate, the genuine-switch adopt rate, the detection latency in rounds, and the false-switch rate in the drift-plus-switch cell. DW-Adaptive matches the period-oracle on false switches and adopt rate and tracks its latency without knowing $P$, is $27$ percent faster than the drift-safe Fixed-Long window, and unlike Fixed-Short never violates safety. Reproduced by the period-agnostic window sweep.}
\label{tab:adaptive-window}
\end{table}

\begin{figure*}[t]
\centering
\includegraphics[width=0.92\textwidth]{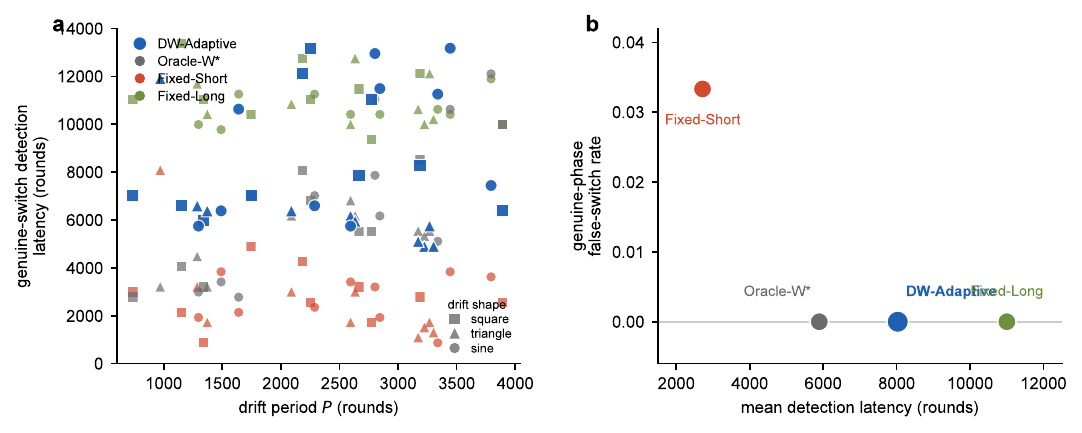}
\caption{Doubling-Window DRFC-Seq is period-agnostic. (a) Per-seed genuine-switch detection latency against the realized drift period $P$, across $30$ seeds and the square, triangle, and sine drift shapes. DW-Adaptive (blue) sits just above the period-oracle Oracle-$W^\star$ (gray) and well below the drift-safe Fixed-Long window (green) at every period, while Fixed-Short (red) is fastest but is the one monitor that false-switches. (b) The false-switch versus latency Pareto frontier, one point per monitor. DW-Adaptive, Oracle-$W^\star$, and Fixed-Long all hold a zero genuine-phase false-switch rate, and DW-Adaptive reaches that safe frontier far to the left of Fixed-Long, while Fixed-Short pays a nonzero false-switch rate for its low latency. The proposed monitor recovers the period-oracle's operating point without the period as input.}
\label{fig:adaptive-window}
\end{figure*}

\subsection{Robustness to a changing drift period}
\label{app:changing-period}

The randomized sweep above keeps the drift period fixed within each run and only unknown across runs. A natural follow-up question is a period that itself changes during a run. We switch the drift half-period once at the horizon midpoint, so every run contains two distinct periods, and we test both directions, a grow case where the full period rises from $700$ to $2800$ rounds and a shrink case where it falls from $2800$ to $700$, with the genuine decision switch placed at $0.7$ of the horizon inside the post-change phase. The waveform phase is carried continuously across the change, so the time-average gap stays $\Delta=0.10$ on both sides and the time-average best arm is unchanged. The four monitors match Table~\ref{tab:adaptive-window}, except that Oracle-$W^\star$ is now told the post-change period and Fixed-Early pins the window matched to the pre-change period.

Table~\ref{tab:changing-period} reports the outcome over $10$ seeds per case. DW-Adaptive holds a zero drift false-switch rate and adopts every genuine switch in both directions, and its mean detection latency tracks the post-change oracle at a ratio of $0.91$ on the grow case and $0.96$ on the shrink case, recovering the post-change oracle operating point online with no period input. This regime is a robustness check rather than a separation. Both fixed-window baselines also stay drift-safe here, and Fixed-Early is in fact faster, because the period change stays inside the anytime radius tolerance of the shortest window, so no fixed window is forced to false-switch. The separating regime, where a too-short fixed window does false-switch, is the randomized sweep of Table~\ref{tab:adaptive-window} in which Fixed-Short is the single monitor that breaks. The narrow and specific point here is that the doubling ladder recovers the post-change oracle's safety and latency without being told either period, not that it dominates a fixed window in this easy setting.

\begin{table}[t]
\centering
\small
\setlength{\tabcolsep}{4.0pt}
\begin{tabular}{llccccc}
\toprule
Case & Monitor & knows $P$ & drift FSR & adopt & latency & ratio \\
\midrule
grow   & DW-Adaptive     & no  & $0.00$ & $1.00$ & $6278$ & $0.91$ \\
grow   & Oracle-$W^\star$ & yes & $0.00$ & $1.00$ & $6872$ & $1.00$ \\
grow   & Fixed-Early     & no  & $0.00$ & $1.00$ & $2229$ & $0.32$ \\
grow   & Fixed-Long      & no  & $0.00$ & $1.00$ & $7148$ & $1.04$ \\
\midrule
shrink & DW-Adaptive     & no  & $0.00$ & $1.00$ & $7173$ & $0.96$ \\
shrink & Oracle-$W^\star$ & yes & $0.00$ & $1.00$ & $7508$ & $1.00$ \\
shrink & Fixed-Early     & no  & $0.00$ & $1.00$ & $6490$ & $0.86$ \\
shrink & Fixed-Long      & no  & $0.00$ & $1.00$ & $6618$ & $0.88$ \\
\bottomrule
\end{tabular}
\caption{Robustness to a mid-run drift-period change (Doubling-Window DRFC-Seq, $10$ seeds per case, $N=24$, $K=4$, $\Delta=0.10$, $b=0.22$, square, triangle, and sine drift). The drift half-period switches once at the horizon midpoint, the grow case from full period $700$ to $2800$ rounds and the shrink case from $2800$ to $700$, with the genuine switch in the post-change phase. Columns report whether the monitor is told the post-change period, the drift-only false-switch rate, the genuine-switch adopt rate, the detection latency in rounds, and the latency ratio to the post-change oracle. All monitors stay drift-safe in this regime, so it is a robustness check rather than a separation, and Fixed-Early is in fact faster. The point is that DW-Adaptive recovers the post-change oracle's safety and latency without knowing either period, while the separating regime where a too-short fixed window breaks is Table~\ref{tab:adaptive-window}. Reproduced by the changing-period sweep.}
\label{tab:changing-period}
\end{table}

\begin{table}[t]
\centering
\small
\setlength{\tabcolsep}{3.0pt}
\begin{tabular}{llrrrr}
\toprule
Regime & $C$ & adopt & false & delay & regret \\
\midrule
default & $1.15$ practical & $0.90$ & $0.03$ & $210$ & $904$ \\
default & $2.00$ & $0.70$ & $0.00$ & $486$ & $927$ \\
default & $3.19$ per-pair & $0.00$ & $0.00$ & $1173$ & $1000$ \\
default & $5.00$ & $0.00$ & $0.00$ & $1200$ & $977$ \\
default & $7.26$ worst-case & $0.00$ & $0.00$ & $1200$ & $977$ \\
\midrule
calibrated & $1.15$ practical & $1.00$ & $0.00$ & $224$ & $751$ \\
calibrated & $3.19$ per-pair & $1.00$ & $0.00$ & $749$ & $2532$ \\
calibrated & $8.41$ worst-case & $0.90$ & $0.00$ & $4855$ & $12076$ \\
\bottomrule
\end{tabular}
\caption{Confidence-scale sweep for the conservative certifier ($30$ seeds,
$\Stdec=2$, $\Delta=0.10$). The default operating point is the reported K-ary
stress configuration (horizon $1200$, monitor period $120$, regime length
$400$). The theorem-calibrated regime stretches the horizon to $24000$ with
monitor period $4000$, so each $8000$-round regime is long enough for the
worst-case constant to finish its elimination scans. Columns report the
fraction of runs adopting every decision switch, the false-switch rate, mean
detection delay in rounds (censored at the horizon when no switch is ever
certified), and mean group regret. Raising $C$ never creates false switches,
and in the calibrated regime even the worst-case constant adopts every switch
in $90$ percent of runs (at least one switch in every run) at false-switch
rate $0$, at a certification cost that grows as $C^2/\Delta^2$ in line with
the $\Ccert_r$ term of the main DRFC theorem.}
\label{tab:confidence-sweep}
\end{table}

\paragraph{Confidence-scale sweep and theorem-calibrated regime.} The
confidence-radius constant $C=1.15$ in the K-ary and user-cluster simulators
is the result of practical tuning, not the worst-case theorem-calibrated
value. Table~\ref{tab:confidence-sweep} quantifies the gap by sweeping the
constant for the conservative certifier across the practical value, the
per-pair union-bound value $\sqrt{2\log(K^2/\delta)}\approx3.19$, and the
worst-case appendix value $\sqrt{2\log(K^2T^3/\delta)}$, which is
$\approx7.26$ at horizon $1200$ and $\approx8.41$ at horizon $24000$. Three
facts emerge. First, raising $C$ errs only in the conservative direction. The
false-switch rate never exceeds the practical point's $0.03$ and is $0$ at
every $C\geq2$. Second, at the default operating point the radius after one
comparison block is $\approx0.43$ for the worst-case constant against a
decision margin of $0.10$, so elimination cannot finish inside the $400$-round
regime and certification stalls, which is why the reported experiments use
the practical value. Third, in a theorem-calibrated regime whose regime length
respects the $c_0K\log T/(N\Delta^2)$ floor of the main information-theoretic lower bound, every swept
constant becomes operational and certifies at zero false-switch rate, with
certification cost scaling as $C^2/\Delta^2$ exactly as in the $\Ccert_r$
term of the main DRFC theorem. DRFC-Seq's radius uses the same functional form as
the certified object, so the zero-false-switch conclusion under drift is
scale-robust in the same conservative direction. The reported implementation uses
this radius in the finite-horizon form $\log(K(K-1)T/\alpha)$ rather than the
horizon-free stitched form $\log(C_0K(K-1)s^2/\alpha)$ of the DRFC-Seq
anytime-validity proof, a practical proxy at a known run length. The two forms
differ only by the start-time peeling factor, and the sweep above shows the
zero-false-switch conclusion is preserved across radius constants from the
practical value up to the worst-case appendix value, so it does not depend on
which valid radius form is used.

\paragraph{Dataset license.} MovieLens-1M~\citep{harper2015movielens} is
distributed by GroupLens Research under the dataset's own license, which
permits research use.

\paragraph{Ethical considerations.} This work studies cooperative
decision-making in decentralized systems with heterogeneous agents. We use
MovieLens-1M as a publicly released, fully anonymized benchmark for cooperative
learning, and no human-subject experiments or personally identifying data are
involved. The decentralized framework can support privacy-preserving
deployments, since no raw observations leave an agent, though deployment in
user-facing recommender systems requires further audit for fairness across
demographic strata.

\paragraph{Additional related references.} Recent work further studies
switch-count non-stationarity, formal definitions of non-stationary bandits,
smooth or heavy-tailed change processes, constrained feedback, linear dynamic
regret, and heterogeneous or robust multi-agent bandits
\citep{abbasi2023newlook,liu2023definition,suk2024adaptive,genalti2025catoni,
li2025constrained,hu2026dynamic,xu2025heterogeneous,mirfakhar2025heterogeneous,
adams2025finite,wang2025heavytailed,hu2025robust}.

\clearpage
\bibliographystyle{plainnat}
\bibliography{refs}

\end{document}